\definecolor{myGreen}{rgb}{0.2,0.8,0.3}
\tikzstyle{arrow} = [thick,->,>=stealth]
\newif\ifcomments
\newcommand{\todo}[1]{\ifcomments \red{\textbf{TODO:}} \emph{#1} \fi}
\newcommand{\red}[1]{\textcolor{red}{#1}}
\newcommand{\blue}[1]{\textcolor{blue}{#1}}
\newcommand{\green}[1]{\textcolor{myGreen}{#1}}
\newcommand{\R}{\mathbb{R}}
\newcommand{\N}{\mathbb{N}}
\newcommand{\abs}[1]{ \left\vert #1\right\vert }
\newcommand{\norm}[1]{ \left\Vert #1\right\Vert }
\newcommand{\set}[1]{\left\{#1\right\}}
\newcommand{\eval}[2]{\underset{{#1}}{\mathbb{E}}\left[#2\right]}
\newcommand{\Prob}[2]{\underset{{#1}}{\Pr}\left(#2\right)}
\newcommand{\st}{\;.\;}
\newcommand{\err}{\text{err}}
\newcommand{\roblossc}{\mathsf{R}^C_\rho}
\newcommand{\roblosse}{\mathsf{R}^E_\rho}
\newcommand{\robloss}{\mathsf{R}_\rho}
\newcommand{\boolhc}{\set{0,1}^n}
\newcommand{\boolhcfa}{\set{-1,1}^n}
\newcommand{\classes}{\set{0,1}}
\newcommand{\classesfa}{\set{-1,1}}
\newcommand{\Inf}{\mathbf{Inf}}
\newcommand{\poly}{\text{poly}}
\newcommand{\supp}{\text{supp}}
\newcommand{\sgn}{\text{sgn}}
\newcommand{\given}{\;|\;}
\newcommand{\risk}{\mathsf{R}}
\newcommand{\LEQ}{\mathsf{LEQ}}
\newcommand{\EQ}{\mathsf{EQ}}
\newcommand{\EX}{\mathsf{EX}}
\newcommand{\LMQ}{\mathsf{LMQ}}
\newcommand{\MQ}{\mathsf{MQ}}
\newcommand{\adv}{\mathbb{A}}
\newcommand{\PAC}{\mathsf{PAC}}
\newcommand{\VC}{\mathsf{VC}}
\newcommand{\VCrho}{\VC\vert_\rho}
\newcommand{\VCH}{\VC(\mathcal{H})}
\newcommand{\DVCH}{\VC^*(\mathcal{H})}
\newcommand{\RLossLong}{\mathcal{L}_\rho(\C,\H)}
\newcommand{\RLoss}{\mathcal{L}_\rho(\C)}
\newcommand{\Rloss}{\ell_\rho(c,h)}
\newcommand{\RVClong}{\VC(\RLossLong)}
\newcommand{\RVC}{\VC(\RLoss)}
\newcommand{\Lit}{\mathsf{Lit}}
\newcommand{\sat}{\mathsf{SAT}}
\newcommand{\matching}{\mathcal{M}}
\newcommand{\asst}{\mathcal{A}_{\matching}}
\newcommand{\varM}{I_{\matching}}
\newcommand{\satlog}{\sat_{\log(n)}}
\newcommand{\satrho}{\sat_{\rho}}
\newcommand{\satnot}{\sat_{0}}
\newcommand{\prephi}[1]{\Phi^{-1}(#1)}
\newcommand{\A}{\mathcal{A}}
\newcommand{\C}{\mathcal{C}}
\newcommand{\D}{\mathcal{D}}
\newcommand{\E}{\mathcal{E}}
\newcommand{\F}{\mathcal{F}}
\renewcommand{\H}{\mathcal{H}}
\newcommand{\U}{\mathcal{U}}
\newcommand{\X}{\mathcal{X}}
\newcommand{\x}{\mathbf{x}}
\newcommand{\vv}{\mathbf{v}}
\newcommand{\Y}{\mathcal{Y}}
\newcommand{\MonConj}{\textsf{MON-CONJ}}
\newcommand{\Conj}{\textsf{CONJUNCTIONS}}
\newcommand{\parity}{\mathsf{PARITIES}}
\newcommand{\majority}{\mathsf{MAJORITIES}}
\newcommand{\threshold}{\mathsf{THRESHOLDS}}
\newcommand{\thresholds}{\mathsf{THRESHOLDS}}
\newcommand{\dl}{$\mathsf{DL}$}
\newcommand{\dlm}{\mathsf{DL}}
\newcommand{\ltf}{\mathsf{LTF}}
\newcommand{\Halfspaces}{\ltf}
\newcommand{\ltfbool}{\ltf_{\boolhc}}
\newcommand{\maj}{\text{maj}}
\newcommand{\ltfreal}{\ltf_{\mathbb{R}^n}}
\newtheorem{theorem}{Theorem}
\numberwithin{theorem}{chapter}
\newtheorem{proposition}[theorem]{Proposition}
\newtheorem{lemma}[theorem]{Lemma}
\newtheorem{corollary}[theorem]{Corollary}
\newtheorem{example}[theorem]{Example}
\newtheorem{definition}[theorem]{Definition}
\newtheorem{claim}[theorem]{Claim}
\theoremstyle{remark}
\newtheorem{remark}[theorem]{Remark}
\begin{document}

\pretitle{\begin{center}\cftchapterfont\LARGE}
\posttitle{\end{center}}
\preauthor{\begin{center}\huge}
\postauthor{\end{center}}
\predate{\begin{center}\large}
\postdate{\end{center}}

\title{Sample Complexity of Robust Learning against Evasion Attacks}
\date{ }
\renewcommand\maketitlehookc{
\begin{center}
\vspace{5mm}
\includegraphics[scale=0.5]{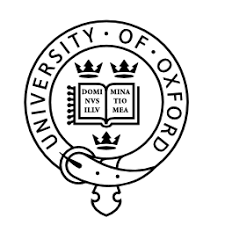}\\
\vspace{15mm}
{
\Large
Pascale Gourdeau\\
\large
Trinity College\\
University of Oxford
}
\end{center}
\vspace{5mm}
}
\renewcommand\maketitlehookd{
\begin{center}
\vspace{5mm}
\large
A thesis submitted for the Degree of \\ \emph{Doctor of Philosophy}\\
Trinity 2023
\end{center}
}

\begin{titlingpage}
\maketitle
\end{titlingpage}

\clearpage
\pagenumbering{roman}

\begin{abstract}

It is becoming increasingly important to understand the vulnerability
of machine learning models to adversarial attacks.  
One of the fundamental problems in adversarial machine learning is to quantify
how much training data is needed in the presence of so-called evasion attacks, where data is corrupted at test time. 
In this thesis, we work with the exact-in-the-ball notion of robustness and study the feasibility of adversarially robust learning from the perspective of learning theory, considering sample complexity.  

We start with two negative results.  
We show that no non-trivial concept class can be
robustly learned in the distribution-free setting against an adversary who can perturb just a single input bit.  
We then exhibit a sample-complexity lower bound: the class of monotone conjunctions 
and any superclass on the boolean hypercube has sample complexity at least exponential in the adversary's budget (that is, the maximum number of bits it can perturb on each input). 
This implies, in particular, that these classes cannot be robustly learned under the uniform distribution against an adversary who can perturb $\omega(\log n)$ bits of the input.

As a first route to obtaining robust learning guarantees, we consider restricting the class of distributions over which training and testing data are drawn.
We focus on learning problems  with probability distributions on the input data that satisfy a
Lipschitz condition: nearby points have similar probability.  
We  show that, if the adversary is
restricted to perturbing $O(\log n)$ bits, then  one can robustly
learn the class of monotone conjunctions 
with respect to the class of log-Lipschitz
distributions.  
We then extend this result to show the learnability of
   $1$-decision lists, 2-decision lists and monotone $k$-decision lists in the same
  distributional and adversarial setting.
We finish by showing that for every fixed $k$ the class of $k$-decision lists has polynomial sample complexity against a $\log(n)$-bounded adversary. 
The advantage of considering intermediate subclasses of $k$-decision lists is that we are able to obtain improved sample complexity bounds for these cases.

As a second route, we  study learning models where the learner is given more power  through the use of \emph{local} queries.
 %
  The first learning model we consider uses local membership queries (LMQ), where the 
  learner can query the label of points near the training sample.
  We show that, under the uniform distribution, the exponential dependence on the adversary's budget to robustly learn conjunctions and any superclass remains inevitable even when the learner is given access to LMQs in addition to random examples.
  Faced with this negative result, we introduce a local \emph{equivalence} query oracle, which returns whether the hypothesis and target concept agree in a given region around a point in the training sample, as well as a counterexample if it exists.
  We show a separation result: on the one hand, if the query radius $\lambda$ is strictly smaller than the adversary's  perturbation budget $\rho$, then distribution-free robust learning is impossible for a wide variety of concept classes; on the other hand, the setting $\lambda=\rho$ allows us to develop robust empirical risk minimization algorithms in the distribution-free setting.
  We then bound the query complexity of these algorithms based on online learning guarantees and further improve these bounds for the special case of conjunctions. 
  We follow by giving a robust learning algorithm for halfspaces on $\boolhc$. 
  Finally, since the query complexity for halfspaces on $\mathbb{R}^n$ is unbounded, we instead consider adversaries with  \emph{bounded precision} and give  query complexity upper bounds in this setting as well.
\end{abstract}


\renewcommand*\abstractname{Acknowledgements}
\newpage
\begin{abstract}

I would first like to express my most sincere gratitude to my supervisors James (Ben) Worrell, Varun Kanade, and Marta Kwiatkowska.

Marta, I am extremely grateful for your guidance, support and generosity. 
Your help has been invaluable in setting a research agenda and navigating my DPhil. 
I very much value the time you make for your students, your involvement and reliability.

Varun, thank you for taking a chance working with me in my second year, and for introducing me to learning theory and interesting problems in the field.
Your expertise and knowledge have been beyond helpful. 
I am tremendously grateful for our discussions, and greatly appreciate your insightful comments and approach to research. 
I value your mentorship immensely.

Ben, your enthusiasm for research is inspiring. 
Working with you, I have learned so much on how to approach and solve research problems.
You have helped me look at research as a ludic and collaborative endeavour -- a perspective that I hope will last throughout my career.
I cannot thank you enough for your generosity with your time, energy and ideas.

I would also like to thank my masters supervisors, Prakash Panangaden and Doina Precup, for their help and support which has lasted to this day and has greatly contributed to my academic path.

To my amazing friends, I am forever grateful for your support, care and kindness.
Friends from Montréal, Pearson UWC, Oxford, and beyond: you know who you are, and I love and cherish every one of you.

I would like to thank Gabrielle, Rick and Joanie from the Institut des Commotions Cérébrales, without whom I would most likely never have finished my degree.

I would also like to acknowledge the financial support provided to me during my DPhil: the Clarendon Fund (Oxford University Press) for the Clarendon Scholarship, the Natural Sciences and Engineering Research Council of Canada (NSERC) for the Postgraduate Scholarship, and the European Research Council (ERC) for funding under the European Union's Horizon 2020 research and innovation programme (FUN2MODEL, grant agreement No. 834115).

Finally, I would like to thank my family, especially my parents, Caroline and Richard, who have supported me in ways that words will never do justice to.
A very special thank you to my grandmother, Colette, whose love and wisdom are always with me.
Raymonde and Pierre, I so deeply wish I could share this moment with you.
\end{abstract}
\clearpage
\setcounter{tocdepth}{2}
\tableofcontents
\newpage
\listoffigures            
\newpage  
\listoftables

\clearpage
\pagenumbering{arabic}

\chapter{Introduction}

In the standard theoretical analysis of machine learning, the learning process uses and is evaluated on clean, unperturbed examples. 
Moreover, many machine learning tasks are evaluated according to predictive accuracy alone, e.g., maximizing the accuracy of a classifier with respect to the ground truth which labels the data.
Though there  remain existing knowledge gaps in the literature (e.g., explaining the success of deep neural networks), machine learning theory has generally been  successful at designing algorithms and deriving guarantees to explain generalization in this framework, even in the presence of noise. 

It is  natural to ask whether similar results can be derived when the learning objectives go beyond standard accuracy.
This could be when the learning process allows for the presence of a malicious adversary--which is more powerful than simply adding random noise to the data-- and thus requires \emph{robustness}.
The study of robustness in machine learning falls under the more general umbrella of \emph{trustworthiness} of machine learning models, where other considerations such as privacy, interpretability or fairness come into play, see, e.g., \citep{dwork2008differential,doshi2017towards,kleinberg2017inherent}.
The trustworthiness of machine learning models is  of utmost importance, especially considering the speed at which new technology is currently deployed.
Crucially, learning theory can provide us with valuable tools to explain, evaluate and guarantee the behaviour of safety-critical machine-learning applications.

The focus of this thesis is on the robustness of machine learning algorithms to \emph{evasion attacks}, which happen at test time after a model is trained (without the presence of an adversary).
This is in contrast to \emph{poisoning attacks}, which happen at training time with the goal of reducing the test-time accuracy of a machine learning algorithm.
The distinction between these two settings was proposed by \cite{biggio2013evasion}, who independently observed the phenomenon of adversarial examples presented by \cite{szegedy2013intriguing}, who coined the latter term.

One of the main challenges in the theory of adversarial machine learning is to analyse the intrinsic difficulty of learning in the presence of an adversary that can modify the data.
The present work studies various assumptions in a learning problem, such as properties of the distribution underlying the data, how the learner obtains data, limitations of the adversary, etc., and determines whether robust learning is feasible with a reasonable amount of data. 
Here, reasonable means that the \emph{sample complexity} of a robust learning algorithm, i.e., the amount of data needed to enable guarantees, is \emph{polynomial} in the input space dimension and the learning parameters (e.g., an algorithm's \emph{confidence} and the desired \emph{robust accuracy} of a hypothesis output by the learning algorithm).

\section{Main Contributions}

This thesis focuses on the existence of adversarial examples in classification tasks. 
An adversarial example is obtained from a natural example at test time by adding a perturbation, in the malicious goal of causing a misclassification.
We work under the \emph{exact-in-the-ball} notion of robustness,\footnote{Also known as \emph{error region} risk in \cite{diochnos2018adversarial}.} which relies on the existence of a ground truth function (i.e., there exists a concept that labels the data correctly). 
A misclassification occurs when the hypothesis returned by the learning algorithm and the ground truth \emph{disagree} in the perturbation region. 
This is in contrast to the \emph{constant-in-the-ball} notion of robustness\footnote{Also known as \emph{corrupted input} robustness from the work of \cite{feige2015learning}.} which requires that the unperturbed point be labelled correctly, and that the hypothesis remain \emph{constant} in the perturbation region.
Guarantees derived for the constant-in-the-ball notion of robustness  imply that the hypothesis returned has a certain stability (perhaps at the cost of accuracy in certain cases, as demonstrated in \cite{tsipras2019robustness}), as an optimal algorithm would return a hypothesis that limits the probability of a label change in the perturbation region. On the other hand, guarantees derived for the exact-in-the-ball notion of robustness usually give stronger accuracy, as we want to be \emph{correct} with respect to the ground truth in the perturbation region.
Deciding which notion of robustness to use depends on the learning problem at hand, and what kind of guarantees one wishes to ensure.
We gave in \citep{gourdeau2019hardness,gourdeau2021hardness}  a thorough comparison between these two notions of robustness, and remarked that the exact-in-the-ball notion of robustness is much less studied than the constant-in-the-ball one.

Our motivation in this thesis is to study the intrinsic robustness of learning algorithms from a learning theory perspective in the probably approximately correct (PAC) learning model of \cite{valiant1984theory}. We investigate how different learning settings enable robust learning guarantees, or, to the contrary, give rise to hardness results.
In this sense, our main aim is to delineate the frontier of robust learnability in various learning models. 
We conceptually divide our contributions based on the learning models we have studied.

\paragraph*{Random examples.} In this model, as in the PAC framework, the learner has access to a random-example oracle which samples a point from an underlying distribution, and returns the point along with its label. 
We exhibit an impossibility result \citep{gourdeau2019hardness}, stating that the distribution-free guarantees for (standard) PAC learning cannot be achieved for robust learning under the exact-in-the-ball definition of robustness, highlighting a key obstacle in adversarial machine learning compared to its standard counterpart. 
Here, distribution-free means that the learning guarantees hold for any distribution that generates the data, provided that the training and testing data are both drawn independently from the same distribution.

The above impossibility result is obtained by choosing a badly-behaved, and quite unnatural distribution on the data.
But we show that, even when looking at natural distributions and simple concept classes, robust learning can have high sample complexity.
Indeed, we prove that there is no efficient robust learning algorithm that learns monotone conjunctions under the uniform distribution if the adversary can perturb $\rho=\omega(\log n)$ bits of a test point in $\{0,1\}^n$; the maximum number $\rho$ of bits the adversary is allowed to perturb at test time is called the \emph{perturbation budget}.
This is particularly striking as the class of monotone conjunctions is one the simplest non-trivial concept classes on the boolean hypercube.
We extend this result to establish a general sample complexity lower bound of $\Omega(2^\rho)$ \citep{gourdeau2022sample}, highlighting  an \emph{exponential} dependence on the adversary's budget $\rho$ in the sample complexity of robust learning.
Since linear classifiers and decision lists subsume this class of functions, the lower bound holds for them as well.
To complement these results, we show that, under distributional assumptions and against a \emph{logarithmically-bounded} adversary (i.e., with budget $\rho=O(\log n)$), efficient robust learning is possible for various concept classes. 
We require that the underlying distribution be \emph{log-Lipschitz}; this notion encapsulates the idea that nearby instances should have similar probability masses and includes as particular instances product distributions with bounded means.
We show the above-mentioned result for conjunctions \citep{gourdeau2019hardness}, monotone decision lists \citep{gourdeau2021hardness}, and non-monotone decision lists \citep{gourdeau2022sample}. 
We define the term \emph{robustness threshold} to mean a function $f(n)$ of the input dimension $n$ for which it is possible to efficiently robustly learn against an adversary with budget $f(n)$, but impossible if the adversary's budget is $\omega(f(n))$ (with respect to a given distribution family).
The robustness threshold of these concept classes is thus $\log(n)$ under log-Lipschitz distributions.

In general, the above-mentioned results rely on a proof of independent interest: an upper bound on the $\log(n)$-expansion of subsets of the hypercube defined by $k$-CNF formulas. 
This result relies on concentration bounds for martingales, as well as properties of the resolution proof system.
In all the cases above, as well as for decision trees \citep{gourdeau2021hardness}, the error region between a hypothesis and a target\footnote{That is, for target $c$ and hypothesis $h$ on input space $\X$, the set of points $x\in\X$ such that $c(x)\neq h(x)$.} can be expressed as a union of $k$-CNF formulas. 
By controlling the standard risk, we can bound the robust risk and, as a result, use PAC learning algorithms as black boxes for robust learning.

\paragraph*{Local membership queries.} In this model, introduced by \cite{awasthi2013learning}, the learner has access to the random-example oracle and can query the label of points that are near the randomly-drawn training sample. 
We show that at least $\Omega(2^\rho)$ local membership queries are needed for robustly learning conjunctions under the uniform distribution against an adversary that can perturb $\rho$ bits of the input \citep{gourdeau2022when}. 
We thus have the same exponential dependence in the adversary's budget as with random examples only, implying that adding local membership queries cannot, in general, improve the robustness threshold of this concept class (and any superclass)., e.g. linear classifiers and decision lists.

\paragraph*{Local equivalence queries.} Faced with the lower bound for robust learning with a local membership query oracle, we introduce a learning model where the learner is allowed to query whether the hypothesis is \emph{correct} in a specific region of the space and get a counterexample if not, which we call local equivalence queries in \citep{gourdeau2022when}, following the work of \cite{angluin1987learning}. 

We first establish that, when the query budget is strictly smaller than the perturbation budget (hence the adversary can access regions of the instance space that the learner cannot), distribution-free robust learning with random examples and local equivalence queries is in general impossible for monotone conjunctions and any superclass thereof. 
However, when the query and perturbation budgets coincide, a query to the local equivalence query oracle is equivalent to querying the robust loss and getting a counterexample if it exists.
As a result, the local equivalence query oracle becomes the exact-in-the-ball analogue of the Perfect Attack Oracle of \cite{montasser2021adversarially}.
In this case, efficient distribution-free robust learning becomes possible for a wide variety of concept classes. 
Indeed, we show random-example and local-equivalence-query upper bounds, which we refer  to as sample and query complexity, respectively.
We demonstrate that the query complexity depends on mistake bounds from online learning, and the sample complexity on the VC dimension of the robust loss of a concept class, a  notion of complexity that we have adapted from \cite{cullina2018pac} to the exact-in-the-ball notion of robustness.
We also show that the local equivalence query bound can be improved in the special case of conjunctions.
We moreover establish that the VC dimension of the robust loss between linear classifiers on $\R^n$ is  $O(n^3)$. 

Since the query complexity of linear classifiers is in general unbounded, we study the setting in which we restrict the adversary's \emph{precision} (e.g., the number of bits needed to express an adversarial example).
We use and adapt tools and techniques from \cite{ben2009agnostic}, which pertain to the study of margin-based classifiers in the context of online learning, for our purposes and exhibit finite query complexity bounds.
We then exhibit expected local equivalence query lower bounds that are linear in the \emph{restricted} Littlestone dimension of a concept class (we require that a set of potential counterexamples be in a specific region of the instance space), and show that, for a wide variety of concept classes, they coincide asymptotically with the local equivalence query upper bounds derived in \cite{gourdeau2022when}.
Finally, we offer a more nuanced discussion of the local membership and equivalence query oracles.
In particular, we show that the local equivalence query and its global counterpart, the equivalence query, are in general incomparable.

\section{Thesis Structure}

\subsection*{Chapter~\ref{chap:lit-review}}

This chapter consists of the literature review.
We first review foundational work on classification in the learning theory literature. We then turn our attention to the more recent related work on adversarial robustness in machine learning, particularly in the context of evasion attacks.
We mainly focus on work that is foundational in nature, as it is the lens with which we study adversarial robustness.

\subsection*{Chapter~\ref{chap:background}}

We review necessary technical background to the understanding of the technical contributions of this thesis, which largely focuses on classification in the following models: the PAC framework of \cite{valiant1984theory}, the exact learning framework of \cite{angluin1987learning}, and the online learning setting.
We also review some probability theory and Fourier analysis. 

\subsection*{Chapter~\ref{chap:def-adv-rob}}

We motivate the study of adversarial robustness for classification tasks under the exact-in-the-ball notion of robustness. 
We rigorously discuss the different notions of robust risk and their significance, particularly the impossibility of obtaining distribution-free guarantees in our setting.
We initiate our study of efficient robust learnability (from a sample-complexity point of view) with monotone conjunctions.
We show a sample complexity lower bound that is exponential in the adversary's budget under the uniform distribution, ruling out  the existence of efficient robust learning algorithms against adversaries with a budget super-logarithmic in the input dimension in this setting.
We show, however, that it is possible to robustly learn  monotone conjunctions under log-Lipschitz distributions against a logarithmically-bounded adversary.

The material in this chapter is based on the following papers:

\begin{itemize}
    \item  \textbf{Pascale Gourdeau}, Varun Kanade, Marta Kwiatkowska, and James Worrell, “On the hardness
of robust classification,” in \emph{33rd Conference on Neural Information Processing 
Systems (NeurIPS)}, 2019. 
	\item  \textbf{Pascale Gourdeau}, Varun Kanade, Marta Kwiatkowska, and James Worrell, “Sample complexity bounds for robustly learning decision lists against evasion attacks,” in \emph{International Joint Conference on Artificial Intelligence (IJCAI)}, 2022. 
  \end{itemize}

\subsection*{Chapter~\ref{chap:rob-thresholds}}

In this chapter, we study the \emph{robustness thresholds} of various concept classes under distributional assumptions.
We show the exact learning of parities under log-Lipschitz distributions and of majority functions  under the uniform distribution, giving a robustness threshold of $n$ for these classes.
We then show a robustness threshold of $\log(n)$ for the class of $k$-decision lists, which is parametrized by the size $k$ of a conjunction at each node in the list. 
Since our aim is to bound the sample complexity of robustly learning, we study various restrictions of decision lists: 1-decision lists, 2-decision lists, monotone $k$-decision lists and finally (non-monotone) $k$-decision lists. 
The proofs not only rely on  different technical tools, but they more importantly yield much better sample complexity bounds for the simpler subclasses.
We finish by relating the standard and robust errors of decision trees under log-Lipschitz distributions.

This chapter is based on the following two papers, the first one being the journal version of the NeurIPS 2019 paper presented in the previous chapter:

\begin{itemize}
	\item  \textbf{Pascale Gourdeau}, Varun Kanade, Marta Kwiatkowska, and James Worrell, “On the hardness
of robust classification,” in \emph{Journal of Machine Learning Research (JMLR)}, 2021.  
	\item  \textbf{Pascale Gourdeau}, Varun Kanade, Marta Kwiatkowska, and James Worrell, “Sample complexity bounds for robustly learning decision lists against evasion attacks,” in \emph{International Joint Conference on Artificial Intelligence (IJCAI)}, 2022. 
  \end{itemize}

\subsection*{Chapter~\ref{chap:local-queries}}

We consider learning models in which the learner has access to local queries in addition to random examples.
We first show that local membership queries do not increase the robustness threshold of conjunctions under the uniform distribution.
We then study local equivalence queries, and show that distribution-free robust learning is impossible for a wide variety of concept classes if the query budget is strictly smaller than the adversarial budget.
We demonstrate, however, that when the two coincide, distribution-free robust learning becomes possible.
We exhibit general sample and query complexity upper bounds as well as tighter bounds in the special case of conjunctions.
We also give explicit bounds for linear classifiers on the boolean hypercube. 
We then study linear classifiers in the continuous case and establish a general sample complexity upper bound, as well as a query complexity upper bound when we limit the adversary's precision. 
We complement the upper bounds by showing general lower bounds on the expected number of queries to the local equivalence query oracle and instantiate them for specific concept classes.
We finish by comparing the local membership and equivalence query oracles, as well as how they compare with the membership and equivalence query oracles.

Sections~\ref{sec:lq-models},~\ref{sec:rob-learn-lmq} and~\ref{sec:rob-learn-leq} are based on the following publication:

\begin{itemize}
  \item  \textbf{Pascale Gourdeau}, Varun Kanade, Marta Kwiatkowska, and James Worrell, “When are local queries useful for robust learning?” in \emph{36th Conference on Neural Information Processing 
Systems (NeurIPS)}, 2022. 
  \end{itemize}
  
Sections~\ref{sec:adv-bounded-precision},~\ref{sec:qc-lb-leq} and~\ref{sec:comparing-lq} are based on work that we are currently preparing for submission.

\subsection*{Chapter~\ref{chap:conclusion}}

We conclude by summarizing our contributions and drawing a picture of robust learnability in the learning models we have studied. 
Finally, we outline various avenues for future work. 

\section{Statement of Contribution}

The publications mentioned in the previous section have largely been my own work, with direction from my supervisors James Worrell, Varun Kanade and Marta Kwiatkowska. 
While NeurIPS 2019/JMLR 2021 papers addressed research questions posed by my supervisors, I lead the research -- including the technical aspect by deriving the proofs, and wrote most of the paper.  
For the IJCAI 2022 paper, I continued to lead in the technical development and writing up of the manuscript. In addition to this, I played a major role in formulating the research questions and positioning the work in a wider context. 
For the NeurIPS 2022 paper and subsequent ongoing work, I did most  of the work on my own – from finding and defining the research problem and learning model, providing insights on the problem at hand, deriving the proofs and writing the whole paper. 
I was of course supported by my supervisors: they referred me to a paper and suggested a way to prove a particular bound, they strengthened the paper by providing helpful feedback through nuanced discussions, and reviewed many iterations of the draft. 

\chapter{Literature Review}
\label{chap:lit-review}
This chapter gives an overview of the literature relevant to this thesis.
We start by reviewing classical learning theory results, focusing on classification.
We finish with a review of adversarial machine learning.
While we mention work pertaining to other views on robustness, our focus is the study of robustness to evasion attacks, particularly from a foundational viewpoint.

The results in this chapter are presented at a high level.
However, readers who are not familiar with learning theory may find it beneficial to refer to Chapter~\ref{chap:background}, which gives a thorough technical introduction to various frameworks and complexity measures discussed in this chapter.

\section{The Learning Theory Landscape}

We start with an overview of the established literature in classification in the probably  approximately correct  and online learning frameworks, and then move to learning with access to membership and equivalence queries.

\subsection{Classification}

The probably approximately correct (PAC) learning model of \citet{valiant1984theory} is one of the most well-studied classification models in learning theory.
In this framework, the learner has access to the example oracle, which returns a point $x\sim D$ sampled from an underlying distribution and its label $c(x)$, where $c$ is the target concept (ground truth). The goal is to output a hypothesis $h$ from a hypothesis class $\H$ such that $h$ has low error with high probability.\footnote{In the realizable setting, where it is possible to achieve zero risk, we want the risk to be as close as possible to zero. In the agnostic setting, we compare the risk of the hypothesis output by an algorithm to the risk of the optimal function from the hypothesis class.}
Remarkably, there exists a complexity measure, namely the VC dimension of \citet{vapnik1971uniform}, that characterizes the learnability of a hypothesis class. 
Indeed, it is possible to get both upper and lower bounds for the number of samples needed for learning (i.e., the sample complexity) that are \emph{linear} in the VC dimension.
The upper bound is due to \citet{vapnik1982estimation,blumer1989learnability}, and the lower bounds to 
\citet{blumer1989learnability,ehrenfeucht1989general}. 
These bounds are tight up to a $\log\left(\frac{1}{\epsilon}\right)$ factor, where $\epsilon$ is the parameter controlling the accuracy of the hypothesis output by the learning algorithm.

The one-inclusion graph of \citet{haussler1994predicting}, which also enjoys an upper bound that is linear in the VC dimension, was conjectured to be optimal (in the sense that the upper and lower bounds on sample complexity are tight) by \citet{warmuth2004optimal} until the recent work of  \citet{aden2022one} showing that this is not the case.
However, the breakthrough work of \citet{hanneke2016optimal} showed it is in general possible to get rid of the $\log\left(\frac{1}{\epsilon}\right)$ factor with a majority-vote classifier, following important advances made by \citet{simon2015almost}.

Another popular learning setting is that of online learning, introduced in the seminal work of \citet{littlestone1988learning} and in which a learning algorithm competes against an adversary. At each iteration, the learner is presented with an instance to predict, and afterwards the adversary reveals the true label of the instance.
The goal is to make as few mistakes as possible. 
\citet{littlestone1988learning} studied the \emph{realizable setting}, where there always is a function that makes zero mistakes on the learning sequence, and showed that a notion of complexity (the Littlestone dimension) characterizes online learnability in this framework.
The algorithm achieving this is called the standard optimal algorithm (SOA), which was later adapted by 
\citet{ben2009agnostic} to the \emph{agnostic setting}, where there need not exist a function that makes zero mistakes; the algorithm's performance is instead compared with the best hypothesis \emph{a posteriori}.
There is a vast literature on online learning, and we refer the reader to the book of \citet{cesa2006prediction} for a technical overview and references therein.

\subsection{Learning with Queries}

The works mentioned in the previous section studied classification when the learner has access to random examples. 
Active learning is another learning framework in which the learner is given more power, often through the use of membership and equivalence queries.
Membership queries allow the learner to query the label of any point in the input space $\X$, namely, if the target concept is $c$, the membership query ($\MQ$) oracle returns $c(x)$ when queried with $x\in\X$.
On the other hand, the equivalence query ($\EQ$) oracle takes as input a hypothesis $h$ and returns whether $h=c$, and provides a counterexample $z$ such that $h(z)\neq c(z)$ otherwise.
The goal in the $\MQ+\EQ$ model is usually to learn the target $c$ exactly, which is in contrast to the PAC setting which requires to learn with high confidence a hypothesis with low error.  

The seminal work of \citet{angluin1987learning} showed that deterministic finite automata (DFA) are exactly learnable with a polynomial number of queries to $\MQ$ and $\EQ$ in the size of the DFA. 
Follow-up work generalized these results.
E.g., \citet{bshouty1993exact} showed that poly-size decision trees are efficiently learnable in this setting as well;
\citet{angluin1988queries} later investigated other types of queries and also showed that $k$-CNFs and $k$-DNFs are exactly learnable with access to membership queries;
\cite{jackson1997efficient} showed that, in the $\PAC + \MQ$ setting, the class of $\mathsf{DNF}$ formulas is learnable under the uniform distribution.
But even these powerful learning models have limitations: learning DFAs only with $\EQ$ is hard \citep{angluin1990negative} and, under cryptographic assumptions, DFAs are also hard to learn solely with the $\MQ$ oracle~\citep{angluin1995when}.

On a more applied note, the $\MQ + \EQ$ model has recently been used for recurrent and binarized neural networks \citep{weiss2018extracting,weiss2019learning,okudono2020weighted,shih2019verifying}, and interpretability \citep{camacho2019learning}.
It is also worth noting that the $\MQ$ learning model has been criticized by the applied machine learning community, as labels can be queried in the whole input space, irrespective of the distribution that generates the data.
In particular, \citet{baum1992query} observed that query points generated by a learning algorithm on the handwritten characters often appeared meaningless to human labellers.
\citet{awasthi2013learning} thus offered an alternative learning model to Valiant's original model, the PAC and local membership query ($\EX+\LMQ$) model, where the learning algorithm is only allowed to query the label of points that are close to examples from the training sample.
\citet{bary2020distribution} later showed that many concept classes, including DFAs, remain hard to learn in the $\EX+\LMQ$ model. 

\section{Adversarial Machine Learning}

There has been considerable interest in adversarial machine learning
since the seminal work of~\cite{szegedy2013intriguing}, who coined
the term \emph{adversarial example} to denote the result of applying a
carefully chosen perturbation that causes a classification error to a
previously correctly classified datum. 
This work was largely experimental in nature and presented a striking instability of deep neural networks, where for example a correctly-classified image of a school bus was labelled as an ostrich after a perturbation (imperceptible to the human eye) was applied, as in Figure~\ref{fig:school-bus}.   
\cite{biggio2013evasion}
independently observed this phenomenon with experiments on the MNIST \citep{lecun1998mnist} dataset. However, as pointed out 
by \cite{biggio2017wild}, adversarial machine learning has been 
considered much earlier in the context of spam filtering~(\cite{dalvi2004adversarial,lowd2005adversarial,lowd2005good,barreno2006can}).
Their survey also distinguished two settings: \emph{evasion attacks}, 
where an adversary modifies data at test time, and \emph{poisoning attacks}, 
where the adversary modifies the training data.
For an in-depth review and definitions of different types of attacks, the reader may refer to \citep{biggio2017wild,dreossi2019formalization}.
For an introduction to adversarial defences in practice, see, e.g., \citep{goodfellow2015explaining,zhang2019theoretically}.

\begin{figure}
\begin{center}
\includegraphics[scale=0.35]{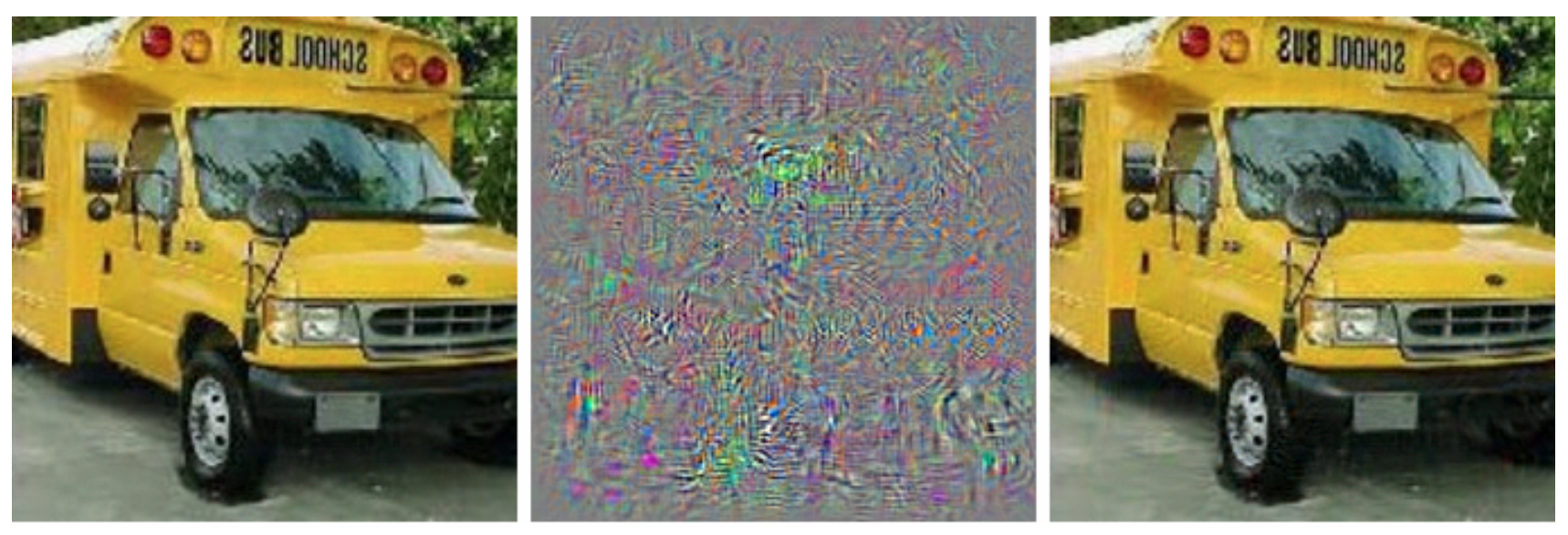}
\end{center}
\caption{A school bus is classified as an ostrich after a small perturbation is applied to the original image \citep{szegedy2013intriguing}.}
\label{fig:school-bus}
\end{figure}

As our work pertains to the robustness of machine learning algorithms to evasion attacks in classification tasks from a learning theory perspective, our review of related work will mainly concern this topic (Section~\ref{sec:evasion-attacks}). 
Before discussing this body of work, we will briefly mention other views on robustness.

Many works have studied the robustness of learning algorithms to poisoning attacks, in which an adversary can modify the training data in order to increase the (standard) error at test time, one of the earliest being that of \citet{kearns1988learning}.
Various types of poisoning attacks have been put forward since then, especially as the study of robustness has garnered interest in recent years. 
Clean-label attacks, proposed by \cite{shafahi2018poison}, are a distinct form of poisoning attacks where the poisoned examples are labelled correctly, i.e., by the target function, and not adversarially.
For a learning-theoretic approach and results on this problem, see \citep{mahloujifar2017blockwise,mahloujifar2019can,mahloujifar2018learning,
mahloujifar2019curse,etesami2020computational,blum2021robust}  (non-exhaustive).
In case there is no restriction on the label of poisoned data, see, e.g., the works of \citep{barreno2006can,biggio2012poisoning,papernot2016towards,steinhardt2017certified} (non-exhaustive).
Finally, for work on defences against poisoning attacks, we refer the reader to \citep{goldblum2022dataset}.

Another view on robustness is out-of-distribution detection, where the goal is to identify outliers at test time. We refer the reader to the textbook \citep{quinonero2008dataset} for an introduction on dataset shifts, and to \citep{fang2022out} for a study on out-of-distribution detection from a PAC-learning perspective, as well as references therein for the empirical work on the matter. 
A more general view on distributional discrepancies at test-time is that of distribution shift. 
See \citep{wiles2022fine} for a taxonomy on various distribution shifts and a review of important work in the area (mostly from an empirical perspective). 

\subsection{Evasion Attacks}
\label{sec:evasion-attacks}

We now turn our attention to the focus of this thesis: robustness to evasion attacks.
For ease of reading, we have  thematically split the related work in this section. 

\paragraph*{Defining Robustness.}
The majority of the guarantees and impossibility results for evasion attacks are based on the existence of adversarial examples. 
However, what is considered to be an adversarial example has been defined in different, and in some respects contradictory, ways in the literature. 
What we refer to as the \emph{exact-in-the-ball} notion of robustness in this work (also known as
\emph{error region} risk in \citep{diochnos2018adversarial}) requires that the hypothesis and the ground truth agree in the perturbation region around each test point; the ground truth must thus be specified on all
input points in the perturbation region.
On the other hand, what we refer to as the
constant-in-the-ball notion of robustness (which is also known as
\emph{corrupted input} robustness from the work of \cite{feige2015learning}) requires that the
unperturbed point be correctly classified and that the points in the perturbation region share its label, meaning that we only need access to the test point labels; the works  \citet{diochnos2018adversarial,dreossi2019formalization,pydi2021many} offer thorough discussions on the subject and also compare robustness definitions.
Moreover, \citet{chowdhury2022robustness} have studied settings where a model's change of label is justified by looking at robust-Bayes classifiers and their standard counterparts.  

We note that \cite{suggala2019revisiting} proposed an alternative definition of robustness, where a perturbation is deemed adversarial if it causes a label change in the hypothesis \emph{while the target classifier's label remains constant}. 
The existence of a ground truth is thus explicitly assumed (which is not in general necessary for constant-in-the-ball robustness).

Rather than studying the existence of a misclassification in the perturbation region, \citet{pang2022robustness} define robustness using the Kullback-Leibler (KL) divergence. The robust loss at a given unperturbed point $x$ is the maximal KL divergence over perturbations $z$ between the underlying labelling function ($\Prob{}{y\given z}$) of $z$ and the hypothesis' label for $z$ (which could also be non-deterministic). The authors proposed this definition of robustness in an effort to avoid the trade-off between accuracy and robustness observed in prior work, e.g., \citep{tsipras2019robustness}.

In the remainder of this section, whenever the robust risk is not explicitly mentioned, the results will hold for the constant-in-the-ball notion of robustness, as it is the most widely used in the literature.

\paragraph*{Existence of Adversarial Examples.} There is a considerable body of work that studies the inevitability of
adversarial examples,
e.g.,~\citep{fawzi2016robustness,fawzi2018adversarial,
fawzi2018analysis,gilmer2018adversarial,shafahi2018adversarial,
tsipras2019robustness}.
These papers characterize robustness in the sense that a classifier's output
on a point should not change if a perturbation of a certain
magnitude is applied to it.  
These works also study
geometrical characteristics of classifiers and statistical
characteristics of classification data that lead to adversarial
vulnerability. 
It has been shown that, in many instances, the vulnerability of learning models to adversarial examples is inevitable due to the nature of the learning problem.
Notably, \citet{bhagoji2019lower} study robustness to evasion attacks from an optimal transport perspective, obtaining lower bounds on the robust error. 
Moreover, many works exhibit a trade-off between standard accuracy and robustness in this setting, e.g., \citep{tsipras2019robustness,dobriban2020provable}.

As for the exact-in-the-ball definition of robustness, \citet{diochnos2018adversarial} consider the robustness of monotone conjunctions under the uniform distribution. 
Their results  concern the ability of an adversary to magnify the
missclassification error of \emph{any} hypothesis with respect to
\emph{any} target function by perturbing the input.\footnote{We will draw an explicit comparison with the work of \cite{diochnos2018adversarial} in Section~\ref{sec:mon-conj-sc-lb}.} 
\citet{mahloujifar2019curse} generalized the above-mentioned result to Normal Lévy families and a class of well-behaved classification problems (i.e., ones where the error regions are measurable and average distances exist).

\paragraph*{Computational Complexity of Robust Learning.} The computational complexity of robust learning is an active research area.
\citet{bubeck2018cryptographic} and \citet{degwekar2019computational} have shown that there are concept classes that are
hard to robustly learn under cryptographic assumptions, even when robust learning is
information-theoretically feasible. 
\citep{bubeck2019adversarial} established super-polynomial lower
bounds for robust learning in the statistical query framework. 
\cite{diakonikolas2019nearly} study the more specific problem of (standard) proper learning of halfspaces with noise and large $\ell_2$ margins in the agnostic PAC setting, focussing on the computational complexity of this learning problem. They remark that these guarantees can apply to robust learning. 
In follow-up work \citep{diakonikolas2020complexity}, they explicitly study robustness to $\ell_2$ perturbations and generalize their previous results. 
In particular, they obtain computationally-efficient algorithms using an online learning reduction, and building on a hardness result in \citep{diakonikolas2019nearly}, and provide tight running time lower bounds.
Finally, \citet{awasthi2019robustness} draw connections between robustness to evasion attacks and polynomial optimization problems, obtaining  a computational hardness result. 
On the other hand, they exhibit computationally efficient robust learning algorithms for linear and quadratic threshold functions in the realizable case. 

\paragraph*{Sample Complexity of Robust Learning.} Despite being a relatively recent research area, there already exists a vast literature on the sample complexity of robust learning to evasion attacks.
One of the earlier works is that of \citet{cullina2018pac}, who define the notion of adversarial VC dimension to derive sample complexity upper bounds for robust empirical risk minimization (ERM) algorithms, with respect to the constant-in-the-ball robust risk. 
They also study the special case of halfspaces under $\ell_p$ perturbations and show the adversarial VC dimension is in general incomparable with its standard counterpart.
Shortly after, \citet{attias2019improved} adopted a game-theoretic framework to study robust learnability for classification and regression in a setting where the adversary is limited to a fixed number $k$ of perturbations per input. They obtain sample complexity bounds that are linear in both $k$ and the VC dimension of a hypothesis class.
The work of \citet{montasser2019vc} later provided a more complete picture of robust learnability.
The authors show sample complexity upper bounds for robust ERM algorithms that are polynomial in the VC and dual VC dimensions of concept classes, giving general upper bounds that are exponential in the VC dimension. 
They also exhibit sample complexity lower bounds linear in the robust shattering dimension, a notion of complexity introduced therein.
The gap between the upper and lower bounds was closed in their later work \citep{montasser2022adversarially}, where they fully characterize the sample complexity of robust learning with arbitrary perturbation functions. 
The robust learning algorithm achieving the upper bound is a generalization of the one-inclusion graph algorithm of \cite{haussler1994predicting}. 
Their robust variant of the one-inclusion graph is defined for the constant-in-the-ball \emph{realizable} setting,\footnote{\label{note:realizable-c-i-b}I.e., there exists a hypothesis that has zero constant-in-the-ball robust loss.} but the agnostic-to-realizable reduction from previous work \citep{montasser2019vc} can be applied. 
The  (random-example) sample complexity characterizing robust learnability is a notion of \emph{dimension} defined through the edges on the graph structure.

The above bounds consider the supervised setting, where the learner has access to labelled examples.
Since the cost of obtaining data is at times largely due to its labelling,\footnote{Think for example of obtaining images vs needing humans to label them.} studying semi-supervised learning, where the learner has access to both unlabelled as well as labelled examples, is of general interest.
\citet{ashtiani2020black} build on the work of \citet{montasser2019vc} (who showed that proper robust learning, where the learner is required to output a hypothesis from the same class as the potential target concept, is sometimes impossible) and delineate when proper robust learning is possible. 
They moreover draw a more nuanced picture of proper robust learnability with access to unlabelled random examples.
\cite{attias2022characterization} also study the sample complexity of robust learning in the semi-supervised framework.
Notably, in the realizable setting, their labelled sample complexity bounds are linear in a variant of the VC dimension where, for a shattered set, the perturbation region around a given point must share the same label.\footnote{This complexity measure is always upper bounded by the VC dimension, and the gap can be arbitrarily large. }
The unlabelled sample complexity is linear in the sample complexity of \emph{supervised} learning. 
The authors also extend their results to the agnostic setting.

While it is worthwhile to study robust learnability for arbitrary perturbation regions, focussing on specific perturbation functions that are more faithful to real-world problems is of high interest, especially if this can provide better guarantees or a clearer picture of robustness in this setting.
In this vein, \cite{shao2022theory} study the robustness to evasion attacks under \emph{transformation invariances}.
This terminology comes from group theory: the transformations applied to instances form a group, and an invariant hypothesis will give the same label to points in the orbit of every instance in the support of the distribution generating the data.\footnote{E.g., rotating an image of a cat will still result in an image of a cat, while rotating an image of a six can result in an image of nine. Transformation invariances are thus problem specific. }
As a characterization of robust learnability in these settings, they propose two combinatorial measures that are variants of the VC dimension that take into account the orbits of points in the shattered set, and prove nearly-matching upper and lower bounds.

All the works mentioned above study sample complexity through the VC dimension of a concept class, or variants adapted to robust learnability.
On the other hand, \citet{khim2019adversarial,yin2019rademacher,awasthi2020adversarial} instead use the \emph{adversarial} Rademacher complexity to study robust learning.
These works give  results for ERM on linear classifiers and neural networks.

As for the exact-in-the-ball definition of robustness,  \citet{diochnos2020lower} study sample complexity lower bounds. They show that, for a wide family of concept classes, any learning algorithm that is robust against all attacks with budget $\rho=o(n)$ must have a sample complexity that is at least  exponential in the input dimension $n$. 
They also show a superpolynomial lower bound in case $\rho=\Theta( \sqrt n)$.
This, along with the previously-mentioned works of \cite{diochnos2018adversarial,mahloujifar2019curse} are to our knowledge the only other works apart from ours that consider the sample complexity of exact-in-the-ball robust learning from a theoretical perspective.

\paragraph*{Relaxing Robustness Requirements.}
Most adversarial learning guarantees and impossibility results in the literature have focused on all-powerful adversaries.
Recent works have studied learning problems where the adversary's power is curtailed.
One way to do this is to consider  \emph{computationally-bounded} adversaries. 
E.g, \citet{mahloujifar2019can} and \citet{garg2020adversarially} study the robustness of classifiers to polynomial-time attacks. 
They show that, for product distributions, an initial constant error implies the existence of a (black-box) polynomial-time attack for adversarial examples that are $O(\sqrt{n})$ bits away from the test instances. 
However, \cite{garg2020adversarially} show a separation result for a learning problem where a classifier can be successfully attacked by a computationally-unbounded adversary, but not by a  polynomial-time bounded adversary subject to standard cryptographic hardness assumptions.

It is also possible to relax the optimality condition when evaluating a hypothesis.
\citet{ashtiani2023adversarially} and \citet{bhattacharjee2023robust} both study  \emph{tolerant robust learning}, where the learner is evaluated relative to the hypothesis with the best robust risk under a slightly larger perturbation region. \cite{ashtiani2023adversarially} show that  this setting enables better sample complexity bounds that the standard robust setting for metric spaces $(\X,d)$ in case the perturbation region is a ball with respect to the metric $d$. \cite{bhattacharjee2023robust} build on their work and instead consider problems with a geometric niceness property called \emph{regularity} to get more general perturbation regions.
They obtain matching sample complexity bounds to \citep{ashtiani2023adversarially} as well as propose a variant of robust ERM as a simpler robust learning algorithm for this problem.

Another relaxation of the robust learning objective is a probabilistic variant of robust learning. 
\citet{viallard2021pac} derive PAC-Bayesian generalization bounds (where the output is a posterior distribution over hypotheses after seeing the data) for the averaged risk on the perturbations, rather than working in a worst-case scenario.
\citep{robey2022probabilistically} also consider probabilistic robustness, where the aim is to output a hypothesis that is robust to \emph{most} perturbations.

\paragraph*{Increasing the Learner's Power.} 
To improve robustness guarantees, it is also possible to give the learner access to more powerful oracles than the random-example one. 
\cite{montasser2020reducing,montasser2021adversarially} study robust learning with access to a (constant-in-the-ball) robust loss oracle, which they call the Perfect Attack Oracle (PAO).
For a perturbation type $\U:\X\rightarrow 2^\X$, hypothesis $h$ and labelled point $(x,y)$, the PAO returns the constant-in-the-ball robust loss of $h$ in the perturbation region $\U(x)$ and a counterexample $z\in\U(x)$ where $h(z)\neq y$ if it exists.
In the constant-in-the-ball \emph{realizable} setting, the authors use online learning results to show sample and query complexity bounds that are linear and quadratic in the Littlestone dimension of concept classes, respectively \citep{montasser2020reducing}.
\citet{montasser2021adversarially} moreover use the algorithm from \citep{montasser2019vc} to get  sample and query complexity upper bounds  that respectively have a linear and exponential dependence on the VC and dual VC dimensions of the hypothesis class at hand.
Finally, they extend their results to the agnostic setting and derive lower bounds.

\chapter{Background}
\label{chap:background}
In this chapter, we introduce the necessary background and notation for the main contributions of this thesis.
We start by reviewing standard learning theory concepts in Section~\ref{sec:learning-theory}, before moving to probability theory  in Section~\ref{sec:prob-theory}. 
We finish with an overview of Fourier analysis in Section~\ref{sec:fourier-analysis}.

\paragraph*{Notation.} Throughout this text, we will use $[n]$ to denote the set $\set{1,\dots,n}$. 
The symbol $\Delta$ will represent the symmetric difference between two sets: $I\Delta J=\set{x \given x\in I\setminus J \text{ or } x\in J\setminus I}$. 
We will use the asymptotic notation ($o,O,\omega,\Omega,\Theta$), with the convention that the symbol $\;\widetilde{\;}$ (e.g., $\widetilde{O}$) omits the logarithmic factors.
Given a metric space $(\X,d)$ and $\lambda\in\R$, we denote by $B_\lambda(x)$ the ball $\set{z\in\X\given d(x,z)\leq\lambda}$  of radius $\lambda$ centred at $x$.
We will use the symbol $\mathbf{1}[\cdot]$ for the indicator function.
Finally, for a given formula $\varphi$ and instance $x$, we denote by $x\models \varphi$ the event that $x$ satisfies $\varphi$.

\section{Learning Theory: Classification}
\label{sec:learning-theory}

Learning theory offers an elegant abstract framework to analyse the behaviour of machine learning algorithms, as well as to provide performance and correctness guarantees or show impossibility results. 
There exist various learning settings, depending on assumptions on how the data is obtained and on the learning objectives. 
This thesis is primarily concerned with \emph{binary classification}, where, given an input space $\X$, the goal is to output a function $h:\X\rightarrow\{0,1\}$ called a \emph{hypothesis}, which upon being given an instance $x\in\X$ outputs a label $h(x)\in\{0,1\}$.
The more general task of learning a function $\X\rightarrow\Y$ is called \emph{multiclass classification} when $\Y$ is a discrete finite set, and \emph{regression} when $\Y=\R$.

In this section, we give an overview of  three learning settings for binary classification: learning with random examples in the Probably Approximately Correct (PAC) framework, the mistake-bound model of online learning, and learning with membership and equivalence queries. 
For each setting, we discuss various notions of complexity that control the amount of data needed to learn, i.e., the \emph{sample complexity}.
In all cases, we will be using the terms learning algorithm, learner and learning process interchangeably to denote a process of data acquisition and analysis resulting in outputting a hypothesis $h$ as above. 
For a more in-depth introduction to the concepts presented in this section, we refer the reader to \cite{mohri2012foundations} and \cite{shalev2014understanding}, both excellent introductory textbooks on learning theory.

\subsection{The PAC Framework}
\label{sec:pac}

The Probably Approximately Correct (PAC) framework of \cite{valiant1984theory}, depicted in Figure~\ref{fig:pac}, formalises the desired behaviour of a learning algorithm. 
In this learning setting, a learning algorithm has access to \emph{random examples} drawn in an i.i.d. fashion from an underlying distribution $D$, 
and we wish to output a hypothesis that has small \emph{error} with high \emph{confidence}.
The error $\err_D(h,c)$ of a hypothesis with respect to $D$ is measured against a \emph{ground truth function} or \emph{target concept} $c:\X\rightarrow\{0,1\}$ which labels the data, and is defined as
$$\err_D(h,c)=\Prob{x\sim D}{c(x)\neq h(x)}\enspace.$$
The set of points $x\in \X$ such that $c(x)\neq h(x)$ is often referred to as the \emph{error region}.
We sometimes model the sampling process by having access to the random example oracle $\EX(c,D)$.
The ``probably'' part of the PAC learning framework speaks to the confidence of the learning algorithm, and allows for the possibility that a sample $S\sim D^m$ of size $m$ drawn from the underlying distribution $D$ is not representative of $D$. 
The ``approximately'' part of PAC learning refers to the requirement that the hypothesis have sufficiently high accuracy, a relaxation from learning \emph{exactly.}
Both the confidence and accuracy parameters are inputs to the learning algorithm, and are \emph{learning parameters}.

Another important parameter that affects the sample complexity is how \emph{large} the instance size is, e.g., the larger the number of pixels for image classification is, the larger the amount of data needed to learn could be.
This is usually controlled by the \emph{dimension $n$ of the input space},  in reference to $\boolhc$ and $\R^n$.
To this end we consider a collection of pairs of input space and concepts classes $\X_n$ and $\C_n$ for each dimension $n$, where $\C_n$ is a set of functions $c:\X_n\rightarrow\{0,1\}$. 

We are now ready to formally define the PAC learning setting.

\begin{definition}[PAC Learning, Realizable Setting]
\label{def:pac-realizable}
For all $n\in\N$, let $\C_n$ be a concept class over $\X_n$ and let $\C=\bigcup_{n\in\N}\C_n$.
We say that $\C$ is \emph{PAC learnable using hypothesis class $\H$} and sample complexity function $m(\cdot,\cdot,\cdot,\cdot)$ if there exists an algorithm $\A$ that satisfies the following:
for all $n\in\N$, for every $c\in\C_n$, for every $D$ over $\X_n$, for every $0<\epsilon<1/2$ and $0<\delta<1/2$, if whenever $\A$ is given access to $m\geq m(n,1/\epsilon,1/\delta,\text{size}(c))$ examples drawn i.i.d. from $D$ and labeled with $c$, $\A$ outputs an $h\in\H$ such that with probability at least $1-\delta$, 
\begin{equation*}
\err_D(h,c)=\Prob{x\sim D}{c(x)\neq h(x)}\leq \epsilon\enspace.
\end{equation*}
We say that $\C$ is statistically efficiently PAC learnable if $m$ is polynomial in $n,1/\epsilon$, $1/\delta$ and size$(c)$, and computationally efficiently PAC learnable if $\A$ runs in polynomial time in $n,1/\epsilon$, $1/\delta$ and size$(c)$  and $h$ is polynomially evaluatable.
\end{definition}

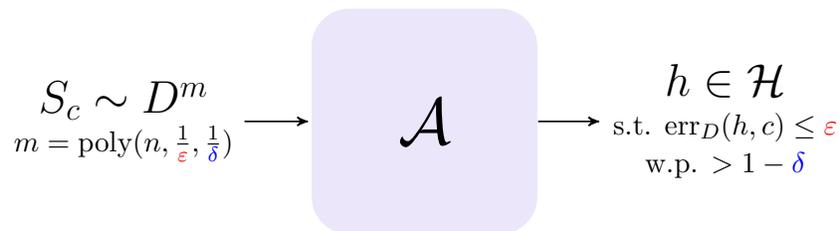
\begin{figure}
\begin{center}
\blue{Probably} \red{Approximately} Correct Learning\\
\medskip
\medskip
\begin{tikzpicture}[->,>=stealth',shorten >=1pt,auto,semithick]
\node[fill=blue!80!red!10, minimum size=3cm, align=center, rounded corners=0.5cm] (algorithm) {\Huge{$\A$}};
\node[left of=algorithm,align=center,xshift=-3cm] (sample) {\Large{$S_c\sim D^m$} \\ \small{$m=\text{poly}(n,\frac{1}{\red{\varepsilon}},\frac{1}{\blue{\delta}})$}};
\node[right of=algorithm,align=center,xshift=3cm] (hypothesis) {\Large{$h\in\H$} \\ \small{s.t. err$_D(h,c)\leq \red{\varepsilon}$}\\ \small{w.p. $>1-\blue{\delta}$}};
\path 	(sample) edge (algorithm)
		(algorithm) edge (hypothesis);
\end{tikzpicture}
\end{center}
\caption{A visual representation of sample-efficient PAC learning. $S_c$ means that the sample $S$ has been labelled with the ground truth $c$.}
\label{fig:pac}
\end{figure}

\paragraph*{Size$(c)$ and polynomial evaluatability.}
Two additional requirements from accuracy and confidence are introduced in the above definition: these are a sample complexity function dependent on the size $size(c)$ of the target concept $c$, and, if one requires computational efficiency, the fact that $h$ is \emph{polynomially evaluatable}.
The size of a concept is defined through a \emph{representation scheme}. 
Essentially, there could exist several representations of a function, e.g., a function can be computed by many different boolean circuits. 
Assuming that there exists a function measuring the size of a representation, the size of a concept $c$ is the minimal size of a representation of $c$.
The second requirement is natural: if the hypothesis is not required to be polynomially evaluatable, then the learner could simply ``offload'' the learning process at test time (there is nothing to do at training, so it would be considered ``efficient''), and overall require arbitrarily high computational complexity. 

\paragraph*{Proper vs improper learning.}
The setting where $\C=\H$ is called \emph{proper learning}, and \emph{improper learning} if $\C\subseteq\H$.
While requiring proper learning does not affect the sample complexity of learning very much,\footnote{It is possible to get rid of the $\log1/\epsilon$ factor of Theorem~\ref{thm:vc-pac-upper-bound} as shown by the recent breakthrough of \cite{hanneke2016optimal} with an improper learner, but, aside from this, the sample complexity bounds in Theorems~\ref{thm:vc-pac-upper-bound} and~\ref{thm:vc-pac-lower-bound} are tight for any consistent learner.} it can affect its computational efficiency.
Indeed, unless $\mathsf{RP}=\mathsf{NP}$, which is widely believed not to be the case, it is impossible to computationally efficiently \emph{properly} learn the class of 3-term formulas in disjunctive normal form (DNF), i.e., formulas of the form $T_1\vee T_2 \vee T_3$ where the $T_i$'s are conjunctions of arbitrary lengths.
However, it is possible to computationally efficiently PAC learn 3-CNF formulas properly (formulas in conjunctive normal form where each term is a disjunction of at most 3 literals), and this class subsumes 3-term DNFs.
Hence, one can use the PAC-learning algorithm for 3-CNF to (improperly) PAC learn 3-term DNFs in a computationally efficient manner. 

\paragraph*{The distribution-free assumption.}
PAC learning is \emph{distribution-free}, in the sense that no assumptions are made about the distribution from which the data is generated.
As long as the training data is sampled i.i.d. from a given distribution $D$, and that the algorithm is tested on independent examples drawn from $D$, the learning guarantees hold.
Of course, this is sometimes not a sensible assumption to make in practice. 
Many lines of work consider learning settings that allow for this and provide a more realistic learning framework, e.g., when noise is added to the data, or when the training and testing distributions differ (i.e., distribution shift), as outlined in Chapter~\ref{chap:lit-review}.

\paragraph*{Realizable vs agnostic learning.}
The \emph{realizability assumption} of Definition~\ref{def:pac-realizable}, where there always exists a concept with zero error, does not always hold.
In the presence of noise, or more generally in the absence of a \emph{deterministic} labelling function $c$ representing the ground truth (e.g., there is a joint distribution on $\X\times\Y$), we instead work in the \emph{agnostic setting}. 
In this setting, the goal is rather to learn a hypothesis that does well compared to the best concept in the concept class:

\begin{definition}[PAC Learning, Agnostic Setting]
\label{def:pac-agnostic}
Let $\C_n$ be a concept class over $\X_n$ and let $\C=\bigcup_{n\in\N}\C_n$.
We say that $\C$ is \emph{agnostically PAC learnable using $\H$} with sample complexity function $m(\cdot,\cdot,\cdot,\cdot)$ if there exists an algorithm $\A$ that satisfies the following:
for all $n\in\N$, for every $D$ over $\X_n\times\{0,1\}$, for every $0<\epsilon<1/2$ and $0<\delta<1/2$, if whenever $\A$ is given access to $m\geq m(n,1/\epsilon,1/\delta,s)$ labelled examples drawn i.i.d. from $D$, where $s=\underset{c\in\C_n}{\sup}\;\text{size(c)}$, $\A$ outputs an $h\in\H$ such that with probability at least $1-\delta$, 
\begin{equation*}
\err_D(h)\leq \underset{c\in\C_n}{\inf} \err_D(c)+\epsilon\enspace,
\end{equation*}
where $\err_D(h)=\Prob{(x,y)\sim D}{ h(x)\neq y}$.
We say that $\H$ is statistically efficiently agnostically learnable if $m$ is polynomial in $n,1/\epsilon$, $1/\delta$ and $s$, and computationally efficiently agnostically learnable if $\A$ runs in polynomial time in $n,1/\epsilon$, $1/\delta$ and $s$, and $h$ is polynomially evaluatable.
\end{definition}

The definition above allows for improper learning ($\C$ is usually called the ``touchstone'' class), but we can recover proper learning by setting $\C=\H$.
In this work, unless otherwise stated, we will assume the realizability of a learning problem, and the sample complexity bounds will be derived for this setting.
Note that there exist PAC guarantees for classes of finite VC dimension in the agnostic setting as well, at the cost of a multiplicative factor of $1/\epsilon$ in the sample complexity.
See \citep{kearns1994toward,haussler1992decision} for original work on the matter and the textbook \citep{mohri2012foundations} for an introduction on the topic.

\subsection{Complexity Measures}

While it is possible to derive sample complexity bounds for specific hypothesis classes, one can take a more general approach with the use of  \emph{complexity measures}.
Indeed, a complexity measure assigns to each hypothesis class $\H$ a function (w.r.t. the size $n$ of the instance space) quantifying its richness.
Intuitively, as the complexity measure increases, more data should be needed to identify a candidate hypothesis that would generalize well on unseen data.
We briefly note that the standard theory outlined in this chapter has failed to explain the recent success of overparametrised deep neural networks in practice which in many ways remains an open problem in the learning theory literature.

The first complexity measure we will study is perhaps the simplest one: the size of $\H$.  
Similarly to $\C$, the class $\H$ is defined as the union $\bigcup_{n\in\N}\H_n$, and the size of $\H$ is a function of $n$.
The theorem below, known as Occam's razor, gives an upper bound on the sample complexity of learning with finite hypothesis classes, given access to a \emph{consistent} learner.
A consistent learner is a learning algorithm that outputs a hypothesis that has zero empirical loss on the training sample, i.e., a hypothesis that correctly classifies all the points in the training sample. 

\begin{theorem}[Occam's Razor \citep{blumer1987occam}]
Let $\C$ and $\H$ be a concept and hypothesis classes, respectively. 
Let $\A$ be a consistent learner for $\C$ using $\H$.
Then, for all $n\in\N$, for every $c\in\C_n$, for every $D$ over $\X_n$, for every $0<\epsilon<1/2$ and $0<\delta<1/2$, if whenever $\A$ is given access to $m\geq \frac{1}{\epsilon}\left(\log(|\H_n|)+\log(1/\delta)\right)$ examples drawn i.i.d. from $D$ and labeled with $c$, then $\A$ is guaranteed to output an $h\in\H_n$ such that $\err_D(h,c)<\epsilon$ with probability at least $1-\delta$.
Furthermore, if $\log(|\H_n|)$ is polynomial in $n$ and size$(c)$, and $h$ is polynomially evaluatable, then $\C$ is statistically efficiently PAC-learnable using $\H$.
\end{theorem}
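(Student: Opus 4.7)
The plan is to carry out the classical union-bound argument over ``bad'' hypotheses. First I would fix $n$, $c \in \C_n$, $D$, $\epsilon$, and $\delta$, and call a hypothesis $h \in \H_n$ \emph{bad} if $\err_D(h,c) \geq \epsilon$. Since $\A$ is a consistent learner, the only way it can fail to output an $\epsilon$-accurate hypothesis is if some bad $h \in \H_n$ happens to be consistent with the entire training sample $S \sim D^m$. So the goal reduces to bounding the probability of the event ``there exists a bad $h \in \H_n$ consistent with $S$''.

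Next I would bound, for a fixed bad $h$, the probability that $h$ is consistent with the whole sample. On each i.i.d. draw $x \sim D$, the probability that $h(x) = c(x)$ is at most $1 - \epsilon$ by definition of badness, so independence gives
\[
\Prob{S \sim D^m}{h(x) = c(x) \text{ for all } x \in S} \leq (1-\epsilon)^m \leq e^{-\epsilon m}.
\]
A union bound over at most $|\H_n|$ bad hypotheses then yields
\[
\Prob{S \sim D^m}{\exists \text{ bad } h \in \H_n \text{ consistent with } S} \leq |\H_n| \, e^{-\epsilon m}.
\]
Setting the right-hand side to be at most $\delta$ and solving for $m$ gives exactly the stated bound $m \geq \frac{1}{\epsilon}(\log|\H_n| + \log(1/\delta))$. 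Because $\A$ is consistent, on the complementary event (probability at least $1-\delta$) every hypothesis $\A$ could possibly output is $\epsilon$-accurate, which is the desired PAC guarantee.

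For the ``furthermore'' clause on efficient PAC learnability, I would simply observe that if $\log |\H_n|$ is polynomial in $n$ and $\text{size}(c)$, then the sample size $m$ derived above is polynomial in $n$, $1/\epsilon$, $1/\delta$, and $\text{size}(c)$, matching the statistical-efficiency requirement of Definition~\ref{def:pac-realizable}; combined with polynomial evaluatability of $h$, this yields computational efficiency once $\A$'s running time is accounted for. There is really no main obstacle here: the argument is entirely standard, and the only subtlety worth double-checking is the inequality $(1-\epsilon)^m \leq e^{-\epsilon m}$, which follows from $1 - x \leq e^{-x}$ for all $x \in \R$.
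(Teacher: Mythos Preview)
Your proposal is correct and follows exactly the classical union-bound argument; the paper itself does not prove this background theorem, but when it proves the robust analogue (Lemma~\ref{lemma:occam}) in Appendix~\ref{app:occam}, it uses precisely the same steps: define bad hypotheses, bound $(1-\epsilon)^m \leq e^{-\epsilon m}$ for a fixed bad $h$, take a union bound over $\H$, and solve for $m$.
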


While the theorem above can be useful if $\H_n$ is finite for all $n$, it does not tell us much when $\H$ is infinite.
To this end, one would want to consider complexity measures that are meaningful for infinite concept classes as well. 
In the PAC setting, a useful complexity measure is the Vapnik Chervonenkis (VC) dimension of a hypothesis class, from the work of \cite{vapnik1971uniform}. 
It turns out that this measure fully \emph{characterizes} the learnability of a concept class, in the sense that one can obtain upper \emph{and} lower bounds on the sample complexity that are both \emph{linear} in the VC dimension of $\H$.

In order to define the VC dimension of a concept class, we must first define the notion of \emph{shattering} of a set. 
In Figure~\ref{fig:shatter-eg}, we give an example of a set being shattered by linear classifiers in $\R^2$.

\begin{definition}[Shattering]
Given a class of functions $\F$ from input space $\X$ to $\set{0,1}$, we say that a set $S\subseteq\X$ is \emph{shattered by $\F$} if all the possible dichotomies of $S$ (i.e., all the possible ways of labelling the points in $S$) can be realized by some $f\in\F$. 
\end{definition}

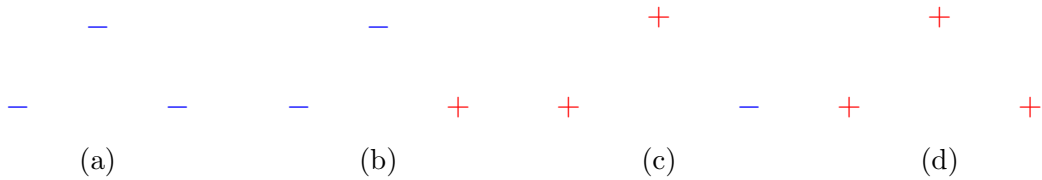
\begin{figure}
  \begin{subfigure}[b]{0.24\textwidth}
    \centering
      \begin{tikzpicture}[> = stealth,  shorten > = 1pt,   auto,   node distance = 1.5cm]
        \node (v) {$\blue{-}$};
        \node (w) [below left of=v] {$\blue{-}$};
        \node (t) [below right of=v] {$\blue{-}$};
      \end{tikzpicture}
      \subcaption{}
  \end{subfigure}
  \begin{subfigure}[b]{.24\textwidth}
    \centering
      \begin{tikzpicture}[> = stealth,  shorten > = 1pt,   auto,   node distance = 1.5cm]
        \node (v) {$\blue{-}$};
        \node (w) [below left of=v] {$\blue{-}$};
        \node (t) [below right of=v] {$\red{+}$};
      \end{tikzpicture}
      \subcaption{}
  \end{subfigure}
  \begin{subfigure}[b]{.24\textwidth}
    \centering
      \begin{tikzpicture}[> = stealth,  shorten > = 1pt,   auto,   node distance = 1.7cm]
        \node (v) {$\red{+}$};
        \node (w) [below left of=v] {$\red{+}$};
        \node (t) [below right of=v] {$\blue{-}$};
      \end{tikzpicture}
      \subcaption{}
  \end{subfigure}
  \begin{subfigure}[b]{.24\textwidth}
    \centering
      \begin{tikzpicture}[> = stealth,  shorten > = 1pt,   auto,   node distance = 1.7cm]
        \node (v) {$\red{+}$};
        \node (w) [below left of=v] {$\red{+}$};
        \node (t) [below right of=v] {$\red{+}$};
      \end{tikzpicture}
      \subcaption{}
  \end{subfigure}
\caption{A set $X$ of three points in $\R^2$ that is shattered by linear classifiers. Subfigures (a)-(d) represent different dichotomies on $X$; note that (b) and (c) are not the only labellings with one and two positively labelled points, respectively, but the other cases are symmetric.}
\label{fig:shatter-eg}
\end{figure}

We are now ready to define the VC dimension of a class.

\begin{definition}[VC Dimension]
The VC dimension of a hypothesis class $\mathcal{H}$, denoted $\VC(\mathcal{H})$, is the size $d$ of the largest set that can be shattered by $\mathcal{H}$.
If no such $d$ exists then $\VC(\mathcal{H})=\infty$.
\end{definition}

Figure~\ref{fig:vc-eg} illustrates the argument that no set in $\R^2$ of size 4 can be shattered by linear classifiers.

An important property of the VC dimension is that it is upper bounded by $\log\abs{\H}$.
Indeed, a shattered set $S$ of size $m$ needs $2^m$ distinct functions to achieve all its possible labellings.

It also is possible to define the VC dimension  through the \emph{growth function} of a concept class.
For some finite set of instances $S$, we denote by $\Pi_\C(S)=\{c\vert_S\given c\in\C\}$ the set of distinct restrictions of concepts in $\C$ on the set $S$, which is referred to as the set of all possible dichotomies on $S$ induced by $\C$.
Then a shattered set $S$ satisfies $\abs{\Pi_\C(S)}=2^{\abs{S}}$, and the VC dimension is thus the largest set satisfying this relationship.

\begin{figure}
  \begin{subfigure}[b]{0.45\textwidth}
    \centering
      \begin{tikzpicture}[> = stealth,  shorten > = 1pt,   auto,   node distance = 1.5cm]
        \node (v) {$\blue{-}$};
        \node (t) [below of=v] {$\red{+}$};
        \node (u) [below left of=t] {$\blue{-}$};
        \node (s) [below right of=t] {$\blue{-}$};
      \end{tikzpicture}
      \subcaption{}
  \end{subfigure}
  \begin{subfigure}[b]{.45\textwidth}
    \centering
      \begin{tikzpicture}[> = stealth,  shorten > = 1pt,   auto,   node distance = 1.5cm]
        \node (v) {$\blue{-}$};
        \node (t) [right of=v] {$\red{+}$};
        \node (u) [below of=v] {$\red{+}$};
        \node (s) [right of=u] {$\blue{-}$};
      \end{tikzpicture}
      \subcaption{}
  \end{subfigure}
\caption{Any set of four points cannot be shattered by linear classifiers. Indeed, we distinguish two cases: either (a) one point is strictly in the convex hull of the three other points, and is the only point of its label (or all points are on the same line, which gives a similar argument) or (b) all points are on the boundary of the convex hull, in which case labelling opposite points with the same label gives an unachievable labelling. This argument is a special case for $\R^2$ which can be generalized to $\R^n$ using Radon's theorem. }
\label{fig:vc-eg}
\end{figure}
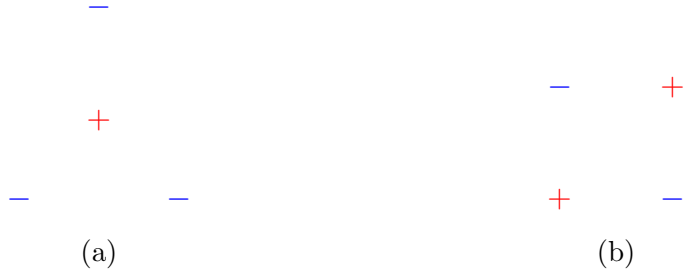

\begin{definition}[Growth Function] For any natural number $m\in\N$, the growth function is defined as $\Pi_\C(m)=\max\set{\abs{\Pi_\C(S)}\given \abs{S}=m}$.
\end{definition}

Denote by $\Phi_d(m)$ the summation $\sum_{i=0}^d{m\choose i}$.
The growth function of a concept class $\C$ can be bounded as follows, as a function of $m$ and the VC dimension $d$.

\begin{lemma}[Sauer-Shelah]
Let $\C$ be a concept class of VC dimension $d$. Then
$$\Pi_\C(m)\leq \Phi_d(m) \leq \left(\frac{em}{d}\right)^d\enspace.$$
\end{lemma}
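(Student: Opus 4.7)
The plan is to prove the two inequalities separately. The right-hand analytic bound $\Phi_d(m) \leq (em/d)^d$ is the easier of the two, and I would dispose of it first. In the interesting regime $m \geq d$, I would multiply each summand $\binom{m}{i}$ for $0 \leq i \leq d$ by the factor $(m/d)^{d-i} \geq 1$, pull $(m/d)^d$ outside the sum, and then extend the truncated sum to the full binomial expansion by adding back the nonnegative terms for $d < i \leq m$. This yields $\Phi_d(m) \leq (m/d)^d (1 + d/m)^m$, and the elementary inequality $1+x \leq e^x$ applied with $x = d/m$ and raised to the $m$-th power gives $(1+d/m)^m \leq e^d$, finishing the bound.

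The combinatorial bound $\Pi_\C(m) \leq \Phi_d(m)$ I would prove by induction on $m + d$. The base cases are $d = 0$ (no singleton is shattered, hence $\C$ induces at most one dichotomy on any finite set, giving $\Pi_\C(m) \leq 1 = \Phi_0(m)$) and $m = 0$ (there is a unique empty dichotomy, and $\Phi_d(0) = 1$). For the inductive step, fix a set $S = \{x_1, \ldots, x_m\}$ witnessing $\Pi_\C(m)$ and let $S' = S \setminus \{x_m\}$. Define $\C_1 = \Pi_\C(S')$ and let $\C_2 \subseteq \C_1$ consist of those restrictions to $S'$ that admit two distinct extensions in $\Pi_\C(S)$ (necessarily disagreeing only on $x_m$). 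A direct counting argument shows $|\Pi_\C(S)| = |\C_1| + |\C_2|$. Since $\C_1$ has VC dimension at most $d$ on the size-$(m-1)$ set $S'$, the inductive hypothesis gives $|\C_1| \leq \Phi_d(m-1)$. The crux is the dimension-drop for $\C_2$: if $T \subseteq S'$ is shattered by $\C_2$, then by construction each dichotomy on $T$ lifts to both labellings of $x_m$ inside $\C$, so $T \cup \{x_m\}$ is shattered by $\C$, forcing $|T| \leq d - 1$. Hence $|\C_2| \leq \Phi_{d-1}(m-1)$ by induction. Pascal's identity $\binom{m-1}{i} + \binom{m-1}{i-1} = \binom{m}{i}$ applied term-by-term yields $\Phi_d(m-1) + \Phi_{d-1}(m-1) = \Phi_d(m)$, closing the induction.

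The main obstacle is the dimension-drop argument for $\C_2$: this is the only place where the hypothesis $\VC(\C) = d$ is genuinely exploited, and the lift from a set shattered in $S'$ by $\C_2$ to a set shattered in $S$ by $\C$ requires unpacking the definition of $\C_2$ carefully. Everything else — the partition into $\C_1$ and $\C_2$, the base cases, the Pascal recursion, and the analytic tail bound — is essentially bookkeeping.
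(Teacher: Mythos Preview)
Your proof is correct and follows the standard argument for the Sauer--Shelah lemma. However, the paper does not actually prove this statement: it is presented in the background chapter as a classical result, stated without proof and attributed to the folklore (the paper refers the reader to standard textbooks such as \citep{mohri2012foundations} and \citep{shalev2014understanding} for such material). So there is no paper proof to compare against; your inductive argument with the $\C_1$/$\C_2$ split and Pascal's identity, together with the analytic bound via $(1+d/m)^m \leq e^d$, is the textbook derivation and is fine.
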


As previously mentioned, the VC dimension characterizes PAC learnability.
We start with a sample complexity upper bound that is linear in the VC dimension, due to \cite{vapnik1982estimation} and \cite{blumer1989learnability}.

\begin{theorem}[VC Dimension Sample Complexity Upper Bound]
\label{thm:vc-pac-upper-bound}
Let $\C$ be a concept class. 
Let $\A$ be a consistent learner for $\C$ using a hypothesis class $\H$ of VC dimension $\VC(\H)=d$.
Then $\A$ is a PAC-learning algorithm for $\C$ using $\H$ provided it is given an i.i.d. sample $S\sim D^m$ drawn from some $D$ and labelled with some $c\in\C$, where 
\begin{equation*}
m \geq \kappa_0\cdot\frac{1}{\epsilon}\left(d\log\frac{1}{\epsilon}+\log\frac{1}{\delta}\right)\enspace,
\end{equation*} 
for some universal constant $\kappa_0$.
\end{theorem}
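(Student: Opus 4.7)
The plan is to follow the classical double-sample (symmetrization) argument of Vapnik--Chervonenkis, combined with the Sauer--Shelah bound on the growth function. Let $B$ denote the bad event that some hypothesis $h\in\H$ is consistent with the training sample $S$ of size $m$ drawn from $D^m$ yet satisfies $\err_D(h,c)>\epsilon$. The goal is to show $\Pr[B]\leq \delta$ whenever $m$ meets the claimed lower bound.

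First I would perform symmetrization. Introduce a ``ghost'' sample $S'\sim D^m$, independent of $S$, and consider the event $B'$ that there exists $h\in\H$ consistent with $S$ but making at least $\epsilon m/2$ mistakes on $S'$. A short Chernoff / Chebyshev argument (relying on $m\geq 8/\epsilon$, say) shows that for any fixed $h$ with true error above $\epsilon$, the probability that $h$ errs on at least $\epsilon m/2$ points of $S'$ is at least $1/2$; hence $\Pr[B]\leq 2\Pr[B']$. Next I would condition on the multiset $T=S\cup S'$ of size $2m$ and use the fact that, by i.i.d.\ sampling, the split of $T$ into $S$ and $S'$ is a uniformly random partition. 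This reduces the problem from a question about $D$ to a purely combinatorial question about partitions of a fixed set of $2m$ points.

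The next step is to apply Sauer--Shelah to bound the number of distinct dichotomies on $T$ by $\Pi_\H(2m)\leq (2em/d)^d$. For each such dichotomy (equivalence class of hypotheses on $T$), consider only the points of $T$ on which that dichotomy disagrees with $c$: call this number $k$. The event that all $k$ disagreements land in $S'$ and that $k\geq \epsilon m/2$ has probability, under the uniform random split, at most $\binom{2m-k}{m}/\binom{2m}{m}\leq 2^{-k}\leq 2^{-\epsilon m/2}$. A union bound over the at most $(2em/d)^d$ dichotomies then gives
\begin{equation*}
\Pr[B']\leq \left(\frac{2em}{d}\right)^{d} 2^{-\epsilon m/2}.
\end{equation*}

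Finally I would solve for $m$. Requiring $2(2em/d)^d\, 2^{-\epsilon m/2}\leq \delta$, taking logarithms and using the standard inequality $d\log(2em/d)\leq \frac{\epsilon m}{4}+O(d\log(1/\epsilon))$ (valid for $m$ above the claimed threshold), yields that it suffices to have $m\geq \kappa_0\cdot \frac{1}{\epsilon}\bigl(d\log\tfrac{1}{\epsilon}+\log\tfrac{1}{\delta}\bigr)$ for a universal constant $\kappa_0$. The step I expect to require the most care is the symmetrization inequality $\Pr[B]\leq 2\Pr[B']$: one needs a lower tail bound on the binomial $\mathrm{Bin}(m,\err_D(h,c))$ that holds uniformly for all $h$ with true error exceeding $\epsilon$, and this is where the mild extra condition on $m$ (e.g.\ $m\geq 2/\epsilon$) enters, absorbed into $\kappa_0$.
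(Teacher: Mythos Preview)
Your proposal is correct and follows the classical double-sample/symmetrization argument combined with Sauer--Shelah, which is exactly the standard textbook route; the paper does not supply its own proof of this background theorem but simply cites \cite{vapnik1982estimation,blumer1989learnability} and refers the reader to \cite{mohri2012foundations,shalev2014understanding}. In fact, the paper later reproduces essentially the same argument you outline (ghost sample, Chernoff to get $\Pr[B]\leq 2\Pr[B']$, random-split permutation bound $\binom{m}{\epsilon m/2}/\binom{2m}{\epsilon m/2}\leq 2^{-\epsilon m/2}$, then Sauer--Shelah and a union bound) when proving Lemma~\ref{lemma:rob-vc} for the robust-loss VC dimension in Appendix~\ref{app:lemma:rob-vc}, so your approach aligns with the paper's own methodology.
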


We now have a sample complexity lower bound that is also linear in the VC dimension, due to \cite{blumer1989learnability} and \cite{ehrenfeucht1989general}. The proofs of both Theorems~\ref{thm:vc-pac-upper-bound} and~\ref{thm:vc-pac-lower-bound} appear in reference textbooks such as \citep{mohri2012foundations} and \citep{shalev2014understanding}.

\begin{theorem}[VC Dimension Sample Complexity Lower Bound]
\label{thm:vc-pac-lower-bound}
Let $\C$ be a concept class with VC dimension $d$. Then any PAC-learning algorithm for $\C$ requires $\Omega\left(\frac{d}{\epsilon}+\frac{1}{\epsilon}\log\frac{1}{\delta}\right)$ examples. 
\end{theorem}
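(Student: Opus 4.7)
The plan is to prove the two terms $\Omega(\log(1/\delta)/\epsilon)$ and $\Omega(d/\epsilon)$ separately, each by constructing an adversarial distribution supported on (a subset of) a shattered set $S=\{x_0,\ldots,x_{d-1}\}$ of size $d=\VC(\H)$ and producing, for any candidate learner $\A$, a target concept $c\in\C$ against which $\A$ must draw many examples.

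For the $\Omega(\log(1/\delta)/\epsilon)$ term, I would use only two points $x_0,x_1$ from $S$, and place distribution mass $D(x_0)=1-2\epsilon$ and $D(x_1)=2\epsilon$. Since $\{x_0,x_1\}$ is shattered, I can pick two targets $c_0,c_1\in\C$ that agree on $x_0$ but disagree on $x_1$. If the sample $S\sim D^m$ contains no copy of $x_1$, then $\A$ produces the same hypothesis under either target, so its error is $\geq 2\epsilon>\epsilon$ against at least one of $c_0,c_1$. The probability of missing $x_1$ is $(1-2\epsilon)^m\geq e^{-4\epsilon m}$ (for $\epsilon<1/2$); if $m<\frac{1}{4\epsilon}\ln\frac{1}{2\delta}$ this exceeds $2\delta$, so one of the two targets induces a $\delta$-failure.

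For the $\Omega(d/\epsilon)$ term, I would use the full shattered set. Fix a small absolute constant $\alpha>0$ (e.g.\ $\alpha=1/8$), set $D(x_0)=1-8\epsilon$, and spread the remaining mass $8\epsilon$ uniformly over $x_1,\ldots,x_{d-1}$. Because $S$ is shattered, every labeling $\sigma\in\{0,1\}^{d-1}$ of the non-anchor points is realized by some concept $c_\sigma\in\C$ (fixing $c_\sigma(x_0)=0$). I would use a probabilistic argument: draw $\sigma$ uniformly at random and let the learner $\A$ run on $m$ i.i.d.\ examples from $D$ labeled by $c_\sigma$. For any point $x_i$ ($i\geq 1$) not seen in the sample, $\A$'s output on $x_i$ is independent of $\sigma(i)$, and so is wrong with probability exactly $1/2$ over $\sigma$. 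The expected number of unseen non-anchor points is at least $(d-1)(1-8\epsilon/(d-1))^m\gtrsim (d-1)/2$ once $m\leq (d-1)/(16\epsilon)$, giving an expected error (over both $\sigma$ and $S$) of at least $\tfrac12\cdot\tfrac{8\epsilon}{d-1}\cdot \tfrac{d-1}{2}=2\epsilon$. A Markov-style argument then converts this into the existence of a fixed $\sigma$ for which $\A$ produces error $>\epsilon$ with probability at least $\delta$ (for some $\delta$ that is a constant; together with the first bound, this yields the stated additive form).

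The delicate step is the last one: translating the average-case statement ``expected error over $(\sigma,S)$ is $\gtrsim \epsilon$'' into the worst-case PAC failure statement ``there exists $\sigma$ for which $\Pr_S[\err_D(\A(S),c_\sigma)>\epsilon]>\delta$''. I would do this via Markov's inequality applied to the $[0,1]$-valued error random variable, tuning the constants in the construction (the $8\epsilon$ mass, the $\alpha$ threshold on the fraction of unseen points) so that the expected error comfortably exceeds $\epsilon$ and hence the tail probability is a positive constant; combining this constant with the separate $\Omega(\log(1/\delta)/\epsilon)$ bound via $\max\{a,b\}\geq (a+b)/2$ recovers the additive form in the statement. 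Care is also needed to ensure the realizability assumption: the anchor-plus-shattered construction lives entirely inside $\C$ precisely because $S$ is shattered by $\H$ (and by assumption by $\C$, which we may take without loss of generality via the standard reduction).
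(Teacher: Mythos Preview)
The paper does not actually prove this theorem: it attributes the result to \cite{blumer1989learnability} and \cite{ehrenfeucht1989general} and refers the reader to the textbooks of \cite{mohri2012foundations} and \cite{shalev2014understanding} for the proofs. Your proposal is precisely the standard textbook argument found in those references (the two-point construction for the $\log(1/\delta)/\epsilon$ term, and the anchor-plus-uniformly-weighted shattered set with a random labeling for the $d/\epsilon$ term), so there is nothing to compare against and your approach is the intended one.

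Two small remarks on execution. First, the inequality $(1-2\epsilon)^m\geq e^{-4\epsilon m}$ holds for $\epsilon\leq 1/4$, not all $\epsilon<1/2$; this is harmless since the lower bound is only interesting for small $\epsilon$ and the constants can absorb it. Second, your final paragraph flags the Markov step correctly: since the error random variable is bounded in $[0,8\epsilon]$, reverse Markov gives $\Pr[\err>\epsilon]\geq (2\epsilon-\epsilon)/(8\epsilon-\epsilon)=1/7$, so you get failure probability at least a fixed constant $\delta_0=1/7$; this is exactly how the textbooks close the argument, and combining with the first term via $\max\{a,b\}\geq(a+b)/2$ gives the additive form. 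Also, the confusion between $\C$ and $\H$ at the end is unnecessary here: the theorem statement hypothesizes $\VC(\C)=d$, so the shattered set and all the labelings $c_\sigma$ already live in $\C$ by definition, and no reduction is needed.
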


While the bounds of Theorems~\ref{thm:vc-pac-upper-bound} and~\ref{thm:vc-pac-lower-bound} are tight up to a $\log\frac{1}{\epsilon}$, the breakthrough work of \cite{hanneke2016optimal} recently showed the existence of a specific learning algorithm that is optimal in the sense that its sample complexity matches that of Theorem~\ref{thm:vc-pac-lower-bound} up to constant factors, and thus avoids the $\log\frac{1}{\epsilon}$ dependence.

\subsection{Some Concept Classes and PAC Learning Algorithms}
\label{sec:pac-algos}

In this section, we introduce various concept classes that have been studied in the learning theory literature, along with PAC learning algorithms.
All the algorithms outlined below are consistent on a given training sample, given we are working in the realizable setting. 
A bound on the VC dimension of these concept classes directly gives sample complexity upper bounds as per Theorem~\ref{thm:vc-pac-upper-bound}.
We start with concept classes defined on the boolean hypercube $\X=\boolhc$. 

\paragraph*{Singletons.} For an input space $\X$, the class of singletons is the class of functions $\set{x\mapsto\mathbf{1}[x=x^*]\given x^*\in\X}$.

\paragraph*{Dictators.} The class of dictators on $\boolhc$ is the class of functions determined by a single bit, i.e., functions of the form $h(x)=x_i$ or $h(x)=\bar{x_i}$ for $i\in[n]$. 
Dictators are subsumed by conjunctions. 
Monotone dictators are dictators where negations are not allowed, i.e., functions of the form $h(x)=x_i$.

\paragraph*{Conjunctions.}
Conjunctions, which we denote $\Conj$, are perhaps one of the simplest non-trivial concept classes one can study on the boolean hypercube.
A conjunction $c$ over $\{0,1\}^n$ is a logical formula over a set of literals $l_1,\dots,l_k$ from $\set{x_1,\bar{x_1},\dots,x_n,\bar{x_n}}$, where, for $x\in\X_n$, $c(x)=\bigwedge_{i=1}^k l_i$. 
The \emph{length} of a conjunction $c$ is the number of literals in $c$.\footnote{We use the term \emph{length} for conjunctions that are not equivalent to the constant function 0.} 
For example, $c(x)=x_1\wedge\bar{x_2}\wedge{x_5}$ is a conjunction of length 3.
Monotone conjunctions are the subclass of conjunctions where negations are not allowed, i.e., all literals are of the form $l_i=x_j$ for some $j\in[n]$.
Note that this implies that monotone conjunctions do not include the constant function 0.

\begin{algorithm}
\caption{PAC-learning algorithm for conjunctions}
\label{alg:conj-pac}
\begin{algorithmic}
\Require $S_c \sim D^m$
\State $L \gets \set{x_1,\bar{x_1},\dots,x_n,\bar{x_n}}$
\State $h(x)=\bigwedge_{l\in L} l$ \Comment{$h=0$}
\For {$(x,c(x))\in S$}
	\If {$c(x)\neq h(x)$} \Comment{Only happens if $c(x)=1$}
		\State $L\gets L\setminus \set{l\in L\given l(x)=0}$
	\EndIf
\EndFor
\end{algorithmic}
\end{algorithm}
The standard PAC learning algorithm to learn conjunctions is as outlined in Algorithm~\ref{alg:conj-pac}.
We start with the constant hypothesis $h(x)=\bigwedge_{i\in I_h} (x_i \wedge  \bar{x_i})\equiv 0$, where $I_h=[n]$. 
To ensure consistency, for each example $x$ in the training sample, we remove a literal $l$ from $h$ if $c(x)=1$ and $l(x)=0$, as if $l$ is in the conjunction, $h$ must evaluate to $0$ on $x$.
After seeing all the examples in the training set $S$, the resulting hypothesis will thus be consistent on $S$.
Note that $\VC(\Conj_n)=n $ \citep{natschlager1996exact}. 
Finally, Algorithm~\ref{alg:conj-pac} can also be used for monotone conjunctions, but where the initial hypothesis is $h(x)=\bigwedge_{i\in[n]}x_i$. 

\paragraph*{CNF and DNF formulas.}
A formula $\varphi$ in the conjunctive normal form (CNF) is a conjunction of clauses, where each clause is itself a disjunction of literals. 
A $k$-CNF formula is a CNF formula where each clause contains at most $k$ literals.
For example, $\varphi= (x_1 \vee x_2) \wedge (\bar{x_3} \vee x_4) \wedge \bar{x_5}$ is a 2-CNF.
On the other hand, a DNF formula is a disjunction of clauses, where each clause is itself a conjunction of literals. 
A $k$-DNF is defined analogously to a $k$-CNF.

\paragraph*{Decision lists.}
Given a positive integer $k$, a $k$-decision list $f\in k$-{\dl} is a list $(K_1,v_1),\dots,(K_r,v_r)$ of pairs
where $K_j$ is a term in the set of all conjunctions of size at most $k$ with literals drawn from $\set{x_1,\bar{x_1},\dots,x_n,\bar{x_n}}$, $v_j$ is a value in $\set{0,1}$, and $K_r$ is $\mathtt{true}$.
The output $f(x)$ of $f$ on $x\in\boolhc$ is $v_j$, where $j$ is the least index such that the conjunction $K_j$ evaluates to $\mathtt{true}$ on $x$.
Decision lists subsume conjunctions. 
Indeed, a conjunction $c(x)=\bigwedge_{i=1}^k l_i$ can be expressed as the following 1-decision list: $(\neg l_1,0),\dots,(\neg l_k,0),(\mathtt{true},1)$.

The PAC-learning algorithm for decision lists, introduced by \cite{rivest1987learning}, is outlined in Algorithm~\ref{alg:dl-pac}.
The sample size $m$ is given by Theorem~\ref{thm:vc-pac-upper-bound} and an observation that the size of the class is $O\left(3^{\abs{C_{n,k}}}\abs{C_{n,k}}!\right)$, where  $C_{n,k}$ is the set of conjunctions of length at most $k$ on $n$ variables, giving a VC dimension bound of $O(n^{k}\log n)$.
Note that, as we consider $k$ to be a fixed constant, the sample complexity bound is polynomial in $n$ and the learning parameters.

\begin{algorithm}
\caption{PAC-learning algorithm for 1-decision lists from \cite{rivest1987learning}}
\label{alg:dl-pac}
\begin{algorithmic}
\Require $S\sim D^m$
\State $L:= \set{x_i,\bar{x_i}}_{i=1}^n$ \Comment{Set of all literals}
\State $h=\emptyset$ \Comment{Empty decision list}
\While {$S\neq\emptyset$}
	\If {$\exists b\in\set{0,1}$ s.t. $\forall (x,y)\in S,\;y=b$}
		\State $S\gets\emptyset$
		\State append $(\mathtt{true},b)$ to $h$
	\Else
		\For {$l\in L$ s.t. $ \exists (x,y)\in S $ s.t. $ l(x)=1$} \Comment{$l$ is true for some $x$}
			\If {$\exists b\in\set{0,1}$ s.t. $\forall (x,y)\in S\; \left(l(x)=1\Rightarrow y=b\right)$}
				\State append $(l,b)$ to $h$
				\State $S\gets S\setminus \set{(x,y)\in S\given l(x)=1}$
			\EndIf
		\EndFor
	\EndIf
\EndWhile
\end{algorithmic}
\end{algorithm}

Note that, while the algorithm above is for 1-decision lists, it is sufficient to only consider this case.
Indeed, if we are dealing with $k$-decision lists, we can draw our attention to the set $C_{n,k}$ of conjunctions of length at most $k$ on $n$ variables by defining the following injective map:
\begin{equation}
\label{eqn:conj-embedding}
\Phi:\boolhc\rightarrow\set{0,1}^{C_{n,k}}\enspace,
\end{equation}
where $\Phi(x)_{c_i}=\mathbf{1}[x\models c_i]$ for $c_i\in C_{n,k}$, i.e. whether $x$ satisfies clause $c_i$.
Now, any distribution $D$ on $\boolhc$ induces a well-defined distribution $D'$ on $\set{0,1}^{C_{n,k}}$.
Moreover, since $\abs{{C_{n,k}}}=O(n^k)$, an input $x\in\boolhc$ and a 1-decision $h$ on $\boolhc$ can respectively be transformed into $\Phi(x)\in\set{0,1}^{C_{n,k}}$ and a $k$-decision list $h'$ on $\set{0,1}^{C_{n,k}}$ in polynomial time, for a fixed $k$, and vice-versa in the case of going from $h'$ to $h$. 
It also follows that $\err_D(h,c)=\err_{D'}(h',c')$, where $c'$ is the $k$-decision list on $\set{0,1}^{C_{n,k}}$ induced by $c$.
Hence, an efficient learning algorithm for 1-decision lists can be used as a black box to efficiently learn $k$-decision lists.

Finally, the class of $k$-decision lists subsume $k$-CNF and $k$-DNF \citep{rivest1987learning}.

\paragraph*{Decision trees.}
A decision tree $T$ is a binary tree whose nodes are positive literals in $\set{x_1,\dots,x_n}$. For a given node with variable $x_i$, the edge to its left child node is labelled with 0 and the  edge to its right child node is labelled as 1, representing the value of the $x_i$ for a given instance $x\in\boolhc$. 
The leaves take label in $\set{0,1}$; a given $x\in\boolhc$ induces a path from the root to a leaf in $T$, which will give the label $T(x)$. 
Decision trees generalize 1-decision lists: a 1-decision list is a decision tree with each node having at most one child. 
Note that it is currently unknown whether polynomial-sized decision trees are PAC learnable.

\paragraph*{Parities.}
Parities are defined with respect to a subset $I\subseteq[n]$ of indices as $f_I(x)=\left(\sum_{i\in I}x_i\right) \bmod 2$, i.e. the output is whether adding the bits at indices in $S$ results in an odd or even sum. 
Learning parities amounts to learning the set $S$.  
Given a set of examples $(X,Y)\subseteq\boolhc\times\set{0,1}$, where each $(x,y)\in (X,Y)$ is a labelled example, finding this set is equivalent to finding a solution $a\in\set{0,1}^n$ to the system of linear equations $Xa=Y$ in the finite field $\mathbb{F}_2$.
The set $J:=\set{j\in[n]\given a_j=1}$ gives a hypothesis $f_J(x)=\left(\sum_{j\in J}x_j\right) \bmod 2$ consistent with the data. 
This can be done using Gaussian elimination, provided a solution exists (this is guaranteed by the realizability assumption). 
See \citep{helmbold1992learning,goldberg2006some} for details.

Note that, when working in $\boolhcfa$ instead of $\boolhc$, we can define the parity function as $f_I(x)=\prod_{i\in I}x_i$ instead.
This representation will be especially relevant in Section~\ref{sec:fourier-analysis} when we introduce Fourier analysis concepts.

\paragraph*{Majorities.}
Similarly to parities, majorities are defined with respect to a set $I$ of indices, as follows: $\maj_I(x)=\mathbf{1}\left[\sum_{i\in I} x_i \geq \abs{I}/2\right]$. 
Again, when working in $\boolhcfa$ instead of $\boolhc$, majority functions are defined as $\maj_I(x)=\sgn\left(\sum_{i\in I}x_i\right)$.
Clearly, from the representations above, majorities are subsumed by linear classifiers, which are defined further below.

\paragraph*{Linear classifiers.}
The class of linear classifiers (also known as halfspaces and linear threshold functions)  on input spaces $\X=\boolhc$ or $\X=\R^n$ are defined as $\set{x\mapsto \sgn\left(w\cdot x+b\right)\given w\in\R^n, b\in\R}$, where the $w_i\in w$ are the \emph{weights} and $b$ is the \emph{bias}. 
When the instance space is the reals, we will denote the class as $\ltfreal$.
Moreover, we will denote by $\ltfbool^W$  the class of linear threshold functions on $\boolhc$ with integer weights  such that the sum of the absolute values of the weights and the bias is bounded above by $W$, and $W^+$ when the weights are positive. 
Finally, when the weights and the bias are binary, i.e., $w_i,b\in\{0,1\}$ for all $ i$, the class is called \emph{boolean threshold functions}.

The VC dimension of halfspaces is $n+1$. 
The upper bound of  $n+1$ can be shown by using Radon's theorem (any set of size $n+2$ in $\R^n$ can be partitioned into two subsets whose convex hulls intersect), and the lower bound can be obtained by showing that the set $\set{\mathbf{e}_i}_{i=1}^n\cup \mathbf{0}$ can be shattered.
The support vector machine (SVM) algorithm, or solving a system of linear inequalities with linear programming, can be used as a consistent learner for this concept class. 
Finally, the class of conjunctions is subsumed by linear classifiers: a conjunction $f(x)=\bigwedge_{i=1}^k l_i$ can be represented as the linear classifier $g(x)=\sgn\left(\sum_{i\in I^+}x_i-\sum_{i\in I^-}x_i-\abs{I}+1\right)$, where $I^+=\set{j\in [n]\given \exists i \st l_i=x_j}$ and $I^-=\set{j\in [n]\given  \exists i \st l_i=\bar{x_i}}$.

\subsection{Online Learning: The Mistake-Bound Model}
\label{sec:online}

In online learning, the learner is given access to examples \emph{sequentially}.
At each time step $t$, the learner receives an example $x_t$, predicts its label $\hat{y}_t$ using a given hypothesis class $\H$, receives the true label $y_t$ and can update its hypothesis, typically when $\hat{y}_t\neq y_t$. 
A fundamental distinction between the PAC-  and online-learning models is that, in the latter, there are usually no distributional assumptions on the data.\footnote{Some lines of work in online learning look at mild distributional assumptions in the learning problem in order to get better guarantees, but the basic mistake-bound online learning set-up assumes that examples (or more generally losses in the regret framework) can be given in an adversarial and adaptive manner.}
Thus, we need to evaluate the learner's performance with different benchmark than the error $\err_D(h)$ from the (offline) PAC setting.


In the mistake-bound model, examples and their labels can be given in an adversarial fashion. 
The performance of the learner is evaluated with respect to the number of mistakes it makes compared to the ground truth; we again assume the \emph{realizability} of the learning problem, meaning that there is a target concept $c\in\C$ such that $c(x_t)=y_t$ for all $t$. 
Crucially, the target concept need not be chosen a priori: the only requirement is that, at every time $t$, there exists a concept $c\in\C$ that is consistent on the past sequence of points $(x_1,y_1), \dots, (x_t,y_t)$.
The goal of the learner is to learn the target exactly.

We now formally define the mistake-bound model of online learning.

\begin{definition}[Mistake Bound] For a given hypothesis class $\C$ and instance space $\X = \bigcup_n \X_n$, we say that an
algorithm $\A$ learns $\C$ with mistake bound $M$ if $\A$ makes at most
$M$ mistakes on any sequence of samples consistent with a concept $c \in \C$. 
\end{definition}

In the mistake bound model, we usually require that $M$ be polynomial in $n$ and size$(c)$.
A good example where this holds is the online learning algorithm for conjunctions, outlined in Algorithm~\ref{alg:conj-online}, which is immediately adapted from its PAC-learning counterpart.
Indeed, Algorithm~\ref{alg:conj-pac} only changes its hypothesis whenever it sees a positive example $(x,1)$ such that $h(x)=0$, and works through the sample sequentially.

\begin{algorithm}
\caption{PAC-learning algorithm for conjunctions, online version}
\label{alg:conj-online}
\begin{algorithmic}
\State $L \gets \set{x_1,\bar{x_1},\dots,x_n,\bar{x_n}}$
\State $h(x)=\bigwedge_{l\in L} l$ \Comment{$h=0$}
\For {$t=1,2,\dots$}
	\State Receive $x_t$
	\State Predict $h(x_t)$
	\State Receive true label $y_t$
	\If {$h(x_t)\neq y_t$} \Comment{Only happens if $y=1$}
		\State $L\gets L\setminus \set{l\in L\given l(x)=0}$
	\EndIf
\EndFor
\end{algorithmic}
\end{algorithm}

Unlike with conjunctions, the vast majority of PAC-learning algorithms cannot be so straightforwardly tailored to online learning, resulting in a rich literature on algorithms, benchmarks and guarantees specific to this setting.

One of the simplest general-purpose algorithms for online learning in the realizable mistake-bound model is the halving algorithm, outlined in Algorithm~\ref{alg:halving}.

\begin{algorithm}
\caption{Halving algorithm}
\begin{algorithmic}
\Require A hypothesis class $\mathcal{H}$
\For {$t=1,2,\dots$}
\State Receive example $x_t$
\State $V^{(b)}_t \gets \set{h\in V_t\given h(x_t)=b}$
\State $\hat{y_t}=\arg\max_b \abs{V_t^{(b)}}$ \Comment{Predict label acc. to a majority vote}
\State Receive true label $y_t$
\State $V_{t+1} \gets V^{(y_t)}_t$
\EndFor
\end{algorithmic}
\label{alg:halving}
\end{algorithm}

At each time step, the learner predicts the label of a new point according to the majority vote of the hypotheses consistent with the sequence of data seen so far, which is denoted as $V_t$.
It is easy to see that the halving algorithm will make at most $\log\abs{\H}$ mistakes: every time the learner makes a mistake on $(x_t,y_t)$, at least half of the hypotheses are not consistent with $(x_t,y_t)$, and are thus eliminated.
There are two significant disadvantages to this learning algorithm: (i) its computational complexity, with a runtime $\Omega(\abs{\H})$, as it requires iterating through the whole hypothesis class to get a majority vote and (ii) it can only be used on \emph{finite} concept classes.
Note that these drawbacks can be addressed by instead drawing a hypothesis at random from the version space, as argued in \citep{maass1991line}.
We will now address the second drawback and turn our attention to potentially \emph{infinite} concept classes.

We have seen that, in PAC learning, the VC dimension of a concept class characterizes its learnability, enabling learning guarantees for infinite concept classes that have finite VC dimension. 
One may wonder whether there exists an analogous complexity measure to the VC dimension when working in the mistake-bound model.
It turns out that such a measure exists in this setting: the Littlestone dimension, defined and proved to characterize online learnability in \citep{littlestone1988learning}.  
In order to define the Littlestone dimension, we must first define Littlestone trees.

\begin{definition}[Littlestone Tree]
A Littlestone tree for a hypothesis class $\mathcal{H}$ on $\X$ is a complete binary tree $T$ of depth $d$ whose internal nodes are instances $x\in\X$.
Each edge is labeled with $0$ or $1$ and corresponds to the potential labels of the parent node.
Each path from the root to a leaf must be consistent with some $h\in\mathcal{H}$, i.e. if $x_1,\dots,x_d$ with labelings $y_1,\dots,y_d$ is a path in $T$, there must exist $h\in\mathcal{H}$ such that $h(x_i)=y_i$ for all $i$. 
\end{definition}

We are now ready to define the Littlestone dimension.

\begin{definition}[Littlestone Dimension]
The Littlestone dimension of a hypothesis class $\mathcal{H}$, denoted $\Lit(\mathcal{H})$, is the largest depth $d$ of a Littlestone tree for $\mathcal{H}$. If no such $d$ exists then $\Lit(\mathcal{H})=\infty$.
\end{definition}

\paragraph*{Relationship to other complexity measures.}
Before showing that the Littlestone dimension characterizes online learnability in this setting, we will study some of its properties.
First, the Littlestone dimension is an upper bound on the VC dimension. 
Indeed, it is possible to convert any shattered set $X=\set{x_1,\dots,x_d}$ of size $d$ into a Littlestone tree of depth $d$, where the nodes at depth $i$ are all $x_i$ and every path from the root to a leaf corresponds to a dichotomy on $X$.

Moreover, from the definition of Littlestone trees, since each path from the root to a leaf of a tree is achievable by a distinct function $h\in\H$, the Littlestone dimension is bounded above by the logarithm of the size of $\H$.
We then have the following inequality for all $\H$
\begin{equation}
\label{eqn:vc-lit-log}
\VC(\H)\leq \Lit(\H) \leq \log(\abs{\H})\enspace.
\end{equation}

It can be shown that the gaps between the terms in Equation~\ref{eqn:vc-lit-log} can be arbitrarily large.
To show the gap between $\VC(\H)$ and $\Lit(\H)$, consider the set $\threshold=\bigcup_{a\in\R}\mathbf{1}[x\geq a]$  of threshold functions on $\R$.
The VC dimension of $\threshold$ is 1, as a set of one point can be shattered, but a set of two points $x_1<x_2\in\R$ cannot achieve the labelling $(1,0)$. 
However, its Littlestone dimension is infinite: consider the interval $[0,1]$.
At each depth $i$ of the Littlestone tree, the set of nodes from left to right is $\set{\frac{j+1}{2^i}}_{j=0}^{2^{i-1}}$, and the labelling of all the left edges is $1$ and $0$ for right edges.
For a given depth $i$, a path $p$ from the root to node $x_{i,j}:=\frac{j+1}{2^i}$ for some $j\in \set{0,1,\dots,2^{i-1}}$ (including $x_{i,j}$'s label) is thus consistent with the threshold function $\mathbf{1}[x\geq x^*]$ where $x^*$ is the deepest node in $p$ (inclusive of $x_{i,j}$) that is positively labelled.
This infinite gap between the VC and Littlestone dimensions clearly illustrates that online and offline (PAC) learnability are fundamentally different from each other, as some concept classes are PAC learnable but not online learnable.
To show the other arbitrary large gap between $\Lit(\H) $ and $\log(\abs{\H})$, consider the singletons on $\R$, i.e. the class of functions $\bigcup_{a\in\R}\mathbf{1}[x = a]$.
While the class is infinite, any Littlestone tree, which must be complete, has depth 1, as each hypothesis in the class labels a unique point (the target $a$) positively. Thus $\Lit(\H)=1$.

We now show that the Littlestone dimension lower bounds the number of mistakes any online learning makes.

\begin{theorem}{\citep{littlestone1988learning}}
Any online learning algorithm for $\C$ has mistake bound $M\geq \Lit(\C)$.
\end{theorem}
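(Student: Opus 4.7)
The plan is to exhibit, for any online learner $\A$, an adversarial strategy that forces $\A$ to make at least $\Lit(\C) = d$ mistakes on a sequence that remains realizable by $\C$ throughout. The adversary will use a Littlestone tree of maximal depth $d$ for $\C$ as a road map, navigating it in a way that always contradicts the learner's prediction.

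First I would fix a Littlestone tree $T$ for $\C$ of depth $d = \Lit(\C)$, which exists by definition of the Littlestone dimension. The adversary runs in stages $t = 1, \dots, d$, maintaining a current node $v_t$ of $T$, initialized to the root $v_1$. At stage $t$, the adversary presents the instance $x_t$ labelling node $v_t$. The learner outputs a prediction $\hat{y}_t \in \{0,1\}$. The adversary then reveals the true label $y_t := 1 - \hat{y}_t$, which necessarily causes $\A$ to make a mistake, and descends to the child $v_{t+1}$ of $v_t$ along the edge labelled $y_t$.

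The key point, and what makes the construction work, is realizability: at every stage $t$, the partial sequence $(x_1, y_1), \dots, (x_t, y_t)$ traces a root-to-node path in $T$, and by the definition of a Littlestone tree each such path can be extended to a root-to-leaf path that is consistent with some concept $c \in \C$. Thus the entire sequence of $d$ examples produced by the adversary is realizable by $\C$, so it is a legitimate input sequence in the mistake-bound model. Since the learner errs at every one of the $d$ stages by construction, $\A$ makes at least $d = \Lit(\C)$ mistakes, proving the lower bound.

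I do not anticipate any serious obstacle here: the proof is essentially a direct reading of the definitions of the Littlestone tree and of the realizable mistake-bound model. The only thing that needs some care is verifying that the adversary's choices are legal, i.e.\ that realizability by $\C$ is preserved after each step — but this is exactly what the completeness of $T$ and the per-path consistency condition in the definition of a Littlestone tree guarantee.
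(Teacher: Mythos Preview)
Your proposal is correct and follows exactly the same approach as the paper: use a Littlestone tree of depth $\Lit(\C)$, present the current node's instance at each round, reveal the label opposite to the learner's prediction, and descend the corresponding edge, with realizability guaranteed by the definition of a Littlestone tree. The paper's proof is a terse one-liner (``an adversary can force $\A$ to make $\Lit(\C)$ mistakes by sequentially and adaptively choosing a path in $T$ in function of $\A$'s predictions''), and your version simply spells out the details.
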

\begin{proof}
Let $\A$ be any online learning algorithm for $\C$.
Let $T$ be a Littlestone tree of depth $\Lit(\C)$ for $\C$. 
Clearly, an adversary can force $\A$ to make $\Lit(\C)$ mistakes by sequentially and adaptively choosing a path in $T$ in function of $\A$'s predictions.
\end{proof}

As previously suggested, the Littlestone dimension can also upper bound the number of mistakes made by an online learning algorithm. 
This bound is achieved for arbitrary concept classes with finite Littlestone dimension by the Standard Optimal Algorithm from \citet{littlestone1988learning}, outlined in Algorithm~\ref{alg:soa}.

\begin{algorithm}
\caption{Standard Optimal Algorithm from \cite{littlestone1988learning}}
\begin{algorithmic}
\Require A hypothesis class $\mathcal{\C}$
\For {$t=1,2,\dots$}
\State Receive example $x_t$
\State $V^{(b)}_t \gets \set{h\in V_t\given h(x_t)=b}$
\State $\hat{y_t}=\arg\max_b \Lit(V^{(b)}_t)$ 
\State Receive true label $y_t$
\State $V_{t+1} \gets V^{(y_t)}_t$
\EndFor
\end{algorithmic}
\label{alg:soa}
\end{algorithm}

The SOA works in a similar fashion as the halving algorithm, only considering at time $t$ the version space $V_t$ of hypotheses that are consistent with the sequence of examples so far.
However, instead of taking the majority vote, the algorithm predicts the label $\hat{y_t}$ of a new point according to the subclass (w.r.t. a label prediction $b\in\set{0,1}$) with larger Littlestone dimension.
The theorem below completes the proof that the Littlestone dimension characterizes online learnability. 

\begin{theorem}
The Standard Optimal Algorithm from \citet{littlestone1988learning} makes at most $\Lit(\C)$ mistakes in the mistake-bound model.
\end{theorem}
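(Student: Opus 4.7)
The plan is to show that every mistake made by the SOA strictly decreases the Littlestone dimension of the current version space $V_t$ by at least one. Since $V_1 = \C$ has Littlestone dimension $\Lit(\C)$ and the dimension is a nonnegative integer, this immediately yields the mistake bound of $\Lit(\C)$.

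First I would introduce notation: for each round $t$, let $d_t := \Lit(V_t)$, and recall the SOA sets $V_{t+1} = V_t^{(y_t)}$ after receiving the true label $y_t$. The key lemma to prove is: if the SOA makes a mistake at round $t$, then $d_{t+1} \leq d_t - 1$. Assume $\hat{y}_t \neq y_t$. By the SOA's prediction rule, $\Lit(V_t^{(\hat{y}_t)}) \geq \Lit(V_t^{(y_t)}) = d_{t+1}$. Now suppose for contradiction that $d_{t+1} = d_t$. Then both $V_t^{(0)}$ and $V_t^{(1)}$ have Littlestone dimension at least $d_t$, so each admits a Littlestone tree of depth $d_t$. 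I would then construct a Littlestone tree of depth $d_t + 1$ for $V_t$ by taking $x_t$ as the root, attaching the depth-$d_t$ Littlestone tree for $V_t^{(0)}$ along the edge labelled $0$ and the depth-$d_t$ tree for $V_t^{(1)}$ along the edge labelled $1$. Every root-to-leaf path in this new tree begins with $x_t$ labelled by some $b \in \{0,1\}$ and continues into a path consistent with some $h \in V_t^{(b)} \subseteq V_t$, so the whole path is realized by a hypothesis in $V_t$. This contradicts $\Lit(V_t) = d_t$, so $d_{t+1} \leq d_t - 1$.

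For rounds in which no mistake is made, the version space only shrinks, so $d_{t+1} \leq d_t$. Combining, after $M$ mistakes the Littlestone dimension of the current version space has dropped by at least $M$ from its initial value $\Lit(\C)$. Since the version space remains nonempty (realizability ensures $V_t$ always contains the concept witnessing past labels, so $d_t \geq 0$), we get $M \leq \Lit(\C)$.

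I do not anticipate a main obstacle beyond carefully setting up the tree-splicing argument that turns the two subclass trees into a tree one level deeper; the rest is bookkeeping. A minor subtlety worth flagging is that the construction only requires $\Lit(V_t^{(b)}) \geq d_t$ for both $b$, which follows from the assumption $d_{t+1} = d_t$ together with the SOA's tie-breaking rule, so the argument does not depend on which label happened to be predicted.
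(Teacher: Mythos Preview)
Your proposal is correct and follows essentially the same approach as the paper: both arguments show that a mistake forces the Littlestone dimension of the version space to drop by at least one, via the same tree-splicing construction that places $x_t$ at the root and attaches Littlestone trees for $V_t^{(0)}$ and $V_t^{(1)}$ as subtrees. The only cosmetic difference is that you phrase it as a proof by contradiction (assume $d_{t+1}=d_t$ and build a tree of depth $d_t+1$), whereas the paper argues directly that $\Lit(V_t)\geq \min_b \Lit(V_t^{(b)})+1$.
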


\begin{proof}
We will show that, at every mistake, the Littlestone dimension of the subclass $V_t$ decreases by at least 1 after receiving the true label $y_t$.

Suppose that, at time $t$, $y_t= \arg\min_b \Lit(V^{(b)}_t)$. 
Note that $V_{t+1} = V^{(y_t)}_t$.
Now, consider any two Littlestone trees  $T_{y_t}$ and  $T_{\hat{y}_t}$ of maximal depths for $V^{(y_t)}_t$ and $V^{(\hat{y})}_t$, respectively. 
By definition, neither tree can contain $x_t$, so it is possible to construct a Littlestone tree $T$ for $V_t$ of depth $\min_b \Lit(V^{(b)}_t)+1$ (recall that $T$ must be complete).
Then $\Lit(V_t)\geq\Lit(V^{({y_t})}_t)+1=\Lit(V_{t+1})+1 $, as required.\footnote{Note that the Littlestone dimension does not necessarily decrease when $y_t=\hat{y}_t$, as we could have $V_t=V_t^{y_y}$.}
\end{proof}

While the SOA has an optimal mistake bound and is defined for arbitrary concept classes, it remains highly inefficient in general, as computing the Littlestone dimension of the concept subclasses could be very costly. 
For the remainder of this section, we will consider online learning algorithms for specific concept classes in order to circumvent some of these issues.\footnote{We will discuss the algorithms and their mistake bounds here, but we refer the reader to the references for the algorithms themselves and their analysis.}

The first algorithm we will look at is Winnow, which is for linear threshold functions with bounded weights in the boolean hypercube. 
This algorithm and its analysis are due to \cite{littlestone1988learning}.

%
We now recall the mistake upper bound for Winnow in the special case of $\ltfbool^{W+}$, where the weights are positive integers.\footnote{See \url{https://www.cs.utexas.edu/~klivans/05f7.pdf} 
for a full derivation.}

\begin{theorem}[Winnow Mistake Bound]
\label{thm:winnow-mistake}
The Winnow algorithm for learning the class $\ltfbool^{W+}$  makes at most $O(W^2\log(n))$ mistakes.
\end{theorem}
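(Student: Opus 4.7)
The plan is to analyse Winnow's run by tracking two potential functions simultaneously: one yielding an upper bound on the number of demotion mistakes $M^-$ in terms of the number of promotion mistakes $M^+$, the other giving an upper bound on $M^+$ in terms of $M^-$, and then solving the resulting linear system to obtain $M = M^+ + M^- = O(W^2 \log n)$. Throughout, fix any target $c \in \ltfbool^{W+}$, written as $c(x) = \sgn(u \cdot x - \theta_u)$ with $u \in \mathbb{Z}_{\geq 0}^n$, $\theta_u \in \mathbb{Z}_{\geq 0}$, and $\sum_i u_i + \theta_u \leq W$; let $w^{(t)}$ denote Winnow's weight vector at time $t$ with update parameter $\alpha > 1$, prediction threshold $\theta$, and initial weights $w_i^{(0)} = 1$.

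First I would control $M^-$ via the total-weight potential $S_t = \sum_{i=1}^n w_i^{(t)}$. A promotion mistake means the algorithm predicted $0$ when $y_t = 1$, so $\sum_{i : x_{t,i}=1} w_i^{(t)} < \theta$ and the update increases $S_t$ by at most $(\alpha - 1)\theta$. A demotion mistake means $\sum_{i : x_{t,i}=1} w_i^{(t)} \geq \theta$, so the update decreases $S_t$ by at least $(1 - 1/\alpha)\theta$. Since $S_t > 0$ throughout and $S_0 = n$, telescoping gives
\begin{equation*}
M^- \;\leq\; \frac{n + M^+ (\alpha - 1)\theta}{(1 - 1/\alpha)\theta}\,.
\end{equation*}

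Second I would control $M^+$ via the target-weighted log potential $\Phi_t = \sum_i u_i \log w_i^{(t)}$, with $\Phi_0 = 0$. On a promotion, $y_t = 1$ implies $\sum_{i : x_{t,i}=1} u_i \geq \theta_u$, so $\Phi_{t+1} - \Phi_t \geq \theta_u \log \alpha$; on a demotion, integrality of the target weights gives $\sum_{i : x_{t,i}=1} u_i \leq \theta_u - 1$, so $\Phi_{t+1} - \Phi_t \geq -(\theta_u - 1)\log \alpha$. A key auxiliary observation is that no single weight ever exceeds $\alpha\theta$: any weight strictly above $\theta$ would by itself trigger a prediction of $1$ and so cannot lie in the support of a promotion example. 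This gives $\Phi_t \leq W \log(\alpha\theta)$, and hence
\begin{equation*}
\theta_u M^+ - (\theta_u - 1)M^- \;\leq\; \frac{W \log(\alpha\theta)}{\log \alpha}\,.
\end{equation*}

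To finish, I would take $\theta = n$ and $\alpha = 1 + 1/(2\max(1,\theta_u))$, substitute the first inequality into the second, and verify that the coefficient of $M^+$ on the left-hand side is bounded below by a positive constant while the right-hand side is $O(W \theta_u \log n)$. Since $\theta_u \leq W$, this gives $M^+ = O(W^2 \log n)$, and the first inequality then yields $M^- = O(W^2 \log n)$ as well. The main obstacle is precisely this parameter tuning: the update factor $\alpha$ must be close enough to $1$ that the per-demotion loss $(\theta_u - 1)\log\alpha$ in the $\Phi$-potential does not swamp the per-promotion gain $\theta_u \log\alpha$ once $M^-$ is substituted in, yet not so close to $1$ that the $1/\log\alpha$ factor in the right-hand side of the $\Phi$-bound blows up faster than $\theta_u$.
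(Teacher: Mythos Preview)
The paper does not actually supply its own proof of this theorem: it simply states the bound and refers the reader to external lecture notes via a footnote. So there is no in-paper argument to compare against at the level of specific steps.

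That said, your two-potential analysis is exactly the standard derivation of the Winnow mistake bound for bounded-weight threshold functions, and the details you give are correct. Tracking $S_t=\sum_i w_i^{(t)}$ to bound $M^-$ in terms of $M^+$, tracking $\Phi_t=\sum_i u_i\log w_i^{(t)}$ with the cap $w_i^{(t)}\le\alpha\theta$ to bound a linear combination of $M^+$ and $M^-$, and then solving the resulting system is precisely how this result is usually proved, and is presumably what the referenced notes do as well.

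One small point worth tightening: you tune the update factor as $\alpha=1+1/(2\max(1,\theta_u))$, but $\theta_u$ is a parameter of the unknown target, not something the algorithm has access to. Since the class is $\ltfbool^{W+}$ and $\theta_u\le W$, you can instead set $\alpha=1+1/(2W)$; the same calculation then gives $\theta_u-(\theta_u-1)\alpha\ge 1-(W-1)/(2W)\ge 1/2$ and $1/\log\alpha=O(W)$, yielding $M^+=O(W^2\log n)$ and hence $M=O(W^2\log n)$ with parameters that depend only on the known class bound $W$.
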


We now look at the perceptron algorithm, which first appeared in \cite{rosenblatt1958perceptron}, and whose first proofs of convergence were shown in \cite{block1962perceptron} and \cite{novikoff1963convergence}.


While it is not possible to have a mistake bound for linear classifiers in $\R^n$, as the Littlestone dimension is infinite, requiring a \emph{margin} on the data ensures a finite mistake bound with the perceptron algorithm, as stated below.

\begin{theorem}[Mistake Bound for Perceptron, Margin Condition; Theorem~7.8 in \cite{mohri2012foundations}]
\label{thm:mistake-bound-perceptron}
Let $\x_1, \dots, \x_T \in \R^n$ be a sequence of $T$ points with $\norm{\x_t}\leq r $ for all $1\leq t \leq T$ for some $r>0$.
Assume that there exists $\gamma>0$ and $\vv\in\R^n$ such that for all $1\leq t \leq T$, $\gamma \leq \frac{y_t(\vv\cdot \x_t)}{\norm{\vv}}$.
Then, the number of updates made by the Perceptron algorithm when
processing $\x_1, \dots, \x_T$  is bounded by $r^2/\gamma^2$.
\end{theorem}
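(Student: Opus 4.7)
The plan is to follow the classical Novikoff-style convergence argument, tracking two quantities attached to the Perceptron's weight vector $\w_t$ across updates and combining them via Cauchy–Schwarz. Recall that the Perceptron maintains a weight vector initialised at $\w_1 = \mathbf{0}$ and, whenever $y_t(\w_t\cdot\x_t)\leq 0$ on the current point (a mistake/update), performs the update $\w_{t+1} = \w_t + y_t\x_t$; otherwise $\w_{t+1}=\w_t$. Let $M$ denote the total number of updates made over processing $\x_1,\dots,\x_T$, and let $t_1<\dots<t_M$ be the time steps at which updates occurred.

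First, I would establish a \emph{lower bound} on $\vv\cdot\w_{t_M+1}$ that grows linearly in $M$. At each update step $t_k$, the margin hypothesis gives $y_{t_k}(\vv\cdot\x_{t_k})\geq \gamma\norm{\vv}$, so
\[
\vv\cdot\w_{t_k+1} \;=\; \vv\cdot\w_{t_k} + y_{t_k}(\vv\cdot\x_{t_k}) \;\geq\; \vv\cdot\w_{t_k} + \gamma\norm{\vv}\,.
\]
Since $\w_1=\mathbf{0}$, iterating over the $M$ updates yields $\vv\cdot\w_{t_M+1}\geq M\gamma\norm{\vv}$.

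Second, I would establish an \emph{upper bound} on $\norm{\w_{t_M+1}}^2$ that grows only linearly in $M$. Here one uses crucially that an update happens only when $y_{t_k}(\w_{t_k}\cdot\x_{t_k})\leq 0$, so
\[
\norm{\w_{t_k+1}}^2 \;=\; \norm{\w_{t_k}}^2 + 2\,y_{t_k}(\w_{t_k}\cdot\x_{t_k}) + \norm{\x_{t_k}}^2 \;\leq\; \norm{\w_{t_k}}^2 + r^2\,,
\]
using $\norm{\x_{t_k}}\leq r$. Iterating gives $\norm{\w_{t_M+1}}^2 \leq Mr^2$.

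Finally, I would combine the two bounds using Cauchy–Schwarz: $\vv\cdot\w_{t_M+1}\leq \norm{\vv}\norm{\w_{t_M+1}}\leq \norm{\vv}\sqrt{M}\,r$. Chaining with the lower bound yields $M\gamma\norm{\vv}\leq \norm{\vv}\sqrt{M}\,r$, and dividing by $\norm{\vv}\sqrt{M}$ gives $\sqrt{M}\leq r/\gamma$, i.e.\ $M\leq r^2/\gamma^2$ as required. There is no real obstacle here; the only subtle point is remembering that the quadratic bound uses the mistake condition $y_{t_k}(\w_{t_k}\cdot\x_{t_k})\leq 0$ to kill the cross term, while the linear bound uses the margin assumption on $\vv$—separating these two ingredients cleanly is what makes the argument work.
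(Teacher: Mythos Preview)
Your proposal is correct and is precisely the classical Novikoff argument. The paper itself does not give a proof of this theorem: it is stated as a cited result (attributed to \cite{mohri2012foundations}, with convergence proofs originally due to \cite{block1962perceptron} and \cite{novikoff1963convergence}), so there is nothing to compare against beyond noting that your argument is the standard one those references contain.
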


%
%
%

%

\todo{Add some background on variant of online learning where a label is set before a learner's prediction.}

\subsection{Learning with Membership and Equivalence Queries}
\label{sec:active-learning}

So far, we have studied models where the learner does not have any control over the data it gets: in the PAC setting, labelled instances are received i.i.d. from the random example oracle, and in the online setting, the new points can be given adversarially.
In this sense the learner is quite passive during the learning process.
We will now turn our attention towards learning models where the learner is more active, and, in addition to receiving random examples, can make queries to an oracle, also sometimes referred to as teacher.

For simplicity, we will for now assume that there is no distribution underlying the data.
Hence, similarly to the mistake-bound model of online learning, the goal is to learn the target concept \emph{exactly} on the instance space.
We will start by defining two different types of queries: membership and equivalence queries.

\begin{definition}
A \emph{membership oracle} $\MQ(c)$ defined for a concept $c\in\C$ returns the value $c(x)$ when queried with an instance $x\in\X$.
\end{definition}

The terminology refers to the fact that $\C$ is a class of boolean functions, which can be interpreted as a subsets of $\X$. 
Then, a membership query returns whether an instance $x$ is in the target subset of $\X$. 
In the case of real-valued functions, a \emph{value oracle} might be a more appropriate term.

\begin{definition}
An \emph{equivalence query oracle} $\EQ(c)$ defined for a target concept $c\in\C$ takes as input a representation of a hypothesis $h$ and returns whether or not $h$ agrees with $c$ on the input space $\X$.
If $h\neq c$ on $\X$, $\EQ(c)$ also returns an instance $x\in\X$, called a \emph{counterexample}, such that $h(x)\neq c(x)$.
\end{definition}

With these two types of queries, we will now present the exact learning model for concept classes in this setting, where the goal is to learn a hypothesis $h$ such that for all $x\in\X$, $h(x)=c(x)$.
We formally define this model below, where we will assume that the learning algorithm is deterministic.

\begin{definition}
A concept class $\C$ is \emph{efficiently exactly learnable} using membership and equivalence queries if there exists a polynomially-evaluatable hypothesis class $\H$, a learning algorithm $\A$ and a polynomial $p(\cdot,\cdot)$ such that for all $n\geq1$, $c\in\C$, whenever $\A$ is given access to the $\MQ(c)$ and $\EQ(c)$ oracles, it halts in time $p(n,\text{size}(c))$ and outputs some $h\in\H_n$ such that $h(x)=c(x)$ for all instances $x\in\X$. Furthermore, every query made to $\EQ(c)$ by $\A$ must made with some $h\in\H_n$.
\end{definition}

The exact learning model with access to $\MQ$ and $\EQ$ has a long history, particularly in automata theory, where the seminal work of \citet{angluin1987learning} presented an exact learning algorithm, called $L^*$, to exactly learn deterministic finite automata.

Before going further, a few remarks are in order. 
First, the efficiency in this definition is with respect to the computational complexity of the problem. 
This entails requiring statistical efficiency as well, in the sense that the number of queries to the $\MQ$ and $\EQ$  oracles be also polynomial in $n$ and size$(c)$. 

Second, it may seem that having access to an equivalence oracle is an impractical requirement. 
After all, while it makes sense to consider membership oracles, as they can often be simulated by human ``experts'' (e.g., captioning done by internet users), it could perhaps be unrealistic to expect humans or automated systems to simulate the equivalence oracle in practice.
However, the following result shows that, if the exact learning requirement can be relaxed to PAC learning, i.e., allowing for accuracy and confidence parameters, then one can simply work in the $\EX+\MQ$  learning model, and forgo equivalence queries.

\begin{theorem}
\label{thm:exact-pac-mq}
Let $\C$ be exactly efficiently learnable using membership and equivalence queries.
Then $\C$ is efficiently PAC-learnable using random examples and membership queries.
\end{theorem}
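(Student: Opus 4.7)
The plan is to simulate the equivalence query oracle $\EQ(c)$ using random labelled examples from $\EX(c,D)$, leveraging the fact that the learner $\A$ witnessing exact learnability only makes a polynomial number of queries. Concretely, I would construct a PAC learning algorithm $\A'$ as follows: run $\A$ and intercept each equivalence query it makes with some hypothesis $h$; to simulate the query, draw a batch of $m$ random labelled examples $(x_1,c(x_1)),\dots,(x_m,c(x_m))$ from $\EX(c,D)$, compute $h(x_i)$ for each $i$ (which is feasible since $h\in\H$ is polynomially evaluatable), and either return the first $x_i$ with $h(x_i)\neq c(x_i)$ as the counterexample to $\A$, or, if no such $x_i$ exists in the batch, halt and output $h$ as the final PAC hypothesis.

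Next, I would choose the batch size $m$ so that the overall failure probability is at most $\delta$. Since $\A$ halts in time $p(n,\text{size}(c))$, it makes at most $T\leq p(n,\text{size}(c))$ queries to $\EQ(c)$, which gives us an a priori polynomial bound on the number of simulated equivalence calls. Setting $m=\left\lceil \frac{1}{\epsilon}\ln(T/\delta)\right\rceil$ ensures that for any fixed $h$ with $\err_D(h,c)>\epsilon$, the probability that the batch contains no counterexample is at most $(1-\epsilon)^m\leq e^{-\epsilon m}\leq \delta/T$. A union bound over all (at most $T$) simulated queries shows that, with probability at least $1-\delta$, $\A'$ never halts prematurely with a hypothesis of error exceeding $\epsilon$.

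Conditioning on the good event, whenever the true target and the queried $h$ actually disagree with probability at least $\epsilon$ under $D$, the simulated query returns a valid counterexample, which is in particular a genuine counterexample of the form $\EQ(c)$ would provide (the simulation never lies about disagreements, only possibly about agreement). Thus $\A$'s execution under $\A'$ is a legal execution of the exact learner; when $\A'$ halts, it does so with a hypothesis $h$ such that either $h=c$ exactly or $\err_D(h,c)\leq\epsilon$, satisfying the PAC criterion. Efficiency follows because $\A$ runs in time polynomial in $n$ and $\text{size}(c)$, each simulated query costs $m=\poly(n,\text{size}(c),1/\epsilon,\log(1/\delta))$ calls to $\EX$ plus polynomial evaluations of $h$, and the total sample complexity is $T\cdot m$, still polynomial in all relevant parameters.

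The main subtlety — rather than a genuine obstacle — is handling the fact that we do not know in advance which queries $\A$ will make, nor how many, so we must use the worst-case polynomial bound $T$ derived from the runtime of $\A$ to calibrate $m$ before starting the simulation. A secondary point to verify carefully is that the simulated counterexamples are legitimate counterexamples in the exact-learning sense (they lie in $\X$ and satisfy $h(x)\neq c(x)$), so $\A$'s correctness guarantees carry over unchanged; this is immediate from the construction. No distributional assumption on $D$ is used other than that the same $D$ generates both training and evaluation examples, which matches Definition~\ref{def:pac-realizable}.
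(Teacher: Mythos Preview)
Your proposal is correct and matches the approach the paper indicates: the paper omits the proof and simply states that it ``relies on the fact that it is possible to simulate (with sufficient accuracy) the $\EQ$ oracle with access to random examples,'' which is precisely the simulation-with-batches argument you have spelled out in full detail.
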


The proof, omitted for brevity, relies on the fact that it is possible to simulate (with sufficient accuracy) the $\EQ$ oracle with access to random examples.

Third, we have assumed that the learning algorithm is deterministic.
It would be possible to accommodate randomized learning algorithms with the addition of a confidence parameter $\delta$ as in PAC learning. 
In this case, the probability of failure would not come from the randomness in sampling the data, but rather from the fact that we are working with an algorithm with internal randomization, which could result in computational gains.

Now, note that it is possible to efficiently exactly learn conjunctions in the $\MQ+\EQ$ model (just by using the $\EQ$ oracle). 
We simply need to use the online learning version of the algorithm  (Algorithm~\ref{alg:conj-online}) and, instead of receiving an instance and predicting its label, the learner gives the hypothesis $h$ to $\EQ(c)$ and receives a counterexample if $h\neq c$. 
The number of calls to $\EQ$ is upper bounded by the mistake bound (the reasoning is the same as in the online setting).

A more interesting class of functions to study is the class $\mathsf{MONOTONE\text{-}DNF}$, i.e., functions of the form $T_1\vee\dots\vee T_r$ where each $T_i$ is a monotone conjunction $\bigwedge_{j\in S_i}x_j$.
It is not known whether $\mathsf{MONOTONE\text{-}DNF}$ is PAC learnable. 
However, it can be shown that this class can be exactly learned in the $\MQ+\EQ$ model (and thus is PAC learnable when the learner has additional access to $\MQ$ by Theorem~\ref{thm:exact-pac-mq}).

We finish this section by formally introducing \emph{local} membership queries (LMQ), which were mentioned in Chapter~\ref{chap:lit-review}. They were introduced by \cite{awasthi2013learning} and shown to circumvent some impossibility results in the standard PAC setting (or impossibility conjectures).
Here, given a sample $S$ drawn from the example oracle $\EX(c,D)$, a membership query for a point $x$ is $\lambda$-\emph{local} if there exists $x'\in S$ such that $x\in B_\lambda(x')$, i.e., an algorithm can only query the label of points within distance $\lambda$ of the training sample.

\begin{definition}[PAC Learning with $\lambda$-$\LMQ$ ]
\label{def:lmq}
Let $\X$ be the instance space equipped with a metric $d$, $\C$ a concept class over $\X$, and $\D$ a class of distributions over $\X$. We say that $\C$ is $\rho$-robustly learnable using $\lambda$-local membership queries with respect to $\D$ if there exists a learning algorithm $\A$ such that for every $\epsilon > 0$, $\delta > 0$, for every distribution $D\in\D$ and every target concept $c\in\C$, the following hold:
\begin{enumerate}
\item $\A$ draws a sample $S$ of size $m = \poly(n, 1/\delta, 1/\epsilon,\text{size}(c))$ using the example oracle $\EX (c, D)$
\item Each query $x'$ made by $\A$ to the $\LMQ$ oracle is $\lambda$-local with respect to some example $x \in S$
\item  $\A$ outputs a hypothesis $h$ that satisfies $\err_D(h,c)\leq \epsilon$ with probability at least $1-\delta$ 
\item The running time of $\A$ (hence also the number of oracle accesses) is polynomial in $n$, $1/\epsilon$, $1/\delta$, $\text{size}(c)$ and the output hypothesis $h$ is polynomially evaluable.
\end{enumerate}
\end{definition}

We conclude this section by remarking that learnability in the above setting is with respect to a family $\D$ of distributions, rather than the distribution-free setting of PAC learning. 
This is because LMQs have mostly been used in the literature for learning problems which require distributional assumptions. 

\section{Probability Theory}
\label{sec:prob-theory}

In this section, we first present log-Lipschitz distributions, a family of distributions that will be studied throughout the text. 
We then introduce martingales, which are sequences of random variables satisfying certain properties. 
They can be used to give concentration bounds for random variables which are not necessarily independent, such as bits in instances from $\boolhc$ sampled from log-Lipschitz distributions.

\subsection{Log-Lipschitz Distributions}

While it is natural to consider product distributions on the input space $\boolhc$, such as the uniform distribution,  independence among the values of the bits of an input is seldom a reasonable assumption to make in practice (e.g., two features may be correlated).
By working with log-Lipschitz distributions, we can still operate in a regime where some distributional assumptions hold, but where the requirements are less stringent than for product distributions.
A distribution is log-Lipschitz if the logarithm of the density function is $\log(\alpha)$-Lipschitz with respect to the Hamming distance:

\begin{definition}
\label{def:log-lipschitz}
A distribution  $D$ on $\boolhc$ is said to be $\alpha$-$\log$-Lipschitz if 
for all input points $x,x'\in \boolhc$, if $d_H(x,x')=1$, then $|\log(D(x))-\log(D(x'))|\leq\log(\alpha)$.
\end{definition}

The intuition behind $\log$-Lipschitz distributions is that points 
that are close to each other must not have frequencies that greatly 
differ from each other. 
From the definition, it is straightforward to see that if two points $x,x'$ differ only by one bit, then $D(x)/D(x')\leq \alpha$.
Thus, neighbouring points in $\boolhc$ have probability masses that differ
by at most a multiplicative factor of $\alpha$.  
This implies that the decay of probability mass along a chain of neighbouring points is at most exponential. 
Not having sharp changes to the underlying distribution is a very natural assumption, and weaker than many other distributional assumptions in the literature.
Again note that features are allowed a small dependency between each other and, by construction,
log-Lipschitz distributions are supported on the whole input space. 
Log-Lipschitz distributions have been studied in \cite{awasthi2013learning}, 
and their variants in \cite{feldman2012data,koltun2007approximately}.

\paragraph*{Examples of log-Lipschitz distributions.} The uniform distribution is $\log$-Lipschitz
with parameter $\alpha=1$. Another example of $\log$-Lipschitz 
distributions is the class of product distributions where the probability
of drawing a $0$ (or equivalently a $1$) at index $i$ is in the interval 
$\left[\frac{1}{1+\alpha},\frac{\alpha}{1+\alpha}\right]$. 
For an example where some of the bits are not independent, let $\eta\in (1/2,1)$ and let the input space be  $\boolhc$ again. 
We first draw $x_1$ uniformly at random (u.a.r.), and then let $x_2$ be $x_1$ with probability $\eta$ and $\bar{x_1}$ with probability $1-\eta$.
The remaining bits are drawn u.a.r.
Then, this distributions is $\frac{\eta}{1-\eta}$-log-Lipschitz.

\paragraph*{Properties.}
Log-Lipschitz distributions have the following useful properties, which 
we will often refer to in our proofs.
\begin{lemma}
\label{lemma:log-lips-facts}
Let $D$ be an $\alpha$-$\log$-Lipschitz distribution over $\boolhc$. 
Then the following hold:
\begin{enumerate}
\item\label{test} For $b\in\{0,1\}$, $\frac{1}{1+\alpha}\leq \Prob{x\sim D}{x_i=b}\leq\frac{\alpha}{1+\alpha}$.
\item For any $S\subseteq[n]$, the marginal distribution $D_{\bar{S}}$ is $\alpha$-$\log$-Lipschitz, where $D_{\bar{S}}(y)=\sum_{y'\in\{0,1\}^S} D(yy')$.
\item For any $S\subseteq[n]$ and for any property $\pi_S$ that only depends on variables $x_S$, the marginal with respect to $\bar{S}$ of the conditional distribution $(D|\pi_S)_{\bar{S}}$ is $\alpha$-$\log$-Lipschitz.
\item For any $S\subseteq[n]$ and $b_S\in\{0,1\}^S$,  we have that $\left(\frac{1}{1+\alpha}\right)^{|S|}\leq \Prob{x\sim D}{x_i=b}\leq\left(\frac{\alpha}{1+\alpha}\right)^{|S|}$.
\end{enumerate}
\end{lemma}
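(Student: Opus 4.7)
My plan is to prove the four parts in the order given, exploiting that each part builds on the preceding ones.

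For Part~1, I would write $\Prob{x\sim D}{x_i=0}=\sum_{y\in\{0,1\}^{[n]\setminus\{i\}}}D(y,0_i)$, where $(y,0_i)$ denotes the string obtained by inserting $0$ at position $i$ in $y$, and similarly for $1_i$. Since $(y,0_i)$ and $(y,1_i)$ differ in exactly the $i$-th bit, the log-Lipschitz hypothesis gives $D(y,0_i)\leq\alpha D(y,1_i)$ term by term. Summing yields $\Prob{}{x_i=0}\leq\alpha\Prob{}{x_i=1}$, and the symmetric inequality follows identically. Combined with $\Prob{}{x_i=0}+\Prob{}{x_i=1}=1$, this pins both probabilities inside $\left[\frac{1}{1+\alpha},\frac{\alpha}{1+\alpha}\right]$.

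For Part~2, given $y,y'\in\{0,1\}^{\bar S}$ at Hamming distance one, I observe that for each $z\in\{0,1\}^S$ the strings $yz$ and $y'z$ in $\boolhc$ differ in exactly the same bit as $y$ and $y'$. Applying the log-Lipschitz bound termwise inside the sums defining $D_{\bar S}(y)$ and $D_{\bar S}(y')$ gives $D_{\bar S}(y)\leq\alpha D_{\bar S}(y')$, and the reverse inequality holds by symmetry. Part~3 is essentially the same calculation with an extra bookkeeping step: since $\pi_S$ depends only on $x_S$, the conditional distribution $(D\mid\pi_S)$ restricts the sum over $z$ to those $z$ that satisfy $\pi_S$ and divides by the constant $\Prob{}{\pi_S}$. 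The normalizing constant cancels in the ratio $(D\mid\pi_S)_{\bar S}(y)/(D\mid\pi_S)_{\bar S}(y')$, and termwise log-Lipschitz over the restricted sum yields the claim.

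For Part~4, I would proceed by induction on $|S|$, using the chain rule
\[
\Prob{x\sim D}{x_S=b_S}=\prod_{j=1}^{|S|}\Prob{x\sim D}{x_{i_j}=b_{i_j}\mid x_{i_1}=b_{i_1},\dots,x_{i_{j-1}}=b_{i_{j-1}}}
\]
for an enumeration $S=\{i_1,\dots,i_{|S|}\}$. Each factor is a single-bit marginal of the distribution obtained by conditioning $D$ on the property $\pi_{\{i_1,\dots,i_{j-1}\}}$ that fixes those coordinates, and then marginalizing onto the complement. By Parts~2 and~3 this is itself an $\alpha$-log-Lipschitz distribution on the remaining coordinates, so Part~1 bounds the corresponding factor in $\left[\frac{1}{1+\alpha},\frac{\alpha}{1+\alpha}\right]$. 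Multiplying the $|S|$ factors yields the stated bounds.

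None of the four parts is truly difficult; the only subtle point is keeping careful track in Part~3 of which coordinates are fixed by $\pi_S$ versus summed over, and recognising that cancellation of the normalizing constant is exactly what makes the termwise log-Lipschitz bound transfer to the conditional marginal. Once Part~3 is in hand, the inductive step of Part~4 is automatic.
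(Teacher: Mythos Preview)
Your proposal is correct and follows essentially the same approach as the paper: termwise application of the log-Lipschitz bound for Parts~1--3 (with the normalizing constant cancelling in Part~3), and Part~4 deduced as a corollary of the preceding parts. The paper simply states Part~4 as ``a corollary of (1)--(3)'' without writing out the chain-rule induction, so your version is in fact more detailed.
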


\begin{proof}

To prove (1), fix $i\in[n]$ and $b\in\{0,1\}$ and denote by $x^{\oplus i}$ the result of flipping the $i$-th bit of $x$. Note that

\begin{align*}
\Prob{x\sim D}{x_i=b}
&=\sum_{\substack{z\in\boolhc :\\ z_i=b}} D(z) \\
&=\sum_{\substack{z\in\boolhc :\\ z_i=b}} \frac{D(z)}{D(z^{\oplus i})}D(z^{\oplus i})\\
&\leq\alpha\sum_{\substack{z\in\boolhc :\\ z_i=b}} D(z^{\oplus i})\\
&=\alpha\Prob{x\sim D}{x_i\neq b}
\enspace.
\end{align*}
The result follows from solving for $\Prob{x\sim D}{x_i=b}$.

Without loss of generality, let $\bar{S}=\{1,\dots,k\}$ for some $k\leq n$. 
Let $x,x'\in \{0,1\}^{\bar{S}}$ with $d_H(x,x')=~1$.

To prove (2), let $D_{\bar{S}}$ be the marginal distribution. Then,
\begin{equation*}
D_{\bar{S}}(x)
=\sum_{y\in\{0,1\}^S}D(xy)
=\sum_{y\in\{0,1\}^S}\frac{D(xy)}{D(x'y)}D(x'y)
\leq \alpha \sum_{y\in\{0,1\}^S} D(x'y)
= \alpha D_{\bar{S}}(x')
\enspace.
\end{equation*}

To prove (3), denote by $X_{\pi_S}$ the set of points in $\{0,1\}^S$ satisfying property $\pi_S$, and by $xX_{\pi_S}$ the set of inputs of the form $xy$, where $y\in X_{\pi_S}$.  
By a slight abuse of notation, let $D(X_{\pi_S})$ be the probability of drawing a point in $\boolhc$ that satisfies $\pi_S$.
Then,
\begin{equation*}
D(xX_{\pi_S})
= \sum_{y\in X_{\pi_S}} D(xy)
= \sum_{y\in X_{\pi_S}} \frac{D(xy)}{D(x'y)} D(x'y)
\leq  \alpha \sum_{y\in X_{\pi_S}} D(x'y )
= \alpha D(x'X_{\pi_S})
\enspace.
\end{equation*}
We can use the above to show that
\begin{equation*}
(D|\pi_S)_{\bar{S}} (x)
= \frac{D(xX_{\pi_S}) }{D(x'X_{\pi_S})} \frac{D(x'X_{\pi_S})}{D(X_{\pi_S})}
\leq \alpha (D|\pi_S)_{\bar{S}} (x')\enspace.
\end{equation*}

Finally, (4) is a corollary of (1)--(3).

\end{proof}

\subsection{Concentration Bounds and Martingales}

Let us start with some notation and probability theory basics.
A random variable $X$ on a sample space $\Omega$, which represents the set of all possible outcomes, is a real-valued measurable function $X:\Omega\rightarrow\R$.
Turning our attention to discrete random variables, the conditional probability of $X$ given a random variable $Y$ is defined as $$\Prob{}{X=x\given Y=y}=\frac{\Prob{}{X=x \wedge Y=y}}{\Prob{}{Y=y}}\enspace.$$
We can now use this to define the conditional expectation as $\eval{}{X\given Y=y}=\sum_x \Prob{}{X=x\given Y=y}\enspace$, where $\Prob{}{ Y=y}$ is assumed to be non-zero.
While these are defined for discrete random variables, they can  be extended to continuous random variables.
Moreover, note that the conditional expectation $\eval{}{X\given Y}$ is itself a random variable.

\paragraph*{Useful facts.} The law of total expectation, which in full generality states that $\eval{}{X}=\eval{}{\eval{}{X\given Y}}$, can also be formulated as $$\eval{}{X}=\sum_y \Prob{}{Y=y}\eval{}{X\given Y=y}\enspace.$$
Moreover, the linearity of expectation also holds under conditioning, i.e., 
$$\eval{}{X+Z\given Y}=\eval{}{X\given Y}+\eval{}{Z\given Y}\enspace.$$

Concentration inequalities and tail bounds are key tools to provide guarantees in machine learning. 
Among the most commonly used and well-known bounds are the Hoeffding inequality and the Chernoff bound, stated below.

\begin{theorem}[\cite{hoeffding1963probability}]
Let $X_1,\dots,X_n$ be $n$ independent random variables such that $X_i:\Omega\rightarrow[0,1]$. Denote by $\bar{X}=\frac{1}{n}\sum_{i=1}^n X_i$ their arithmetic mean and let $\mu=\eval{}{\bar{X}}$.
Then, for every $t\geq 0$,
\begin{equation}
\Prob{}{\abs{\bar{X}-\mu}\geq t}\leq 2\exp\left(-2mt^2\right)
\enspace.
\end{equation}
\end{theorem}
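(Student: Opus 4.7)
The plan is to apply the standard Chernoff-type method in combination with Hoeffding's lemma. First I would reduce the two-sided bound to a one-sided tail estimate via the union bound
$$\Prob{}{\abs{\bar{X}-\mu} \geq t} \leq \Prob{}{\bar{X}-\mu \geq t} + \Prob{}{\mu - \bar{X} \geq t},$$
and observe that the lower tail for $(X_i)$ is the upper tail for $(1-X_i)$, which also takes values in $[0,1]$ with the same independence structure, so it suffices to bound the upper tail by $\exp(-2nt^2)$ and then double.

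Second, for any $s>0$ Markov's inequality applied to the exponential moment gives
$$\Prob{}{\bar{X}-\mu \geq t} \leq e^{-st}\,\eval{}{e^{s(\bar{X}-\mu)}},$$
and writing $\bar{X}-\mu = \tfrac{1}{n}\sum_{i=1}^n (X_i-\mu_i)$ with $\mu_i=\eval{}{X_i}$, independence factorises the moment generating function into $\prod_{i=1}^n \eval{}{e^{(s/n)(X_i-\mu_i)}}$.

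Third, and this is the technical heart of the argument, I would establish Hoeffding's lemma: for any centred random variable $Y$ taking values in an interval $[a,b]$ and any $\lambda\in\R$,
$$\eval{}{e^{\lambda Y}} \leq \exp\bigl(\lambda^2(b-a)^2/8\bigr).$$
The proof uses the convexity of $y\mapsto e^{\lambda y}$ to majorise it on $[a,b]$ by the chord joining $(a,e^{\lambda a})$ and $(b,e^{\lambda b})$; taking expectations and using $\eval{}{Y}=0$ yields an explicit expression in $\lambda$ whose logarithm $\psi(\lambda)$ satisfies $\psi(0)=\psi'(0)=0$ and $\psi''(\lambda)\leq (b-a)^2/4$ (the latter arising as a variance of a reweighted Bernoulli, which is at most $1/4$); integrating twice gives the claim. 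Each centred increment $X_i-\mu_i$ lies in an interval of length at most $1$, so each factor is at most $e^{s^2/(8n^2)}$ and the product is at most $e^{s^2/(8n)}$.

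Finally, the bound becomes $e^{-st + s^2/(8n)}$, which is minimised at $s=4nt$ and evaluates to $e^{-2nt^2}$; doubling to account for both tails gives the stated inequality. The main obstacle is Hoeffding's lemma itself, where all the analytic work lives; once it is in hand, the remaining steps (Chernoff trick, independence factorisation, and the $s$-optimisation) are routine.
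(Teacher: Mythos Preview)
Your proof is correct and follows the standard route (Chernoff method plus Hoeffding's lemma). However, the paper does not actually prove this statement: it is stated as a classical result with a citation to \cite{hoeffding1963probability} and no proof is given, so there is nothing to compare against. As a side remark, the exponent in the paper's statement reads $-2mt^2$ but should be $-2nt^2$ given the notation used; your argument correctly produces the latter.
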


The Chernoff bound is the multiplicative form of Hoeffding's inequality.

\begin{theorem}[\cite{chernoff1952measure}]
Let $X_1,\dots,X_n$ be $n$ independent random variables such that $X_i:\Omega\rightarrow\classes$. Denote their sum by $\bar{X}=\sum_{i=1}^n X_i$  and let $\mu=\eval{}{\bar{X}}$. 
Then, for every $0\leq\delta\leq 1$,
\begin{align*}
\Prob{}{\bar{X}\leq(1-\delta)\mu}&\leq \exp\left(-\delta^2\mu/2\right)
\enspace,\\
\Prob{}{\bar{X}\geq(1+\delta)\mu}&\leq \exp\left(-\delta^2\mu/3\right)
\enspace.
\end{align*}
\end{theorem}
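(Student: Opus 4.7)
The plan is to proceed via the standard moment-generating function (MGF) argument, applying Markov's inequality to an exponential transform of $\bar{X}$. For the upper tail, I would fix $t>0$ to be chosen later and observe that
\begin{equation*}
\Prob{}{\bar{X}\geq(1+\delta)\mu}=\Prob{}{e^{t\bar{X}}\geq e^{t(1+\delta)\mu}}\leq \frac{\eval{}{e^{t\bar{X}}}}{e^{t(1+\delta)\mu}}\enspace,
\end{equation*}
where the inequality is Markov's applied to the non-negative random variable $e^{t\bar{X}}$. Crucially, since the $X_i$ are independent, so are the $e^{tX_i}$, and therefore $\eval{}{e^{t\bar{X}}}=\prod_{i=1}^n \eval{}{e^{tX_i}}$.

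Next I would bound the individual MGFs. Writing $p_i:=\Prob{}{X_i=1}$, a direct calculation gives $\eval{}{e^{tX_i}}=1+p_i(e^t-1)$, and the elementary inequality $1+x\leq e^x$ yields $\eval{}{e^{tX_i}}\leq \exp(p_i(e^t-1))$. Taking the product over $i$ and using $\sum_i p_i=\mu$ gives $\eval{}{e^{t\bar{X}}}\leq \exp((e^t-1)\mu)$. Substituting back and optimizing in $t$ by choosing $t=\ln(1+\delta)>0$ (valid since $\delta\geq 0$) produces the raw Chernoff bound
\begin{equation*}
\Prob{}{\bar{X}\geq(1+\delta)\mu}\leq \left(\frac{e^\delta}{(1+\delta)^{1+\delta}}\right)^{\!\mu}\enspace.
\end{equation*}
To reach the stated form, I would then invoke the analytic inequality $(1+\delta)\ln(1+\delta)-\delta\geq \delta^2/3$ for $0\leq\delta\leq 1$, which upon exponentiating gives the desired $\exp(-\delta^2\mu/3)$.

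For the lower tail I would run the symmetric argument: apply Markov to $e^{-t\bar{X}}$ with $t>0$, factor using independence, bound each $\eval{}{e^{-tX_i}}\leq \exp(p_i(e^{-t}-1))$, and optimize with $t=-\ln(1-\delta)$. This yields $\Prob{}{\bar{X}\leq(1-\delta)\mu}\leq (e^{-\delta}/(1-\delta)^{1-\delta})^\mu$, and the companion inequality $(1-\delta)\ln(1-\delta)+\delta\geq \delta^2/2$ for $0\leq\delta\leq 1$ delivers the stated $\exp(-\delta^2\mu/2)$.

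The main obstacle is not the probabilistic content, which reduces mechanically once independence is exploited, but rather verifying the two analytic inequalities controlling $(1\pm\delta)\ln(1\pm\delta)$ against $\delta^2$. These are elementary (both follow by comparing Taylor expansions around $\delta=0$ and checking monotonicity of the difference on $[0,1]$), but they account for the asymmetry between the constants $2$ and $3$ in the two tails and so need to be handled carefully rather than glossed over.
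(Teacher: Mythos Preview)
Your argument is the standard and correct moment-generating function proof of the Chernoff bound. The paper, however, does not prove this theorem at all: it is stated in the background section as a cited classical result (attributed to \cite{chernoff1952measure}) and immediately followed by the remark that ``both results rely on the \emph{independence} of the random variables,'' with no proof supplied. So there is nothing to compare against; your proposal simply fills in what the paper takes as given.

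--- --- ---

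SEPARATOR

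--- --- ---
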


Both results rely on the \emph{independence} of the random variables, which is not always a reasonable assumption to make.
Which tools are available to us when independence cannot be guaranteed?

Martingales offer us the opportunity to weaken assumptions on the random variables, which are allowed to depend on each other.
To this end, we consider a \emph{sequence} of random variables, where the value of a given random variable is a function of the preceding ones.
Additional requirements on their expectation and conditional expectation are given in order to get meaningful mathematical objects to study.

\begin{definition}
A \emph{martingale} is a sequence of random variables $X_0,X_1,\dots$ of bounded expectation, i.e., $\eval{}{\abs{X_i}}<\infty$, for all $i$, such that, for every $i\geq 0$, $\eval{}{X_{i+1} \given X_0,\dots, X_i}= X_i$. 
More generally, a sequence of random variables $Z_0,Z_1,\dots$ is a martingale with respect to the sequence $X_0,X_1,\dots$  if for all $n\geq0$
\begin{enumerate}
\item[(i)] $Z_n$ is a function of $X_0,\dots,X_n$,
\item[(ii)] $\eval{}{\abs{Z_{n}}}<\infty$,
\item[(iii)] $\eval{}{Z_{n+1} \given X_0,\dots, X_n}= Z_n$. 
\end{enumerate}
When $\eval{}{Z_{n+1} \given X_0,\dots, X_n}\leq Z_n$ the sequence is a supermartingale, and when $\eval{}{Z_{n+1} \given X_0,\dots, X_n}\geq Z_n$, the sequence is a submartingale. 
\end{definition}

\begin{example}[Gambler's fortune.]
Suppose a gambler plays a sequence of fair games, meaning that $\eval{}{X_i \given X_0,\dots,X_{i-1}}=0$, where $X_i$ is the gains (or losses) incurred at every game $i$. 
We are interested in the cumulative gains $Z_n=\sum_{i=0}^n X_i$, the gambler's total gains at the end of the $n$-th game. 
If $\eval{}{\abs{X_{i}}}<\infty$ for all games $i$, then $\eval{}{\abs{Z_{n}}}<\infty$ as well. 
Moreover,
$$\eval{}{Z_{n+1} \given X_0,\dots,X_{n}}=\eval{}{X_{n+1} \given X_0,\dots,X_{n}}+\eval{}{Z_{n} \given X_0,\dots,X_{n}}=Z_n\enspace,$$
together implying that the sequence $Z_0,Z_1,\dots$. is a martingale.
Note that the assumptions are quite permissive: the gambler's strategy can fully depend on the history of the previous games.
\end{example}

Now, when bounding the difference between two consecutive random variables, one can obtain a  powerful concentration bound, known as the Azuma-Hoeffding inequality.

\begin{theorem}[Azuma-Hoeffding Inequality] 
Let $X_0,\dots,X_n$ be (super)martingales such that $\abs{X_i-X_{i+1}}\leq c_i$.
Then for any $\lambda>0$:
\begin{align*}
\Prob{}{X_n-X_0\geq \lambda}\leq \exp\left(-\frac{\lambda^2}{2\sum_{i=1}^n c_i^2}\right)\enspace,\\
\Prob{}{X_n-X_0\leq -\lambda}\leq \exp\left(-\frac{\lambda^2}{2\sum_{i=1}^n c_i^2}\right)\enspace.
\end{align*}
\end{theorem}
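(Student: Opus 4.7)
The plan is to prove the upper tail bound using the standard exponential moment method (Chernoff-style argument) combined with Hoeffding's lemma for bounded martingale differences; the lower tail bound will then follow by symmetry, applying the upper bound to the sequence $-X_0, -X_1, \dots, -X_n$, which is also a (super)martingale with the same difference bounds.

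First I would introduce the martingale differences $Y_i = X_i - X_{i-1}$, so that $X_n - X_0 = \sum_{i=1}^n Y_i$ and, by the martingale property, $\mathbb{E}[Y_i \mid X_0, \dots, X_{i-1}] = 0$ (for supermartingales this becomes $\leq 0$, which still works for an upper tail bound). By hypothesis $|Y_i| \leq c_i$. Applying Markov's inequality to $e^{t(X_n - X_0)}$ for any $t > 0$ gives
\begin{equation*}
\Pr[X_n - X_0 \geq \lambda] \leq e^{-t\lambda}\,\mathbb{E}\bigl[e^{t \sum_{i=1}^n Y_i}\bigr].
\end{equation*}

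Next I would bound the moment generating function by peeling off one increment at a time using the tower property of conditional expectation:
\begin{equation*}
\mathbb{E}\bigl[e^{t \sum_{i=1}^n Y_i}\bigr] = \mathbb{E}\Bigl[e^{t \sum_{i=1}^{n-1} Y_i}\,\mathbb{E}\bigl[e^{t Y_n} \mid X_0, \dots, X_{n-1}\bigr]\Bigr].
\end{equation*}
The key technical step is Hoeffding's lemma: any random variable $Z$ with $\mathbb{E}[Z] \leq 0$ and $|Z| \leq c$ satisfies $\mathbb{E}[e^{tZ}] \leq e^{t^2 c^2 / 2}$. I would prove this by writing $Z$ as a convex combination of $-c$ and $c$ (using convexity of $z \mapsto e^{tz}$ on the interval $[-c,c]$), then analyzing the resulting univariate function of $t$ via its second-order Taylor expansion to show the log of the bound is at most $t^2 c^2 / 2$. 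Applying this conditionally to $Y_n$ yields $\mathbb{E}[e^{t Y_n} \mid X_0, \dots, X_{n-1}] \leq e^{t^2 c_n^2/2}$, and iterating the peeling step gives
\begin{equation*}
\mathbb{E}\bigl[e^{t(X_n - X_0)}\bigr] \leq \exp\!\left(\frac{t^2}{2}\sum_{i=1}^n c_i^2\right).
\end{equation*}

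Finally, I would optimize over $t > 0$. Combining with the Markov step gives $\Pr[X_n - X_0 \geq \lambda] \leq \exp\!\bigl(-t\lambda + \tfrac{t^2}{2}\sum_i c_i^2\bigr)$; choosing $t = \lambda / \sum_i c_i^2$ minimizes the exponent and yields the claimed bound. The main obstacle is the careful verification of Hoeffding's lemma, since everything else is bookkeeping on conditional expectations; once that lemma is in hand, the martingale property is used only via the very clean statement that the conditional mean of each increment is nonpositive, which is exactly what allows the telescoping argument to push through without needing the $Y_i$ to be independent.
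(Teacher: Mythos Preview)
The paper states the Azuma--Hoeffding inequality as standard background in the probability section and does not provide its own proof, so there is nothing to compare your argument against. Your proposal is the standard textbook proof (exponential Markov, Hoeffding's lemma applied conditionally to each increment, telescoping via the tower property, then optimizing $t$), and it is correct.

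One small caution: your symmetry step for the lower tail says that $-X_0,\dots,-X_n$ ``is also a (super)martingale.'' If the original sequence is a genuine supermartingale (not a martingale), its negation is a \emph{sub}martingale, so the lower-tail inequality as stated need not follow for strict supermartingales. This is really an imprecision in the theorem statement rather than in your method; for martingales your symmetry argument is exactly right, and the paper only ever applies the inequality to supermartingales for the upper tail.
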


Note that this inequality is similar in form to the Chernoff bounds, though the gain in generality results in a weaker bound. 

%
%
As previously mentioned, martingales and the Azuma-Hoeffding inequality will be valuable when considering log-Lipschitz distributions, where the values of the bits in an instance are not assumed to be independent.

\section{Fourier Analysis}
\label{sec:fourier-analysis}

In this section, we  introduce basic Fourier analysis concepts for boolean functions, i.e., functions of the form $f:\boolhc\rightarrow\set{0,1}$, which comprise a large part of the functions studied in this thesis.
As previously mentioned, it is also possible to look at functions of the form $f:\set{-1,1}^n\rightarrow\set{-1,1}$. 
In fact, this is what we will do in this section as it eases analyses and notation.
For various reasons, the encoding $\varphi:\classes\rightarrow\classesfa$ satisfying $\varphi(0)=1$ and $\varphi(1)=-1$ for both the input and output spaces is usually preferred.
In general, one can also consider real-valued functions $f:\boolhcfa\rightarrow\R$. 
The type of functions for a given theorem will be featured in the theorem statements, unless it is clear from the context.
A thorough introduction to the Fourier analysis of boolean functions, as well as the proofs omitted in this section, can be found in the textbook by \cite{odonnell2014analysis}.

Fourier analysis relies on considering functions' \emph{Fourier expansion}: their representation as  real multilinear polynomials. 
We start with some notation. 
For a subset $S\subseteq[n]$, we denote by $\chi_S(x)$ the monomial $\prod_{i\in S}x_i$ (with $\chi_S({\emptyset})=1$ by convention) corresponding to the set $S$.
As stated below, the Fourier expansion of a given function is unique.

\begin{theorem}[Fourier Expansion Theorem]
Every function $f:\boolhcfa\rightarrow\R$ can be uniquely expressed as the following multilinear polynomial
\begin{equation}
\label{eqn:fourier-exp}
f(x)=\sum_{S\subseteq[n]}\widehat{f}(S)\chi_S(x)
\enspace,
\end{equation}
which is called the \emph{Fourier expansion} of $f$.
A term $\widehat{f}(S)\in \R$ is called the \emph{Fourier coefficient} of $f$ on $S$, and the collection of the Fourier coefficients is called the \emph{Fourier spectrum} of $f$.
\end{theorem}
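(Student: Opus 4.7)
The plan is to establish existence and uniqueness in turn, both by exploiting the fact that the real vector space of functions $\boolhcfa\to\R$ has dimension exactly $2^n$. For existence, I would use polynomial interpolation. For each $a\in\boolhcfa$, the function
\[
\mathbf{1}_{x=a}(x)=\prod_{i=1}^n\frac{1+a_i x_i}{2}
\]
evaluates to $1$ when $x=a$ and to $0$ at every other point of $\boolhcfa$, since each factor equals $1$ precisely when $a_i=x_i$ and vanishes otherwise. Expanding this product and using $x_i^2=1$ on the hypercube shows that $\mathbf{1}_{x=a}$ is a multilinear polynomial in $x_1,\dots,x_n$, hence a real linear combination of the monomials $\chi_S$. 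Writing $f(x)=\sum_{a\in\boolhcfa}f(a)\,\mathbf{1}_{x=a}(x)$ and collecting like terms then produces an expansion of the required form.

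For uniqueness, I would equip the function space with the inner product $\langle f,g\rangle=\eval{x\sim U}{f(x)g(x)}$, where $U$ denotes the uniform distribution on $\boolhcfa$. Two observations give orthonormality of the monomial family: since $x_i^2=1$ we have $\chi_S(x)\chi_T(x)=\chi_{S\Delta T}(x)$; and since the coordinates are independent uniform $\pm 1$ under $U$ with $\mathbb{E}[x_i]=0$, we get $\eval{x\sim U}{\chi_R(x)}=\prod_{i\in R}\mathbb{E}[x_i]=\mathbf{1}[R=\emptyset]$. Together these yield $\langle\chi_S,\chi_T\rangle=\mathbf{1}[S=T]$, so the $2^n$ monomials $\{\chi_S\}_{S\subseteq[n]}$ are linearly independent and hence form an orthonormal basis of the ambient space. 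Uniqueness of the expansion is then immediate, and taking the inner product of both sides with $\chi_S$ would also yield the explicit formula $\widehat{f}(S)=\eval{x\sim U}{f(x)\chi_S(x)}$.

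The crux of both parts is the identity $x_i^2=1$ on $\boolhcfa$, which is what forces multilinearity in the interpolation step and drives the computation $\chi_S\chi_T=\chi_{S\Delta T}$ underlying orthogonality. Beyond this, the proof is a standard linear algebra exercise, and I do not anticipate any substantive obstacle.
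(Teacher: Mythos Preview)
Your argument is correct and is essentially the standard textbook proof. Note, however, that the paper does not actually prove this theorem: it is stated as background in the Fourier analysis section, with the proof explicitly omitted and deferred to \cite{odonnell2014analysis}. Your interpolation-plus-orthonormality approach is precisely the argument given there, so there is nothing to contrast.
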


Note that each $\chi_S(x)$ is a parity function defined for a subset $S$ of indices, which, together with the theorem above, imply that any function can be represented as a linear combination of parity functions.
In fact, the set of all such parity functions forms an orthonormal basis for the set of all functions $f:\boolhcfa\rightarrow\R$, and the Fourier coefficients of $f$ are given by 
\begin{equation}
\widehat{f}(S)=\left\langle f,\chi_S\right\rangle:=\eval{x\sim\boolhcfa}{f(x) \chi_S(x)} = \frac{1}{2^n}\sum_{S\subseteq [n]}f(x)\chi_S(x)
\enspace,
\end{equation}
where $x\sim\boolhcfa$ means that $x$ is chosen u.a.r. from $\boolhcfa$.

We now give a few more properties of boolean functions and results that will be used later in Chapter~\ref{chap:rob-thresholds}.
We start by defining the influence of a coordinate.

\begin{definition}
The \emph{influence} of coordinate $i\in[n]$  on $f:\boolhcfa\rightarrow\classesfa$ is defined as 
\begin{equation}
\Inf_i[f]=\Prob{x\sim\boolhcfa}{f(x)\neq f(x^{\oplus i})}
\enspace,
\end{equation}
where $x^{\oplus i}$ denotes the result of flipping the $i$-th bit of $x$.
For $x\in\boolhcfa$, we say that $i$ is \emph{pivotal} on $x$ if $f(x)\neq (x^{\oplus i})$.
\end{definition}

We have the following statement giving an explicit formula for the influence of a bit as a function of the Fourier spectrum. 

\begin{theorem}
For $f:\boolhcfa\rightarrow\R$ and $i\in[n]$, $\Inf_i[f]=\sum_{S\ni i}\widehat{f}(S)^2$. 
\end{theorem}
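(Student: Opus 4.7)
The plan is to relate the influence to the $L^{2}$ norm of a discrete derivative of $f$ and then read off the answer from the Fourier expansion via Parseval's identity. First I would define, for each $i\in[n]$, the discrete derivative $D_{i}f(x)=\tfrac{1}{2}\bigl(f(x)-f(x^{\oplus i})\bigr)$. Since the given definition of $\Inf_{i}[f]$ applies to $f$ taking values in $\classesfa$, on such an $f$ the quantity $D_{i}f(x)$ lies in $\{-1,0,1\}$, and equals $\pm 1$ precisely when $i$ is pivotal on $x$. Hence $(D_{i}f(x))^{2}=\mathbf{1}[i\text{ is pivotal on }x]$, so
\[
\Inf_{i}[f]\;=\;\eval{x\sim\boolhcfa}{(D_{i}f(x))^{2}}\;=\;\|D_{i}f\|_{2}^{2}.
\]
(For the statement of the theorem in full generality over $f:\boolhcfa\to\R$, the same identity $\Inf_{i}[f]=\|D_{i}f\|_{2}^{2}$ is taken as the definition of influence, so the reduction step is vacuous.)

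Next I would compute $D_{i}f$ in Fourier coordinates. Split the sum in \eqref{eqn:fourier-exp} according to whether $S$ contains $i$: for $S\not\ni i$, the monomial $\chi_{S}$ does not depend on $x_{i}$, so $\chi_{S}(x^{\oplus i})=\chi_{S}(x)$; for $S\ni i$, we have $\chi_{S}(x^{\oplus i})=-\chi_{S}(x)$ because flipping $x_{i}\in\{-1,1\}$ negates exactly that factor. Therefore
\[
f(x)-f(x^{\oplus i})\;=\;2\sum_{S\ni i}\widehat{f}(S)\,\chi_{S}(x),
\qquad\text{so}\qquad
D_{i}f\;=\;\sum_{S\ni i}\widehat{f}(S)\,\chi_{S}.
\]

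Finally I would invoke Parseval's theorem, i.e.\ orthonormality of the characters $\{\chi_{S}\}_{S\subseteq[n]}$ under the inner product $\langle g,h\rangle=\eval{x\sim\boolhcfa}{g(x)h(x)}$, which was stated just before the theorem. Applied to the Fourier expansion of $D_{i}f$, it yields
\[
\|D_{i}f\|_{2}^{2}\;=\;\sum_{S\ni i}\widehat{f}(S)^{2},
\]
and combining with the first display gives $\Inf_{i}[f]=\sum_{S\ni i}\widehat{f}(S)^{2}$, as required. There is no real obstacle here: the only mildly delicate point is the identification of $\Inf_{i}[f]$ with $\|D_{i}f\|_{2}^{2}$ in the $\pm 1$-valued case, which follows from the $\{-1,0,1\}$-valuedness of $D_{i}f$; everything else is bookkeeping with the Fourier expansion and orthonormality of the parities.
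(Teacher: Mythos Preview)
Your proof is correct and is exactly the standard derivative-plus-Parseval argument; the paper itself omits the proof of this background result and refers the reader to \cite{odonnell2014analysis}, where essentially the same proof appears. Your handling of the mismatch between the $\classesfa$-valued definition of $\Inf_i$ and the $\R$-valued statement of the theorem (via $\Inf_i[f]=\|D_if\|_2^2$) is also the standard resolution.
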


We will later study majority functions. 
These functions have an important property: monotonicity, which is defined below.

\begin{definition}
We say that $f:\boolhcfa\rightarrow\classesfa$ is \emph{monotone} if $f(x)\leq f(x')$ for all $x\leq x'$ (coordinate-wise).
\end{definition}

Finally, the following proposition states that the influence of a bit $i$ on a monotone function is equal to the Fourier coefficient of the singleton $i$.

\begin{proposition}
\label{prop:monotone-influence}
If $f:\boolhcfa\rightarrow\classesfa$ is monotone, then $\Inf_i[f]=\widehat{f}(i)$.
\end{proposition}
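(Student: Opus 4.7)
The plan is to prove the identity by direct computation from the definition of the degree-one Fourier coefficient, exploiting the fact that monotonicity collapses the discrete derivative of a $\classesfa$-valued function into an indicator of pivotality.

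First, I would start from the inner-product expression for the Fourier coefficient: $\widehat{f}(\{i\}) = \eval{x\sim\boolhcfa}{f(x)\chi_{\{i\}}(x)} = \eval{x}{f(x)\,x_i}$. Partitioning the expectation according to the value of $x_i \in \classesfa$ (each outcome occurring with probability $1/2$) and writing $x^{i\to b}$ for the point obtained from $x$ by resetting the $i$-th coordinate to $b$, I would rewrite this as
\[
\widehat{f}(\{i\}) \;=\; \tfrac{1}{2}\,\eval{x_{-i}}{f(x^{i\to 1}) - f(x^{i\to -1})},
\]
where $x_{-i}$ is uniform over $\set{-1,1}^{n-1}$.

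Next, I would invoke monotonicity: since $x^{i\to 1}$ dominates $x^{i\to -1}$ coordinate-wise, we have $f(x^{i\to 1}) \geq f(x^{i\to -1})$, and because $f$ is $\classesfa$-valued the difference lies in $\set{0,2}$. It equals $2$ precisely when $f(x^{i\to 1}) = 1$ and $f(x^{i\to -1}) = -1$, i.e., exactly when coordinate $i$ is pivotal at $x$ (a property depending only on $x_{-i}$, not on $x_i$ itself). Substituting,
\[
\widehat{f}(\{i\}) \;=\; \tfrac{1}{2}\cdot 2 \cdot \Prob{x_{-i}}{i \text{ is pivotal}} \;=\; \Prob{x\sim\boolhcfa}{f(x)\neq f(x^{\oplus i})} \;=\; \Inf_i[f],
\]
which is the desired identity.

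There is no serious obstacle: the argument is a two-line computation once the correct viewpoint is adopted. The only care required is (i) in keeping track of the $\classesfa$ encoding so that the ordering on $\classesfa$ used in the monotonicity hypothesis matches the ordering that makes $f(x^{i\to 1}) \geq f(x^{i\to -1})$, and (ii) in noting that pivotality of coordinate $i$ at $x$ is independent of the value of $x_i$, which is what allows us to equate the probability over $x_{-i}$ with the probability over the full uniform $x$.
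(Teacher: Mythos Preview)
Your proof is correct. The paper does not actually supply its own proof of Proposition~\ref{prop:monotone-influence} (the background section defers omitted proofs to O'Donnell's textbook), though later in Section~\ref{sec:maj} it proves the closely related Proposition~\ref{prop:fc-inf} by an essentially equivalent direct computation of $\widehat{f}(i)=\frac{1}{2^n}\sum_x f(x)x_i$, cancelling the non-pivotal contributions and using unateness to fix the sign; your derivative-based presentation is just a cleaner packaging of the same idea.
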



\chapter[Robustness \& Monotone Conjunctions]{Robustness: a Monotone Conjunction Case Study}
\label{chap:def-adv-rob}

In this chapter, we first review and study the implications of two different notions of robustness to evasion attacks from a learning-theory point of view.
We then settle on a particular notion of robustness, which speaks to the fidelity of the hypothesis to the target concept, and show a separation between the standard and robust learning settings. 
We finally study monotone conjunctions under distributional assumptions, and show that the sample complexity of robust learning in this setting is controlled by the adversary's perturbation budget at test time.

\section{Defining Robust Learnability}

In this thesis, we study the problem of \emph{robust classification} with respect to evasion attacks, where an adversary can perturb data at \emph{test time}.
This is a generalization of  standard classification tasks, outlined in Section~\ref{sec:learning-theory}, which are defined on an input space $\X_n$ of dimension $n$ and a finite output space $\Y$. 
Common examples of input spaces are $\{0,1\}^n$, $[0,1]^n$, and $\R^n$.
We focus on \emph{binary classification}, namely where $\Y=\{0,1\}$, and on the \emph{realizable setting}. 
Recall that, in the standard (non-robust) setting, this means that there exists a \emph{target concept}, also sometimes referred to as a \emph{ground truth} function.
Thus whenever we get access to a randomly drawn labelled sample $S$ from an unknown underlying distribution $D$, there exists a target concept $c:\X\rightarrow\Y$ such that $y=c(x)$ for all the labelled points $(x,y)\in S$.
In the PAC-learning framework of \cite{valiant1984theory}, which will form the basis of our study of robust classification, the goal is to find a function $h$ that approximates $c$ with high probability over the training sample.
We point the reader towards Section~\ref{sec:pac} for a PAC-learning overview.

Note that PAC learning is \emph{distribution-free}, in the sense that no assumptions are made about the distribution from which the data comes from.

\subsection{Two Notions of Robustness}

The notion of robustness can be accommodated within the
basic set-up of PAC learning by adapting the definition of the risk
function.  In this section we review two of the main definitions of
\emph{robust risk} to \emph{evasion attacks} that have been used in the literature.  For
concreteness and simplicity we consider the boolean hypercube $\boolhc$ as the input space, with
metric $d:\mathcal{X}\times\mathcal{X}\rightarrow\mathbb{N}$, where
$d(x,y)$ is the Hamming distance of $x,y\in\mathcal{X}$.  
Given $x\in\mathcal{X}$, we write $B_\rho(x)$ for the ball $\{y\in
\mathcal{X} : d(x,y)\leq \rho\}$ with center $x$ and radius $\rho\geq
0$.
We recall the works of \citep{diochnos2018adversarial,dreossi2019formalization,pydi2021many,chowdhury2022robustness}, mentioned in Chapter~\ref{sec:evasion-attacks}, which also offer thorough discussions on the choice of robust risk.

The first definition of robust risk we will consider asks that the hypothesis be
exactly equal to the target concept in the ball $B_\rho(x)$ of
radius $\rho$ around a test point $x\in\mathcal{X}$.
We also note that it is possible to consider arbitrary perturbation functions $\U:\X\rightarrow 2^\X$, but that the guarantees and impossibility results obtained in this chapter are derived for the specific case $\U(x)=B_\rho(x)$.
The robustness parameter $\rho$, which is referred to as the \emph{adversary's budget}, features explicitly in many of the bounds.
The exact-in-the-ball notion of robustness is the one will work with in this thesis:

\begin{definition}[Exact-in-the-ball Robustness]
\label{def:loss-correct}
Given respective hypothesis and target functions
$h,c:\mathcal{X}\rightarrow\{0,1\}$, distribution $D$ on
$\mathcal{X}$, and robustness parameter $\rho\geq 0$, we define the
\emph{exact-in-the-ball} robust risk of $h$ with respect to $c$ to be
\begin{equation}
\label{eqn:e-i-b-rob}
\roblosse(h,c)=\Prob{x\sim D}{\exists z\in B_\rho(x):h(z)\neq c(z)}
\, .
\end{equation}
\end{definition}

While this definition captures a natural notion of robustness, an
obvious disadvantage is that evaluating the empirical loss requires
 the learner to have knowledge of the target function outside of the
training set, e.g., through membership queries.  Nonetheless, 
by considering a learner who has
oracle access to the predicate ${\exists z\in B_\rho(x):h(z)\neq
  c(z)}$, we can use the exact-in-the-ball framework to analyze 
sample complexity of robust learning, which will be addressed in Chapter~\ref{chap:local-queries}.
Moreover, even if one cannot evaluate the empirical loss on a training sample, the guarantees obtained in this chapter and in Chapter~\ref{chap:rob-thresholds} do not rely on an algorithm's capacity to compute or estimate the robust risk.

A popular alternative to the exact-in-the-ball risk function in
Definition~\ref{def:loss-correct} is the following
\emph{constant-in-the-ball risk} function:
\begin{definition}[Constant-in-the-ball Robustness]
\label{def:loss-constant}
Given respective hypothesis and target functions
$h,c:\mathcal{X}\rightarrow\{0,1\}$, distribution $D$ on
$\mathcal{X}$, and robustness parameter $\rho\geq 0$, we define the
constant-in-the-ball robust risk of $h$ with respect to $c$ as
\begin{equation}
\label{eqn:c-i-b-rob}
\roblossc(h,c)=\Prob{x\sim D}{\exists z\in B_\rho(x):h(z)\neq c(x)}
\enspace.
\end{equation}
\end{definition}

Figure~\ref{fig:rob-loss-comp} highlights an example where the two notions of robustness differ.

\begin{figure}
\begin{center}
\includegraphics[scale=0.15]{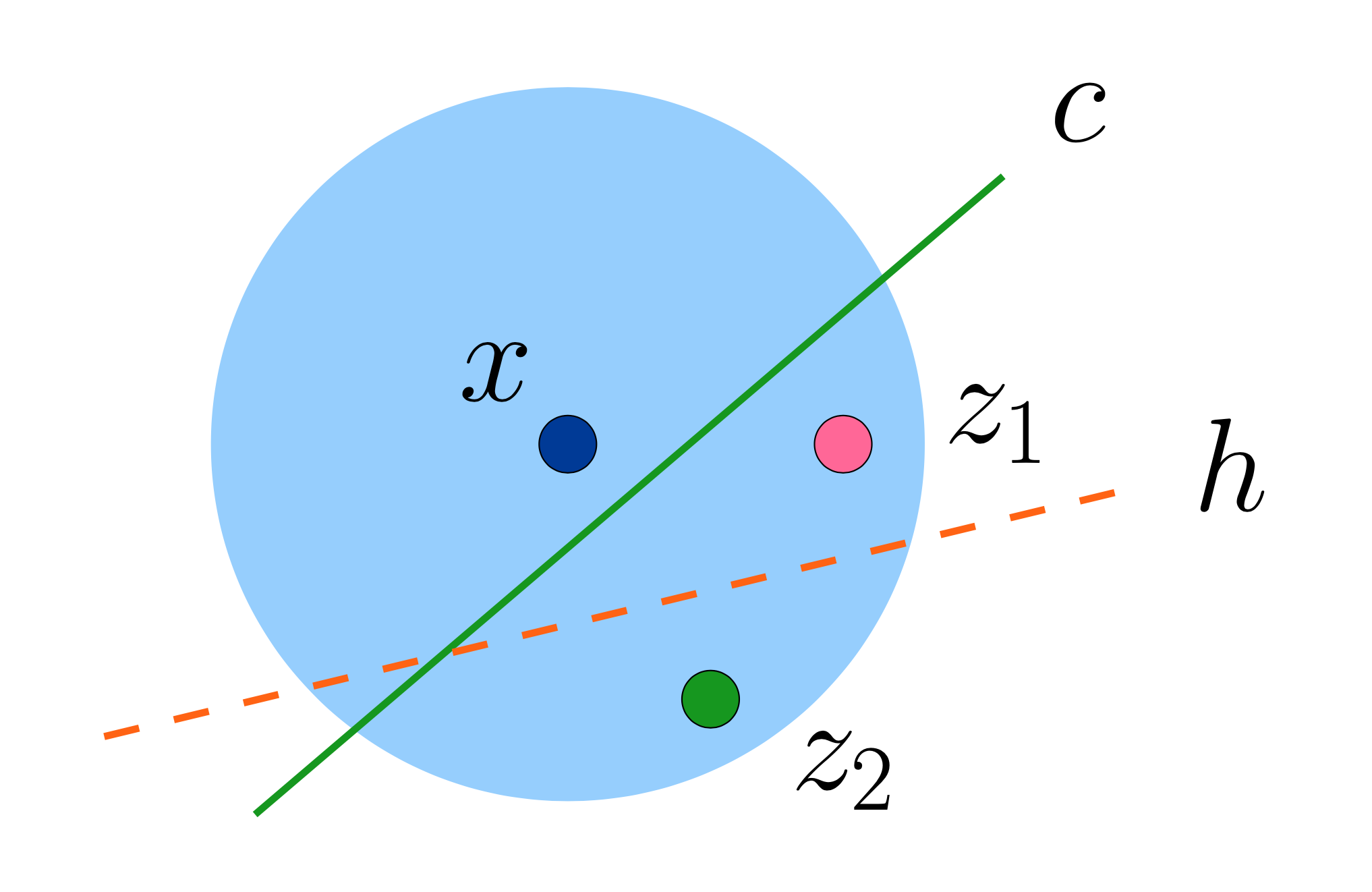}
\end{center}
\caption{The natural point $x$ has robust loss of 1 with respect to both notions of robustness: $z_1$ is a counterexample for exact-in-the-ball robustness (as $c(z_1)\neq h(z_1)$), and $z_2$ for constant-in-the-ball robustness (as $c(x)\neq h(z_2)$).}
\label{fig:rob-loss-comp}
\end{figure}

An obvious advantage of the constant-in-the-ball risk over
the exact-in-the-ball version is that, in the former,
evaluating the loss at point $x\in\mathcal{X}$ requires only knowledge
of the correct label of $x$ and the hypothesis $h$.  In particular,
this definition can also be carried over to the non-realizable
setting,\footnote{A note on terminology: realizability in this thesis refers to the existence of a ground truth $c$ and the requirement $\C\subseteq\H$. Then there will always be a $h\in\H$ such that $\err_D(c,h)=\roblosse(c,h)=0$. As explained later, it can be that $\robloss^C(c,c)>0$. In the literature, realizability with respect to the \emph{constant-in-the-ball} notion of robustness is in reference to a family of distributions on $\X\times\Y$ for which there exists $h\in\H$ such that $\Prob{(x,y)\sim D}{\exists z\in B_\rho(x):h(z)\neq y}=0$. We will make it explicit whenever we work with the latter type of realizability.} in which there is no target, but rather a joint distribution on $\X\times\Y$. 
Then Equation~\ref{eqn:c-i-b-rob} becomes $\Prob{(x,y)\sim D}{\exists z\in B_\rho(x):h(z)\neq y}$.

Despite the advantages of the constant-in-the-ball risk, from a foundational
point of view this notion of risk has some drawbacks: under this definition, it is possible to have strictly
positive, and even sometimes constant, robust risk  in the case that $h=c$.
In fact, this view of robustness can in some circumstances be in conflict with accuracy in the
traditional sense as pointed out by~(\cite{tsipras2019robustness}).
More in line with our work, \citet{chowdhury2022robustness} argue for robustness to be considered as a locally adaptive measure, where sometimes a label change is justified.

\begin{example}
Under the uniform distribution, for $c\in\MonConj$ 
of constant length $k$, $\mathsf{R}_1^C(c,c)\geq\Prob{x\sim D}{\exists !i\in[n]\st c(x)\neq c(x^{\oplus i})}=\frac{k+1}{2^k}$, and in the case 
of decision lists, any list $c$ of the form $((x_i,0),(x_j,1),\dots)$ 
satisfies $\mathsf{R}_1^C(c,c)\geq\Prob{x\sim D}{ x_j =1}=1/2$.
In the case of parity functions, 
it suffices to flip one bit of the index set to switch the label, so under any distribution 
$\roblossc(c,c)=1$ for any $\rho\geq1$.
\end{example}

\begin{figure}
	\begin{subfigure}{0.27\textwidth}
  		\centering
  			\includegraphics[width=\textwidth]{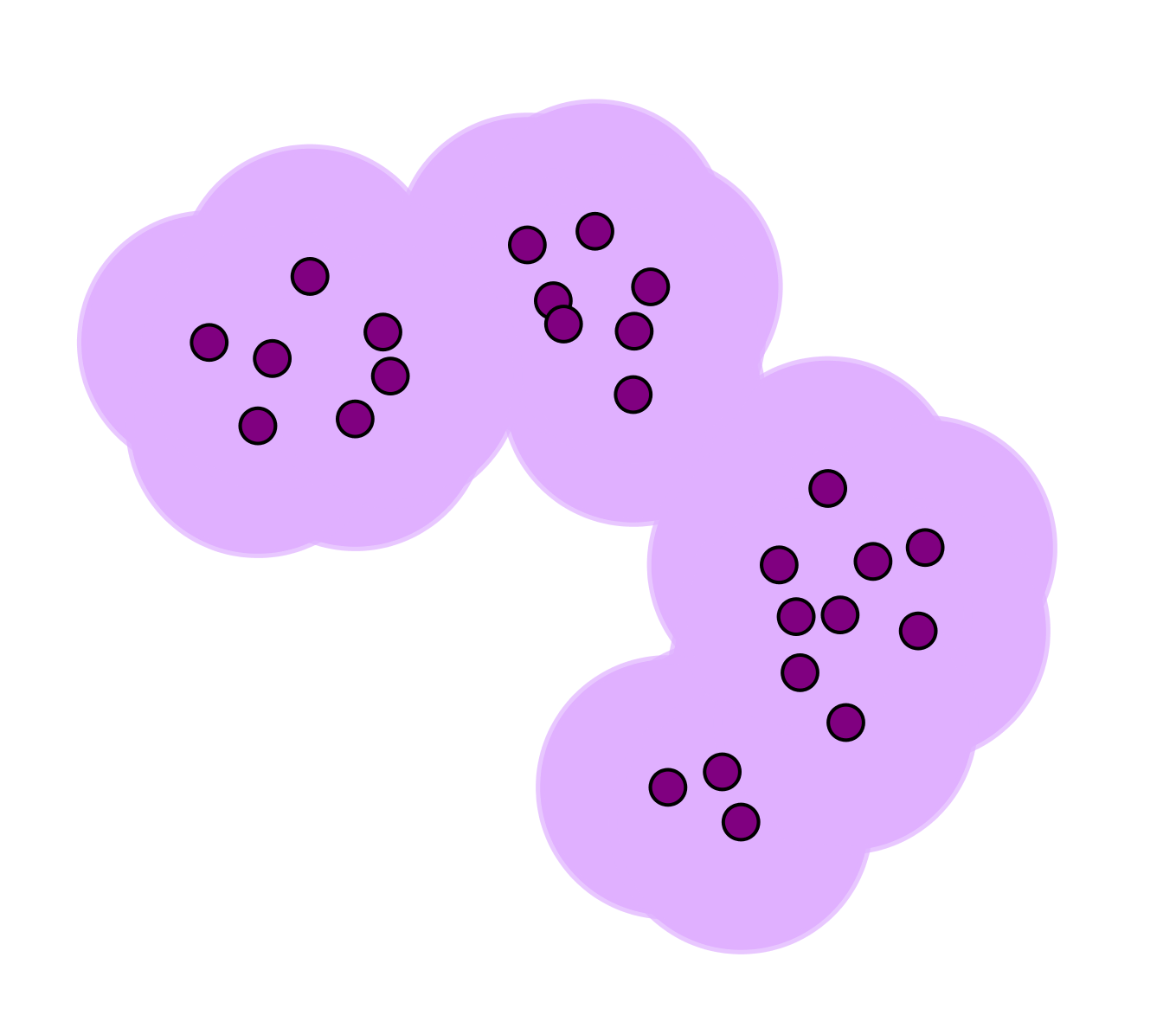}
  			\caption{}
  			\label{fig:constant}
	\end{subfigure}
	\hfill
	\begin{subfigure}{0.27\textwidth}
		\includegraphics[width=\textwidth]{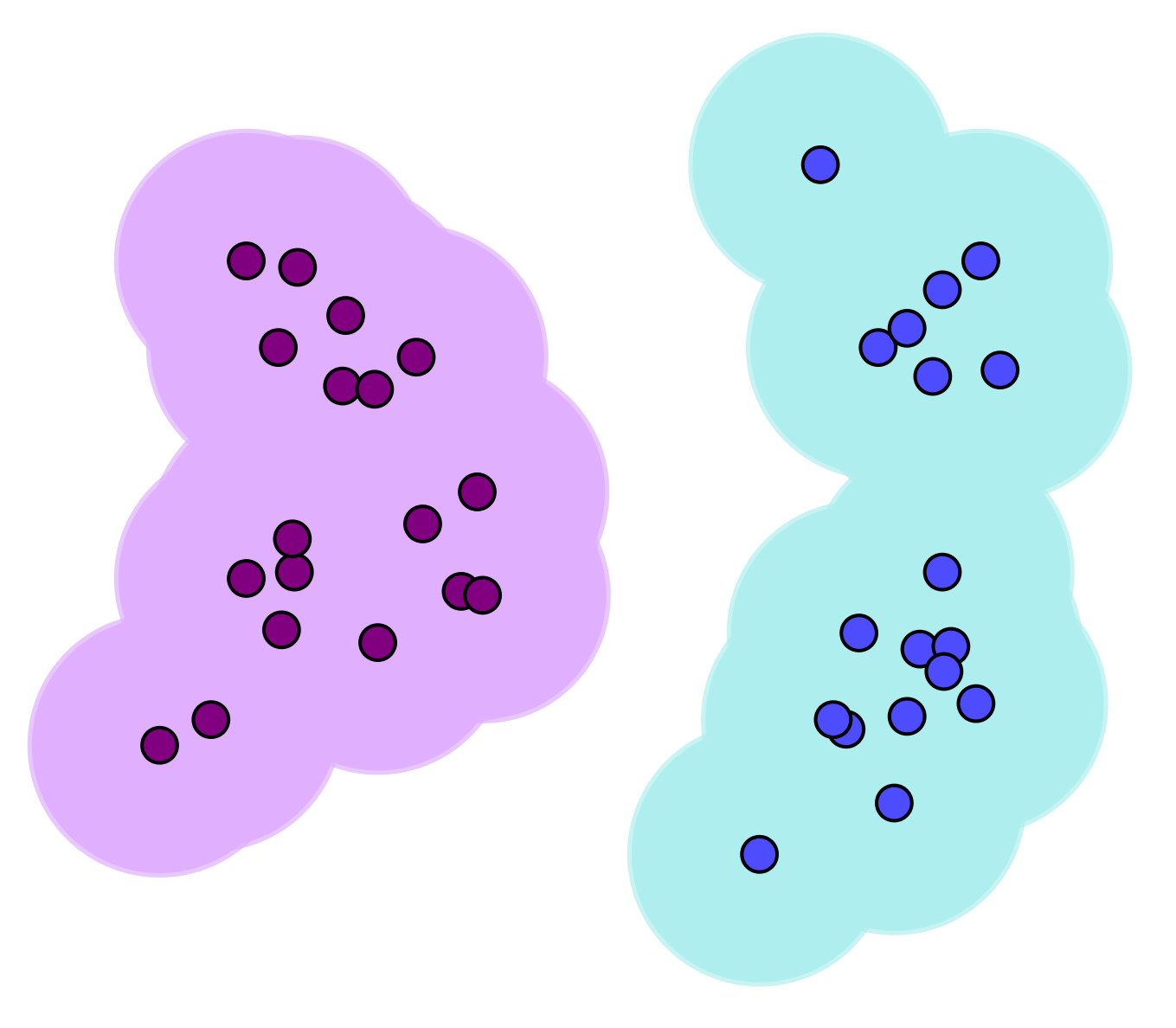}
  		\caption{}
		\label{fig:sep-supp}
	\end{subfigure}
	\hfill
	\begin{subfigure}{0.27\textwidth}
 	 \centering
		\includegraphics[width=\textwidth]{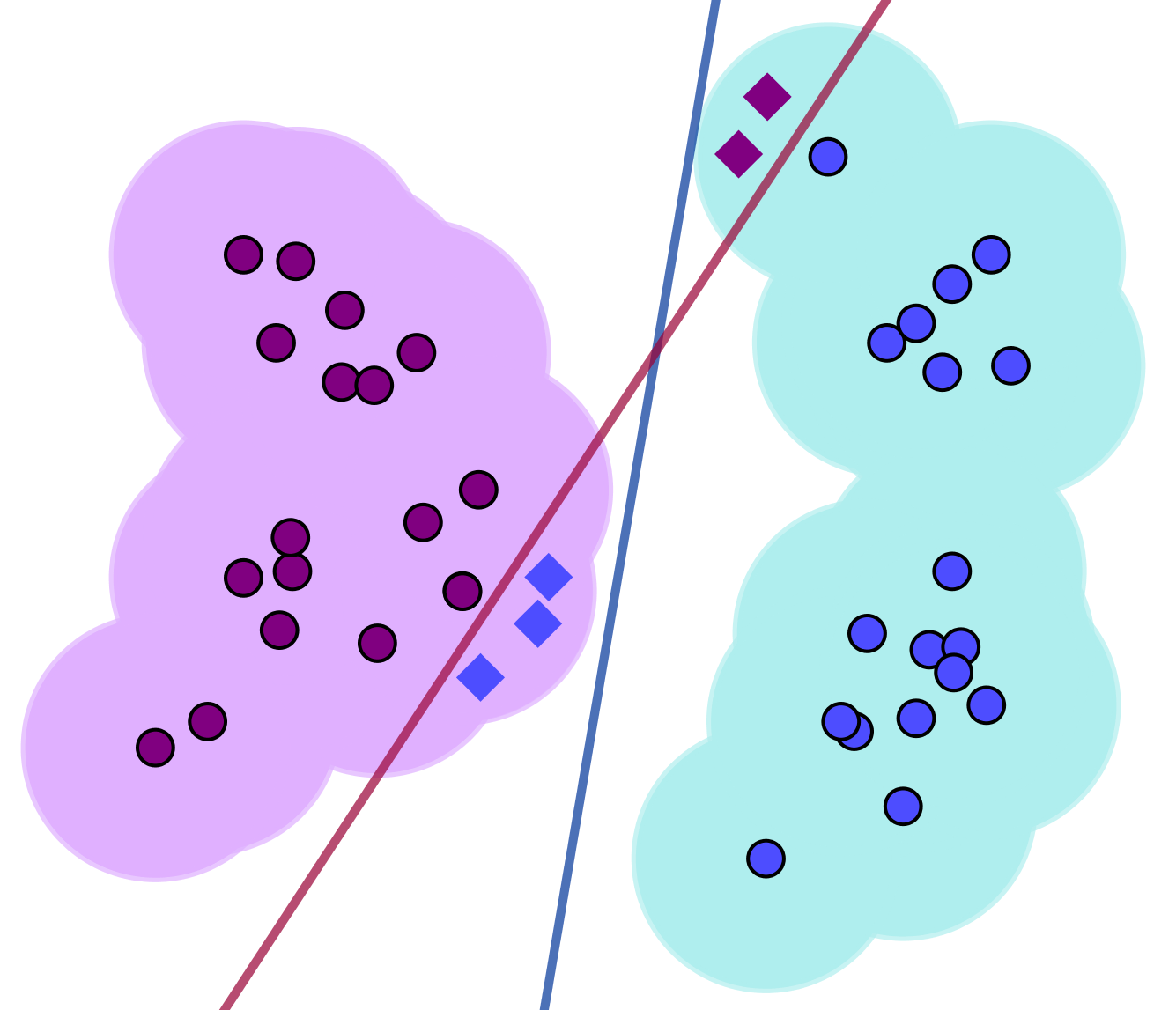}
		\caption{}
		\label{fig:diff-losses}
	\end{subfigure}
\caption{In all the examples above, the circles represent the support of the distribution, and the shaded region, its $\rho$-expansion (i.e., the points at a distance at most $\rho$ from points in the support of the distribution). (a) The support of the distribution is such that $\roblossc(h,c)=0$ can only be achieved if $c$ is constant.
(b) The $\rho$-expansion of the support of the distribution and target $c$ admit hypotheses $h$ such that $\roblossc(h,c)=0$ (i.e., any $h$ that does not cross the shaded regions).
(c) An example where $\roblossc$ and $\roblosse$ differ. The red concept, which crosses the shaded regions, is the target; the blue one is the hypothesis. The diamonds represent perturbed inputs which cause $\roblosse(c,h)>0$, while $\roblossc(h,c)=0$.
}
\label{fig:rob-losses}
\end{figure}

Let us note in passing that the
risk functions $\roblossc$ and $\roblosse$ are in general
incomparable.  Figure~\ref{fig:diff-losses} gives an example in which
$\roblossc=0$ and $\roblosse>0$.  Additionally, when we work in the
hypercube, or a bounded input space, as $\rho$ becomes larger, we
eventually require the function to be constant in the whole space.
Essentially, to $\rho$-robustly learn in the constant-in-the-ball realizable setting, we
require concept and distribution pairs to be represented as two sets
$D_+$ and $D_-$ whose $\rho$-expansions don't intersect, as
illustrated in Figures~\ref{fig:constant} and~\ref{fig:sep-supp}.  

We finish by pointing out that, in some cases in the (standard) realizable setting,
 the target $c$ is not the robust risk minimizer for $\rho = 1$: the constant concept is! 
This is easy to see for parity functions, as $\mathsf{R}_1^C(c,0)=\mathsf{R}_1^C(c,1)=1/2$ 
under the uniform distribution while ${R}_1^C(c,c)=1$. A similar result holds
for monotone conjunctions:

\begin{proposition}
Under the uniform distribution, for any non-constant concept $c\in\MonConj$, we have that $\mathsf{R}_1^C(c,c)>\mathsf{R}_1^C(c,0)$.
\end{proposition}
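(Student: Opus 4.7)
The plan is to compute both robust risks explicitly; since $c = \bigwedge_{i \in I} x_i$ for some non-empty $I \subseteq [n]$ with $|I| = k \geq 1$, and the uniform distribution makes the bits independent, everything reduces to elementary counting.

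First I would handle the easy side. When $h = 0$ is the constant zero hypothesis, the condition $\exists z \in B_1(x): h(z) \neq c(x)$ simplifies to $c(x) = 1$ (take $z = x$; if $c(x) = 0$ the condition cannot be met since $h \equiv 0$). Hence $\mathsf{R}_1^C(c, 0) = \Pr_x[c(x) = 1] = 2^{-k}$.

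Next I would compute $\mathsf{R}_1^C(c, c)$ by splitting on the value of $c(x)$. If $c(x) = 1$, then all $k$ bits indexed by $I$ are $1$; flipping any of them produces a $z \in B_1(x)$ with $c(z) = 0 \neq c(x)$, so every such $x$ contributes, giving mass $2^{-k}$. If $c(x) = 0$, then at least one bit in $I$ is $0$, and the condition $\exists z \in B_1(x): c(z) \neq c(x)$ asks whether we can flip one bit to reach a point with $c(z) = 1$; this is possible exactly when precisely one bit of $I$ is $0$ in $x$ (the remaining $k-1$ being $1$), an event of probability $k \cdot 2^{-k}$ under the uniform distribution. Adding these disjoint contributions gives $\mathsf{R}_1^C(c, c) = (k+1) \cdot 2^{-k}$.

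Comparing the two expressions yields $\mathsf{R}_1^C(c, c) - \mathsf{R}_1^C(c, 0) = k \cdot 2^{-k} > 0$ whenever $k \geq 1$, and non-constancy of $c$ guarantees $k \geq 1$. No real obstacle arises: the only mild subtlety is being careful that the two cases ($c(x) = 1$ and $c(x) = 0$) contribute disjointly to $\mathsf{R}_1^C(c, c)$, and that the ``one-bit-off'' count is $\binom{k}{1} = k$ rather than something involving the irrelevant bits, which do not affect whether any $z \in B_1(x)$ satisfies $c(z) = 1$.
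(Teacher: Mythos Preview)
Your proof is correct and follows essentially the same approach as the paper: both split $\mathsf{R}_1^C(c,c)$ according to whether $c(x)=1$ or $c(x)=0$, identify the first piece with $\mathsf{R}_1^C(c,0)=\Pr[c(x)=1]$, and observe that the second piece (the event that exactly one relevant bit is $0$) is strictly positive. You go a bit further by writing out the explicit values $2^{-k}$ and $(k+1)2^{-k}$, whereas the paper stops once the strict inequality is evident, but the argument is the same.
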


\begin{proof}
Let $\X=\boolhc$ and $D$ be the uniform distribution on $\X$.
Let $c(x)=x_1 \wedge \dots \wedge x_k$ for some $k\in[n]$.
Then, 
\begin{align*}
\mathsf{R}_1^C(c,c)
&=\Prob{x\sim D}{\exists z\in B_\rho (x) \st c(z)\neq c(x)}\\
&=\Prob{x\sim D}{c(x)=1}+\Prob{x\sim D}{\exists !i\in[k] \st x_i=0}\\
&=\mathsf{R}_1^C(c,0)+\Prob{x\sim D}{\exists !i\in[k] \st x_i=0}\\
&>\mathsf{R}_1^C(c,0)\enspace.
\end{align*}
\end{proof}
\newcommand{\cmark}{\ding{51}}%
\newcommand{\xmark}{\ding{55}}%

\begin{table}[]
\begin{tabular}{l|c|c}
\textbf{Property}  & \textbf{$\roblossc$: constant-in-the-ball} & \multicolumn{1}{c}{\textbf{$\roblosse$: exact-in-the-ball}} \\ \hline
$\robloss(c,c)=0$?            & \red{\xmark}  & \green{\cmark} \\
$c=\arg\min_h\robloss(c,h)$?  & \red{\xmark}  & \green{\cmark} \\
$S$ enough to evaluate $\widehat{\robloss}$? & \green{\cmark} & { \red{\xmark}} \\
Behaviour of $h$ as $\rho\rightarrow n$ & $h=$ constant & { $h=c$ (exact)}        
\end{tabular}
\caption{The pros and cons of the two robust risk functions. 
The last line refers to the behaviour of hypotheses minimizing the robust risk as the perturbation region increases. At the extreme case, when the perturbation region is the whole space, the robust risk minimizer for the constant-in-the-ball risk is a constant function, while it is the target for the exact-in-the-ball risk (as we require exact learning). }
\label{tab:rob-risk-comp}
\end{table}

The discussion above, which pertains to the boolean hypercube, 
makes apparent the fact that the exact-in-the-ball 
and constant-in-the-ball definitions of robust risk both rely on different
distributional and concept class assumptions. The constant-in-the-ball  
notion of robust risk relies on a strong distributional assumption (for e.g., a margin 
condition) and/or on the stability of functions in the concept class. The exact-in-the-ball 
is more relevant in cases where we cannot assume that the probability mass near the 
boundary is small, and wish to be correct with respect to the target function.
Table~\ref{tab:rob-risk-comp} summarizes the advantages and disadvantages of both robust risks.
Figure~\ref{fig:cifar-mnist} shows real-life examples where such assumptions can come into play.

\begin{figure}
\begin{center}
\includegraphics[scale=0.5]{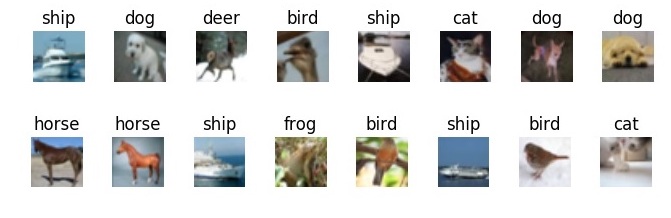}\\
\vspace{3mm}
\includegraphics[scale=0.5]{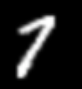}\hspace{5mm}
\includegraphics[scale=0.57]{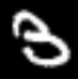}
\end{center}
\caption{Images from the CIFAR-10 (above) and MNIST (below) datasets, respectively from \citep{krizhevsky2009learning} and \citep{lecun1998mnist}. 
While the margin assumption generally holds for CIFAR (e.g., the ``boat'' and ``dog'' classes are well-separated), this is not necessarily the case for MNIST (the three above could easily be transformed into an eight, and the left-hand side picture could be a one or a seven).}
\label{fig:cifar-mnist}
\end{figure}

Overall, choosing a robust risk function should depend on the learning problem at hand, and it is possible that other robustness frameworks could bring more nuance and faithfulness to practical robustness considerations.
For the moment, to lay the foundations of robust learnability, we will work with the exact-in-the-ball notion of robustness in the PAC framework. 
Our choice of robust risk comes from the fact that the constant-in-ball risk is much better understood than for the
exact-in-the-ball one (most papers we have mentioned in Chapter~\ref{chap:lit-review} have used the former). 
 
Having settled on a risk function, we now formulate the definition of
robust learning.  For our purposes a \emph{concept class} is a family
$\mathcal{C} = \{\mathcal{C}_n\}_{n\in \mathbb{N}}$, with
$\mathcal{C}_n$ a class of functions from $\{0,1\}^n$ to $\{0,1\}$.
Likewise, a \emph{distribution class} is a family $\mathcal{D} = \{
\mathcal{D}_n\}_{n\in\mathbb{N}}$, with $\mathcal{D}_n$ a set of
distributions on $\{0,1\}^n$.  Finally, a \emph{robustness function} is
a function $\rho:\mathbb{N}\rightarrow \mathbb{N}$, which is fixed a priori.

\begin{definition}
\label{def:robust-learning}
Fix a function $\rho:\N\rightarrow\N$. We say that an algorithm $\A$
\emph{efficiently} $\rho$-\emph{robustly learns} a concept class $\C$
with respect to distribution class $\mathcal{D}$ if there exists a
polynomial $\poly(\cdot,\cdot,\cdot,\cdot)$ such that for all
$n\in\mathbb{N}$, all target concepts $c\in \C_n$, all distributions
$D \in \mathcal{D}_n$, and all accuracy and confidence parameters
$\epsilon,\delta>0$, if $m \geq
\poly(n,1/\epsilon,1/\delta,\text{size}(c))$, whenever $\A$ is given access to
a sample $S\sim D^m$ labelled according to $c$, it outputs a polynomially evaluatable function
$h:\{0,1\}^n\rightarrow\{0,1\}$ such
that $\Prob{S\sim D^m}{\mathsf{R}^E_{\rho(n)}(h,c)<\epsilon}>1-\delta$.
\end{definition}

Note that our definition of robust learnability requires polynomial sample
complexity and allows improper learning (the hypothesis $h$ need not
belong to the concept class $\mathcal{C}_n$).

\subsection{A Separation between PAC and Robust Learning}

In the standard PAC framework, a hypothesis $h$ is considered to
have zero risk with respect to a target concept $c$ when $\Prob{x\sim
D}{h(x)\neq c(x)}=0$.  We have remarked that exact learnability (in the sense that $c=h$ on all of $\X$, not just the support of the distribution)
implies robust learnability; next we give an example of a
concept class $\C$ and distribution $D$ such that $\C$ is PAC learnable under $D$ with zero risk and yet cannot be
robustly learned under $D$ (regardless of the sample complexity).

\begin{lemma}
\label{lemma:dictators}
The class of  dictators is not 1-robustly learnable (and thus not robustly learnable for any $\rho\geq1$) with respect to the robust risk of Definition~\ref{def:loss-correct} in the distribution-free setting. 
\end{lemma}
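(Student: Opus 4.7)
The plan is to exhibit, for any (possibly randomized) learning algorithm $\A$, a single distribution $D$ together with a family of candidate targets that renders the training sample completely uninformative while still leaving the 1-ball around each sampled point rich enough to separate the targets. Concretely, I would take $D$ to be the point mass on $\mathbf{0}\in\boolhc$ and let the adversary choose the target from $\{x_1,\ldots,x_n\}$. For every such target $c=x_i$, we have $c(\mathbf{0})=0$, so the labelled sample $\A$ receives is the multiset $\{(\mathbf{0},0)\}^m$ irrespective of $i$. Hence the distribution of $h=\A(S)$ over $\A$'s internal randomness is identical for all targets in the family.

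The key structural step is to observe that, because $D$ is a point mass, $\roblosse(h,c)$ takes values in $\{0,1\}$ and equals $0$ precisely when $h$ agrees with $c$ on $B_1(\mathbf{0})=\{\mathbf{0},e_1,\ldots,e_n\}$. For $c=x_i$ this forces $h(\mathbf{0})=0$, $h(e_i)=1$, and $h(e_j)=0$ for every $j\neq i$. These constraints are mutually inconsistent across different choices of $i$: at most one index can be satisfied by any given $h$. Therefore $\sum_{i=1}^n \mathbf{1}[\roblosse(h,x_i)=0]\leq 1$ pointwise in $h$, and taking expectation over the randomness of $\A$ gives $\sum_{i=1}^n \Pr_\A[\roblosse(\A(S),x_i)=0]\leq 1$.

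By averaging, some index $i^*$ satisfies $\Pr_\A[\roblosse(\A(S),x_{i^*})=0]\leq 1/n$, so for the target $c^*=x_{i^*}$ we have $\Pr[\roblosse(\A(S),c^*)\geq \epsilon]\geq 1-1/n$ for every $\epsilon\in(0,1)$. Choosing $n$ large enough that $1/n<\delta$ violates the success condition of Definition~\ref{def:robust-learning}, regardless of the sample size $m$. The ``thus'' clause for $\rho>1$ is immediate: since $B_1(x)\subseteq B_\rho(x)$, the predicate defining $\roblosse_\rho$ is implied by the one defining $\roblosse_1$, so any $\rho$-robust learner is also a $1$-robust learner, and impossibility propagates upwards in $\rho$.

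The main obstacle is really just choosing the correct adversarial construction; once the distribution is taken to be a point mass and the target family is taken to be the positive dictators, the counting argument on $B_1(\mathbf{0})$ is immediate. A secondary subtlety worth flagging in the write-up is that the argument must cover randomised learners, which is why I phrase the averaging step in terms of $\Pr_\A[\cdot]$ rather than fixing a deterministic output; this is also why one cannot weaken the construction to only two candidate targets, which would yield only a constant lower bound on the failure probability and leave room for $\delta$.
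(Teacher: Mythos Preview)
Your proof is correct and takes a genuinely different route from the paper's. The paper works with only two dictators, $c_1(x)=x_1$ and $c_2(x)=x_2$, and a distribution supported on $\{x:x_1=x_2\}$ (with the remaining bits uniform). It then invokes a triangle-inequality step for the robust risk, $\roblosse(h,c_1)+\roblosse(h,c_2)\geq\roblosse(c_1,c_2)=1$, to conclude that the expected robust risk over a uniformly random target is at least $1/2$. Your construction is more elementary in two respects: the point mass on $\mathbf{0}$ makes $\roblosse$ literally $\{0,1\}$-valued, which lets you replace the triangle inequality by a direct incompatibility observation on $B_1(\mathbf{0})$; and using all $n$ positive dictators lets you drive the failure probability to $1-1/n$, which is quantitatively stronger than the paper's constant bound. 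The paper's approach, on the other hand, uses a distribution with nontrivial support and the triangle-inequality lemma, both of which generalise more readily to the subsequent proof of Theorem~\ref{thm:no-df-rl} for arbitrary non-trivial classes.

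One correction to your closing remark: your claim that two candidate targets would be insufficient is not right. With two targets the same incompatibility gives $\Pr_\A[\roblosse(\A(S),c_i)=0]\leq 1/2$ for some $i$, and since Definition~\ref{def:robust-learning} must hold for \emph{every} $\delta>0$ (in particular any $\delta<1/2$), this already blocks learnability---indeed, the paper uses exactly two targets. Using $n$ targets buys you a stronger quantitative statement, but is not needed for the lemma as stated.
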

\begin{proof}
Let $c_1$ and $c_2$ be the dictators on variables $x_1$ and $x_2$, respectively.
Let $D$ be such that $\Prob{x\sim D}{x_1=x_2}=1$ and $\Prob{x\sim D}{x_k=1}=\frac{1}{2}$
for $k\geq3$. 
Draw a sample $S\sim D^m$ and label it according to $c\sim U(c_1,c_2)$. 
By the choice of $D$, the elements of $S$ will have the same label regardless of whether $c_1$ or $c_2$ was picked.
However, for $x\sim D$, it suffices to flip any of the first two bits to cause $c_1$ and $c_2$ to disagree on the perturbed input.
We can easily show that, for any $h\in\set{0,1}^\X$, 
 $
 \mathsf{R}^E_1(h,c_1)+ \mathsf{R}^E_1(h,c_2)
\geq  \mathsf{R}^E_1(c_1,c_2) = 1.
$
Then 
\begin{equation*}
\underset{c\sim U(c_1,c_2)}{\mathbb{E}}\eval{S\sim D^m}{ \mathsf{R}^E_1(h,c)} \geq 1/2 \enspace.
\end{equation*}
We conclude that one of $c_1$ or $c_2$ has robust risk at least 1/2.
\end{proof}
Note that a PAC learning algorithm with error probability threshold $\varepsilon=1/3$ will either output  $c_1$ or $c_2$ and will hence have standard risk  zero.

The result above highlights an important distinction between standard and robust learning.
We will further study this separation in the next section.

\section{The Distribution-Free Assumption}

In this section, we show that no \emph{non-trivial} concept class is efficiently 1-robustly learnable in the boolean hypercube, implying that such a class is also not efficiently $\rho$-robustly learnable for any $\rho\geq1$. 
As a consequence, there exists a fundamental separation between the standard PAC learning setting and its robust counterpart.
Indeed, (efficient) robust learnability in the \emph{distribution-free} setting would require access to a more powerful learning model or distributional assumptions when considering a learner who only has access to the random example oracle $\EX(c,D)$.

We start by defining trivial concept classes.

\begin{definition}
Let $\C_n$ be a concept class on $\boolhc$, and define $\C=\bigcup_{n\geq 1} \C_n$.
We say that a class of functions is trivial if $\C_n$ has at most two functions, which moreover differ on every point.
\end{definition}

A simple example of a trivial concept class is the set of constant functions $\classes$. 
More generally, once a function in $\C$ is fixed, there is only one (uniquely defined) function that can be added to $\C$ and preserve its triviality. 
We show below that these are the only classes that are \emph{distribution-free} robustly learnable. 

\begin{theorem}
\label{thm:no-df-rl}
For any concept class $\C$, $\C$ is efficiently distribution-free robustly learnable iff it is trivial.
\end{theorem}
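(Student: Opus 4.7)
The plan is to prove both directions by contrasting what a learner can extract from i.i.d. samples against what the adversary can exploit in the one-ball of the support.

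For the easy direction, suppose $\C$ is trivial, so that $|\C_n|\leq 2$ for every $n$ and, when $|\C_n|=2$, the two functions $c_1,c_2$ disagree on every point of $\boolhc$. If $|\C_n|\leq 1$ the learner outputs the unique function (if any) and achieves $\roblosse=0$ deterministically. If $|\C_n|=2$, a single labelled example $(x,c(x))$ drawn from any $D$ identifies the target exactly, since $c_1(x)\neq c_2(x)$; outputting that identified target yields $h=c$ everywhere, hence $\roblosse(h,c)=0$. Both routines are trivially efficient.

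For the hard direction, assume $\C$ is non-trivial; I want to mimic the argument of Lemma~\ref{lemma:dictators}. First I would show that non-triviality forces the existence of $n$ and of two concepts $c_1,c_2\in\C_n$ that \emph{agree at some point yet disagree at some other point}. If $|\C_n|=2$ with $c_1,c_2$ agreeing somewhere, this is immediate. If $|\C_n|\geq 3$, pick any three distinct $c_1,c_2,c_3$; they cannot pairwise disagree everywhere, for then $c_2=\overline{c_1}=c_3$, contradicting distinctness. In either case, since $c_1\neq c_2$ but they coincide somewhere, by walking along any Hamming-path in $\boolhc$ between an agreement point and a disagreement point I can locate adjacent vertices $u,v$ with $d_H(u,v)=1$, $c_1(u)=c_2(u)$ and $c_1(v)\neq c_2(v)$.

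With $u,v$ in hand, I would define $D$ to be the point mass on $u$, pick the target uniformly at random from $\{c_1,c_2\}$, and feed any learner a sample $S\sim D^m$ labelled accordingly. Because $c_1$ and $c_2$ agree on $\mathrm{supp}(D)=\{u\}$, the labelled sample $S$ has the exact same distribution under either choice of target, so the learner's output hypothesis $h$ is distributed identically in both cases. Now for any such $h$, since $v\in B_1(u)$,
\begin{equation*}
\roblosse(h,c_1)+\roblosse(h,c_2)\;\geq\; \mathbf{1}[h(v)\neq c_1(v)]+\mathbf{1}[h(v)\neq c_2(v)] \;=\; 1,
\end{equation*}
the last equality because $c_1(v)\neq c_2(v)$ forces $h(v)$ to disagree with exactly one of them. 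Averaging over the uniform choice of target and over $S$ gives expected robust risk at least $1/2$ for one of $c_1,c_2$; a standard Markov/averaging argument then rules out any learner achieving, say, $\epsilon=\delta=1/4$ for both targets, for \emph{any} sample size. The main subtlety is simply the combinatorial lemma that produces adjacent $u,v$ from arbitrary $c_1,c_2$ that agree somewhere and differ somewhere; once that is in place the distribution-free reduction is essentially forced by the one-bit budget of the adversary.
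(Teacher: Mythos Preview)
Your proof is correct. Both directions are handled cleanly, and your use of a Hamming-path argument to locate adjacent $u,v$ with $c_1(u)=c_2(u)$, $c_1(v)\neq c_2(v)$ is a crisp way to witness the one-bit vulnerability.

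The paper takes a somewhat different route for the hard direction. Rather than placing a point mass on $u$, the paper deliberately constructs a \emph{full-support} product distribution: bits in $I=I_{c_1}\cup I_{c_2}$ are set to $z_i$ with probability $1-\eta$ for $\eta=2^{-\omega(\log n)}$, and the remaining bits are uniform. The paper then argues that any polynomial-size sample will, with constant probability, see only points on which $c_1$ and $c_2$ agree, while $\roblosse(c_1,c_2)\geq(1-\eta)^{|I|}=\Omega(1)$; combining with Lemma~\ref{lemma:robloss-triangle} yields a constant lower bound on expected robust risk. The paper explicitly flags that this full-support construction is a refinement over the point-mass argument of Lemma~\ref{lemma:dictators}, and separately remarks that the point-mass version already rules out learning with unbounded sample complexity.

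What each approach buys: yours is shorter, more elementary, and in fact proves the stronger conclusion that no sample size whatsoever suffices (since the sample is literally uninformative). The paper's version establishes that the impossibility persists even when one restricts attention to distributions supported on all of $\boolhc$, which pre-empts the objection that point masses are pathological. For the theorem as stated, your argument is fully adequate.
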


Note that this is in stark contrast with the work of \cite{montasser2019vc}, which gives \emph{distribution-free} robust learning guarantees for the \emph{constant-in-the-ball} notion of robustness.
This approach relies on an improper learner and a sample inflation\footnote{For a given perturbation function $\U:\X\rightarrow 2^\X$ and training sample $S=\set{(x_i,y_i)}_{i=1}^m$, the inflated sample is $S_\U=\set{(\U(x_i),y_i)}_{i=1}^m$.} made possible by the \emph{realizability} of the learning problem under the constant-in-the-ball robust risk.
The agnostic case follows from a non-trivial reduction from the agnostic to the realizable setting. 
This highlights a fundamental difference between the constant-in-the-ball and exact-in-the-ball robustness guarantees.
Indeed, to perform such a sample inflation technique in our setting, one would need to have knowledge outside the training sample; this is addressed in Chapter~\ref{chap:local-queries}.

The idea behind the proof of Theorem~\ref{thm:no-df-rl}  is a generalization of the proof of Lemma~\ref{lemma:dictators} that dictators are not robustly learnable. 
However, note that we construct a distribution whose support is all of $\X$.
It is possible to find two hypotheses $c_1$ and $c_2$ and create a distribution such that $c_1$ and $c_2$ will with high probability look identical on samples of size polynomial in $n$ but have robust risk $\Omega(1)$ with respect to one another. 
Since any hypothesis $h$ in $\set{0,1}^\X$ will disagree either with $c_1$ or $c_2$ on a given point $x$ if $c_1(x)\neq c_2(x)$, by choosing the target hypothesis $c$ at random from $c_1$ and $c_2$, we can guarantee that $h$ won't be robust against $c$ with positive probability.
This shows that \emph{efficient} robust learnability is in general impossible. However, the same argument as in Lemma~\ref{lemma:dictators} can be made to show that, even with infinite sample complexity, non-trivial classes are not robustly learnable.
Finally, note that an analogous argument can be made for a more general setting (e.g., for the input space $\R^n$).

The proof of Theorem~\ref{thm:no-df-rl} relies on the following lemma, which states that the robust risk satisfies the triangle inequality:
\begin{lemma}
\label{lemma:robloss-triangle}
Let $c_1,c_2\in\{0,1\}^\X$ and fix a distribution on $\X$. 
Then for all $h:\boolhc\rightarrow\set{0,1}$
\begin{equation*}
\roblosse(c_1,c_2)
\leq \roblosse(h,c_1)+\roblosse(h,c_2)
\enspace.
\end{equation*}
\end{lemma}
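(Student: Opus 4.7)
The plan is to prove the inequality pointwise (in $x$) by showing that the event witnessing robust disagreement between $c_1$ and $c_2$ at $x$ is contained in the union of the two events witnessing robust disagreement of $h$ with $c_1$ and $c_2$ respectively, and then take probabilities and apply a union bound.

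Concretely, fix $x \in \X$ and suppose there exists $z \in B_\rho(x)$ with $c_1(z) \neq c_2(z)$. Since $h(z) \in \{0,1\}$, the value $h(z)$ must differ from at least one of $c_1(z)$ or $c_2(z)$. Hence the same $z \in B_\rho(x)$ is a witness either to $h(z) \neq c_1(z)$ or to $h(z) \neq c_2(z)$, giving the inclusion
\begin{equation*}
\{x : \exists z \in B_\rho(x),\, c_1(z)\neq c_2(z)\}
\subseteq
\{x : \exists z \in B_\rho(x),\, h(z)\neq c_1(z)\} \cup \{x : \exists z \in B_\rho(x),\, h(z)\neq c_2(z)\}.
\end{equation*}

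Taking probabilities under $x \sim D$ and applying the union bound to the right-hand side yields
$\roblosse(c_1,c_2) \leq \roblosse(h,c_1) + \roblosse(h,c_2)$,
as claimed. There is no real obstacle here: the argument is purely a pointwise case analysis combined with monotonicity of probability and subadditivity, and does not depend on any properties of $D$, $\rho$, or the metric beyond the fact that $B_\rho(x)$ is a well-defined perturbation region common to all three terms.
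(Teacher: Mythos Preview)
Your proof is correct and follows essentially the same approach as the paper: a pointwise argument that if $c_1(z)\neq c_2(z)$ then $h(z)$ must disagree with at least one of them, followed by a union bound. The paper's version is just more terse.
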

\begin{proof}
Let $x\in\boolhc$ be arbitrary, and suppose that $c_1$ and $c_2$ differ on some $z\in B_\rho(x)$. 
Then either $h(z)\neq c_1(z)$ or $h(z)\neq c_2(z)$. The result follows.
\end{proof}

We are now ready to prove Theorem~\ref{thm:no-df-rl}.

\begin{proof}[Proof of Theorem~\ref{thm:no-df-rl}]
First, if $\C$ is trivial, we need at most one example to identify the target function.

For the other direction, suppose that $\C$ is non-trivial, and for a given $c\in\C$, denote by $I_c\subseteq[n]$ the index set of relevant variables in the function $c$.\footnote{This means that if $i\in I_c$ there exists $x\in\boolhc$ such that $c(x^{\oplus i})$, the output of $c$ on flipping the $i$-th bit of $x$, differs from $c(x)$. }
We first start by fixing any learning algorithm and polynomial sample complexity function $m$. 
Let $\eta=\frac{1}{2^{\omega(\log n)}}$, $0<\delta<\frac{1}{2}$, and note that for any constant $a>0$,
\begin{equation*}
\underset{n\rightarrow\infty}{\lim}\;
n^a\log(1-\eta)^{-1}=0
\enspace,
\end{equation*}
and so any polynomial in $n$ is $o\left(\left(\log(1/(1-\eta))\right)^{-1}\right)$.
Then it is possible to choose $n_0$ such that for all $n\geq n_0$, 
\begin{equation}
\label{eqn:sample-size-ub}
m\leq \frac{\log(1/\delta)}{2n\log(1-\eta)^{-1}}
\enspace.
\end{equation}

Since $\C$ is non-trivial, we can choose concepts $c_1,c_2\in \C_{n}$ and points $x,x'\in\boolhc$ such that $c_1$ and $c_2$ agree on $x$ but disagree on $x'$.
This implies that there exists a point $z\in\boolhc$ such that (i)~$c_1(z)=c_2(z)$ and (ii)~it suffices to change \emph{only one bit} in $I:=I_{c_1}\cup I_{c_2}$ to cause $c_1$ to disagree on $z$ and its perturbation.
Let $D$ be a product distribution such that
\begin{align*}
&\Prob{x\sim D}{x_i=z_i}=
\begin{cases}
	1-\eta &\quad\text{if }i\in I\\
	\frac{1}{2}&\quad\text{otherwise}
\end{cases}
\enspace.
\end{align*}

Draw a sample $S\sim D^m$ and label it according to $c\sim U(c_1,c_2)$. 
Then,
\begin{equation}
\label{eqn:same-label-sample-2}
\Prob{S\sim D^m}{\forall x\in S\quad c_1(x)=c_2(x)}
\geq\left(1-\eta\right)^{m| I |}
\enspace.
\end{equation}
Bounding the RHS below by $\delta>0$, we get that, as long as 
\begin{equation*}
m\leq \frac{\log(1/\delta)}{| I |\log(1-\eta)^{-1}}
\enspace,
\end{equation*}
Equation~\ref{eqn:same-label-sample-2} holds with probability at least $\delta$. This is enabled by the requirement from Equation~\ref{eqn:sample-size-ub}.

However, if $x=z$, then it suffices to flip one bit of $x$ to get $x'$ such that $c_1(x')\neq c_2(x')$. 
Then,
\begin{equation}
\roblosse(c_1,c_2)\geq\Prob{x\sim D}{x_{I}=z_{I}}=\left(1-\eta\right)^{| I |}
\enspace.
\end{equation}
The constraints on $\eta$ and the fact that $| I |\leq n$ are sufficient to guarantee that the RHS is $\Omega(1)$.
Let $\alpha>0$ be a constant such that  $\roblosse(c_1,c_2)\geq\alpha$.

We can use the same reasoning as in Lemma~\ref{lemma:robloss-triangle} to argue that, for any $h\in\set{0,1}^\X$, 
\begin{equation*}
 \mathsf{R}^E_1(c_1,h)+ \mathsf{R}^E_1(c_2,h)
\geq  \mathsf{R}^E_1(c_1,c_2) \enspace.
\end{equation*}
Finally, we can show that
\begin{equation*}
\underset{c\sim U(c_1,c_2)}{\mathbb{E}}\eval{S\sim D^m}{ \mathsf{R}^R_1(h,c)} \geq \alpha\delta/2 ,
\end{equation*}
hence there exists a target $c$ with expected robust risk bounded below by a constant.
\end{proof}

In the next section, we will show that, even when looking at problems with distributional assumptions, robust learning can be hard from an information-theoretic point of view.

\section{An Adversarial Sample Complexity Lower Bound}
\label{sec:mon-conj-sc-lb}

In this section, we will show that any robust learning algorithm for
monotone conjunctions under the uniform distribution must have an
exponential sample-complexity dependence on the adversary's budget
$\rho$.  This result extends to any superclass of monotone
conjunctions, such as CNF formulas, decision lists and linear classifiers.  It
is a generalization of Theorem~13 in \cite{gourdeau2021hardness}, an earlier version of the work presented in this thesis,
which shows that no sample-efficient robust learning algorithm exists
for monotone conjunctions against adversaries that can perturb
$\omega(\log(n))$ bits of the input under the uniform distribution.

Monotone conjunctions are perhaps the simplest class of functions to study in learning theory.
Recall that a conjunction $c$ over $\{0,1\}^n$ can be represented by a set of literals $l_1,\dots,l_k$, where, for $x\in\X_n$, $c(x)=\bigwedge_{i=1}^k l_i$. 
Monotone conjunctions are the subclass of conjunctions where negations are not allowed, i.e. all literals are of the form $l_i=x_j$ for some $j\in[n]$.
The standard PAC learning algorithm to learn conjunctions is outlined in Algorithm~\ref{alg:conj-pac} in Section~\ref{sec:pac-algos}, and can straightforwardly be adapted for monotone conjunctions, with the slight distinction that the initial hypothesis is $\bigwedge_{i\in[n]} x_i$.

\begin{theorem}
\label{thm:mon-conj}
Fix a positive increasing robustness function $\rho:\N\rightarrow\N$. 
If $\rho$ is a function of the input dimension $n$, then for $\kappa<2$ and sufficiently large $n$, any $\rho(n)$-robust learning algorithm for {\MonConj} has a sample complexity lower bound of $2^{\kappa\rho(n)}$ under the uniform distribution.
Otherwise, the same lower bound holds whenever $\rho$ is a sufficient large constant (with respect to $\kappa$). 
\end{theorem}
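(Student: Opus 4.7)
The plan is to adapt the two-target strategy of Lemma~\ref{lemma:dictators} and Theorem~\ref{thm:no-df-rl} to the concrete setting of monotone conjunctions under the uniform distribution $D$, choosing a pair of candidate targets whose sample indistinguishability and robust-risk disagreement simultaneously exploit the adversary's budget $\rho$.

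First, I would take $c_1 = x_1 \wedge \cdots \wedge x_k$ and $c_2 = x_1 \wedge \cdots \wedge x_{k+1}$ with $k = \lceil \kappa \rho \rceil$, assuming $n$ is large enough that $k+1 \leq n$ (this is precisely what the case distinction in the statement ensures: either $n\to\infty$ with $\rho=\rho(n)$, or $\rho$ is a sufficiently large constant while $n$ is tacitly at least $k+1$). Since $\kappa < 2$, we have $k < 2\rho$, which will matter in the second step. Under $D$, $c_1(x) \neq c_2(x)$ holds exactly when $x_1 = \cdots = x_k = 1$ and $x_{k+1} = 0$, i.e.\ with probability $2^{-(k+1)}$. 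A union bound then gives, for any sample size $m < 2^{\kappa \rho}$, $\Pr_{S\sim D^m}[c_1, c_2 \text{ agree on } S] \geq (1 - 2^{-(k+1)})^m \geq 1 - m \cdot 2^{-(k+1)} > 1/2$.

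Next I would lower bound $\roblosse(c_1, c_2)$. A witness $z \in B_\rho(x)$ of $c_1(z) \neq c_2(z)$ exists iff one can enforce $z_1 = \cdots = z_k = 1$ and $z_{k+1} = 0$ within $\rho$ bit flips, which is equivalent to $T(x) := |\{i \in [k] : x_i = 0\}| + x_{k+1} \leq \rho$. Under $D$, $T$ is a sum of $k+1$ independent Bernoulli$(1/2)$ variables, so $\mathbb{E}[T] = (k+1)/2 \leq \kappa\rho/2 + 1$. Because $\kappa < 2$, the gap $\rho - \mathbb{E}[T] = \rho(1 - \kappa/2) - O(1)$ is $\Omega(\rho)$, so Hoeffding's inequality yields $\Pr[T > \rho] \leq \exp(-\Omega(\rho))$, hence $\roblosse(c_1, c_2) \geq 1/2$ once $\rho$ is sufficiently large (either via $n$ large when $\rho$ grows, or by taking $\rho$ large compared to $\kappa$ in the constant case).

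Finally, I would close with a two-target reduction: draw $c$ uniformly from $\{c_1, c_2\}$ and $S \sim D^m$ labelled by $c$, and let $A$ be the event that $c_1, c_2$ agree on $S$. Conditional on $A$, the algorithm's output $h = h(S)$ is independent of $c$, so Lemma~\ref{lemma:robloss-triangle} gives $\roblosse(h, c_1) + \roblosse(h, c_2) \geq \roblosse(c_1, c_2) \geq 1/2$. Averaging over $c$ and combining with $\Pr[A] > 1/2$ yields $\mathbb{E}_c \mathbb{E}_S[\roblosse(h, c)] \geq 1/8$, so some $c \in \{c_1, c_2\}$ admits expected robust risk at least $1/8$; a reverse-Markov step converts this into constant failure probability at a constant accuracy parameter, contradicting $\rho$-robust PAC learnability with $m < 2^{\kappa \rho}$ samples. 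The main obstacle is the calibration of $k$: one simultaneously needs $k \geq \kappa\rho$ (so the target pair is sample-indistinguishable below $2^{\kappa\rho}$) and $k < 2\rho$ (so the adversary's budget dominates the expected number $k/2$ of bits that must be flipped to force disagreement), and these constraints are jointly satisfiable exactly when $\kappa < 2$, which is why the theorem is tight at that threshold.
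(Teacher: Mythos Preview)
Your argument is correct and reaches the same conclusion, but via a different construction from the paper's. The paper chooses two \emph{disjoint} monotone conjunctions $c_1,c_2$, each of length $2\rho$, so that both evaluate to $0$ on almost all samples (each is positive with probability $2^{-2\rho}$); the robust-risk lower bound (Lemma~\ref{lemma:bound-loss}) then comes from the event that $c_1(x)=0$ while $x$ already has at least $\rho$ ones among $c_2$'s variables, which the adversary can complete to a positive example of $c_2$. Your construction instead uses \emph{nested} conjunctions differing by a single variable, so the disagreement region is a single bit-pattern on $k+1$ coordinates; indistinguishability is controlled directly by $2^{-(k+1)}$, and the robust-risk bound reduces to a binomial tail estimate on the Hamming distance to that pattern. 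Your calibration $k=\lceil\kappa\rho\rceil$ ties the conjunction length to $\kappa$, whereas the paper fixes length $2\rho$ and handles all $\kappa<2$ uniformly via an asymptotic computation (Lemma~\ref{lemma:concepts-agree}). Both routes make transparent that $\kappa<2$ is exactly the threshold at which the adversary's budget $\rho$ dominates the expected number of flips needed to reach the disagreement set; your version makes this trade-off slightly more explicit, and even yields $\roblosse(c_1,c_2)\to 1$ rather than the paper's $\approx 1/2$, though any positive constant suffices. One minor quibble: $k=\lceil\kappa\rho\rceil<2\rho$ can fail for $\kappa$ very close to $2$, but your Hoeffding step only needs $\rho-(k+1)/2=\Omega(\rho)$, which holds regardless.
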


The idea behind the proof is to show that, for any $\kappa<2$, there exists a sufficiently large input dimension (that depends on $\kappa$ and the function $\rho$) such that a sample of size $2^{\kappa\rho}$ from the uniform distribution will not be able to distinguish between two disjoint conjunctions of length $2\rho$. 
However, the robust risk between these two conjunctions can be lower bounded by a constant.
Hence, there does not exist a robust learning algorithm with sample complexity 
$2^{\kappa\rho}$ that works for the uniform distribution, and arbitrary input dimension and confidence and accuracy parameters.

Recall that the sample complexity of PAC learning conjunctions is
$\Theta(n)$ in the non-adversarial setting.  On the other hand, our
adversarial lower bound in terms of the robust parameter is superlinear in $n$ as soon as the adversary can perturb more than
$\log(\sqrt{n})$ bits of the input.

The proof of Theorem~\ref{thm:mon-conj} relies on the lemmas below, as well as Lemma~\ref{lemma:robloss-triangle}. 
Lemma~\ref{lemma:bound-loss} lower bounds the robust risk between two disjoint monotone conjunctions as a function of the adversarial budget $\rho$, while Lemma~\ref{lemma:concepts-agree} lower bounds the probability that these two concepts are indistinguishable on a polynomially-sized sample.

\begin{lemma}
\label{lemma:bound-loss}
Under the uniform distribution, for any $n\in\N$, disjoint $c_1,c_2\in{\MonConj}$ of even length $3\leq l\leq n/2$  on $\boolhc$ and robustness parameter $\rho= l/2$, we have that $\robloss(c_1,c_2)$ is bounded below by a constant that can be made arbitrarily close to $\frac{1}{2}$ as $l$ (and thus $\rho$) increases. 
\end{lemma}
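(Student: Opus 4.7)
My plan is to parametrise the two disjoint monotone conjunctions by their index sets $I_1, I_2 \subseteq [n]$, where $|I_1|=|I_2|=l$ and $I_1 \cap I_2 = \emptyset$, so that $c_j(x) = \bigwedge_{i \in I_j} x_i$. For $x$ drawn from the uniform distribution, I would introduce the random variables $k_j(x) = |\{i \in I_j : x_i = 0\}|$, which count the number of ``missing'' satisfying assignments inside $I_j$. Since $I_1$ and $I_2$ are disjoint, the bits $(x_i)_{i \in I_1}$ and $(x_i)_{i \in I_2}$ are independent under the uniform measure, and so $k_1$ and $k_2$ are independent random variables each distributed as $\text{Bin}(l, 1/2)$.

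The core observation will be the following sufficient condition for a disagreement to exist in $B_\rho(x)$: because $I_1$ and $I_2$ are disjoint, flipping bits in $I_1$ leaves $c_2$ unchanged, and symmetrically for $I_2$. Hence, on the event $\{k_1 \leq \rho\} \cap \{k_2 \geq 1\}$, flipping exactly the $k_1 \leq \rho$ zero-coordinates of $x$ within $I_1$ produces some $z \in B_\rho(x)$ with $c_1(z)=1$ while $c_2(z)=c_2(x)=0$, so $c_1(z) \neq c_2(z)$. Consequently, using independence of $k_1$ and $k_2$,
\begin{equation*}
\roblosse(c_1,c_2) \;\geq\; \Prob{x\sim U}{k_1 \leq \rho \text{ and } k_2 \geq 1} \;=\; \Prob{}{k_1 \leq \rho} \cdot \Prob{}{k_2 \geq 1}.
\end{equation*}

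To finish, I would bound the two factors separately. First, $\Prob{}{k_2 \geq 1} = 1 - 2^{-l}$. Second, since $\rho = l/2$ and $\text{Bin}(l,1/2)$ is symmetric about its mean $l/2$ (using that $l$ is even), one has $\Prob{}{k_1 \leq \rho} \geq 1/2$. Combining yields $\roblosse(c_1,c_2) \geq \tfrac{1}{2}(1 - 2^{-l})$, a quantity that is bounded below by a constant for every $l \geq 3$ and tends to $1/2$ from below as $l$ (and hence $\rho$) grows, as claimed.

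The only real obstacle here is step two: isolating a sufficient event for disagreement that decouples into independent factors whose probabilities are easy to estimate. A slightly tighter route would be to characterise the ``no disagreement in the ball'' event exactly --- it turns out to equal $\{k_1 > \rho\} \cap \{k_2 > \rho\}$, giving $\roblosse(c_1,c_2) = 1 - \Prob{}{k_1 > \rho}^2$ and a limit of $3/4$ rather than $1/2$ --- but for the lower bound stated in the lemma the simpler product bound above is already enough.
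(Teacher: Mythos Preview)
Your proposal is correct and follows essentially the same approach as the paper: the paper bounds $\robloss(c_1,c_2)$ below by $\Prob{x\sim D}{c_1(x)=0 \wedge x\text{ has at least }\rho\text{ ones in }I_{c_2}} \geq (1-2^{-2\rho})/2$, which is your bound $\tfrac12(1-2^{-l})$ with the roles of $c_1$ and $c_2$ symmetrically swapped (the paper flips zeros in $I_{c_2}$ to force $c_2(z)=1$ while $c_1(z)=0$, you flip zeros in $I_1$ to force $c_1(z)=1$ while $c_2(z)=0$). Your additional remark on the exact characterisation and the $3/4$ limit is a nice observation that goes slightly beyond what the paper records.
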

\begin{proof}
For a  hypothesis $c\in{\MonConj}$, let $I_c$ be the set of variables in $c$.
Let $c_1,c_2\in\C$ be as in the statement of the lemma.
Then the robust risk $\robloss(c_1,c_2)$ is bounded below by 
\begin{equation*}
\Prob{x\sim D}{c_1(x)=0\wedge x\text{ has at least $ \rho $ 1's in }I_{c_2}}\geq (1-2^{-2\rho})/2\enspace.
\end{equation*}
\end{proof}

Now, the following lemma shows that, for sufficiently
large input dimensions, a sample of size $2^{\kappa\rho}$ from the
uniform distribution will look constant with probability $1/2$ if
labelled by two disjoint monotone conjunctions of length $2\rho$.  

\begin{lemma}
\label{lemma:concepts-agree}
For any constant $\kappa<2$, for any robustness parameter $\rho\leq n/4$, for any disjoint monotone conjunctions $c_1,c_2$ of length $2\rho$, there exists $n_0$ such that for all $n\geq n_0$, a sample $S$ of size $2^{\kappa\rho}$ sampled i.i.d. from $D$ will have that $c_1(x)=c_2(x)=0$ for all $x\in S$ with probability at least $1/2$.
\end{lemma}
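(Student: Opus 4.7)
The plan is to compute directly the probability that every point in $S$ is labelled $0$ by both $c_1$ and $c_2$, and to show this probability is at least $1/2$ whenever $\rho$ (and hence $n$, since $\rho \leq n/4$) is large enough.

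First, I would observe that since $c_1$ and $c_2$ are monotone conjunctions of length $2\rho$ on \emph{disjoint} sets of variables, under the uniform distribution the events $\{c_1(x)=1\}$ and $\{c_2(x)=1\}$ are independent and each has probability $2^{-2\rho}$. A union bound therefore gives
\[
\Prob{x \sim D}{c_1(x) = 0 \land c_2(x) = 0} \;\geq\; 1 - 2 \cdot 2^{-2\rho} \;=\; 1 - 2^{1-2\rho}.
\]
Because the $m = 2^{\kappa \rho}$ samples are drawn i.i.d., the probability that all of them are simultaneously zeroed out by both conjunctions is at least $(1 - 2^{1-2\rho})^{2^{\kappa\rho}}$.

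Next, I would show this lower bound tends to $1$ as $\rho \to \infty$ whenever $\kappa < 2$. Using the elementary inequality $\ln(1-x) \geq -2x$ for $x \in [0, 1/2]$ (which applies once $2^{1-2\rho} \leq 1/2$, i.e. $\rho \geq 2$), the logarithm of this probability is at least
\[
2^{\kappa\rho} \cdot \ln\!\left(1 - 2^{1-2\rho}\right) \;\geq\; -2 \cdot 2^{\kappa\rho} \cdot 2^{1-2\rho} \;=\; -2^{2 + (\kappa - 2)\rho}.
\]
Since $\kappa - 2 < 0$, the quantity $2^{2 + (\kappa - 2)\rho}$ tends to $0$ as $\rho \to \infty$, so in particular it is at most $\ln 2$ for all $\rho$ greater than some threshold $\rho_0$ depending only on $\kappa$. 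For such $\rho$ the probability in question is at least $e^{-\ln 2} = 1/2$, as required.

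Finally, to select $n_0$: the constraint $\rho \leq n/4$ already ensures $n \geq 4\rho$, which leaves room for two disjoint conjunctions of length $2\rho$. We then pick $n_0$ so that for every $n \geq n_0$ we have $\rho \geq \rho_0(\kappa)$; when $\rho$ is a nondecreasing function of $n$ tending to infinity this is automatic, and when $\rho$ is a constant it simply forces $\rho$ itself to exceed $\rho_0(\kappa)$, matching the ``sufficiently large constant'' clause in Theorem~\ref{thm:mon-conj}. The only real work is the asymptotic estimate above; there is no deeper obstacle, since the disjointness of the variable sets of $c_1$ and $c_2$ reduces the computation to an independence argument.
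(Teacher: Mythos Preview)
Your proof is correct and follows essentially the same approach as the paper: bound the probability that a single sample is labelled $0$ by both conjunctions, raise to the $m$-th power, and show the result is at least $1/2$ once $\rho$ is large enough. The only differences are cosmetic: the paper uses the exact independence computation $(1-2^{-2\rho})^{2m}$ rather than your union bound, and then applies l'H\^opital's rule to evaluate $\lim_{n\to\infty} 2^{\kappa\rho}\log\!\bigl(2^{2\rho}/(2^{2\rho}-1)\bigr)$, whereas your elementary inequality $\ln(1-x)\geq -2x$ reaches the same conclusion more directly.
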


\begin{proof}
We begin by bounding the probability that $c_1$ and $c_2$ agree on an
i.i.d. sample of size $m$.  We have
\begin{equation}
\label{eqn:zero-label-sample}
\Prob{S\sim D^m}{\forall x\in S  \cdot c_1(x)=c_2(x)=0}
=\left(1-\frac{1}{2^{2\rho}}\right)^{2m}
\enspace.
\end{equation}
In particular, if 
\begin{equation}
\label{eqn:sample-size}
m\leq \frac{\log(2)}{2\log(2^{2\rho}/(2^{2\rho}-1))}
\enspace,
\end{equation}
then the RHS of  Equation~\ref{eqn:zero-label-sample} is at least $1/2$.

Now, let us consider the following limit, where $\rho$ is a function of the input parameter $n$:
\begin{align*}
\underset{n\rightarrow\infty}{\lim}\;
2^{\kappa\rho}\log\left(\frac{2^{2\rho}}{2^{2\rho}-1}\right)
&=\frac{-\log(4)}{\kappa\log(2)}\;\underset{n\rightarrow\infty}{\lim}\;\frac{2^{\kappa\rho}}{1-2^{2\rho}}   \\
&=\frac{-\log(4)}{\kappa\log(2)}\;\frac{\kappa\log(2)}{-2\log(2)}\underset{n\rightarrow\infty}{\lim}\;\frac{2^{\kappa\rho}}{2^{2\rho}}   \\
&= \underset{n\rightarrow\infty}{\lim}\;2^{(\kappa-2)\rho}\\
&=
\begin{cases} 
0 & \text{if $\kappa<2$} \\
1 & \text{if $\kappa=2$} \\
\infty & \text{if $\kappa>2$}
\end{cases}
\enspace,
\end{align*}
where the first two equalities follow from l'H\^opital's rule.

Thus if $\kappa<2$ then $2^{\kappa\rho}$ is $o\left(\left(\log\left(\frac{2^{2\rho}}{2^{2\rho}-1}\right)\right)^{-1}\right)$.

\end{proof}

\begin{remark}
  Note that for a given $\kappa<2$, the lower bound $2^{\kappa\rho}$
  holds only for sufficiently large $\rho(n)$.  By looking at
  Equation~\ref{eqn:zero-label-sample}, and letting $m=2^\rho$,
  we get that $\rho(n)\geq2$ is a sufficient condition for it to hold.  
  If we want a lower bound for robust learning that
  is larger than that of standard learning (where the dependence is
  $\Theta(n)$) for a $\log(n)$ adversary, setting $m=2^{1.7\rho}$
  and requiring $\rho(n)\geq6$, for e.g., would be sufficient.
\end{remark}

We are now ready to prove Theorem~\ref{thm:mon-conj}.

\begin{proof}[Proof of Theorem~\ref{thm:mon-conj}]
Fix any algorithm $\A$ for learning {\MonConj}.
We will show that the expected robust risk between a randomly chosen target function and any hypothesis returned by $\A$ is bounded below by a constant.

Let $\delta=1/2$, and fix a positive increasing adversarial-budget function $\rho(n)\leq n/4$ ($n$ is not yet fixed).
Let $m(n)=2^{\kappa\rho(n)}$ for an arbitrary $\kappa<0$.
Let $n_0$ be as in Lemma~\ref{lemma:concepts-agree}, where $m(n)$ is the fixed sample complexity function. 
Then Equation~(\ref{eqn:sample-size}) in the proof of Lemma~\ref{lemma:concepts-agree} holds for all $n\geq n_0$.

Now, let $D$ be the uniform distribution on $\boolhc$ for $n\geq \max(n_0,3)$, and choose $c_1$, $c_2$ as in Lemma~\ref{lemma:bound-loss}.
Note that $\robloss(c_1,c_2)>\frac{5}{12}$ by the choice of $n$.
Pick the target function $c$ uniformly at random between $c_1$ and $c_2$, and label $S\sim D^{m(n)}$ with $c$.
By Lemma~\ref{lemma:concepts-agree}, $c_1$ and $c_2$ agree with the labeling of $S$ (which implies that all the points have label $0$) with probability at least~$\frac{1}{2}$ over the choice of $S$. 

Define the following three events for $S\sim D^m$: 
\begin{align*}
&\E:\;{c_1}_{|S}={c_2}_{|S}\;,\enspace
\E_{c_1}:\;c=c_1\;,\enspace
\E_{c_2}:\;c=c_2\enspace.
\end{align*}

Then, 

\begin{align*}
\eval{c,S}{\robloss(\A(S),c)}
&\geq\Prob{c,S}{\E}\eval{c,S}{\robloss(\A(S),c)\;|\;\E} \\
&>\frac{1}{2}(
\Prob{c,S}{\E_{c_1}}\eval{S}{\robloss(\A(S),c)\;|\;\E\cap\E_{c_1}}+\Prob{c,S}{\E_{c_2}}\eval {S}{\robloss(\A(S),c)\;|\;\E\cap \E_{c_2}}
)\\
&=\frac{1}{4}\;\eval{S}{\robloss(\A(S),c_1)+\robloss(\A(S),c_2)\;|\;\E}\\
&\geq \frac{1}{4}\;\eval{S}{\robloss(c_2,c_1)}\\
&=\frac{5}{48}
\enspace, 
\end{align*}
where the first inequality is due to the Law of Total Expectation.  The strict inequality comes from Lemma~\ref{lemma:concepts-agree}, the last inequality from Lemma~\ref{lemma:robloss-triangle}, and the last equality from Lemma~\ref{lemma:bound-loss}.
\end{proof}

\paragraph*{Comparison with \cite{diochnos2018adversarial,diochnos2020lower}}
First, \cite{diochnos2018adversarial} considers the robustness
of monotone conjunctions under the uniform distribution on the boolean
hypercube for the exact-in-the-ball notion of risk.  However,~\cite{diochnos2018adversarial} does not
address the sample and computational complexity of learning: their
results rather concern the ability of an adversary to magnify the
missclassification error of \emph{any} hypothesis with respect to
\emph{any} target function by perturbing the input.  For
example, they show that an adversary who can perturb $\Theta(\sqrt{n})$
bits can increase the missclassification probability from $0.01$ to
$1/2$. The main tool used
in~\cite{diochnos2018adversarial} is the isoperimetric inequality for
the boolean hypercube, which gives lower bounds on the volume of the
expansions of arbitrary subsets.  On the other hand, we use the
probabilistic method to establish the existence of a single hard-to-robustly-learn 
target concept for any given algorithm with sample
complexity exponential in $\rho$.

 The work of \cite{diochnos2020lower} shows
an exponential lower bound on the sample complexity of robust PAC
learning of a wide family of concept classes, which are called 
\emph{$\alpha$-close}, meaning that there must exist two concepts in the 
class that have (standard) error $\alpha$.
These bounds hold under Normal Lévy distributions (which include
product distributions under the Hamming distance in $\boolhc$) against
all adversaries that can perturb up to $o(n)$ bits. 
Closer to our results of this section, they also show a superpolynomial lower bound in sample complexity against  adversaries that can perturb $\widetilde{\Theta}(\sqrt{n})$
bits. 
This thesis obtains the same result against a weaker adversary in the special case of the uniform distribution:
we show that a weaker adversary, who can perturb
only $\omega(\log n)$ bits, renders it impossible to robustly learn monotone
conjunctions (and any superclass) with polynomial sample complexity.  In fact, we will show in Section~\ref{sec:mon-conj-rob-ub} that 
$\Theta(\log n)$ is indeed the threshold for the efficient robust PAC learning of this class under log-Lipschitz distributions, which include the uniform distribution.

\section[Logarithmically-Bounded Adversary]{Robust Learnability Against a Logarithmically-Bounded Adversary}
\label{sec:mon-conj-rob-ub}

In the previous section, we  exhibited an exponential dependence on the adversary's budget to robustly learn monotone conjunctions under the uniform distribution. 
We now turn our attention to a wider family of distributions, log-Lipschitz distributions, and show that robust learnability can be guaranteed in this setting whenever the adversary is logarithmically bounded.
These results show that the sample complexity of robust learning in our setting is controlled by the adversary's budget.

\subsection{Log-Lipschitz Distributions}

A thorough introduction to log-Lipschitz distributions can be found in Section~\ref{sec:prob-theory}, but we will recall the formal definition here.

\textbf{Definition~\ref{def:log-lipschitz}.}
\emph{A distribution  $D$ on $\boolhc$ is said to be $\alpha$-$\log$-Lipschitz if 
for all input points $x,x'\in \boolhc$, if $d_H(x,x')=1$, then $|\log(D(x))-\log(D(x'))|\leq\log(\alpha)$.}

While it may be tempting to work under product distributions over the instance space $\boolhc$, as many concentration bounds and Fourier analysis tools are readily available for this setting, independence between features is usually not a reasonable assumption to make in practice.
Indeed, it often happens that some features are correlated, e.g., a person's height and weight. 
By loosening the product distribution requirement to a log-Lipschitz one, we allow for some dependence between the features.
However, from a robustness point of view, it is sensible to ensure that features are not too dependent on each other (which is also encapsulated by log-Lipschitzness).
Taking this to the extreme, suppose a feature has been duplicated, i.e., there exist indices $i,j$ in $[n]$ such that $x_i=x_j$ for all points in the support of the distribution. 
Then, an instance such that $x_i=\bar{x_j}$ does not represent a meaningful instance of the problem to be learned, and perhaps it would be unfair to require an algorithm to perform well in such cases.
Moreover, it is unclear how one would measure robustness performance in this scenario.

In a sense, log-Lipschitz distributions encapsulate a natural desideratum when considering both robustness guarantees and realistic assumptions on the data, and furthermore provide a sound abstract framework to study robust learnability.

\subsection{A Robustness Guarantee}

We now look at robustly learning monotone conjunctions on $\boolhc$ under log-Lipschitz distributions when the adversary can flip $\log(n)$ bits of the input at test time.
We remark that, when one has access to membership queries, one can easily exactly learn monotone conjunctions over the whole input space: we start with the instance where all bits are 1 (which is always a positive example, as the constant function 0 does not belong to this class), and we can test whether each variable is in the target conjunction by setting the corresponding bit to 0 and requesting the label.
However, robustly learning monotone conjunctions with access to random examples only is not so straightforward, as positive examples, which are more informative than negative ones, could be difficult to come by under the underlying distribution.

We now formally state the main result of this section.

\begin{theorem}
\label{thm:mon-conj-rob}
  Let $\mathcal{D}=\{\mathcal{D}_n\}_{n\in\mathbb{N}}$, where $\mathcal{D}_n$ is a set of
     $\alpha$-$\log$-Lipschitz distributions on $\{0,1\}^n$ for all $n\in\mathbb{N}$.   Then the class of
    monotone conjunctions is $\rho$-robustly learnable with respect to
    $\mathcal{D}$ for robustness function $\rho(n)=O(\log n)$.
\end{theorem}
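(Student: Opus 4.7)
The plan is to show that the standard PAC-learning algorithm for monotone conjunctions (Algorithm~\ref{alg:conj-pac}, initialised with $h_0 = \bigwedge_{i=1}^n x_i$), run on a polynomial-size i.i.d.\ sample, outputs a hypothesis $h$ with small robust risk against any target $c = \bigwedge_{i\in I_c} x_i$ under any $\alpha$-$\log$-Lipschitz distribution. The key structural observation is that the algorithm removes a literal $x_j$ only upon seeing a positive example with $x_j=0$; since every positive example has $x_i=1$ for all $i\in I_c$, the output satisfies $I_h \supseteq I_c$. Consequently $h(z)\neq c(z)$ forces $c(z)=1$ and $h(z)=0$, so
$$\roblosse(h,c) \;\leq\; \Prob{x\sim D}{\exists z\in B_\rho(x) : c(z)=1} \;=\; \Prob{x\sim D}{Z(x)\leq \rho},$$
where $Z(x) := |\{i\in I_c : x_i=0\}|$. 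I would split the analysis into two regimes separated by a threshold $T = \Theta(\rho + \log(1/\epsilon))$, which is $O(\log n)$ by hypothesis on $\rho$.

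For the regime $|I_c| > T$, concentration alone suffices, \emph{independently} of which $I_h\supseteq I_c$ the algorithm returns. By Lemma~\ref{lemma:log-lips-facts}(1) and linearity, $\eval{}{Z} \geq |I_c|/(1+\alpha)$. Because coordinates need not be independent under log-Lipschitz distributions, I would apply Azuma--Hoeffding to the Doob martingale $Z_i = \eval{}{Z\mid x_1,\ldots,x_i}$, whose increments are bounded by $1$ (and vanish outside $I_c$). Choosing $T$ so that $\eval{}{Z}\geq 2\rho$ and so that the resulting tail $\exp(-\Omega(|I_c|/(1+\alpha)^2))$ drops below $\epsilon$ gives $\roblosse(h,c)\leq \epsilon$ in this regime.

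For the regime $|I_c|\leq T$, I would show that the algorithm identifies $I_c$ \emph{exactly} with high probability, so that $\roblosse(h,c)=0$. By Lemma~\ref{lemma:log-lips-facts}(4), $\Prob{}{c(x)=1}\geq (1+\alpha)^{-T} = 1/\text{poly}(n,1/\epsilon)$, so a sample of size $m=\text{poly}(n,1/\epsilon,1/\delta)$ yields $m^+ = \Omega(\log(n/\delta))$ positive examples with high probability (Chernoff). By Lemma~\ref{lemma:log-lips-facts}(3) applied to the conditional distribution on $[n]\setminus I_c$, for every $j\notin I_c$ we have $\Prob{}{x_j=0\mid c(x)=1}\geq 1/(1+\alpha)$. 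Since positive examples are i.i.d., a union bound over the at most $n$ irrelevant indices shows the probability that some $j\notin I_c$ is never witnessed by a positive example with $x_j=0$ is at most $n\cdot (\alpha/(1+\alpha))^{m^+}\leq \delta$. Thus every such $j$ is removed, and $I_h = I_c$.

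The main obstacle is the martingale concentration in the large-$|I_c|$ regime: log-Lipschitzness gives only a weak control on dependencies among bits, so standard Chernoff bounds are unavailable and one must fall back on the Doob--Azuma--Hoeffding combination. This gives a sub-Gaussian tail in $|I_c|$, which is exactly what is needed once $|I_c| > T$ grows mildly (logarithmically) in $n$ and $1/\epsilon$. Combining the two regimes with a union bound on the failure probabilities yields the claimed $\rho$-robust PAC guarantee with $\text{poly}(n,1/\epsilon,1/\delta)$ sample complexity, provided $\rho(n)=O(\log n)$.
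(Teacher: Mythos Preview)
Your proposal is correct and follows essentially the same approach as the paper: split on the target length, use Azuma--Hoeffding on a (super)martingale for long conjunctions to bound $\Pr[Z(x)\leq\rho]$, and argue exact identification for short conjunctions. The only cosmetic differences are that the paper builds the explicit supermartingale $Z_k=\sum_{i\leq k}X_i-k(1-\eta)$ rather than invoking the Doob martingale, and in the short-conjunction regime it bounds the joint event $\Pr[x_j=0\wedge c(x)=1]\geq\eta^{|I_c|+1}$ directly instead of first counting positive examples via Chernoff and then conditioning.
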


This theorem combined with Theorem~\ref{thm:mon-conj} shows that $\rho(n)=\log (n)$ is essentially the threshold for the efficient robust learnability of the class \MonConj.
Note that here, and in all future results, we consider the constant $\alpha$ that parametrises the log-Lipschitz distribution to be fixed, and it appears explicitly in the sample complexity.

The main idea to prove Theorem~\ref{thm:mon-conj-rob} is that, on the one hand, it is possible to efficiently exactly learn the target conjunction if its length is logarithmic in the input dimension.
On the other hand, we can otherwise efficiently $\rho$-robustly learn (but not necessarily exactly learn) longer conjunctions as the robustness parameter is logarithmic in the input dimension, and thus the adversary's budget is insufficiently large to cause a label change (with high probability). 
This is  a simple example that shows that robust learning does not necessarily imply exact learning. 

\begin{proof}[Proof of Theorem~\ref{thm:mon-conj-rob}]

  We show that the algorithm $\mathcal{A}$ for PAC-learning monotone
  conjunctions (see Algorithm~\ref{alg:conj-pac} in Chapter~\ref{chap:background}) is a robust learner for an appropriate choice
  of sample size.  We start with the hypothesis $h(x)=\bigwedge_{i\in I_h} x_i$, 
  where $I_h=[n]$. For each example $x$ in $S$, we remove $i$ 
  from $I_h$ if $c(x)=1$ and $x_i=0$.

Let $\D$ be a class of $\alpha$-$\log$-Lipschitz distributions. 
 Let $n\in\mathbb{N}$ and $D\in \mathcal{D}_n$.  Suppose moreover
  that the target concept $c$ is a conjunction of $l$ variables.
  Fix $\varepsilon,\delta>0$. Let $\eta=\frac{1}{1+\alpha}$, 
  and note that by Lemma~\ref{lemma:log-lips-facts},
  for any $S\subseteq[n]$ and $b_S\in\{0,1\}^S$,  we have that 
  $\eta^{|S|}\leq \Prob{x\sim D}{x_i=b}\leq(1-\eta)^{|S|}$.

\paragraph{Claim 1.} 
If
$m \geq \left\lceil \frac{\log n-\log \delta}{\eta^{l+1}}
\right\rceil$ then given a sample $S \sim D^m$, algorithm
$\mathcal{A}$ outputs $c$ with probability at least $1-\delta$.

  \emph{Proof of Claim 1.} Fix $i \in \{1,\ldots,n\}$.  Algorithm
  $\mathcal{A}$ eliminates $i$ from the output hypothesis just in
  case there exists $x\in S$ with $x_{i}=0$ and $c(x)=1$.  Now we
  have $\Prob{x\sim D}{x_{i}=0 \wedge c(x)=1}\geq \eta^{l+1}$
  and hence
\[
\Prob{S\sim D}{\forall  x \in S \cdot i\text{ remains in }I_h}\leq 
(1-\eta^{l+1})^m 
 \leq  e^{-m\eta^{l+1}}
 =  \frac{\delta}{n} \, .
\]
The claim now follows from union bound over $i \in \{1,\ldots,n\}$.

\paragraph{Claim 2.}  If $l \geq \frac{8}{\eta^2}\log(\frac{1}{\varepsilon})$
and $\rho \leq \frac{\eta l}{2}$ then
$\Prob{x\sim D} {\exists z \in B_\rho(x) \cdot c(z)=1}\leq \varepsilon$.

\emph{Proof of Claim 2.}  
Define a random variable $Y=\sum_{i\in I_c} \mathbb{I}(x_i=1)$.  
We simulate $Y$ by the following process. 
Let $X_1,\dots,X_l$ be random variables taking value in $\{0,1\}$, and which may be dependent. 
Let $D_i$ be the marginal distribution on $X_i$ conditioned on $X_1,\dots,X_{i-1}$. 
This distribution is also $\alpha$-$\log$-Lipschitz by Lemma~\ref{lemma:log-lips-facts}, and hence,
\begin{equation}
\label{eqn:marg-bound}
\Prob{X_i\sim D_i}{X_i=1}\leq 1-\eta
\enspace.
\end{equation}

Since we are interested in the random variable $Y$ representing the number of 1's in $X_1,\dots,X_l$, 
we define the random variables $Z_1,\dots,Z_l$ as follows:
\begin{equation*}
Z_k = \left(\sum_{i=1}^k X_i\right)-k(1-\eta)\enspace,
\end{equation*} 
with the convention that $Z_0=0$.
The sequence $Z_0,Z_1, \dots, Z_l$ is a supermartingale with respect to $X_1,\dots,X_l$:
\begin{align*}
\eval{}{Z_{k+1}\given X_1,\dots,X_k}
&=\eval{}{Z_{k}+X_{k+1}'-(1-\eta)\given X'_1,\dots,X'_k}\\
&=Z_k+\Prob{}{X_{k+1}'=1\given X'_1,\dots,X'_k}-(1-\eta)\\
&\leq Z_k
\enspace. \tag{by (\ref{eqn:marg-bound})}
\end{align*}
Now, note that all $Z_k$'s satisfy $|Z_{k+1}-Z_k|\leq 1$, and that $Z_l=Y-l(1-\eta)$. 
We can thus apply the Azuma-Hoeffding (A.H.) inequality (see Section~\ref{sec:prob-theory} for details) to get 
\begin{align*}
\Prob{}{Y\geq l-\rho}
&\leq \Prob{}{Y\geq l(1-\eta)+\sqrt{2\ln(1/\varepsilon)l}}	\\
&=\Prob{}{Z_l-Z_0\geq \sqrt{2\ln(1/\varepsilon)l}}	\\
&\leq \exp\left(-\frac{\sqrt{2\ln(1/\varepsilon)l}^2}{2l}\right)				\tag{A.H.}\\
&=\varepsilon
\enspace,
\end{align*}
where the first inequality holds from the given bounds on $l$ and $\rho$:
\begin{align*}
l-\rho &=(1-\eta)l + \frac{\eta l}{2} 
               + \frac{\eta l}{2} - \rho \\
          & \geq (1-\eta) l + \frac{\eta l}{2} 
           \tag{since $\rho \leq \frac{\eta l}{2}$}\\
          & \geq  (1-\eta) l + \sqrt{2\log(1/\varepsilon) l} \enspace.
            \tag{since $l \geq \frac{8}{\eta^2}\log(\frac{1}{\varepsilon})$}
\end{align*}
This completes the proof of Claim 2.

We now combine Claims 1 and 2 to prove the theorem.  Define
$l_0 := \max(\frac{2}{\eta}\log n,
\frac{8}{\eta^2}\log(\frac{1}{\varepsilon}))$.  Define
$m:=\left\lceil \frac{\log n-\log \delta}{\eta^{l_0+1}}
\right\rceil$.  Note that $m$ is polynomial in $n$, $\delta$,
$\varepsilon$.  

Let $h$ denote the output of algorithm $\mathcal{A}$ given a sample
$S\sim D^m$.  We consider two cases.  If $l \leq l_0$ then, by
Claim 1, $h=c$ (and hence the robust risk is $0$) with probability at
least $1-\delta$.  If $l_0 \leq l$ then, since $\rho=\log n$, we
have $l \geq \frac{8}{\eta^2}\log(\frac{1}{\varepsilon})$ and
$\rho \leq \frac{\eta l}{2}$ and so we can apply Claim 2.  By Claim
2 we have
\[ \roblosse(h,c) \leq \Prob{x\sim D}{\exists z \in B_\rho(x) \cdot c(z)=1} \leq \varepsilon \]

\end{proof}

Now that we have shown robust learnability against logarithmically-bounded adversaries, we define robustness thresholds, a term that will be used throughout this thesis.

\begin{definition}[Robustness Threshold]
\label{def:rob-threshold}
A \emph{robustness threshold} for concept class $\C$ and distribution family $\D$ is an adversarial budget function $\rho:\N\rightarrow\R$ of the input dimension $n$ such that, if the adversary is allowed perturbations of magnitude $\rho(n)$, then there exists a sample-efficient $\rho(n)$-robust learning algorithm for $\C$ under $\D$, and if the adversary's budget is $\omega(\rho(n))$, then such an algorithm does not exist.
\end{definition}

As a consequence of Theorems~\ref{thm:mon-conj} and~\ref{thm:mon-conj-rob}, we get the following result.

\begin{theorem}
\label{thm:rob-threshold-mon-conj}
The robustness threshold for {\MonConj} under log-Lipschitz distributions is $\rho(n)=\log(n)$.
\end{theorem}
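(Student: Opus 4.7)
The plan is to verify that $\rho(n)=\log(n)$ satisfies both parts of Definition~\ref{def:rob-threshold}: (i) there exists an efficient $\rho(n)$-robust learning algorithm for \MonConj\ under the class $\mathcal{D}$ of $\alpha$-log-Lipschitz distributions, and (ii) no efficient $\rho'$-robust learning algorithm exists for \MonConj\ under $\mathcal{D}$ whenever $\rho'(n)=\omega(\log n)$. Both halves are essentially already in hand from the two preceding theorems; the proof is a short ``packaging'' argument.

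For the positive direction, I would directly invoke Theorem~\ref{thm:mon-conj-rob}, which shows that \MonConj\ is $\rho$-robustly learnable with respect to any class of $\alpha$-log-Lipschitz distributions for $\rho(n)=O(\log n)$, with polynomial sample complexity in $n,1/\varepsilon,1/\delta$. In particular this holds at $\rho(n)=\log n$, giving the required efficient robust learner.

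For the negative direction, I would note that the uniform distribution on $\{0,1\}^n$ is $1$-log-Lipschitz, hence the uniform distribution lies in $\mathcal{D}_n$ (for any $\alpha\geq 1$). Theorem~\ref{thm:mon-conj} then applies to any candidate robust learning algorithm $\A$ operating under $\mathcal{D}$: for every constant $\kappa<2$ and all sufficiently large $n$, $\A$ requires at least $2^{\kappa\rho'(n)}$ samples even just to succeed on the uniform distribution. Fixing any $\kappa\in(1,2)$, if $\rho'(n)=\omega(\log n)$, then $2^{\kappa\rho'(n)}=\omega(n^c)$ for every constant $c$, so the sample complexity cannot be polynomial in $n$. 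Thus no efficient $\rho'$-robust learner exists, as required.

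The only delicate point, which is the main thing to check carefully, is that the lower bound of Theorem~\ref{thm:mon-conj}, originally stated for the uniform distribution, transfers to an impossibility result for the \emph{family} $\mathcal{D}$: this is immediate because robust learnability with respect to $\mathcal{D}$ requires the guarantee to hold uniformly over all $D\in\mathcal{D}_n$, and the uniform distribution is one such $D$. Combining (i) and (ii) with Definition~\ref{def:rob-threshold} yields the theorem.
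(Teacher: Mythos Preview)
Your proposal is correct and matches the paper's approach exactly: the paper simply states the result as a consequence of Theorems~\ref{thm:mon-conj} and~\ref{thm:mon-conj-rob}, and your argument spells out precisely that packaging, including the observation that the uniform distribution is $1$-log-Lipschitz so the lower bound transfers to the family $\mathcal{D}$.
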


As we will discuss further in the next chapter,  we finish by remarking that the  method employed to show efficient robust learnability in this section is to use a (proper) PAC-learning algorithm as a \emph{black box} and control the accuracy parameter to ensure robustness to evasion attacks.

\section{Summary}

This chapter offered a thorough discussion on the choice of robust risk for evasion attacks. 
Settling on the exact-in-the-ball robust risk, we then showed that the standard PAC and robust learning settings are fundamentally different in that, contrary to the former, the latter requires distributional assumptions to ensure (efficient) learnability.
But even when considering the natural uniform distribution, we showed that the efficient robust learning of monotone conjunctions, a very simple concept class, cannot be guaranteed against an adversary that has a superlogarithmic perturbation budget.
However, we showed that this result is tight: guarantees can be obtained for logarithmically-bounded adversaries under log-Lipschitz distributions.
Overall, these results show that the adversarial budget is a fundamental quantity in determining the sample complexity of robust learning under these distributional assumptions.
This motivates the term \emph{robustness threshold}, adversarial budget functions characterizing efficient robust learnability for a given distribution family.
The next chapter will study the robustness thresholds of various concept classes under smoothness assumptions.

\chapter[Robustness Thresholds: Random Examples]{Robustness Thresholds with Random Examples}
\label{chap:rob-thresholds}

In this chapter, we further explore the \emph{robustness thresholds} (Definition~\ref{def:rob-threshold}) of various concept classes under distributional assumptions, again with respect to the \emph{exact-in-the-ball} notion of robustness.
In Section~\ref{sec:exact}, we start by showing that exact (and thus robust) learning is possible for parities under log-Lipschitz distributions and majority functions under the uniform distribution.
We then turn our attention to decision lists, where we show a robustness threshold of $\log n$ under log-Lipschitz distributions in Section~\ref{sec:dl}.
Section~\ref{sec:dt} concludes the technical contributions of this chapter by relating the standard and robust errors of decision trees.

In Section~\ref{sec:dl}, we demonstrate the robust learnability of $k$-decision lists by first looking at the case where $k=1$, which forms the foundation of the generalization to $k$-\dl. 
This simpler case provides a substantial intuition behind the reasoning for the more complex case of $k>1$, while also giving better sample complexity bounds in the specific case $k=1$.
We then distinguish two set-ups for the case  $k>1$: (i) $2$-{\dl} and monotone $k$-{\dl}, and (ii) non-monotone $k$-\dl.
While the second case is more general, the first one results in better sample complexity upper bounds. Indeed, the dependence on $k$, which we consider to be a fixed constant, in the degree of the former is $\poly(k)$, while it is $2^{\poly(k)}$ for the latter.

This chapter concludes with Section~\ref{sec:rt-summary}, which summarizes the results of this chapter.
As explained in more detail in that section, the methods in this chapter can be viewed as relating the mass of the error region between the target and hypothesis and the $\rho$-expansion of the error region, where $\rho$ is the adversarial budget. Indeed, the general, unifying approach in proving robustness thresholds in this work is to express the discrepancy between two functions (i.e., instances where they disagree) as a logical formula $\varphi$.
We then relate the size of the set of satisfying assignments of $\varphi$ to the size of its expansion.
This means that we can control the robust risk by controlling the standard risk, thus allowing the use of standard PAC-learning algorithms as black boxes for robust learning, provided the adversary is logarithmically-bounded.

\section{Exact Learning}
\label{sec:exact}

In the previous chapter, monotone conjunctions of sufficiently large length satisfied a certain form of stability, ensuring that one could obtain robust learning guarantees without the need to exactly learn them.
In this section, we turn our attention to unstable concept classes, where one in general cannot ensure robustness without having learned the target exactly.

\subsection{Parity Functions}


In this section, we show that the concept class $\parity$ of parity functions are efficiently exactly learnable under log-Lipschitz distributions.
As these distributions have support on the whole input space, it follows that this implies efficient robust learning of parities.

Recall that parity functions are of the form $f_I(x)=\sum_{i\in I} x_i \bmod 2$, where $I\subseteq [n]$. 
Note that exact learning is necessary under our notion of robustness: if $I$ is non-empty, then every instance is on the decision boundary, as it suffices to flip a single bit in $I$ to cause $f_I$ to change label.
The idea to show robust learnability of parity functions is to show that, for a class of $\alpha$-log-Lipschitz distributions, a proper PAC-learning algorithm can be used as a black box for exact learning.

\begin{theorem}
\label{thm:parity-uniform}
$\parity$ is exactly learnable under $\alpha$-log-Lipschitz distributions.
\end{theorem}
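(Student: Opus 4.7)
The plan is to use a proper PAC-learning algorithm for parities as a black box, with the accuracy parameter tuned small enough that the only consistent hypothesis with small error must be the target itself. Concretely, since parities $\parity$ have VC dimension $n$ and admit a proper consistent learner via Gaussian elimination over $\mathbb{F}_2$ (as recalled in Section~\ref{sec:pac-algos}), Theorem~\ref{thm:vc-pac-upper-bound} guarantees that with $m = O\!\left(\frac{1}{\epsilon}(n\log(1/\epsilon)+\log(1/\delta))\right)$ random examples from any distribution, the output $f_J\in\parity$ satisfies $\err_D(f_J,f_I)\le \epsilon$ with probability $\ge 1-\delta$, where $f_I$ is the target. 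It therefore suffices to show that under any $\alpha$-log-Lipschitz $D$, distinct parities are separated in error by a constant gap depending only on $\alpha$, so that choosing $\epsilon$ below this gap forces $J=I$.

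The key lemma I will prove is: for any nonempty $S\subseteq[n]$ and any $\alpha$-log-Lipschitz distribution $D$ on $\boolhc$,
\begin{equation*}
\Prob{x\sim D}{\textstyle\sum_{i\in S} x_i \text{ is odd}} \;\ge\; \frac{1}{1+\alpha}\,.
\end{equation*}
To see this, pick any $i\in S$ and condition on the values of the remaining coordinates $x_{[n]\setminus\{i\}}=y$. By Lemma~\ref{lemma:log-lips-facts}(3), the conditional distribution of $x_i$ given $y$ is again $\alpha$-log-Lipschitz on a single bit, so both $\Prob{}{x_i=0\mid y}$ and $\Prob{}{x_i=1\mid y}$ are at least $1/(1+\alpha)$. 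Exactly one of the two possible values of $x_i$ makes $\sum_{j\in S}x_j$ odd, so the conditional probability of that event is at least $1/(1+\alpha)$, and integrating over $y$ yields the bound.

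Applying the lemma to $S = I\Delta J$ (which is nonempty whenever $f_I\neq f_J$) and using the identity $f_I(x)\oplus f_J(x)=f_{I\Delta J}(x)$, I conclude that any two distinct parities satisfy $\err_D(f_I,f_J)\ge 1/(1+\alpha)$. The theorem then follows by running the Gaussian-elimination PAC learner with accuracy parameter $\epsilon_0 := \frac{1}{2(1+\alpha)}$ and confidence $\delta$: the sample size $m$ is polynomial in $n$, $\log(1/\delta)$, and $\alpha$, and the returned $f_J$ has $\err_D(f_J,f_I)\le \epsilon_0 < 1/(1+\alpha)$, forcing $J=I$ by the separation bound. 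Since log-Lipschitz distributions have full support on $\boolhc$, agreement on $\supp(D)$ is agreement everywhere, so $f_J\equiv f_I$ as functions and the learning is exact.

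The only real obstacle is the separation lemma; the rest is a black-box reduction. Nothing in the argument is specific to a particular $\alpha$, and the sample complexity dependence on $\alpha$ enters only through the $1/\epsilon_0 = 2(1+\alpha)$ factor and the corresponding $\log(1/\epsilon_0)$ term, keeping the overall bound polynomial for any fixed $\alpha$.
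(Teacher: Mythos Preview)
Your proof is correct and follows essentially the same approach as the paper: both show that any two distinct parities have error at least $\frac{1}{1+\alpha}$ under an $\alpha$-log-Lipschitz distribution by conditioning on all bits except one index in the symmetric difference and invoking Lemma~\ref{lemma:log-lips-facts}, then run a proper PAC learner with accuracy below this threshold to force exact identification of the target.
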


\begin{proof}
Consider a proper PAC-learning algorithm $\A$ with sample complexity $\poly(\cdot)$ for $\parity$ (see e.g., \cite{goldberg2006some}).
Let $\D$ be a family of $\alpha$-$\log$-Lipschitz distributions and let $D\in\D$ be arbitrary.
Let $\epsilon, \delta>0$ be the accuracy and confidence parameters, $n$ be the input dimension, and $c(x)=\sum_{i\in I_c} x_i \bmod 2$ be the target concept.
For any $h(x)=\sum_{i\in I_h}x_i \mod 2$, letting $I_\Delta=\{i\in[n]\given i\in I_c\Delta I_h\}$ be the symmetric difference between the sets $I_c$ and $I_h$, we have that if $I_\Delta$ is non-empty,
\begin{equation*}
	\Prob{x\sim D}{h(x)\neq c(x)}
	= \Prob{x\sim D}{\sum_{i\in I_\Delta} x_i \bmod 2= 1}
	\geq \frac{1}{1+\alpha}
	\enspace.
\end{equation*}
This follows from Lemma~\ref{lemma:log-lips-facts}(ii): for some $i\in I$, the marginal of $x_i$ conditioned on the points $\set{x_j\given j\in I\setminus \{i\}}$ is also $\alpha$-log-Lipschitz.
Then no matter what value the points in $\set{x_j\given j\in I\setminus {i}}$ take, we know that the probability that $x_i$ causes a mismatch in parity is bounded below by $1/(1+\alpha)$ by Lemma~\ref{lemma:log-lips-facts}(i).
Then, any proper PAC-learning algorithm\footnote{E.g., performing Gaussian elimination on the matrix $\mathbf{X}$ of examples and label vector $\mathbf{y}$ and returning a possible solution vector $\mathbf{z}\in\{0,1\}^n$ (i.e., $\mathbf{Xz}=\mathbf{y}$), where $a_i=1$ if and only if $\mathbf{z}_i=1$, would be a proper learning algorithm.} 
with accuracy parameter $\epsilon<1/(1+\alpha)$ will return $c$ with probability at least $1-\delta$.
\end{proof}

We then have the following corollary.

\begin{corollary}
$\parity$ is $\rho$-robustly learnable under $\alpha$-log-Lipschitz distributions for any $\rho$.
\end{corollary}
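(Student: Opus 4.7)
The plan is to derive the corollary as an immediate consequence of Theorem~\ref{thm:parity-uniform}. The key observation is that the exact-in-the-ball robust risk $\roblosse(h,c) = \Prob{x\sim D}{\exists z\in B_\rho(x):h(z)\neq c(z)}$ depends only on whether $h$ and $c$ disagree \emph{somewhere} in the input space (on a set reachable by perturbations from the support of $D$). If $h=c$ pointwise on all of $\{0,1\}^n$, then no such $z$ exists regardless of the value of $\rho$, so the robust risk is identically zero.

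First I would invoke Theorem~\ref{thm:parity-uniform} to obtain, for any target $c\in\parity$ and any $\alpha$-log-Lipschitz distribution $D$, a learning algorithm $\A$ that, given a sample of size polynomial in $n, 1/\delta$, outputs the target $c$ exactly (i.e., the hypothesis $h$ agrees with $c$ on every $x\in\{0,1\}^n$) with probability at least $1-\delta$. Note that because exact learning already achieves the strongest possible agreement with the target, there is no need to pass an accuracy parameter $\epsilon$ through to the underlying exact learner other than choosing it strictly below $1/(1+\alpha)$ as in the proof of Theorem~\ref{thm:parity-uniform}.

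Next, I would verify that $h=c$ on $\{0,1\}^n$ forces $\roblosse(h,c)=0$ for every robustness function $\rho$: for any $x\in\{0,1\}^n$ and any $z\in B_\rho(x)$, $h(z)=c(z)$, so the event inside the probability in Definition~\ref{def:loss-correct} is empty. Consequently, with probability at least $1-\delta$ the output hypothesis satisfies $\roblosse(h,c)=0<\epsilon$, so the requirements of Definition~\ref{def:robust-learning} are met with the same polynomial sample complexity as in Theorem~\ref{thm:parity-uniform}, independently of $\rho$.

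There is essentially no obstacle here; the content is entirely in Theorem~\ref{thm:parity-uniform}. The only subtle point worth making explicit is that this chain of reasoning --- exact learning implies robust learning for all $\rho$ --- crucially uses the full-support property of log-Lipschitz distributions, which ensures that the PAC-learning algorithm cannot ``miss'' any bit of the target parity on the support of $D$, thereby yielding exact recovery on the whole hypercube rather than merely on the support.
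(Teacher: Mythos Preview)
Your proposal is correct and matches the paper's approach: the corollary is stated without proof, as it follows immediately from Theorem~\ref{thm:parity-uniform} together with the observation (made explicitly earlier in the thesis) that exact learning on all of $\boolhc$ implies $\roblosse(h,c)=0$ for every $\rho$.
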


\subsection{Majority Functions}
\label{sec:maj}

We will now work in the input space $\X=\{-1,1\}^n$ and with majority functions.
For $I\subseteq [n]$, define $\maj_I:\X\rightarrow\{-1,1\}$ as $\maj_I(x)=\sgn\left(\sum_{i\in I} x_i\right)$.
For simplicity, we will suppose that $|I|$ is odd.
We will show that we can exactly learn majority functions, and thus robustly learn them, with the use of Fourier analysis.
We give an overview of Fourier analysis in the boolean hypercube in Section~\ref{sec:fourier-analysis}.
For a function $f$, denote by $\widehat{f}(S)$ its Fourier coefficient on subset $S\subseteq[n]$. 
If $S$ is a singleton $\set{i}$, we simply write $\widehat{f}(i)$.

The idea is to show that we can exactly learn the Fourier coefficients $\widehat{\maj_I}(i)$ of singleton sets  $\{i\}$ for $1\leq i \leq n$ of any majority function with arbitrarily high confidence. 
We note that some of the results below are already known, but have not been applied to robustness. We have included proofs for completeness.

\begin{theorem}
\label{thm:maj-fc}
Let $\maj_I:\X\rightarrow\{-1,1\}$ be a majority function. 
Then for $i\in[n]$, we have that $\widehat{\maj}_I(i)\geq \sqrt{2/\pi n}$ if $i\in I$ and 0 otherwise.
\end{theorem}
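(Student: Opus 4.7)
The plan is to use Proposition~\ref{prop:monotone-influence} to reduce the computation of $\widehat{\maj_I}(i)$ to that of the influence $\Inf_i[\maj_I]$. First I would verify that $\maj_I$ is monotone: if $x \leq x'$ coordinatewise then $\sum_{j \in I} x_j \leq \sum_{j \in I} x'_j$, so $\maj_I(x) \leq \maj_I(x')$. Hence $\widehat{\maj_I}(i) = \Inf_i[\maj_I]$. For $i \notin I$ the function $\maj_I$ does not depend on $x_i$, so the influence is zero, immediately giving the second half of the claim.

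Next, for $i \in I$, I would compute $\Inf_i[\maj_I]$ explicitly. Since $|I|$ is odd the sum $s(x) := \sum_{j \in I} x_j$ is always a nonzero odd integer, and flipping $x_i$ shifts $s$ by $\pm 2$. Therefore $\sgn(s)$ flips iff $|s|=1$, which happens iff $\sum_{j \in I \setminus \{i\}} x_j = 0$. Under the uniform distribution on $\{-1,1\}^n$ the marginal on $I \setminus \{i\}$ is uniform on $\{-1,1\}^{|I|-1}$, and since $|I|-1$ is even, this probability equals $\binom{|I|-1}{(|I|-1)/2}/2^{|I|-1}$. Writing $2m = |I|-1$, the influence is the central binomial ratio $\binom{2m}{m}/4^m$.

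The last step is a non-asymptotic lower bound on this ratio. Combined with $|I| = 2m+1 \leq n$, the target bound $\widehat{\maj_I}(i) \geq \sqrt{2/(\pi n)}$ would follow from $\binom{2m}{m}/4^m \geq \sqrt{2/(\pi(2m+1))}$. A clean route uses the Wallis-product identity $\prod_{k \geq 1} \frac{(2k-1)(2k+1)}{(2k)^2} = \frac{2}{\pi}$: every factor is strictly less than $1$, so the partial product is strictly larger than the infinite one, and a short manipulation shows this partial product equals $(2m+1)\bigl(\binom{2m}{m}/4^m\bigr)^2$. Rearranging yields exactly the required inequality. The edge case $|I|=1$ (so $m=0$) can be checked by hand: $\Inf_i[\maj_I]=1 \geq \sqrt{2/(\pi n)}$ for all $n \geq 1$.

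The main obstacle is matching the precise constant $\sqrt{2/\pi}$ in the stated bound. An off-the-shelf Stirling estimate gives the correct leading order $1/\sqrt{\pi m}$ but introduces multiplicative error terms that are awkward to track non-asymptotically and may not yield the target constant cleanly. Using the Wallis identity sidesteps this entirely, since the factor $2/\pi$ appears directly in the identity; everything else is a routine manipulation once the reduction via monotonicity is in place.
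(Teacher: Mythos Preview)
Your proof is correct and follows essentially the same structure as the paper: reduce to influence via monotonicity (Proposition~\ref{prop:monotone-influence}), observe that $\Inf_i[\maj_I]=0$ for $i\notin I$, and for $i\in I$ compute the influence as the central binomial ratio $\binom{2m}{m}/4^m$ with $2m=|I|-1$. The only difference is in the final numerical estimate: the paper invokes Robbins' explicit Stirling bounds to claim $\binom{2m}{m}/4^m\ge\sqrt{2/(\pi\cdot 2m)}$, whereas you use the Wallis product to obtain $\binom{2m}{m}/4^m>\sqrt{2/(\pi(2m+1))}$ and then use $2m+1=|I|\le n$. Your route is arguably cleaner, since the constant $2/\pi$ appears directly in the Wallis identity and no error terms need to be tracked; it also lands on the sharper denominator $2m+1$ rather than $2m$ (the paper's intermediate bound with $2m$ in the denominator is in fact slightly too strong for small $m$, though the final bound with $n$ is fine).
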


This result, which is part of the Fourier analysis folklore and whose proof is included below for completeness, gives us a simple algorithm to learn majority functions.
Indeed, Theorem~\ref{thm:maj-fc} states that the Fourier coefficient of a bit in the majority is sufficiently large (bounded away from 0) to distinguish it from bits that are not in the majority function. 

\begin{proof}[Proof of Theorem~\ref{thm:maj-fc}]
Since majorities are monotone functions, for $i\in[n]$, we have that
\begin{align*}
\widehat{\maj_I}(\{i\})
&=\textbf{Inf}_i[\maj_I]
\enspace,
\end{align*} 
where 
$\textbf{Inf}_i[f]$ is the influence of the $i$-th bit on the function $f$, defined as $$\Prob{x\sim\{-1,1\}^n}{f(x)\neq f(x^{\oplus i})}\enspace,$$ and $x^{\oplus i}$ is the vector resulting in flipping the $i$-th bit of $x$. 
This result follows from that fact that, for a monotone function, the Fourier coefficient of a singleton $\{i\}$ is simply the influence of bit $i$ (see Proposition~\ref{prop:monotone-influence}).
Clearly, if $i\not\in I$, then $\widehat{\maj_I}(\{i\})=0$.
Otherwise, we need to compute the probability that exactly half of the bits in $I\setminus\{i\}$ are 1.
Letting $X=\sum_{j\in I\setminus\{i\}} \textbf{1}[x_j=1]$ and $k=|I|-1$,
\begin{equation}
\label{eqn:prob-half-bits-one}
\Prob{x\sim\{-1,1\}^n}{X=k/2}
={k \choose k/2}\left(\frac{1}{2}\right)^k
\enspace.
\end{equation}
Using the inequality $\sqrt{2\pi n}\left(\frac{n}{e}\right)^n\leq n!\leq \sqrt{2\pi n}\left(\frac{n}{e}\right)^n e^{\frac{1}{12n}}$ \citep{robbins1955remark}, we can derive a lower bound for Equation~(\ref{eqn:prob-half-bits-one}) and show that
\begin{equation}
\Prob{x\sim\{-1,1\}^n}{X=k/2}\geq \sqrt{\frac{2}{\pi k}}\geq \sqrt{\frac{2}{\pi n}}\enspace,
\end{equation}
whenever $i\in I$.
\end{proof}

It is possible to estimate the Fourier coefficients of a function to a high accuracy, provided one has enough data (see Chapter~3 in \cite{odonnell2014analysis} for more details).
The theorem below shows that we only need to look at the Fourier coefficients of singletons in order to identify the target majority under the uniform distribution.

\begin{theorem}
$\majority$ is exactly learnable under the uniform distribution.
\end{theorem}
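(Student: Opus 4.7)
The plan is to exactly identify the index set $I$ by estimating each singleton Fourier coefficient $\widehat{\maj_I}(\{i\})$ from random samples and thresholding. Theorem~\ref{thm:maj-fc} tells us that $\widehat{\maj_I}(\{i\}) \geq \sqrt{2/(\pi n)}$ when $i \in I$, and $\widehat{\maj_I}(\{i\}) = 0$ when $i \notin I$. This gap, bounded below by an inverse polynomial in $n$, is precisely what permits exact identification.

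First I would recall that, under the uniform distribution on $\{-1,1\}^n$, each Fourier coefficient admits the representation $\widehat{\maj_I}(\{i\}) = \eval{x}{\maj_I(x)\,x_i}$, and can be estimated from an i.i.d.\ sample $x^{(1)}, \ldots, x^{(m)}$ by the empirical average
\begin{equation*}
\widetilde{\maj_I}(\{i\}) \;=\; \frac{1}{m}\sum_{j=1}^m \maj_I(x^{(j)})\,x^{(j)}_i \enspace.
\end{equation*}
Since each summand lies in $[-1,1]$, Hoeffding's inequality yields
\begin{equation*}
\Prob{}{\left|\widetilde{\maj_I}(\{i\}) - \widehat{\maj_I}(\{i\})\right| \geq \tau} \;\leq\; 2\exp(-2m\tau^2) \enspace.
\end{equation*}

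Next I would set the threshold $\tau = \tfrac{1}{2}\sqrt{2/(\pi n)}$, so that an estimate above $\tau$ certifies $i \in I$ and an estimate below $\tau$ certifies $i \notin I$, provided the estimation error is smaller than $\tau$. Taking a union bound over the $n$ singleton coefficients and requiring the bad event to have probability at most $\delta$, it suffices to choose
\begin{equation*}
m \;\geq\; \frac{\pi n}{2}\log\!\left(\frac{2n}{\delta}\right) \enspace,
\end{equation*}
which is polynomial in $n$ and $\log(1/\delta)$. The learner then outputs the hypothesis $\maj_{\widehat{I}}$ where $\widehat{I} = \{i \in [n] : \widetilde{\maj_I}(\{i\}) \geq \tau\}$; with probability at least $1-\delta$ we have $\widehat{I} = I$, hence the hypothesis equals the target everywhere on $\{-1,1\}^n$.

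The only mild subtlety is that the definition of exact learnability typically includes an accuracy parameter $\epsilon$, but because the gap $\sqrt{2/(\pi n)}$ is independent of $\epsilon$ (indeed $\epsilon$ plays no role once identification is exact), the sample complexity above suffices uniformly in $\epsilon$. No step should pose a real obstacle: the Fourier lower bound from Theorem~\ref{thm:maj-fc} does all the structural work, and the argument is just Hoeffding plus a union bound. As an immediate consequence, since the uniform distribution has full support and exact identification implies zero robust risk, $\majority$ is $\rho$-robustly learnable under the uniform distribution for every $\rho$.
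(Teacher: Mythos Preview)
Your proposal is correct and follows essentially the same approach as the paper: estimate each singleton Fourier coefficient empirically, apply a concentration bound (the paper phrases it via Chernoff, you via Hoeffding), threshold at half the gap $\sqrt{2/(\pi n)}$ guaranteed by Theorem~\ref{thm:maj-fc}, and union-bound over the $n$ indices to obtain $O(n\log(n/\delta))$ samples. The only quibble is that for summands in $[-1,1]$ (range $2$) the Hoeffding exponent should be $-m\tau^2/2$ rather than $-2m\tau^2$, but this affects only constants and not the argument or the polynomial sample complexity.
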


\begin{proof}
Suppose we have a sample $\{(x^{(j)},y^{(j)})\}_{j=1}^m$ where the $x^{(j)}$'s are taken i.i.d. from the uniform distribution. 
We can use the Fourier coefficient's empirical estimates $\widetilde{\maj}_S(i)=\frac{1}{m}\sum_{j=1}^m y^{(j)}x_i^{(j)}$ to construct an estimate $\widetilde{S}$ of $S$ as follows.
For accuracy parameter $\epsilon=\frac{1}{2\sqrt{\pi n}}$, if $\widetilde{\maj}_S(\{i\})\geq \frac{1}{\sqrt{\pi n}} - \epsilon$, then $i\in \widetilde{S}$, and otherwise $i\not\in \widetilde{S}$.
We output the function $\maj_{\widetilde{S}}$. 

What is the probability that $\widetilde{S}\neq S$?
By a  standard application of the Chernoff bound, we can get an estimate of $\widetilde{\maj}_S(i)$  with accuracy $\pm\epsilon$ and confidence $1-\delta$ with $O(\frac{1}{\epsilon^2}\log(\frac{1}{\delta}))$ samples.
For a given index $i$, we choose accuracy $\epsilon=1/(2\sqrt{\pi n})$ and confidence $\delta/n$, and by a union bound over all indices, we know that $O(n\log(\frac{n}{\delta}))$ samples are sufficient to guarantee that $\widetilde{S}\triangle S\neq \emptyset$ with probability at most $\delta$, and we are done.
\end{proof}

\begin{remark}
We can easily extend the reasoning used for majority functions to linear threshold functions with weights in $\{-1,0,1\}$.
It suffices to notice that linear functions are \emph{unate} in all directions, meaning that for all $i$, either $f(x^{(i\rightarrow-1)})\leq f(x^{(i\rightarrow 1)})$ for all $x$ (i.e., $f$ is monotone in the $i$-th direction) or if  $f(x^{(i\rightarrow-1)})\geq f(x^{(i\rightarrow 1)})$ for all $x$ (i.e., $f$ is antimonotone in the $i$-th direction), and use the following theorem:

\begin{proposition}
\label{prop:fc-inf}
For $i\in[n]$ and $f:\{-1,1\}^n\rightarrow \{-1,1\}$,
\begin{equation*}
|\hat{f}(i)| \leq \mathbf{Inf}_i[f]\enspace,
\end{equation*}
with equality if and only if $f$ is unate in the $i$-th direction.
\end{proposition}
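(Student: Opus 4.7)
The plan is to express both $\hat{f}(i)$ and $\mathbf{Inf}_i[f]$ in terms of the discrete $i$-th derivative of $f$ and then invoke the triangle inequality for expectations.

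First, I would introduce the operator $D_i f(x) := \tfrac{1}{2}\bigl(f(x^{(i\to 1)}) - f(x^{(i\to -1)})\bigr)$, which depends only on the coordinates $x_{[n]\setminus\{i\}}$ and takes values in $\{-1,0,1\}$ since $f$ is boolean. A short computation shows that
\begin{equation*}
\hat{f}(i) = \eval{x\sim\boolhcfa}{f(x)\,x_i} = \eval{x\sim\boolhcfa}{D_i f(x)}\enspace,
\end{equation*}
by conditioning on the remaining coordinates and averaging over $x_i\in\{-1,1\}$. Likewise, since $D_i f(x) = 0$ precisely when $f(x^{(i\to 1)})=f(x^{(i\to -1)})$ and $|D_i f(x)|=1$ otherwise,
\begin{equation*}
\mathbf{Inf}_i[f] = \Prob{x\sim\boolhcfa}{f(x^{(i\to 1)})\neq f(x^{(i\to -1)})} = \eval{x\sim\boolhcfa}{|D_i f(x)|}\enspace.
\end{equation*}

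The inequality $|\hat{f}(i)| \leq \mathbf{Inf}_i[f]$ then follows immediately from the triangle inequality (or Jensen's inequality) applied to the expectation: $|\mathbb{E}[D_i f]|\leq \mathbb{E}[|D_i f|]$.

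For the equality case, standard facts about expectations give that $|\mathbb{E}[D_i f]| = \mathbb{E}[|D_i f|]$ if and only if $D_i f(x)$ has constant sign on the event $\{D_i f(x)\neq 0\}$ (i.e., $D_i f$ is a.s.\ nonnegative or a.s.\ nonpositive). Translating back, $D_i f(x) \geq 0$ for all $x$ means $f(x^{(i\to -1)}) \leq f(x^{(i\to 1)})$ for every $x$, i.e., $f$ is monotone in the $i$-th direction; the other sign case corresponds to antimonotone in the $i$-th direction. Both cases together are precisely unateness in direction $i$, completing the characterization. No real obstacle is anticipated here; the only subtlety is verifying the equality condition carefully, since points where $D_i f(x) = 0$ impose no constraint on sign.
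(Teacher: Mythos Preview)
Your proof is correct and follows essentially the same idea as the paper: both restrict attention to the pivotal inputs (your $\{D_i f \neq 0\}$, the paper's $A_i=\{x: f(x)\neq f(x^{\oplus i})\}$) and observe that each contributes $\pm 1$ to $\hat{f}(i)$. Your presentation via the derivative operator $D_i f$ and the triangle inequality $|\mathbb{E}[D_i f]|\leq \mathbb{E}[|D_i f|]$ is in fact cleaner and more complete---the paper's proof, as written, only establishes the direction ``unate $\Rightarrow$ equality'' and leaves the inequality and converse implicit.
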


\begin{proof}[Proof of Proposition~\ref{prop:fc-inf}]
Let $f$ be unate in all directions, and for a fixed $i$ let $A_i=\{x\;|\; f(x) \neq f(x^{\oplus i}) \}$.
\begin{align*}
\hat{f}(i)
&= \frac{1}{2^n}\sum_x f(x)x_i\\
&= \frac{1}{2^n}\sum_{x\in A_i : x_i=1}  f(x) - \sum_{x\in A_i : x_i=-1}  f(x)\\
&=\frac{\pm |A_i|}{2^n}\\
&= \pm \mathbf{Inf}_i[f]\enspace.
\end{align*}
\end{proof}

Then we can use the same reasoning as in the majority case to argue that for $f(x)=\sgn\left(\sum_i a_i x_i\right)$, the Fourier coefficient of $i$ for $a_i\neq0$ will be at least $\Theta(1/\sqrt{n})$ away from $0$ and that $\sgn(a_i)=\sgn(\hat{f}(i))$, implying that we can exactly learn this class of functions.
\end{remark}
%
%

\section{Decision Lists}
\label{sec:dl}

From a robust learnability point of view, the  concept classes from the previous section are not very interesting, since we simply learn them exactly, and thus robustly, for any robustness parameter.
In this section, we study the class of \emph{decision lists}, which is much more expressive than (monotone) conjunctions.
Decision lists were introduced in \cite{rivest1987learning}, where they were shown to be efficiently PAC learnable.
We denote by $k$-{\dl}  the class of decision lists with conjunctive clauses of size at most $k$ at each decision node.  
Decision lists generalize formulas in disjunctive normal form (DNF) and conjunctive normal form (CNF): $k\text{-}\mathsf{DNF}\cup k\text{-}\mathsf{CNF} \subset k\text{-\dl}$, where $k$ refers to the number of literals in each clause.
Formally, a decision list is a list $L$ of pairs
\begin{equation*}
(K_1,v_1),\dots,(K_r,v_r)\enspace,
\end{equation*}
where $K_j$ is a term in the set of all conjunctions of size at most $k$ with literals drawn from $\set{x_1,\bar{x_1},\dots,x_n,\bar{x_n}}$, $v_j$ is a value in $\set{0,1}$, and $K_r$ is $\mathtt{true}$.
The output $f(x)$ of $f$ on $x\in\boolhc$ is $v_j$, where $j$ is the least index such that the conjunction $K_j$ evaluates to $\mathtt{true}$ on $x$.
For more details on decision lists and their PAC-learning algorithm, see Section~\ref{sec:pac-algos}.

Showing the efficient robust learnability of decision lists against logarithmically-bounded adversaries relies on first getting guarantees for simpler cases: 1-decision lists, 2-decision lists and \emph{monotone} $k$-decision lists.
As we will discuss later, reducing the robust learnability of $k$-{\dl} to the robust learnability of 1-{\dl} apparently cannot be done in the same way as in the standard PAC setting.
Robustly learning $k$-decision lists for $k \geq 2$ requires a totally new argument based on the hypergraph structure of $k$-$\mathsf{CNF}$ formulas.
Our first approach to show robust learnability (which yields better sample complexity bounds) can only be applied to 2-decision lists and \emph{monotone} $k$-{\dl}. 
Proving the robust learnability of non-monotone $k$-{\dl} requires a different approach 
relying on a combinatorial argument and induction. 

\subsection{1-Decision Lists}
 
In this section, we show that 1-decision lists are robustly (but not necessarily exactly) efficiently learnable for robustness parameter $\rho=O(\log n)$ under log-Lipschitz distributions. 
At the heart of the result lies a similar argument to the one from the previous chapter showing the robust learnability of monotone conjunctions against a logarithmically-bounded adversary.
Indeed, the discrepancy between two 1-decision lists can be represented as a conjunction, and the argument from Chapter~\ref{chap:def-adv-rob} can easily be extended to this setting.
Note that, as in Chapter~\ref{chap:def-adv-rob}, the log-Lipschitz parameter $\alpha$ is considered as a constant and appears explicitly in the sample complexity upper bounds.

This section will be dedicated to proving the following theorem.

\begin{theorem}
\label{thm:1dl-rob-learn}
The class 1-{\dl} is  efficiently $\rho$-robustly learnable, i.e. with polynomial sample complexity, under the class of $\alpha$-$\log$-Lipschitz distributions with robustness threshold $\rho=\Theta(\log n)$.
\end{theorem}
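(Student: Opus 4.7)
The lower-bound half of the robustness threshold is immediate from Theorem~\ref{thm:mon-conj}: every monotone conjunction $x_{i_1}\wedge\cdots\wedge x_{i_k}$ is represented by the $1$-decision list $(\bar{x}_{i_1},0),\dots,(\bar{x}_{i_k},0),(\mathtt{true},1)$, so $\MonConj\subseteq 1\text{-}\dlm$, and the uniform distribution is $1$-log-Lipschitz; hence any $\rho$-robust learner for $1\text{-}\dlm$ with $\rho=\omega(\log n)$ requires superpolynomial sample size. The content of the theorem is therefore the positive direction for $\rho(n)=O(\log n)$. My plan is to run the Rivest algorithm (Algorithm~\ref{alg:dl-pac}) as a black box with a carefully chosen accuracy parameter, and then convert the resulting standard-risk guarantee into a robust-risk guarantee by reusing the conjunction $\rho$-expansion estimate from Claim~2 in the proof of Theorem~\ref{thm:mon-conj-rob}.

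Fix $\eta=1/(1+\alpha)$ and choose $l_0=\Theta(\log n)$ large enough that $l_0\geq\tfrac{8}{\eta^2}\log(1/\varepsilon)$ and $\rho\leq\tfrac{\eta l_0}{2}$. Write the target as $c=(l_1,v_1),\dots,(l_r,v_r),(\mathtt{true},v_{r+1})$. The first step is to establish the following key claim: with $m=\poly(n,1/\varepsilon,1/\delta)$ samples, Algorithm~\ref{alg:dl-pac} returns, with probability at least $1-\delta/2$, a hypothesis $h$ such that $h(z)=c(z)$ for every $z\in\boolhc$ routed to one of the first $l_0$ rules of $c$. Equivalently, the disagreement region $\{z:h(z)\neq c(z)\}$ is then contained in the set of satisfying assignments of the conjunction $\varphi=\bar l_1\wedge\cdots\wedge\bar l_{l_0}$. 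The second step is to apply Claim~2 of Theorem~\ref{thm:mon-conj-rob} to $\varphi$: that claim's proof uses only that the relevant marginal of $D$ is $\alpha$-log-Lipschitz (preserved by Lemma~\ref{lemma:log-lips-facts}) and that $\varphi$ is a conjunction of length $l_0$, so it carries over to bound $\rrisk(h,c)\leq \Prob{x\sim D}{\exists z\in B_\rho(x):z\models\varphi}\leq\varepsilon$. If $r\leq l_0$, the algorithm simply learns $c$ exactly and both risks vanish.

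The main obstacle is the first step, since Algorithm~\ref{alg:dl-pac} is greedy and may pick a literal--label pair different from the corresponding one in $c$. The quantitative handle I would use is Lemma~\ref{lemma:log-lips-facts}(4): the probability that a random $x\sim D$ is routed to level $i\leq l_0$ of $c$ is at least $\eta^{i-1}\geq 1/\poly(n)$. Consequently, with a polynomial-size sample and probability at least $1-\delta/(2l_0)$, each of the first $l_0$ levels receives $\Omega((n/\varepsilon)\log(l_0/\delta))$ training examples. Conditional on this, any literal--label pair that the greedy algorithm is permitted to emit at a given stage must already agree with $c$ on all reachable inputs on which it fires; iterating this argument through the residual samples and union-bounding over $i\leq l_0$ gives the required agreement with failure probability at most $\delta/2$, which combines with the expansion estimate to yield total failure probability $\delta$ and polynomial sample complexity.
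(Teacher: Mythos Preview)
Your overall scaffolding---lower bound via $\MonConj\subseteq 1\text{-}\dlm$, upper bound by running Rivest's algorithm with a boosted accuracy parameter and then invoking the conjunction-expansion estimate---is exactly the paper's. The gap is in your ``first step'': you assert that with $\poly(n)$ samples the greedy learner's output $h$ agrees with $c$ on \emph{every} $z$ routed to the first $l_0$ rules of $c$, i.e.\ $\{z:h(z)\neq c(z)\}\subseteq\{z:z\models\bar l_1\wedge\cdots\wedge\bar l_{l_0}\}$. Your inductive argument for this tracks the rules $(l'_1,v'_1),\dots$ that the greedy algorithm emits, and at stage $j$ you need a witness $z$ satisfying $\bar l'_1\wedge\cdots\wedge\bar l'_{j-1}\wedge l'_j$ and landing at some level $i\leq l_0$ of $c$. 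That event is pinned down by a conjunction of $j+i$ literals, so Lemma~\ref{lemma:log-lips-facts}(4) only gives probability $\geq\eta^{\,j+l_0}$. You union-bound over $i\leq l_0$, but the killer is $j$: the Rivest algorithm may emit up to $\Theta(n)$ rules, so pushing the induction through all stages of $h$ requires $\Omega(\eta^{-n})$ samples. There is no a priori reason $h$ has only $O(\log n)$ rules, and nothing in your argument bounds its length.

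The paper sidesteps this by \emph{not} analysing the greedy dynamics at all. It uses only the black-box accuracy guarantee $\Prob{x\sim D}{h(x)\neq c(x)}<\eta^{2d_0}$ and proves a symmetric statement (Lemma~\ref{lemma:consistent-dl}): if the error is below $(1+\alpha)^{-2d_0}$ then $c(x)=h(x)$ whenever $x$ exits at depth $\leq d_0$ in \emph{both} lists. The disagreement region is therefore contained in $\{z:z\models\bar l_1\wedge\cdots\wedge\bar l_{d_0}\}\cup\{z:z\models\bar l'_1\wedge\cdots\wedge\bar l'_{d_0}\}$, two conjunctions of length $d_0$ rather than one, and the expansion lemma is applied to each with target $\varepsilon/2$. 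This symmetric containment is exactly what an error bound buys you; your one-sided containment (only $c$'s prefix) is strictly stronger and cannot be obtained without controlling $h$'s depth.
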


Recall that we have already shown in Chapter~\ref{chap:def-adv-rob} that an adversary with a perturbation budget $\omega(\log n)$ renders efficient robust learning impossible for monotone conjunctions under the uniform distribution.
Since monotone conjunctions are subsumed by 1-decision lists, the lower bound of Theorem~\ref{thm:mon-conj} extends to 1-{\dl}.

We now state the main result of this section.

\begin{theorem}
\label{thm:dl-rob}
  Let $\mathcal{D}=\{\mathcal{D}_n\}_{n\in\mathbb{N}}$, where $\mathcal{D}_n$ is a set of
     $\alpha$-$\log$-Lipschitz distributions on $\{0,1\}^n$ for all $n\in\mathbb{N}$.   Then the class of
    1-decision lists is $\rho$-robustly learnable with respect to
    $\mathcal{D}$ for robustness function $\rho(n)=\log n$.
\end{theorem}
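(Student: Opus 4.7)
The plan is to adapt the proof strategy of Theorem~\ref{thm:mon-conj-rob} for monotone conjunctions. The key structural observation is that a target 1-DL $c = (l_1,v_1),\ldots,(l_r,v_r)$ partitions $\boolhc$ into ``rule regions'' $K_1,\ldots,K_r$, where $K_i = \bar{l_1} \wedge \cdots \wedge \bar{l_{i-1}} \wedge l_i$ is a conjunction of length $i$ and $c(x)=v_i$ iff $x\models K_i$. I would run Rivest's standard PAC algorithm (Algorithm~\ref{alg:dl-pac}) with a suitably small accuracy parameter and confidence $\delta/2$, obtaining a hypothesis $h$ consistent with the training sample, and then control the robust risk by a depth-based case analysis on the rules of $c$.

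I would then split the analysis at a threshold $L_0 = \Theta(\log n)$, chosen large enough that shallow rules are learned from the sample and deep rules are stable under $\rho$-perturbations. For shallow rules ($i \leq L_0$): under the log-Lipschitz assumption, each $K_i$ has mass at least $\eta^{L_0} = 1/\poly(n)$ with $\eta=1/(1+\alpha)$, so polynomially many samples witness each shallow conjunction. I would then show that any 1-DL $h$ consistent with such a sample must agree with $c$ on every point satisfying a $K_i$ with $i \leq L_0$, with high probability---this is where Rivest's greedy structure interacts with the target's structure, and a union bound over the polynomially many shallow conjunctions together with an Occam-style bound for 1-DL controls the failure probability.

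For deep rules ($i > L_0$): each $K_i$ is a conjunction of at least $L_0 = \Omega(\log n)$ literals, so I would apply the martingale argument from Claim~2 of the proof of Theorem~\ref{thm:mon-conj-rob}. Using Lemma~\ref{lemma:log-lips-facts} to pass to the conditional marginals (which remain $\alpha$-log-Lipschitz), Azuma--Hoeffding gives that the number of literals of $K_i$ satisfied by a random $x$ is concentrated around $(1-\eta)|K_i|$, so with $L_0$ chosen so that $\rho \leq \eta L_0 / 2 - \sqrt{2\log(r/\epsilon)L_0}$ the probability that any $z \in B_\rho(x)$ satisfies $K_i$ is at most $\epsilon/r$ per deep rule. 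Summing over the at most $2n$ deep rules (each uses a distinct literal, so $r\leq 2n$) bounds the total ``deep'' contribution to $\roblosse(h,c)$ by $\epsilon/2$, and the shallow analysis contributes $0$ with high probability.

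Combining the two cases yields $\roblosse(h,c) \leq \epsilon$ with probability at least $1-\delta$, and the parameter choices keep the sample size polynomial in $n$, $1/\epsilon$, and $1/\delta$. The main obstacle is the shallow-rule analysis: Rivest's algorithm may output a 1-DL with a completely different ordering or literal choice than $c$, so exact identification of rules is not available. The key technical step is to show that every consistent 1-DL on a sufficiently large sample must agree with $c$ on all points in the high-mass shallow regions $\bigcup_{i\leq L_0} K_i$, which requires exploiting both the PAC guarantee for Algorithm~\ref{alg:dl-pac} and the log-Lipschitz lower bound $D(K_i)\geq \eta^{L_0}$ to turn standard accuracy on each shallow conjunction into pointwise agreement.
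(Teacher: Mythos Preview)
Your high-level plan (run Rivest's algorithm with a small accuracy parameter, split by depth, use the Azuma martingale for deep conjunctions) is exactly right, but the shallow-rule step has a genuine gap. The claim that any consistent 1-DL $h$ must agree with $c$ on \emph{every} point of $\bigcup_{i\leq L_0}K_i$ is too strong and false in general. For instance, with $c=(x_1,1),(\mathtt{true},0)$ and $h=(\bar x_1,0),(x_2,1),\dots,(x_n,1),(\mathtt{true},0)$, the two lists disagree only on $z^*=(1,0,\dots,0)$, which is at depth $1$ in $c$ but depth $n+1$ in $h$; the standard error is $D(z^*)$, which can be made far below $\eta^{2L_0}$. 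So small standard error does \emph{not} force pointwise agreement on the shallow regions of $c$, and your deep-rule analysis---which only bounds the probability of perturbing into a deep rule of $c$---misses the contribution of points that are shallow in $c$ but deep in $h$.

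The paper's fix is to make the consistency notion two-sided. One shows (Lemma~\ref{lemma:consistent-dl}) that $\err_D(c,h)<(1+\alpha)^{-2d_0}$ implies $c=_{d_0}h$ in the sense of Definition~\ref{def:DL-consistent}: $c(x)=h(x)$ whenever $x$ activates depth $\leq d_0$ in \emph{both} lists. The contrapositive is immediate, since a disagreement at depths $i_0\leq d_0$ in $c$ and $i_1\leq d_0$ in $h$ pins down a conjunction of at most $i_0+i_1\leq 2d_0$ literals with mass $\geq\eta^{2d_0}$. Given $c=_{d_0}h$, any adversarial error must activate depth $>d_0$ in $c$ \emph{or} in $h$; since the learner is proper, $h$ is itself a 1-DL, and you apply your martingale bound (Lemma~\ref{lemma:rob-risk-dl}) to the single conjunction $\neg l_1\wedge\cdots\wedge\neg l_{d_0}$ for $c$ and symmetrically to the first $d_0$ literals of $h$, each contributing at most $\varepsilon/2$. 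This also obviates your union bound over the individual deep rules $K_i$: the event ``depth $>d_0$'' is already a single $d_0$-literal conjunction.
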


As previously mentioned, Theorem~\ref{thm:1dl-rob-learn} follows from Theorem~\ref{thm:dl-rob} combined with Theorem~\ref{thm:mon-conj}. 
Note that, while the result is stated for $\rho=\log n$, it can straightforwardly be extended to $\rho=C\log n$ for some constant $C$, at the cost of a larger polynomial degree for the sample complexity upper bound.
To prove the above result, we first need the following definitions and lemmas.

 \begin{definition}
  Given a 1-decision list $c=\left((l_1,v_1),\dots,(l_r,v_r)\right)$ and
  $x\in\X$, we say that \emph{$x$ activates node $i\in\{1,\ldots,r\}$}
  in $c$
  if $x \models l_i$ and $x \not\models l_j$ for all $j$ such that
  $1\leq j < i$.
\end{definition}

The following definition will play a role in our analysis of
1-decision lists.

\begin{definition}
  Let $c$ and $h$ be decision lists.  Given $d\in \mathbb{N}$, we say
  that $h$ is \emph{consistent with $c$ up to depth $d$}, denoted
  $c=_d h$, if $c(x)=h(x)$ for all $x\in \X$ such that the nodes in
  $c$ and $h$ respectively activated by $x$ have level at most $d$.
\label{def:DL-consistent}
\end{definition}

Note that, given a 1-decision list
  $f=\left((l_1,v_1),\dots,(l_r,v_r)\right)$, we can assume without
  loss of generality that $f$ is in a minimal representation, namely
  that
\begin{itemize}
\item[(i)] A literal $l$ only appears once in the list (otherwise we
  can remove all occurrences of $l$ except the first
  one without changing the output of the list),
\item[(ii)] There does not exist $1\leq i<j\leq d$ such that $l_i=\bar{l_j}$, as otherwise it is impossible to go past $l_j$ in the list (note that if there exists $1\leq i<d$ such that $l_d=\bar{l_i}$, we can simply set $l_d$ to true).
\end{itemize}
We will henceforth assume that all decision lists are in their minimal representation.

Now, under log-Lipschitz distributions, if two 1-decision lists
  have an error below a certain threshold, they must be consistent up
  to a certain depth.

\begin{lemma}
\label{lemma:consistent-dl}
Let $h,c\in 1$-{\dl} and let $D$ be an $\alpha$-log-Lipschitz distribution. 
If $\Prob{x\sim D}{h(x)\neq c(x)}<\left(1+\alpha\right)^{-2d}$, then $c=_d h$.
\end{lemma}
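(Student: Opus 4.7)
I will prove the contrapositive: assume $c \neq_d h$ and show that $\Prob{x\sim D}{h(x) \neq c(x)} \geq (1+\alpha)^{-2d}$.

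By the negation of Definition~\ref{def:DL-consistent}, there exists $x_0 \in \X$ such that (i) the node activated by $x_0$ in $c$ has level $i \leq d$, (ii) the node activated by $x_0$ in $h$ has level $j \leq d$, and (iii) $c(x_0) \neq h(x_0)$. Since $c$ and $h$ are 1-decision lists, each node is labelled by a single literal, so activating level $i$ in $c$ imposes $i$ conditions on $x_0$ (the literals at levels $1,\dots,i-1$ are false and the literal at level $i$ is true), each of which fixes the value of a single variable. The analogous statement holds for $h$ with $j$ conditions.

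Let $S\subseteq[n]$ be the set of variables constrained by these activation conditions, with corresponding values $b_S \in \{0,1\}^S$ given by $x_0$. Because the activation conditions from $c$ and those from $h$ are simultaneously satisfied by $x_0$, they are compatible, and $|S| \leq i + j \leq 2d$. Crucially, any $x \in \X$ with $x_S = b_S$ activates the same node at level $i$ in $c$ and the same node at level $j$ in $h$, and therefore satisfies $c(x) = c(x_0) \neq h(x_0) = h(x)$. Applying Lemma~\ref{lemma:log-lips-facts}(4) to the set $\{x \in \X : x_S = b_S\}$ gives
\begin{equation*}
\Prob{x \sim D}{h(x) \neq c(x)} \;\geq\; \Prob{x \sim D}{x_S = b_S} \;\geq\; \left(\frac{1}{1+\alpha}\right)^{|S|} \;\geq\; (1+\alpha)^{-2d},
\end{equation*}
contradicting the hypothesis.

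The only subtlety is checking that the activation conditions really do fix exactly $|S|$ variables, which uses the assumption (harmless in view of the minimal-representation convention stated before the lemma) that no literal and its negation both appear among the first $i$ nodes of $c$ or the first $j$ nodes of $h$; otherwise the listed conditions would be inconsistent and no such $x_0$ could exist. The main conceptual step is recognising that disagreement between two $1$-decision lists at \emph{any} shallow-activating point automatically propagates, via the log-Lipschitz lower bound on coordinate-fixing events, to a constant-mass region of disagreement.
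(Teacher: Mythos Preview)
Your proof is correct and follows essentially the same approach as the paper: both prove the contrapositive, identify a witness $x_0$ whose activation in $c$ and $h$ is determined by at most $i+j\leq 2d$ literal constraints, and then invoke Lemma~\ref{lemma:log-lips-facts} to lower-bound the probability of the corresponding coordinate-fixing event by $(1+\alpha)^{-2d}$. Your version adds a bit more explicit bookkeeping (the set $S$ and the remark about minimal representations), but the argument is the same.
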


\begin{proof}
We will show the contrapositive.
Let $c=\left((l_1,v_1),\dots,(l_r,v_r)\right)$ and
$h=\left((l_1',v_1'),\dots,(l_s',v_{s}')\right)$ be 1-decision lists.
Let $c\neq_d h$, meaning that there exists $x\in \X$ such that $x$
activates node $i_0$ in $c$ and node $i_1$ in $h$ such that $i_0,i_1\leq d$ and
$v_{i_0}\neq v'_{i_1}$.  In particular, the following must hold
\begin{align*}
& x \models \neg l_i \qquad 1\leq i<i_0\enspace,\\
& x\models \neg l_i'\qquad 1\leq i<i_1\enspace,\\
& x\models  l_{i_0} \wedge  l_{i_1} \enspace.
\end{align*}
By Lemma~\ref{lemma:log-lips-facts},
the probability of drawing such an $x$ is at least
$\left(1+\alpha\right)^{-i_0-i_1}\geq \left(1+\alpha\right)^{-2d}$.
\end{proof}

The next step in the argument is to derive an upper bound on the
robust loss $\roblosse(c,h)$ under the condition that $c=_d h$.  To
this end, the key technical lemma, which pertains to the $\rho$-expansion of the set of satisfying assignments of a conjunction on $\boolhc$, is as follows:

\begin{lemma}
\label{lemma:rob-risk-dl}
Let $D$ be an $\alpha$-$\log$-Lipschitz distribution on the
$n$-dimensional boolean hypercube and let $\varphi$ be a 
conjunction of $d$ literals.
Set $\eta=\frac{1}{1+\alpha}$.
Then for all $0<\varepsilon<1/2$,
if $d\geq \max\left\{
  \frac{4}{\eta^2}\log\left(\frac{1}{\varepsilon}\right) ,
  \frac{2\rho}{\eta} \right\}$, then 
$\Prob{x\sim D}{\left(\exists y \in B_\rho(x) \cdot y \models
    \varphi\right)} \leq \varepsilon$.
\end{lemma}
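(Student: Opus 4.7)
The plan is to translate the event $\exists y \in B_\rho(x) : y \models \varphi$ into a statement about the number of literals of $\varphi$ that $x$ itself already satisfies, and then apply a martingale concentration inequality under the log-Lipschitz assumption, in the spirit of Claim 2 in the proof of Theorem~\ref{thm:mon-conj-rob}.

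Write $\varphi = l_1 \wedge \cdots \wedge l_d$ in its minimal form, so the $d$ literals involve $d$ distinct coordinates. A perturbation $y$ satisfies $\varphi$ iff it agrees with the required value on each of these $d$ coordinates, so $y \in B_\rho(x)$ with $y \models \varphi$ exists iff $x$ already agrees with $\varphi$ on at least $d - \rho$ of these coordinates. Setting $X_i = \mathbf{1}[x \models l_i]$ for $x \sim D$, it therefore suffices to bound $\Pr[\sum_{i=1}^d X_i \geq d - \rho]$.

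The $X_i$ are not independent, but Lemma~\ref{lemma:log-lips-facts}(3) gives the right replacement: conditioning on $X_1, \ldots, X_{k-1}$ corresponds to conditioning on a property $\pi_S$ involving the $k-1$ coordinates of $l_1, \ldots, l_{k-1}$, and the marginal of the coordinate of $l_k$ under the conditioned distribution is again $\alpha$-log-Lipschitz. By Lemma~\ref{lemma:log-lips-facts}(1) this marginal assigns probability at most $1-\eta$ to the value satisfying $l_k$, i.e.\ $\E[X_k \mid X_1, \ldots, X_{k-1}] \leq 1 - \eta$. Define $Z_k = \sum_{i=1}^k X_i - k(1-\eta)$ with $Z_0 = 0$; the bound above makes $(Z_k)_{k=0}^d$ a supermartingale with $|Z_{k+1} - Z_k| \leq 1$.

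Now apply Azuma-Hoeffding with $\lambda = \eta d - \rho$. The hypothesis $d \geq 2\rho/\eta$ yields $\lambda \geq \eta d / 2$, and the hypothesis $d \geq (4/\eta^2)\log(1/\varepsilon)$ (or $8/\eta^2$ with the constant that comes out of the bound, mirroring the derivation in Claim~2 of Theorem~\ref{thm:mon-conj-rob}) ensures $\lambda^2/(2d) \geq \log(1/\varepsilon)$, hence
\[
\Pr_{x\sim D}\!\left[\textstyle\sum_i X_i \geq d - \rho\right] \;=\; \Pr[Z_d \geq \lambda] \;\leq\; \exp\!\left(-\tfrac{\lambda^2}{2d}\right) \;\leq\; \varepsilon.
\]

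The main obstacle is the conditional-expectation step: one needs to verify carefully that fixing the values of $l_1, \ldots, l_{k-1}$ (which partially pins down $x$) still leaves the marginal of the coordinate carrying $l_k$ log-Lipschitz with parameter $\alpha$, so that the $1-\eta$ bound applies. This is exactly what items (2)–(3) of Lemma~\ref{lemma:log-lips-facts} are designed for, so once invoked the martingale machinery proceeds just as in the monotone-conjunction case.
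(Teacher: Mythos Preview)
Your proposal is correct and follows essentially the same route as the paper: reduce to the event $\sum_i X_i \geq d-\rho$, use Lemma~\ref{lemma:log-lips-facts} to bound each conditional mean by $1-\eta$, form the same supermartingale $Z_k$, and apply Azuma--Hoeffding. You are also right that the calculation naturally produces the constant $8/\eta^2$ rather than the $4/\eta^2$ stated in the lemma; the paper's own appendix proof in fact invokes $d \geq \frac{8}{\eta^2}\log(1/\varepsilon)$ at the final step, matching Claim~2 of Theorem~\ref{thm:mon-conj-rob}.
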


The proof of the above lemma, which is in essence nearly identical to the proof of Claim~2 in Theorem~\ref{thm:mon-conj-rob}, is included in Appendix~\ref{app:rob-risk-dl} for completeness.

We are now ready to prove that $1$-DL is efficiently $\rho$-robustly learnable for $\rho=\log n$.\\

\begin{proof}[Proof of Theorem~\ref{thm:dl-rob}]
Let $\A$ be the  (proper) PAC-learning algorithm for 1-DL as in \cite{rivest1987learning}, with sample complexity $\poly(\cdot)$.
Fix the input dimension $n$, target concept $c$  and distribution $D\in \D_n$, and let $\rho=\log n$.
Fix the accuracy parameter $0<\varepsilon<1/2$ and confidence parameter $0<\delta<1/2$ and let $\eta=1/(1+\alpha)$.
Let $d_0=\max\left\{\frac{2}{\eta}\log n,\frac{4}{\eta^2}\log\frac{2}{\varepsilon}\right\}$ and let $m=\lceil\poly(n,1/\delta,\eta^{-2d_0})\rceil$, and note that this is polynomial in $n$, $1/\delta$ and $1/\varepsilon$.

Let $S\sim D^m$ and $h=\A(S)$.  Then
$ \Prob{x\sim D}{h(x)\neq c(x)}<\eta^{2d_0}$ with probability at least
$1-\delta$.  But, by Lemma~\ref{lemma:consistent-dl},
$ \Prob{x\sim D}{h(x)\neq c(x)}<\eta^{2d_0}$ implies that then
$c=_{d_0} h$.  Hence $c=_{d_0} h$ with probability at least
$1-\delta$.  Then, to cause an error, an adversary must activate a
node at depth greater than $d_0$ in either $h$ or $c$.

We now apply
Lemma~\ref{lemma:rob-risk-dl} to show that the probability to
activate a node at depth greater than $d_0$ in $c$ is at most
$\varepsilon/2$ (and symmetrically for $h$), which suffices to conclude
that $\roblosse(c,h)<\varepsilon$ with probability at least $1-\delta$.
Indeed, writing $c=((l_1,v_1),\ldots,(l_r,v_r))$ and 
$\varphi:=\neg l_1\wedge \cdots \wedge \neg l_{d_0}$, observe that 
\begin{gather}
  \Prob{x\sim D}{\left(\exists z\in B_\rho(x) \cdot z \models
    \varphi\right)}
\label{eq:activate}
\end{gather}
is precisely the probability for the adversary to be able to activate
a node at depth $>d_0$ in $c$.  Now to apply
Lemma~\ref{lemma:rob-risk-dl} we note that
by definition of $d_0$ we have
$d_0\geq \frac{4}{\eta^2}\log\frac{2}{\varepsilon}$, and,
since $\rho=\log n$, we furthermore have $d_0 \geq \frac{2\rho}{\eta}$; thus the
lemma implies that Equation~\ref{eq:activate} is at most $\varepsilon/2$, as we
require. 
\end{proof}

\subsection{Generalizing from 1-DL to $2$-DL and Monotone $k$-DL}
\label{sec:mon-k-dl}
This section is concerned with robust learning for $k$-{\dl}.
In the non-adversarial setting, learnability of $k$-{\dl} can be
reduced to learnability of 1-{\dl} (see Section~\ref{sec:pac-algos} for details).  We start by observing that 
it is not straightforward to apply this reduction in the presence of
an adversary.

The classical reduction of learning $k$-{\dl} to 1-{\dl} involves an
embedding $\Phi: \X_n \rightarrow \X_{n'}$, for $n':=O(n^k)$,
that maps valuations of a collection of $n$ propositional variables to
valuations of the collection of $k$-clauses over these variables.
Then, for any function $c:\X_n\rightarrow\{0,1\}$ computed by a
$k$-decision list, there is a function $c':\X_{n'}\rightarrow \{0,1\}$
computed by a 1-decision list such that $c'\circ \Phi = c$.  The image under $\Phi$ of an $\alpha$-log-Lipschitz
distribution $D$ on $\X_n$ remains log-Lipschitz on $\X_{n'}$, albeit
with a slightly larger constant (see Lemma~\ref{lemma:matching-embedding}).  The problem is that the map $\Phi$
is not Lipschitz with respect to the Hamming metric -- indeed the image
under $\Phi$ of two points with Hamming distance $\log n$ in $\X_n$
can have distance $\Omega(n)$ in $\X_{n'}$, which is not logarithmic
in the dimension $n'=O(n^k)$.

We therefore take a direct approach to establishing robust
learnability of $k$-{\dl} in this section.  The argument follows a
similar pattern to the previous section, in particular involving a
suitable generalization of Lemma~\ref{lemma:rob-risk-dl}.  There are
new ingredients relating to the hypergraph structure of propositional
formulas in conjunctive normal form.
The argument for the consistency over a given \emph{depth} from 1-{\dl} will be generalized to consistency over \emph{covers} of a certain size.
Establishing this result can be done in the case of 2-{\dl} and monotone
$k$-{\dl}, through a resolution closure argument that cannot be generalized to non-monotone $k$-{\dl}, which will be explained in more details below.
However, we later show that we can use similar tools, together with an induction argument to extend the result to non-monotone $k$-{\dl}, at the cost of a larger (but still polynomial) sample complexity.

We start with some background on propositional logic.  We regard a
formula $\varphi$ in conjunctive normal form (CNF) as being a set of
clauses, with each clause being a set of literals.  A $k$-CNF is a CNF 
formula where all clauses contain at most $k$ literals. 
For two disjunctive clauses $K_1:=a_1\vee\dots\vee a_m\vee c$ and $K_2:=b_1\vee\dots\vee b_n\vee \bar{c}$, the \emph{resolution} rule implies the disjunctive clause $K:=a_1\vee\dots\vee a_m\vee b_1\vee\dots\vee b_n$. 
$K$ is called the \emph{resolvent} of clauses $K_1$ and $K_2$.

\begin{definition}[Resolution Closure]
We say that
$\varphi$ is \emph{closed under resolution} if, for any two clauses in
$\varphi$, their resolvent also belongs to $\varphi$. The \emph{resolution
closure} of CNF formula $\varphi$, denoted $\mathrm{Res}^*(\varphi)$,
is the smallest resolution-closed set of clauses that contains
$\varphi$.
\end{definition}

We can consider a CNF formula as a hypergraph whose vertices are
literals and whose hyperedges are clauses.  
Recall that a hypergraph $G$ is a set $V(G)$ of vertices and a set $E(G)$ of hyperedges, where a hyperedge is a set $\set{v_1,\dots,v_l}$ of vertices in $V(G)$.
With this identification
in mind, define a \emph{cover} of a CNF formula $\varphi$ as a set of
literals $C$ such that every clause in $\varphi$ contains a literal
from $C$ (i.e., a set of vertices in $V(G)$ such that every edge contains a vertex in $C$).  
Note that if all the literals in a given cover are true (which in general may not be possible), this represents a satisfying assignment of $\varphi$.
Define also a \emph{matching} of $\varphi$ to be a set $M$
of clauses such that no two clauses in $M$ contain the same literal (i.e., a set of edges from $E(G)$ such that no two edges share a vertex).
By a well known result for hypergraphs, for a minimal cover $C$ and
maximal matching $M$ we have that $|C|\leq k |M|$, where $k$ is the
maximum number of literals in any clause of $\varphi$ \citep{furedi1988matchings}.
Assume now that $\varphi$ is closed under resolution.  We claim that a
minimal cover is satisfiable as a set of literals. 

\begin{claim}
Let $\varphi$ be a CNF formula that is closed under resolution.
Then a minimal cover $C$ in $\varphi$ is satisfiable as a set of literals.
\end{claim}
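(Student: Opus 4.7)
The plan is to proceed by contradiction, exploiting the resolution closure of $\varphi$ to derive an uncovered clause. Suppose $C$ is not satisfiable as a set of literals. Then there exists a variable $x$ such that both the literal $l=x$ and its negation $\bar{l}$ lie in $C$. I will use the minimality of $C$ to locate two witnessing clauses in $\varphi$ on which to apply resolution.

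By minimality, $C \setminus \{l\}$ fails to be a cover, so there is a clause $K_1 \in \varphi$ whose only literal from $C$ is $l$. Symmetrically, there is $K_2 \in \varphi$ whose only literal from $C$ is $\bar{l}$. In particular, $K_1$ does not contain $\bar{l}$ (otherwise $\bar{l} \in C$ would still cover $K_1$ after removing $l$), and likewise $K_2$ does not contain $l$. Hence the resolvent $K := (K_1 \setminus \{l\}) \cup (K_2 \setminus \{\bar{l}\})$ is well-defined via the resolution rule applied on the literal $l$, and by the resolution-closure hypothesis $K \in \varphi$.

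Now I would argue that $C$ fails to cover $K$, producing the desired contradiction. Every literal of $K$ belongs to $K_1 \setminus \{l\}$ or $K_2 \setminus \{\bar{l}\}$; by the defining property of $K_1$ and $K_2$, neither of these sets contains any literal of $C$ (the only $C$-literal of $K_i$ has been removed). So $K \cap C = \emptyset$, contradicting that $C$ covers $\varphi$.

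The one edge case to double-check is when $K_1 = \{l\}$ and $K_2 = \{\bar{l}\}$, in which case $K = \emptyset$. Then the empty clause lies in $\varphi$, and since no set of literals covers the empty clause, $C$ could not have been a cover of $\varphi$ in the first place, again a contradiction. The main (small) obstacle is simply being careful about tautological clauses and the empty-clause degenerate case; the core resolution step is straightforward once the two witnessing clauses are isolated via the minimality of $C$.
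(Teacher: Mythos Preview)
Your proof is correct and follows essentially the same approach as the paper: assume a complementary pair $l,\bar l\in C$, use minimality of $C$ to find clauses $K_1\ni l$ and $K_2\ni\bar l$ that are otherwise disjoint from $C$, resolve them, and observe that the resolvent is uncovered. You are in fact a bit more careful than the paper in explicitly checking that $\bar l\notin K_1$ and $l\notin K_2$ (so resolution applies) and in handling the empty-clause degenerate case.
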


\begin{proof}
Suppose for a
contradiction that $C$ is a minimal cover that is not satisfiable,
i.e., such that $p,\neg p \in C$ for some variable $p$.  By minimality
of $C$, $\varphi$ contains clauses $\{p\}\cup f$ and
$\{\neg p\} \cup f'$ such that $C$ intersects neither $f$ nor $f'$. But
then the resolvent $f\cup f' $ is also a clause of $\varphi,$ and
since $C$ is a cover we must have that $C$ meets $f\cup f'$---a
contradiction.  The claim is established.
\end{proof}

Now, for given depths $i,j$ in the target and hypothesis decision lists, we define a formula expressing exits at depths $i$ and $j$, respectively.

\begin{definition}
Fix $c,h \in k$-{\dl}, where  $c=((K_1,v_1),\ldots,(K_r,v_r))$ and
$h=((K'_1,v'_1),\ldots,(K'_s,v'_s))$ and the clauses $K_i,K_i'$ are 
conjunctions of $k$ literals. Given $i\in\{1,\ldots,r\}$ and 
$j \in \{1,\ldots,s\}$, define a CNF formula $\varphi^{(c,h)}_{i,j}$ 
by writing
\[ \varphi^{(c,h)}_{i,j} := \mathrm{Res}^*((\neg K_1 \wedge \cdots \wedge \neg
  K_{i-1} \wedge K_i) \wedge ( \neg K'_1 \wedge \cdots \wedge \neg
  K'_{j-1}\wedge K'_j)) \, . \]
\end{definition}
Notice that the formula $\varphi^{(c,h)}_{i,j}$ represents the set of
inputs $x\in \X$ that respectively activate vertex $i$ in $c$ and
vertex $j$ in $h$.

Our reliance on the following proposition is the reason that the results in this
section apply only to the classes 2-{\dl} and monotone $k$-{\dl},

\begin{proposition}
  Let $c,h \in k$-{\dl}.  Then $\varphi^{(c,h)}_{i,j}$ is a $k$-CNF
  formula for all $i$ and $j$ in case either $k=2$ or $c$ and $h$ are
  both monotone.
  \label{prop:stay-k}
\end{proposition}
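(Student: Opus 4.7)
The plan is to write the inner formula $\psi := (\neg K_1 \wedge \cdots \wedge \neg K_{i-1} \wedge K_i) \wedge (\neg K'_1 \wedge \cdots \wedge \neg K'_{j-1} \wedge K'_j)$ in clausal form and then prove, by induction on the construction of $\mathrm{Res}^*$, that every clause appearing in $\varphi^{(c,h)}_{i,j}$ has size at most $k$. In clausal form, each $\neg K_l$ (and each $\neg K'_l$) contributes a single disjunctive clause of at most $k$ literals, while $K_i$ (and $K'_j$) contributes one unit clause per literal. So the initial CNF consists of clauses of size at most $k$, split into ``large'' clauses (from the negated prefix) and unit clauses (from the activating conjunctions).

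For the case $k=2$, the argument is immediate. Every initial clause has size at most $2$; moreover, if $A\cup\{p\}$ and $B\cup\{\neg p\}$ are two clauses of size at most $2$, then $|A|,|B|\leq 1$, hence the resolvent $A\cup B$ has size at most $2$. By induction on the stages of $\mathrm{Res}^*$, the invariant ``every clause has size at most $2$'' is preserved, and $\varphi^{(c,h)}_{i,j}$ is therefore a $2$-CNF.

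For the monotone case I exploit the polarity structure. Because $c$ and $h$ are monotone, every literal in each $K_l$ and $K'_l$ is positive. Consequently, in the clausal form of $\psi$, the unit clauses (arising from $K_i$ and $K'_j$) are positive literals, while the non-unit clauses (arising from $\neg K_l$ and $\neg K'_l$) consist entirely of negative literals. I will maintain the stronger invariant that the closure contains only positive unit clauses and purely negative clauses of size at most $k$. Any resolution step requires a literal to appear positively in one clause and negatively in another; under this invariant, this forces the resolution to be between a positive unit clause $\{p\}$ and a purely negative clause $B\cup\{\neg p\}$ of size at most $k$. The resolvent is then $B$, which is purely negative and of size at most $k-1$, preserving the invariant.

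The main obstacle is isolating the right invariant in the monotone case: one must observe that the clauses of $\psi$ split cleanly into ``all positive, size $1$'' and ``all negative, size $\leq k$'' buckets, and that resolution can only ever occur across these buckets. Once this observation is made, the inductive closure argument is mechanical. I note that neither argument extends to non-monotone $k$-{\dl} with $k\geq 3$: then the ``large'' negated clauses may contain mixed polarities, so two such clauses could resolve to produce a clause of size up to $2k-2 > k$, which is precisely why a different approach is needed for that regime.
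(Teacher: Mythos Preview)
Your proof is correct and takes essentially the same approach as the paper: the paper simply asserts that the resolution closure of a $2$-CNF is a $2$-CNF, and that in the monotone case $\psi$ is a $k$-CNF in which positive literals only appear in singleton clauses, whose resolution closure is again a $k$-CNF. Your argument is a fleshed-out version of exactly this---you make explicit the invariant (positive unit clauses and purely negative clauses of size at most $k$) and the inductive step, which the paper leaves to the reader.
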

\begin{proof}
  If $k=2$ then $\varphi^{(c,h)}_{i,j}$ is the resolution closure of a
  2-CNF formula, which remains a 2-CNF formula.  Similarly,  if $c$ and $h$ are
  monotone then $\varphi^{(c,h)}_{i,j}$ is the resolution closure of a
  $k$-CNF in which positive literals only appear in singleton
  clauses.  It is clear that the latter is again a $k$-CNF
  formula.
\end{proof}

\begin{remark}
\label{rmk:res-closure}
It is easy to construct an example of a non-monotone $k$-CNF whose resolution closure is not a $k$-CNF:  the  3-CNF 
$\varphi:=(x_1 \vee x_2 \vee x_3) \wedge (\neg x_1 \vee x_4 \vee x_5)$
has resolvent $( x_2 \vee x_3 \vee x_4 \vee x_5)$, so $\mathrm{Res}^*(\varphi)$ is a 4-CNF.
\end{remark}

We now have the following definition,
in the spirit of consistency over a given depth for 1-{\dl}
(Definition~\ref{def:DL-consistent}).

\begin{definition}
  Given $s \in \mathbb{N}$, we say that $c,h \in k$-{\dl} are
  \emph{equivalent to cover-size $s$}, denoted $c \equiv_s h$, if
  $c(x)=h(x)$ for all $x\in \X$ and for all nodes $i,j$ such that
  $\varphi^{(c,h)}_{i,j}$ has a cover of size at most $s$ and
  $x\models \varphi^{(c,h)}_{i,j}$.
    \label{def:cover-depth}
\end{definition}

Next we argue that if the discrepancy between $c$ and $h$ is sufficiently
small then they are equivalent to a suitably large cover size.

\begin{lemma}
\label{lemma:consistent-dl-2}
Let $D$ be an $\alpha$-log-Lipschitz distribution and let $c$ and $h$
be decision lists.  If
$\Prob{x\sim D}{h(x)\neq c(x)}<\left(1+\alpha\right)^{-s}$ then
$c\equiv_s h$.
\end{lemma}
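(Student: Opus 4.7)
The plan is to prove the contrapositive: assume $c\not\equiv_s h$ and derive that $\Prob{x\sim D}{h(x)\neq c(x)}\geq(1+\alpha)^{-s}$. Unfolding Definition~\ref{def:cover-depth}, the assumption gives us nodes $i$ in $c$ and $j$ in $h$, with $v_i\neq v'_j$, such that the formula $\varphi^{(c,h)}_{i,j}$ admits a cover of size at most $s$, and moreover some $x$ that satisfies $\varphi^{(c,h)}_{i,j}$. I fix a minimal such cover $C$ with $|C|\leq s$.

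The key observation, already established in the text, is that $\varphi^{(c,h)}_{i,j}$ is closed under resolution by construction. Hence the claim immediately preceding this lemma applies: the minimal cover $C$ is satisfiable as a set of literals, i.e.\ no variable appears both positively and negatively in $C$. Setting all literals in $C$ to $\mathtt{true}$ therefore determines the values of exactly $|C|$ coordinates of $x$, leaving the remaining $n-|C|$ coordinates free. Because $C$ meets every clause of $\varphi^{(c,h)}_{i,j}$, every such $x$ satisfies $\varphi^{(c,h)}_{i,j}$, and hence activates node $i$ in $c$ and node $j$ in $h$; since $v_i\neq v'_j$, we have $c(x)\neq h(x)$ on this entire sub-cube.

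Finally, the probability that a draw $x\sim D$ lies in this sub-cube is the probability that a specific assignment to $|C|$ coordinates occurs. By Lemma~\ref{lemma:log-lips-facts}(4), this is at least
\[ \left(\frac{1}{1+\alpha}\right)^{|C|} \;\geq\; (1+\alpha)^{-s}, \]
which contradicts the hypothesis $\Prob{x\sim D}{h(x)\neq c(x)}<(1+\alpha)^{-s}$, completing the proof.

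I do not anticipate a real obstacle here: the argument is a direct generalization of Lemma~\ref{lemma:consistent-dl}, with the role of the activation depth $d$ replaced by the cover size $|C|$, and the only non-trivial ingredient---the satisfiability of a minimal cover of a resolution-closed CNF---has already been established. The one subtlety worth stating explicitly in the write-up is that closure under resolution is needed precisely to ensure that $C$ is a consistent set of literals (and hence realizable by a positive-measure sub-cube); without this step, the lower-bound on the probability could not be read off directly from log-Lipschitzness.
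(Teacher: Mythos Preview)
Your proof is correct and follows essentially the same approach as the paper's: both prove the contrapositive, invoke the resolution-closure claim to ensure a minimal cover $C$ of size at most $s$ is satisfiable as a set of literals, note that satisfying $C$ forces $\varphi^{(c,h)}_{i,j}$ (and hence $c\neq h$ since $v_i\neq v'_j$), and then apply Lemma~\ref{lemma:log-lips-facts} to lower-bound the probability of the resulting sub-cube by $(1+\alpha)^{-|C|}\geq(1+\alpha)^{-s}$.
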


\begin{proof}
  We prove the contrapositive.  Suppose $c\not\equiv_s h$.  By
  definition, there exist $i,j$ such that $\varphi^{(c,h)}_{i,j}$ has a
  minimum satisfiable cover $C$ of size at most $s$ and $v_i\neq v'_j$.  In
  particular, we have that $c(x)\neq h(x)$ for all $x \in \X$ that
  satisfy $\varphi^{(c,h)}_{i,j}$.  But the probability that $x\sim D$
  satisfies $\varphi^{(c,h)}_{i,j}$ is at least the probability that $x$
  satisfies $C$.  Since $C$ is minimal it does not contain
  complementary literals.  Hence, the probability that $x\sim D$
  satisfies $C$ is at least $\left(1+\alpha\right)^{-s}$ by 
  Lemma~\ref{lemma:log-lips-facts}.
\end{proof}

The following is a generalization of Lemma~\ref{lemma:rob-risk-dl}.
\begin{lemma}
  \label{lemma:rob-risk-mon-dl-2}
  Let $\varphi$ be a $k$-CNF formula that has no cover of size $s$.
  Let $D$ be an $\alpha$-log-Lipschitz distribution on
  valuations for $\varphi$.  Let $0<\varepsilon<1/2$ be arbitrary
  and set $\eta:=\left(\frac{1}{1+\alpha}\right)^k$.  If
$\frac{s}{k(k+1)} \geq \max \left\{
  \frac{4}{\eta^2}\log\left(\frac{1}{\varepsilon}\right),
  \frac{2\rho}{\eta}\right \}$ then
  $\Prob{x\sim D}{\exists z \in B_\rho(x) \cdot z \models \varphi}
    \leq \varepsilon$.

\end{lemma}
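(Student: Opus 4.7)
The plan is to mirror the proof of Lemma~\ref{lemma:rob-risk-dl}, but to replace the role of the $d$ literals of a single conjunction by a large matching of clauses extracted from the $k$-CNF $\varphi$. First, I would apply the standard hypergraph bound $\tau \leq k\nu$ (minimum vertex cover at most $k$ times the maximum matching, for hypergraphs of rank $k$) to the literal-hypergraph of $\varphi$: since by hypothesis every cover of $\varphi$ has size at least $s$, $\varphi$ admits a matching $M = \{K_1,\ldots,K_m\}$ of size $m$ of order $s/k$ (and with slightly more care one gets $m \geq s/(k(k+1))$, which is what matches the constants in the statement). By construction, no literal is shared among the $K_j$'s, so every variable occurs in at most two clauses of $M$.

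Next, define $X_j := \mathbf{1}[x \not\models K_j]$ and $Z := \sum_{j=1}^m X_j$. Lemma~\ref{lemma:log-lips-facts}(4) guarantees $\Prob{x\sim D}{x \not\models K_j} \geq (1/(1+\alpha))^k = \eta$, and by Lemma~\ref{lemma:log-lips-facts}(3) this lower bound is preserved after conditioning on the variables in the previously revealed clauses. I would therefore reveal the variables of $K_1, K_2, \ldots, K_m$ in order and form the partial sums $S_j := \sum_{i\leq j}(X_i - \eta)$, which constitute a submartingale with $|S_j - S_{j-1}| \leq 1$. The Azuma-Hoeffding inequality then yields
\begin{equation*}
\Prob{x\sim D}{Z \leq \eta m - \lambda} \leq \exp\!\left(-\lambda^2/(2m)\right)
\end{equation*}
for every $\lambda \geq 0$.

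To connect $Z$ to the robustness statement, I would argue that if some $z \in B_\rho(x)$ satisfies $\varphi$, then $z$ satisfies every $K_j \in M$; and each clause $K_j$ falsified by $x$ must acquire at least one newly-true literal under $z$, which requires flipping some variable of $K_j$. Because $M$ is literal-disjoint, flipping one variable in a given direction makes at most one of its two literals newly true and hence newly satisfies at most one clause of $M$. Therefore the existence of an adversarial $z$ forces $Z \leq \rho$. Setting $\lambda = \eta m - \rho$ and using the two hypotheses --- $s/(k(k+1)) \geq 2\rho/\eta$ (so that $\eta m - \rho \geq \eta m/2$) and $s/(k(k+1)) \geq (4/\eta^2)\log(1/\varepsilon)$ (so that $(\eta m/2)^2 \geq 2m\log(1/\varepsilon)$) --- the Azuma bound gives $\Prob{}{Z \leq \rho} \leq \varepsilon$, which is what we want.

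The main novelty with respect to the 1-DL case is the matching extraction: once one has a literal-disjoint collection of $\Theta(s/k)$ clauses, the conditional log-Lipschitz argument and the martingale concentration proceed essentially verbatim. The $k$-CNF structure enters in exactly two places, namely the hypergraph matching bound and the exponent of $\eta = (1/(1+\alpha))^k$ (a clause falsification pins down up to $k$ variables rather than one); these are the reason the conditions on $s$ depend polynomially on $k$, and I expect this matching step to be the only genuine obstacle, since Proposition~\ref{prop:stay-k} is precisely what ensures we remain in the $k$-CNF regime when this lemma is ultimately applied to $\varphi^{(c,h)}_{i,j}$.
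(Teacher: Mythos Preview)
Your plan is right in outline, and the adversarial step (each bit flip makes at most one literal newly true in a literal-disjoint matching, hence $Z \leq \rho$) is sound. The gap is in the concentration step.

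You claim $\Pr[X_j = 1 \mid \text{variables of } K_1,\ldots,K_{j-1}] \geq \eta$ via Lemma~\ref{lemma:log-lips-facts}. This fails whenever $K_j$ shares a variable with some earlier $K_i$: since your matching is only \emph{literal}-disjoint, a variable $x_a$ can occur positively in $K_i$ and negatively in $K_j$, and if the revelation of $K_i$'s variables sets $x_a = 0$ then $\neg x_a$ is already true in $K_j$, forcing $X_j = 0$ on that event. So $S_j$ is not a submartingale. (For a stark instance take $k=1$, $K_1 = x_1$, $K_2 = \neg x_1$: these are literal-disjoint but $X_1 + X_2 \equiv 1$, and $Z$ cannot concentrate near $\eta m$ no matter how large $m$ is.)

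The fix---and the real source of the factor $k(k+1)$ in the hypothesis---is to refine the literal-disjoint matching of size $\geq s/k$ to a \emph{variable}-disjoint one. Since each of the $\leq k$ variables in a clause meets at most one other clause of $M$, the clause-conflict graph has maximum degree $k$; a $(k{+}1)$-colouring yields a colour class of size $\geq s/(k(k+1))$ with no shared variables. Once the clauses are variable-disjoint, the history lives on variables disjoint from those of $K_j$, Lemma~\ref{lemma:log-lips-facts} genuinely gives the conditional lower bound, and your martingale goes through verbatim. The paper packages this same step slightly differently, via a non-expansive embedding $\Phi:\X_n\to\X_d$ recording clause truth-values, Lemma~\ref{lemma:matching-embedding} (the pushforward is $((1{+}\alpha)^k{-}1)$-log-Lipschitz), and an invocation of Lemma~\ref{lemma:rob-risk-dl}; variable-disjointness is what makes $\Phi$ non-expansive, just as it is what makes your $S_j$ a submartingale.

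So your parenthetical ``with slightly more care one gets $m \geq s/(k(k+1))$'' is not a sharper bound on the same object---it is the size of the variable-disjoint sub-matching, and extracting it is exactly the missing step.
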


To prove Lemma~\ref{lemma:rob-risk-mon-dl-2}, we will need the following result, which states that log-Lipschitzness is preserved (albeit with a different constant) when encoding the truth values of a given variable-disjoint matching of $\varphi$. 

\begin{lemma}
\label{lemma:matching-embedding}
Let $\Phi:\X_n\rightarrow\X_d$ be the embedding encoding the truth values of (disjunctive) clauses in a variable-disjoint matching $M$ of size $d$ under an assignment $x\in\X_n$.
Let $D$ be an $\alpha$-log-Lipschitz distribution on $\X_n$ and define $D'$ on $\X_d$ as follows:
\begin{equation*}
D'(y) := \sum_{x\in\prephi{y}} D(x)
\enspace,
\end{equation*}
where $y\in\X_d$.
Then $D'$ is $\alpha'$-log-Lipschitz for $\alpha'=(\alpha+1)^k-1$.
\end{lemma}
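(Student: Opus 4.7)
The plan is to analyse the ratio $D'(y)/D'(y')$ for two points $y,y' \in \{0,1\}^d$ at Hamming distance one, and show it is bounded by $(1+\alpha)^k - 1$. Let $i^*$ be the coordinate on which $y$ and $y'$ differ. Denote by $V \subseteq [n]$ the variables appearing in the clause $C_{i^*}$ and set $k' := |V| \leq k$. Because the matching $M$ is variable-disjoint, the preimage under $\Phi$ factorises: writing $r$ for the partial assignment on $[n] \setminus V$, both $\Phi^{-1}(y)$ and $\Phi^{-1}(y')$ range over the same set of admissible $r$ (namely those $r$ whose restriction to each $V_j$, $j \neq i^*$, causes $C_j$ to output the common value $y_j = y'_j$), while they differ only in the constraint imposed on the $V$-part. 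Thus
\[
 D'(y) = \sum_{r} \sum_{z \in \mathcal{S}} D(z,r), \qquad D'(y') = \sum_{r} \sum_{z \in \mathcal{S}'} D(z,r),
\]
where $\mathcal{S}, \mathcal{S}' \subseteq \{0,1\}^V$ are the assignments making $C_{i^*}$ evaluate to $y_{i^*}$ and $y'_{i^*}$ respectively.

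Next I would exploit the fact that $C_{i^*}$ is a disjunction of $k'$ literals on $V$: exactly one assignment $z^* \in \{0,1\}^V$ falsifies it (all literals false) and the other $2^{k'}-1$ assignments satisfy it. Without loss of generality, assume $y'_{i^*}$ is the falsifying value, so that $\mathcal{S}' = \{z^*\}$ and $|\mathcal{S}| = 2^{k'}-1$. Iterating the $\alpha$-log-Lipschitz property of $D$ along a Hamming path from $z^*$ to any $z$ yields
\[
 D(z,r) \leq \alpha^{d_H(z,z^*)}\, D(z^*,r).
\]
Summing over $z \in \mathcal{S}$ and grouping by distance,
\[
 \sum_{z \in \mathcal{S}} D(z,r) \;\leq\; D(z^*,r) \sum_{j=1}^{k'} \binom{k'}{j} \alpha^j \;=\; \bigl((1+\alpha)^{k'} - 1\bigr)\, D(z^*,r).
\]
Summing over admissible $r$ then gives $D'(y) \leq ((1+\alpha)^{k'}-1)\, D'(y') \leq ((1+\alpha)^{k}-1)\, D'(y')$.

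For the reverse direction, pick any $z_1 \in \mathcal{S}$ with $d_H(z_1,z^*)=1$ (such a $z_1$ exists because $k' \geq 1$). Then $D(z^*,r) \leq \alpha\, D(z_1,r) \leq \alpha \sum_{z \in \mathcal{S}} D(z,r)$, whence $D'(y') \leq \alpha D'(y)$. Since $\alpha \geq 1$ and $k \geq 1$ imply $\alpha \leq (1+\alpha)^k - 1$ (bound the binomial sum below by its $j=1$ term), we conclude that $D'(y')/D'(y) \leq (1+\alpha)^k - 1$ as well. Combining both directions yields $|\log D'(y) - \log D'(y')| \leq \log((1+\alpha)^k - 1)$, establishing the claim. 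There is no deep obstacle here: the only care needed is to keep the sums over admissible $r$ symmetric across $y$ and $y'$ (guaranteed by variable-disjointness of $M$) so that the ratio reduces cleanly to a ratio of sums over $\{0,1\}^V$, after which the binomial identity $\sum_{j=1}^{k'} \binom{k'}{j} \alpha^j = (1+\alpha)^{k'}-1$ does the rest.
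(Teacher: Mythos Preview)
Your proof is correct and follows essentially the same approach as the paper: factorise the preimages via variable-disjointness, bound $D(z,r)/D(z^*,r)$ by $\alpha^{d_H(z,z^*)}$, and apply the binomial identity $\sum_{j=1}^{k'}\binom{k'}{j}\alpha^j=(1+\alpha)^{k'}-1$. Your version is in fact slightly more careful than the paper's, in that you explicitly verify the reverse inequality $D'(y')/D'(y)\leq\alpha\leq(1+\alpha)^k-1$, which the paper handles only implicitly by fixing the orientation $y_1=1$, $y'_1=0$.
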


\begin{proof}
Let $y,y'\in \X_d$ be such that $d_H(y,y')=1$, i.e. $y$ and $y'$ disagree on exactly one clause in $M$. 
We want to upper bound the quantity $D(y)/D(y')$ by $\alpha'=(\alpha+1)^k-1$.
To this end, and without loss of generality, let $y_1\neq y_1'$ and let the clause $K_1$ in $M$ where $y$ and $y'$ disagree be a function of the first $k$ bits in $\X_n$.
Because $M$ is variable disjoint, and since $K_1$ is a disjunction of literals, if we fix the bits $x_{k+1}, \dots, x_n$, then there exists a unique assignment of $x_1, \dots, x_k$ such that $\Phi(x)_1=0$ (where $x=x_1\dots x_n$), and thus the remaining $2^k-1$ are such that $K_1$ evaluates to $1$. 
Hence, to upper bound $D(y)/D(y')$, we will assume that $y_1=1$ and $y_1'=0$.

Now, we can partition the preimage $\prephi{y}$ into $\{P_{x'}\}_{x'\in\prephi{y'}}$, where each $x\in P_{x'}$ disagrees with $x'$ on at least one of the first $k$ bits and is the same on the remaining $n-k$ bits. 
Thus
\begin{align*}
\frac{D'(y)}{D'(y')}
&= \frac{\sum_{x'\in\prephi{y'}} \sum_{x\in P_{x'}} D(x)} {\sum_{x'\in\prephi{y'}} D(x')}\\
&\leq \frac{\sum_{x'\in\prephi{y'}} D(x') \sum_{x\in P_{x'}} \alpha^{d_H(x,x')} } {\sum_{x'\in\prephi{y'}} D(x')} \tag{by log-Lipschitzness of $D$}\\
&= \frac{\left( (\alpha+1)^k -1 \right)\sum_{x'\in\prephi{y'}} D(x') } {\sum_{x'\in\prephi{y'}} D(x')}\\
&= (\alpha+1)^k -1 
\enspace,
\end{align*}
where we used the fact $(\alpha+1)^k=\sum_{i=0}^k {k \choose i} \alpha^i$ for the third step.
\end{proof}

We are now ready to prove Lemma~\ref{lemma:rob-risk-mon-dl-2}.
\begin{proof}{Proof of Lemma~\ref{lemma:rob-risk-mon-dl-2}}
  Since $\varphi$ has no cover of size $s$, it has a matching $M$ such
  that $|M|\geq\frac{s}{k}$.  By definition, each literal appears in at
  most one clause in $M$, hence, by removing at most a fraction
  $\frac{k}{k+1}$ of the
  clauses in $M$, we can assume without loss of generality that each
  variable occurs in at most one clause of $M$ and $M$ has cardinality 
  $ d:=\frac{s}{k(k+1)}$.

  Consider the map $\Phi:\X_n \rightarrow \X_d$, where $\Phi(x)$
  encodes the truth values of the clauses in $M$ under the assignment
  $x$.  Since the clauses in $M$ are variable-disjoint, $\Phi$ is
  non-expansive under the respective Hamming metrics on $\X_n$ and
  $\X_d$, meaning that $d_H(\Phi(x),\Phi(y))\leq d_H(x,y)$ for all 
  $x,y\in\X_n$.  Thus for all $x\in \X_n$,
  \[ \exists y \in B_\rho(x) \cdot y \models \varphi \implies
    \boldsymbol{1}\in B_\rho(\Phi(x)) \, . \] It will suffice to show
  that the probability over $x\sim D$ that the right-hand side condition of
  the above implication holds true is at most $\varepsilon$.
  
  Define a distribution $D'$ on $\X_d$ by
  $D'(y):=\sum_{x \in \Phi^{-1}(y)} D(x)$.  By Lemma~\ref{lemma:matching-embedding}, we have that
  $D'$ is $\alpha'$-log-Lipschitz for $\alpha':= (\alpha+1)^k-1$.
  We wish to upper-bound the probability over 
  $x' \sim D'$ that $\boldsymbol{1}\in B_\rho(x')$.
  For this, we will apply Lemma~\ref{lemma:rob-risk-dl} over the space
  $\X_d$ with distribution $D'$.  Indeed,
  our assumptions on $\eta$ and $s$ entail that
   $\eta = \frac{1}{1+\alpha'}$ and $d \geq \max \left\{
  \frac{4}{\eta^2}\log\left(\frac{1}{\varepsilon}\right),
  \frac{2\rho}{\eta}\right \}$.  Thus Lemma~\ref{lemma:rob-risk-dl}
   gives that $\Prob{x' \sim D'}{\boldsymbol{1}\in B_\rho(\Phi(x'))}
   \leq \varepsilon$.  This concludes the proof.
 \end{proof}
    
We are now ready to prove the main result of the section.

\begin{theorem}
\label{thm:dl-rob-2}
  Let $\mathcal{D}=\{\mathcal{D}_n\}_{n\in\mathbb{N}}$, where $\mathcal{D}_n$ is a set of
     $\alpha$-$\log$-Lipschitz distributions on $\{0,1\}^n$ for all $n\in\mathbb{N}$.   Then the classes of
    2-decision lists and monotone $k$-decision lists (for every fixed
    $k$) are $\rho$-robustly learnable with respect to
    $\mathcal{D}$ for robustness function $\rho(n)=\log n$.
\end{theorem}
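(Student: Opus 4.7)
The plan is to mimic the structure of the proof of Theorem~\ref{thm:dl-rob} for 1-decision lists, but with the cover-based machinery of Lemmas~\ref{lemma:consistent-dl-2} and~\ref{lemma:rob-risk-mon-dl-2} in place of the depth-based arguments. First I would invoke a proper PAC-learning algorithm $\mathcal{A}$ for $k$-{\dl} with polynomial sample complexity (for fixed $k$, e.g., the $k$-clause embedding of Rivest's algorithm), fed accuracy $\eta_0 := (1+\alpha)^{-s}$ and confidence $\delta$, for a cover-size parameter $s$ to be fixed below. On a successful run (which occurs with probability at least $1-\delta$), Lemma~\ref{lemma:consistent-dl-2} yields $c \equiv_s h$.

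Next I would rewrite the robust risk as
\[ \roblosse(c,h) \;\leq\; \sum_{(i,j):\, v_i\neq v'_j} \Prob{x\sim D}{\exists z\in B_\rho(x) : z\models \varphi^{(c,h)}_{i,j}}. \]
Under $c \equiv_s h$, Definition~\ref{def:cover-depth} implies that every formula $\varphi^{(c,h)}_{i,j}$ appearing in the sum has no cover of size $\leq s$. Because $k$ is fixed and the monotone/$k{=}2$ restriction limits the number of distinct $k$-clauses to $\operatorname{poly}(n)$, both $c$ and $h$ have length $\operatorname{poly}(n)$, so the number $M$ of summands is $\operatorname{poly}(n)$. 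The crucial ingredient is then Proposition~\ref{prop:stay-k}, which guarantees that in both settings covered by the theorem each $\varphi^{(c,h)}_{i,j}$ remains a $k$-CNF after taking the resolution closure; this is precisely what enables Lemma~\ref{lemma:rob-risk-mon-dl-2} to apply to each summand.

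Applying Lemma~\ref{lemma:rob-risk-mon-dl-2} to each formula with local error budget $\varepsilon/M$, and union-bounding, gives $\roblosse(c,h)\leq \varepsilon$ as soon as $\frac{s}{k(k+1)} \geq \max\bigl\{\frac{4}{\eta^2}\log(M/\varepsilon),\; \frac{2\rho}{\eta}\bigr\}$ with $\eta = (1+\alpha)^{-k}$. With $\rho=\log n$ and $M=\operatorname{poly}(n)$, it suffices to take $s$ to be a constant multiple (depending on $k$ and $\alpha$) of $\log n + \log(1/\varepsilon)$. Then $s$ is logarithmic in $n$ and $1/\varepsilon$, so the PAC accuracy $\eta_0 = (1+\alpha)^{-s}$ is $1/\operatorname{poly}(n,1/\varepsilon)$ and $\mathcal{A}$'s sample complexity remains polynomial in $n$, $1/\varepsilon$ and $1/\delta$.

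The principal difficulty, relative to the 1-{\dl} case, is that freezing the first $d$ nodes in each list is too crude: two lists can agree up to any fixed depth yet disagree on an input whose activation pattern is forced by only a handful of literals jointly across the two lists. The right complexity measure turns out to be the minimum cover size of the resolution closure of the relevant activation formula, and the monotone-or-$k{=}2$ hypothesis is used solely to keep the arity at $k$ so that Lemma~\ref{lemma:rob-risk-mon-dl-2} applies. Without this hypothesis the resolution closure can inflate the arity (as noted in Remark~\ref{rmk:res-closure}), and a different, induction-based argument is needed for the fully non-monotone case.
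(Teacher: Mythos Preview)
Your proposal is correct and follows essentially the same approach as the paper: run a proper PAC learner with accuracy $(1+\alpha)^{-s}$, use Lemma~\ref{lemma:consistent-dl-2} to obtain $c\equiv_s h$, invoke Proposition~\ref{prop:stay-k} so each remaining $\varphi^{(c,h)}_{i,j}$ is a $k$-CNF with no small cover, apply Lemma~\ref{lemma:rob-risk-mon-dl-2} termwise, and union-bound over the $\operatorname{poly}(n)$ pairs $(i,j)$. One small inaccuracy: the $\operatorname{poly}(n)$ bound on the number of clauses (and hence on $M$) holds for any $k$-decision list, not just in the monotone/$k{=}2$ case; the monotone/$k{=}2$ hypothesis is used solely to keep the resolution closure a $k$-CNF, as you correctly note later.
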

\begin{proof}
  Let $\A$ be the (proper) PAC-learning algorithm for k-DL as
  in~\cite{rivest1987learning}, with sample complexity $\poly(\cdot)$.
  Fix the input dimension $n$, target concept $c$ and distribution
  $D\in \D_n$, and let $\rho=\log n$.  Fix the accuracy parameter
  $0<\varepsilon<1/2$ and confidence parameter $0<\delta<1/2$ and let
  $\eta=1/(1+\alpha)$.  Let
  $s_0 = k(k+1) \max \left\{
  \frac{4}{\eta^2}\log\left(\frac{e^4n^{2k+2}}{16\varepsilon}\right),
  \frac{2\rho}{\eta}\right \}$,
  write $m=\lceil\poly(n,1/\delta,\eta^{-s_0})\rceil$, and note that
  $m$ is polynomial in $n$, $1/\delta$ and $1/\varepsilon$.

  Let $S\sim D^m$ and $h=\A(S)$.  Then
$ \Prob{x\sim D}{h(x)\neq c(x)}<\eta^{-s_0}$ with probability at least
$1-\delta$.  But, by Lemma~\ref{lemma:consistent-dl-2},
$ \Prob{x\sim D}{h(x)\neq c(x)}<\eta^{s_0}$ implies that then
$c\equiv_{s_0} h$.  Hence $c\equiv_{s_0} h$ with probability at least
$1-\delta$.  

In case $c\equiv_{s_0} h$, an input $x \in \X$ only leads to a
classification error if it activates nodes $i$ and $j$ in $c$ and $h$
respectively such that the formula $\varphi^{(c,h)}_{i,j}$ has no 
cover of cardinality $s_0$.
Fix $i$ and $j$ such that $\varphi^{(c,h)}_{i,j}$ has no 
cover of cardinality $s_0$.
Now $\varphi^{(c,d)}_{i,j}$ is a $k$-CNF
formula by Proposition~\ref{prop:stay-k}.  Hence 
the probability that a $\rho$-bounded adversary can
make $\varphi^{(c,d)}_{i,j}$ true  is at most $\frac{16\varepsilon}{e^4n^{2k+2}}$ by
Lemma~\ref{lemma:rob-risk-mon-dl-2}.  Taking a union bound over all
possible choices of $i$ and $j$ (there are 
$\sum_{i=1}^k{n\choose k}\leq k\left(\frac{en}{k}\right)^k$ possible clauses
in $k$-decision lists, which gives us a crude estimate of 
$k^2\left(\frac{en}{k}\right)^{2k}\leq \frac{e^4n^{2k+2}}{16} $ choices of 
$i$ and $j$) we conclude that
$\roblosse(h,c) < \varepsilon$.

\end{proof}

\subsection{Non-Monotone Decision Lists}
\label{sec:k-dl}

In this section, we extend the reasoning from the previous section to non-monotone $k$-\dl, thus showing the efficient robust learnability of this concept class under log-Lipschitz distributions. 
This is done by the following result of independent interest: under log-Lipschitz distributions, the 
probability mass of the $\log(n)$-expansion of the set of satisfying
assignments of a $k$-CNF formula can be bounded above by an arbitrary
constant $\varepsilon>0$, given an upper bound on the probability of a
satisfying assignment.  The latter bound is polynomial in
$\varepsilon$ and $1/n$.  
Given two decision lists $c,h\in k$-DL, the set of inputs in
which $c$ and $h$ differ can be written as a disjunction of
polynomially many (in the combined length of $c$ and $h$) $k$-CNF
formulas.  The $\log(n)$-expansion of this set is then the set of
inputs where a $\log(n)$-bounded adversary can force an error at test
time. The combinatorial approach, below, 
differs from the approach of Section~\ref{sec:mon-k-dl} in the
special case of monotone $k$-DL, which relied on facts about
propositional logic.

Before going further, let us outline where the reasoning from Section~\ref{sec:mon-k-dl} fails when the monotonicity assumption does not hold.
The idea behind the proof of Lemma~\ref{lemma:rob-risk-mon-dl-2} was ultimately to show the existence of a sufficiently large matching in the hypergraph structure of a $k$-CNF formula to guarantee that an adversary could not cause a misclassification.
We obtained a maximal matching and transformed it into a \emph{variable-disjoint} one (crucial for the adversarial argument) through a minimal cover, which we can guarantee is satisfiable by the resolution closure property.
It is crucial that the latter be satisfiable in order to show that bounding the error results in consistency over covers (Lemma~\ref{lemma:consistent-dl-2}).
When considering non-monotone $k$-CNF formulas, the resolution closure could result in a $k'$-CNF formula where $k'$ depends on the number of variables $n$. 
As the value $k'$ would appear in the degree of the polynomial upper bounding the sample complexity, we would not be able to guarantee efficient robust learnability.
Our reasoning below still makes use of the maximal matching idea, but we directly relate the standard and robust risks.


Now, for a given formula $\varphi$ on variables in $\boolhc$, we will denote by $\satrho$ the set $\set{x\in\boolhc \given \exists z \in B_\rho(x) \st  z \models \varphi}$ of instances in $\boolhc$ that are at most $\rho$ bits away from a satisfying assignment of $\varphi$.
Setting $\rho=0$, we recover the set of satisfying assignments of $\varphi$.
Note that if $\varphi$ is a formula expressing the discrepancy between two functions $c$ and $h$, then $\satnot(\varphi)$ represents the instances in $\boolhc$ contributing to the standard loss (hence the probability measure of the set $\satnot(\varphi)$ is the \emph{error} between the two functions under a given distribution).
Similarly, $\satrho(\varphi)$ represents the set of instances contributing to the $\rho$-\emph{robust} loss, and its probability measure is the \emph{robust risk} between the two functions $c$ and $h$.

\begin{theorem}
\label{thm:k-cnf}
Suppose that $\varphi\in k$-CNF and let $D$ be an
$\alpha$-log-Lipschitz distribution on the valuations of $\varphi$.
Then there exist constants $C_1,C_2,C_3,C_4\geq 0$ that depend on
$\alpha$ and $k$ such that if the probability  of a satisfying
assignment $\satnot(\varphi)$ satisfies
$\Prob{x\sim D}{x\in\satnot(\varphi)} <
C_1\varepsilon^{C_2}\min\set{\varepsilon^{C_3},n^{-C_4} } $, then the
$\log(n)$-expansion of the set of satisfying assignments has
probability mass bounded above by $\varepsilon$.
\end{theorem}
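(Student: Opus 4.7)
The plan is to analyze the hypergraph $G_\varphi$ whose vertices are literals of $\varphi$ and whose hyperedges are the clauses, and to relate both the standard probability $\Prob{}{\satnot(\varphi)}$ and the robust probability $\Prob{}{\satrho(\varphi)}$ to the size of a maximum matching $M$ in $G_\varphi$. Set $\eta := \frac{1}{1+\alpha}$ and choose a threshold $d_0 = O(\log n + \log(1/\varepsilon))$ depending on $\alpha$ and $k$. From $M$ I would first extract a \emph{variable}-disjoint sub-matching $M'$ with $|M'| \geq |M|/(k+1)$: since $M$ is literal-disjoint, any variable appears in at most two clauses of $M$ (once positively, once negatively), so each clause of $M$ conflicts with at most $k$ others in the variable-sharing graph, and a greedy independent-set argument produces $M'$.

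In the large-matching regime $|M'| \geq d_0$, I would bound $\Prob{}{\satrho(\varphi)}$ directly, without reference to the standard probability. Let $Y$ count the clauses of $M'$ satisfied by $x \sim D$. Because the clauses of $M'$ are variable-disjoint and each has at most $k$ variables, Lemma~\ref{lemma:log-lips-facts} implies that, conditional on any assignment to the variables outside a clause $K \in M'$, the probability that $K$ is unsatisfied is at least $\eta^k$. A supermartingale in the style of the proofs of Lemma~\ref{lemma:rob-risk-dl} and Lemma~\ref{lemma:rob-risk-mon-dl-2}, combined with Azuma-Hoeffding, then yields $Y \leq |M'|(1-\eta^k) + \sqrt{2|M'|\log(2/\varepsilon)}$ with probability $1 - \varepsilon/2$. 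Since variable-disjointness guarantees that each bit-flip toggles at most one indicator $\mathbf{1}[K\text{ satisfied}]$, any $z \in B_\rho(x)$ with $z \models \varphi$ forces $Y \geq |M'| - \rho$, which the calibration of $d_0$ makes happen with probability at most $\varepsilon/2$.

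In the small-matching regime $|M| < (k{+}1)d_0$, the formula $\varphi$ admits a vertex cover $V$ of size at most $k(k{+}1)d_0$, so its set of underlying variables $W$ satisfies $|W| = O(\log n + \log(1/\varepsilon))$. For each assignment $a \in \set{0,1}^W$, the restriction $\varphi|_a$ is a $(k{-}1)$-CNF on the remaining variables, because every clause that $a$ fails to satisfy loses all of its $V$-literals and had at least one such literal. I would then induct on $k$: apply the inductive hypothesis to each $\varphi|_a$ with a suitably rescaled accuracy, union-bound over the $2^{|W|}$ settings of $a$, and convert conditional probabilities to marginals using Lemma~\ref{lemma:log-lips-facts}. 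The base case $k=1$ reduces to the conjunction-expansion bound of Lemma~\ref{lemma:rob-risk-dl}.

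The hardest part will be closing this induction in the small-matching regime. Conditioning on $x_W$ couples the adversary's budget on $W$ with its remaining budget on $\bar W$, and the $2^{|W|} \leq n^{O(1)}$ union-bound cost must be absorbed by the polynomial $n^{-C_4}$ slack in the hypothesis, while the inductive failure probabilities aggregate into the $\varepsilon^{C_2+C_3}$ factor. This is precisely where the $\min\set{\varepsilon^{C_3}, n^{-C_4}}$ shape of the bound originates: the $\varepsilon^{C_3}$ factor tracks the Azuma-Hoeffding slack from the large-matching regime, whereas $n^{-C_4}$ absorbs the union-bound cost of enumerating $a \in \set{0,1}^W$ and iterating the induction down to $k=1$.
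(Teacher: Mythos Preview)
Your overall plan---induct on $k$, split into a large-matching regime handled by an Azuma--Hoeffding bound on variable-disjoint clauses, and a small-matching regime handled by enumerating assignments to the cover variables and recursing on $(k{-}1)$-CNF restrictions---is exactly the structure of the paper's proof. The large-matching case is essentially identical (the paper takes a maximal \emph{variable}-disjoint clause set directly rather than first taking a literal-disjoint matching and extracting, but this is cosmetic).

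The substantive difference is in the small-matching case, and it is precisely where you flag the ``hardest part'' without resolving it. You propose to \emph{condition} on $x_W=a$, apply the inductive hypothesis to $\varphi|_a$ on the reduced space $\{0,1\}^{\bar W}$ under the conditional distribution, and then worry about how the adversary's budget splits between $W$ and $\bar W$. The paper sidesteps this entirely: rather than conditioning, it keeps the assignment as part of the formula. That is, it defines
\[
\varphi_a' \;:=\; \varphi|_a \;\wedge\; \bigwedge_{i\in W} l_i^{(a)},
\]
where the $l_i^{(a)}$ are the unit clauses encoding $x_W=a$. Since unit clauses are $1$-CNF, $\varphi_a'$ is still a $(k{-}1)$-CNF for $k\geq 2$, but now on the \emph{full} variable set $\{0,1\}^n$. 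One then applies the inductive hypothesis to each $\varphi_a'$ under the original distribution $D$ with the full budget $\rho=\log n$; no conditioning, no budget splitting. Because the $\varphi_a'$ partition $\satnot(\varphi)$, the assumed bound on $\Prob{}{\satnot(\varphi)}$ immediately bounds each $\Prob{}{\satnot(\varphi_a')}$, and a union bound over the $|\asst|\leq 2^{k|\mathcal M|}$ assignments controls $\satlog(\varphi)$.

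So your proposal is on the right track but leaves a genuine gap at exactly the point you identify; the fix is not to carry out the budget-coupling analysis you anticipate, but to avoid it altogether by absorbing the cover-assignment into the formula and staying on the full space.
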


\begin{corollary}
\label{cor:k-dl}
The class of $k$-decision lists is efficiently $\log(n)$-robustly learnable under log-Lipschitz distributions.
\end{corollary}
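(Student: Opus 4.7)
The plan is to reduce the corollary to Theorem~\ref{thm:k-cnf} via the standard PAC-learning algorithm for $k$-{\dl} (Algorithm~\ref{alg:dl-pac}) used as a black box, combined with a decomposition of the error region into polynomially many $k$-CNF formulas and a union bound. The crucial point, which is what allowed Section~\ref{sec:mon-k-dl} to go through only for the monotone and $k=2$ cases, is that here we do \emph{not} require the formulas describing activated node pairs to remain $k$-CNF after resolution closure; Theorem~\ref{thm:k-cnf} applies directly to a $k$-CNF description of the error region, so the obstacle from Remark~\ref{rmk:res-closure} disappears.

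First, given target $c=((K_1,v_1),\dots,(K_r,v_r))$ and hypothesis $h=((K'_1,v'_1),\dots,(K'_s,v'_s))$ in $k$-{\dl}, define for each pair $(i,j)$ with $v_i\neq v'_j$ the formula
\[
\psi^{(c,h)}_{i,j} \;:=\; \neg K_1\wedge\cdots\wedge \neg K_{i-1}\wedge K_i \wedge \neg K'_1\wedge\cdots\wedge \neg K'_{j-1}\wedge K'_j\enspace,
\]
\emph{without} taking resolution closure. Each $\neg K_l$ is a disjunction of at most $k$ literals (one clause), and each $K_i$, $K'_j$ contributes unit clauses, so $\psi^{(c,h)}_{i,j}$ is syntactically a $k$-CNF. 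The set of inputs on which $c$ and $h$ disagree is then the disjoint union $\bigsqcup_{(i,j):v_i\neq v'_j}\sat_0(\psi^{(c,h)}_{i,j})$, and the $\log(n)$-expansion of the error region is contained in $\bigcup_{(i,j)}\sat_{\log(n)}(\psi^{(c,h)}_{i,j})$. Since decision lists of clause-size $k$ have at most $M=O(n^{2k})$ pairs $(i,j)$, a union bound will reduce the task to bounding each $\sat_{\log(n)}(\psi^{(c,h)}_{i,j})$ by $\varepsilon/M$.

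Next, fix accuracy $\varepsilon$ and confidence $\delta$. Let $C_1,C_2,C_3,C_4$ be the constants from Theorem~\ref{thm:k-cnf} and set
\[
\epsilon_0 \;:=\; C_1\,(\varepsilon/M)^{C_2}\min\{(\varepsilon/M)^{C_3},\,n^{-C_4}\}\enspace,
\]
which is polynomially small in $1/n$ and $1/\varepsilon$. Run the PAC-learning algorithm for $k$-{\dl} with accuracy $\epsilon_0$ and confidence $\delta$; this uses $\poly(n,1/\epsilon_0,\log(1/\delta))=\poly(n,1/\varepsilon,\log(1/\delta))$ examples and, with probability at least $1-\delta$, returns a hypothesis $h$ with $\err_D(h,c)<\epsilon_0$. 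Because the disjoint decomposition above gives
\[
\sum_{(i,j):v_i\neq v'_j}\Prob{x\sim D}{x\in\sat_0(\psi^{(c,h)}_{i,j})} \;=\; \err_D(h,c) \;<\; \epsilon_0\enspace,
\]
each individual summand satisfies the hypothesis of Theorem~\ref{thm:k-cnf} with target $\varepsilon/M$, hence $\Prob{x\sim D}{x\in\sat_{\log(n)}(\psi^{(c,h)}_{i,j})}\leq \varepsilon/M$ for every relevant pair. A union bound then yields $\roblosse(h,c)\leq \varepsilon$ with probability at least $1-\delta$.

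The main conceptual obstacle has already been absorbed into Theorem~\ref{thm:k-cnf}, which controls the $\log(n)$-expansion of satisfying sets of arbitrary $k$-CNFs under log-Lipschitz distributions; once this is in hand, the remaining work is essentially bookkeeping. The only subtlety worth flagging is that we must use the direct $k$-CNF representation of $\psi^{(c,h)}_{i,j}$ rather than its resolution closure (unlike Section~\ref{sec:mon-k-dl}), so that the parameter $k$ in the conclusion of Theorem~\ref{thm:k-cnf} remains a fixed constant and the polynomial sample complexity is preserved.
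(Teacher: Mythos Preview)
Your proposal is correct and follows essentially the same approach as the paper's own proof: define the $k$-CNF formulas $\psi^{(c,h)}_{i,j}$ without resolution closure, run the PAC algorithm with accuracy set to the polynomial threshold dictated by Theorem~\ref{thm:k-cnf} and the $O(n^{2k})$ union bound, and conclude. You also correctly identify the key conceptual point---that dropping the resolution-closure step (unlike in Section~\ref{sec:mon-k-dl}) is precisely what lets Theorem~\ref{thm:k-cnf} apply with fixed $k$.
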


Given Theorem~\ref{thm:k-cnf}, the proof of Corollary~\ref{cor:k-dl} is similar to Theorem~\ref{thm:dl-rob-2}, and is included in Appendix~\ref{app:cor-k-dl} for completeness.
We note that it is imperative that the constants $C_i$ do not depend on the learning parameters or the input dimension, as the quantity $C_1\varepsilon^{C_2}\min\set{\varepsilon^{C_3},n^{-C_4} }$ is directly used as the accuracy parameter in the (proper) PAC learning algorithm for decision lists, which is used as a black box.


To prove Theorem~\ref{thm:k-cnf}, we will need several lemmas from the previous section.
Some of these have been adapted to this setting, and are outlined below for ease of reading.
The proofs of these lemmas are (nearly) identical to those in the previous section, and hence are omitted avoid redundancy.
The first is an adaptation of Lemma~\ref{lemma:consistent-dl} for conjunctions, which was originally stated for decision lists:

\begin{lemma}
\label{lemma:conj-err-length}
Let $\varphi$ be a conjunction and let $D$ be an $\alpha$-log-Lipschitz distribution. 
If $\Prob{x\sim D}{x\models \varphi}<\left(1+\alpha\right)^{-d}$, then $\varphi$ is a conjunction on at least $d$ variables.
\end{lemma}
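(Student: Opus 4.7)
The plan is to prove the contrapositive: assuming $\varphi$ is a conjunction on fewer than $d$ variables, I will show $\Prob{x\sim D}{x\models \varphi} \geq (1+\alpha)^{-d}$, which directly contradicts the hypothesis.

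First, I would observe that, as noted earlier in the chapter, we may assume without loss of generality that $\varphi$ is in its minimal representation, so no literal appears twice and no two literals contradict each other. In particular, the set of literals in $\varphi$ picks out a set $S\subseteq[n]$ of distinct variables and fixes each of them to a specific value $b_i\in\{0,1\}$. An assignment $x$ satisfies $\varphi$ if and only if $x_i=b_i$ for every $i\in S$, so
\[
\Prob{x\sim D}{x\models\varphi}=\Prob{x\sim D}{x_i=b_i\text{ for all }i\in S}\enspace.
\]

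The key step is then a direct invocation of Lemma~\ref{lemma:log-lips-facts}(4), which gives
\[
\Prob{x\sim D}{x_i=b_i\text{ for all }i\in S}\geq \left(\frac{1}{1+\alpha}\right)^{|S|}=(1+\alpha)^{-|S|}\enspace.
\]
If $\varphi$ involves $l:=|S|<d$ variables, then $(1+\alpha)^{-l}>(1+\alpha)^{-d}$, and combining the two displays gives $\Prob{x\sim D}{x\models\varphi}>(1+\alpha)^{-d}$, contradicting the hypothesis. Hence $\varphi$ must be a conjunction on at least $d$ variables.

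There is no real obstacle here: the result is essentially an unpacking of the marginal lower bound for log-Lipschitz distributions (Lemma~\ref{lemma:log-lips-facts}(4)) combined with the minimal-representation convention. The only thing to be careful about is the reminder that a conjunction in minimal form is satisfiable and corresponds to fixing exactly the values at the literals' variables, so that the inequality from Lemma~\ref{lemma:log-lips-facts}(4) applies cleanly.
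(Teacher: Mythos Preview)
Your proof is correct and is essentially the paper's intended argument: the paper states that this lemma is an adaptation of Lemma~\ref{lemma:consistent-dl} for conjunctions with a (nearly) identical proof, and that proof proceeds exactly as you do---via the contrapositive and an appeal to the marginal lower bound of Lemma~\ref{lemma:log-lips-facts}(4).
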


%

Finally, we will use the following lemma, which will be used in the inductive step of the induction proof.
It is nearly identical to Lemma~\ref{lemma:rob-risk-mon-dl-2}, which was stated for covers instead.

\begin{lemma}
  \label{lemma:rob-risk-dl-2}
  Let $\varphi$ be a $k$-CNF formula that has a set of variable-disjoint clauses of size $M$.
  Let $D$ be an $\alpha$-log-Lipschitz distribution on
  valuations for $\varphi$.  Let $0<\varepsilon<1/2$ be arbitrary
  and set $\eta:=\left({1+\alpha}\right)^{-k}$.  If
 $M \geq \max \left\{
  \frac{4}{\eta^2}\log\left(\frac{1}{\varepsilon}\right),
  \frac{2\rho}{\eta}\right \}$ then
  $\Prob{x\sim D}{\exists z \in B_\rho(x) \cdot z \models \varphi}
    \leq \varepsilon$.

\end{lemma}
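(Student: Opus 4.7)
The plan is to mirror the proof of Lemma~\ref{lemma:rob-risk-mon-dl-2}, but short-cut the hypergraph manipulation: here we are already handed the variable-disjoint set of clauses, so we do not need to extract one from a minimal cover (and hence we sidestep the resolution-closure argument that forced the monotonicity or $k=2$ restriction in Section~\ref{sec:mon-k-dl}).

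Concretely, let $K_1,\dots,K_M$ be the given variable-disjoint clauses of $\varphi$. Define the embedding $\Phi:\X_n \to \X_M$ by letting $\Phi(x)_i$ be the truth value of $K_i$ under the assignment $x$. Because the $K_i$ are variable-disjoint, flipping a single bit of $x$ affects the value of at most one coordinate of $\Phi(x)$, so $\Phi$ is non-expansive with respect to the Hamming metrics on $\X_n$ and $\X_M$; in particular, for every $x \in \X_n$,
\begin{equation*}
\bigl(\exists z \in B_\rho(x)\st z\models \varphi\bigr)\;\Longrightarrow\; \mathbf{1}\in B_\rho(\Phi(x))\enspace,
\end{equation*}
since any satisfying $z$ must make every $K_i$ true and hence sends $\Phi(z)=\mathbf{1}$.

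Next I would push $D$ forward along $\Phi$, setting $D'(y):=\sum_{x\in\Phi^{-1}(y)}D(x)$. By Lemma~\ref{lemma:matching-embedding}, $D'$ is $\alpha'$-log-Lipschitz with $\alpha'=(1+\alpha)^k-1$, and observe that $\tfrac{1}{1+\alpha'}=(1+\alpha)^{-k}=\eta$, which is exactly the log-Lipschitz constant appearing in the hypothesis. Apply Lemma~\ref{lemma:rob-risk-dl} on $\X_M$ with distribution $D'$ to the conjunction $\psi:=x_1\wedge\cdots\wedge x_M$ of length $M$: the hypothesis $M\geq\max\{\tfrac{4}{\eta^2}\log(1/\varepsilon),\,2\rho/\eta\}$ is precisely the hypothesis $d\geq\max\{\tfrac{4}{\eta^2}\log(1/\varepsilon),\,2\rho/\eta\}$ of that lemma, so it yields
\begin{equation*}
\Prob{y\sim D'}{\exists z \in B_\rho(y)\st z\models \psi}\leq \varepsilon\enspace.
\end{equation*}

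Finally, combining the two ingredients,
\begin{equation*}
\Prob{x\sim D}{\exists z \in B_\rho(x)\st z\models \varphi}
\leq \Prob{x\sim D}{\mathbf{1}\in B_\rho(\Phi(x))}
= \Prob{y\sim D'}{\mathbf{1}\in B_\rho(y)}
\leq \varepsilon\enspace,
\end{equation*}
which is the desired bound. I do not expect a real obstacle here: the only thing to be careful about is verifying that the pushed-forward distribution is log-Lipschitz with the right constant so that $\eta$ comes out correctly, and this is exactly what Lemma~\ref{lemma:matching-embedding} delivers; the non-expansiveness of $\Phi$ under Hamming distance, which is what lets us transport the perturbation budget $\rho$ through $\Phi$, is immediate from variable-disjointness.
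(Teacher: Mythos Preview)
Your proposal is correct and matches the paper's intended approach: the paper explicitly notes that the proof of this lemma is ``nearly identical to Lemma~\ref{lemma:rob-risk-mon-dl-2}'' and omits it, and you have faithfully reconstructed exactly that argument, correctly observing that the variable-disjoint clauses are handed to you directly so the cover/matching/resolution extraction step is unnecessary.
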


We are now ready to prove Theorem~\ref{thm:k-cnf}.  The main idea
behind the proof is to consider a given $k$-CNF formula $\varphi$ and
distinguish two cases: (i) either $\varphi$ contains a
sufficiently-large set of variable-disjoint clauses, in which case the
adversary is not powerful enough to make $\varphi$ satisfied by
Lemma~\ref{lemma:rob-risk-dl-2}; or (ii) we can rewrite $\varphi$ as the
disjunction of a sufficiently small number of $(k-1)$-CNF formulas,
which allows us to use the induction hypothesis to get the desired
result.  The final step of the proof is to derive the constants
mentioned in the statement of Theorem~\ref{thm:k-cnf}.
 
\begin{proof}[Proof of Theorem~\ref{thm:k-cnf}]

We will use the lemmas above and restrictions on $\varphi$ to show the following.\\

\emph{Induction hypothesis:}
Suppose that $\varphi$  is a $(k-1)$-CNF formula and let $D$ be an $\alpha$-log-Lipschitz distribution on the valuations of $\varphi$. 
Then there exist constants $C_1,C_2,C_3,C_4\geq 0$ that depend on $\alpha$ and $k$ and satisfy $C_3\geq\frac{\eta}{2}C_4$ such that if $\Prob{x\sim D}{x\in\satnot(\varphi)} < C_1\varepsilon^{C_2}\min\set{\varepsilon^{C_3},n^{-C_4} }$, then $\Prob{x\sim D}{x\in\satlog(\varphi)}\leq \varepsilon $.\\

\emph{Base case:} This follows from Lemmas~\ref{lemma:conj-err-length} and \ref{lemma:rob-risk-dl}.
Set $\eta$ to $(1+\alpha)^{-1}$, and $C_1=1$, $C_2=0$, $C_3=\frac{4}{\eta^2}$ and $C_4=\frac{2}{\eta}$. 
Note that $C_3\geq\frac{\eta}{2}C_4$.\\

\emph{Inductive step:} 
Suppose $\varphi\in k$-CNF and let $D$ be an $\alpha$-log-Lipschitz distribution on the valuations of $\varphi$. 
Set $\eta=(1+\alpha)^{-k}$.
Let $C_1',C_2',C_3',C_4'$ be the constants in the induction hypothesis for $\varphi'\in (k-1)$-CNF.
Set the following constants:
\begin{align*}
&C_1= C_1'2^{-k(C_2'+C_3')}\\ & C_2=C_2'+C_3' \\
&C_3= \frac{8}{\eta^2}\max\set{C_2',C_3'}\\ &C_4= \frac{2}{\eta}\max\set{C_2',C_3'}\enspace,
\end{align*}
and note that these are all constants that depend on $k$ and $\alpha$ by the induction hypothesis, and that $C_3\geq\frac{\eta}{2}C_4$.

Let $\Prob{x\sim D}{x\in\satnot (\varphi)}< C_1 \varepsilon^{C_2}\min\set{\varepsilon^{C_3},n^{-C_4} }$.
Let $\mathcal{M}$ be a maximal set of clauses of $\varphi$ such that no two clauses contain the same variable.
Denote  by $I_\matching$ the indices of the variables in $\matching$ and let $M=\max\set{\frac{4}{\eta^2}\log\frac{1}{\varepsilon},\frac{2}{\eta}\log n}$. \\

We distinguish two cases:

(i) $\abs{\matching}\geq M$:  
Then 
$$\Prob{x\sim D}{x\models \varphi}
\leq\Prob{x\sim D}{x\models \bigwedge_i C_i} 
\leq (1-\eta^k)^{\abs{\matching}}
\leq \exp(-\eta^{k\abs{\matching}})
\enspace,$$
We can then invoke Lemma~\ref{lemma:rob-risk-dl-2} to guarantee that $\Prob{x\sim D}{x\in\satlog}\leq \varepsilon$, and we get the required result.
\\

(ii) $\abs{\matching}<M$:

Then let $\asst$ be the set of assignments of variables in $\matching$, i.e. $a\in\asst$ is a function $a:I_{\matching}\rightarrow \set{0,1}$, which represents a partial assignment of variables in $\varphi$.
We can thus rewrite $\varphi$ as follows:
\begin{equation*}
\varphi \equiv \bigvee_{a\in\asst} \left( \varphi_a \wedge \bigwedge_{i\in\varM}  l_i \right)\enspace,
\end{equation*}
where $\varphi_a$ is the restriction of $\varphi$ under assignment $a$
and $l_i$ is $x_i$ in case $a(i)=1$ and $\bar{x_i}$ otherwise.  For
short, denote by $\varphi_a'$ the formula
$\varphi_a \wedge \bigwedge_{i\in\varM} l_i$.  By the
  maximality of $\mathcal{M}$ every clause in $\varphi$ mentions some
  variable in $\mathcal{M}$, and hence $\varphi_a'$ is $(k-1)$-CNF.
Moreover, the formulas $\varphi_a'$ are disjoint, in the sense that if
some assignment $x$ satisfies $\varphi_a'$, it will not satisfy
another $\varphi_b'$ for a distinct index $b$.
Note also that 
$$A_{n,\varepsilon}:=\abs{\asst}\leq
2^k\max\set{\left(\frac{1}{\varepsilon}\right)^{4/\eta^2},
  n^{2/\eta}}\enspace
\, . $$

Thus, 
\begin{equation}
\label{eqn:std-risk-split}
\Prob{x\sim D}{x\in\satnot(\varphi)} 
=\sum_{a\in\asst}\Prob{x\sim D}{x\models \varphi_a' }
= \sum_{a\in\asst}\Prob{x\sim D}{x\in \satnot(\varphi_a')}
\enspace.
\end{equation}
By the induction hypothesis, we can guarantee that if
\begin{align}
\label{eqn:std-risk}
 \Prob{x\sim D}{x\in\satnot(\varphi_a')} < \;&C_1'  \left(\frac{\varepsilon}{A_{n,\varepsilon}}\right)^{C_2'}
 \min \set{\left(\frac{\varepsilon}{A_{n,\varepsilon}}\right)^{C_3'},n^{-C_4'} } 
\end{align} 
 for all $\varphi_a'$ then the $\log(n)$-expansion $\satlog(\varphi)$ can be bounded as follows:
\begin{align*}
\Prob{x\sim D}{x\in\satlog(\varphi)}&=\Prob{x\sim D}{\exists z\in B_{\log n}(x) \st z \models \varphi}\\
&=\sum_{a\in\asst}  \Prob{x\sim D}{\exists z\in B_{\log n}(x) \st z \models \varphi_a'} \\
&\leq \sum_{a\in\asst}   \frac{\varepsilon}{A_{n,\varepsilon}} \tag{I.H.} \\
&=\varepsilon \enspace.
\end{align*}

By Equation~\ref{eqn:std-risk-split}, the upper bound  $\Prob{x\sim D}{x\in\satnot (\varphi)}< C_1 \varepsilon^{C_2}\min\set{\varepsilon^{C_3},n^{-C_4} }$  implies an upper bound $\Prob{x\sim D}{x\in\satnot (\varphi_a')}< C_1 \varepsilon^{C_2}\min\set{\varepsilon^{C_3},n^{-C_4} }$ on the probability of the restrictions $\varphi_a'$.
Thus it only remains to show that the condition on $\satnot (\varphi)$  implies that Equation~\ref{eqn:std-risk} holds.

Let us rewrite the RHS of Equation~\ref{eqn:std-risk} as follows, where each of the equations is a stricter condition on $\satnot (\varphi_a')$ than its predecessor:
\begin{align*}
& C_1'  \left(\frac{\varepsilon}{A_{n,\varepsilon}}\right)^{C_2'}\min \set{\left(\frac{\varepsilon}{A_{n,\varepsilon}}\right)^{C_3'},n^{-C_4'} } \\
& \geq C_1'  \left(\frac{\varepsilon}{2^k}\right)^{C_2'}\min\set{\varepsilon^{4C_2'/\eta^2},n^{-2C_2'/\eta}}\min \set{\left(\frac{\varepsilon^{1+4/\eta^2}}{2^k}\right)^{C_3'},\left(\frac{\varepsilon n^{-2/\eta}}{2^k}\right)^{C_3'},n^{-C_4'} } \\
& = C_1'  \left(\frac{\varepsilon}{2^k}\right)^{C_2'}\min\set{\varepsilon^{4C_2'/\eta^2},n^{-2C_2'/\eta}}\min \set{\left(\frac{\varepsilon^{1+4/\eta^2}}{2^k}\right)^{C_3'},\left(\frac{\varepsilon n^{-2/\eta}}{2^k}\right)^{C_3'} }\\
& = C_1'  2^{-k(C_2'+C_3')}\varepsilon^{C_2'+C_3'}\min\set{\varepsilon^{4C_2'/\eta^2},n^{-2C_2'/\eta}} \min \set{\varepsilon^{4C_3'/\eta^2}, n^{-2C_3'/\eta} }\\
&\geq C_1'  2^{-k(C_2'+C_3')}\varepsilon^{C_2'+C_3'}\min\set{\varepsilon^{8C_2'/\eta^2},n^{-4C_2'/\eta},\varepsilon^{8C_3'/\eta^2}, n^{-4C_3'/\eta} }\\
&= C_1'  2^{-k(C_2'+C_3')}\varepsilon^{C_2'+C_3'}\min\set{\varepsilon^{8\max\set{C_2',C_3'}/\eta^2},n^{-4\max\set{C_2',C_3'}/\eta} }\\
&= C_1 \varepsilon^{C_2}\min\set{\varepsilon^{C_3},n^{-C_4}}
\enspace,
\end{align*}
where the first step is by definition of $A_{n,\varepsilon}$, the second from the induction hypothesis, which guarantees $C_3'\geq\frac{\eta}{2}C_4'$, and the fourth from the property $\min\set{a,b}\cdot\min\set{c,d}\geq\min\set{a^2,b^2,c^2,d^2}$.
Finally, the last equality follows by the definition of the $C_i$'s.

Note that we set $\eta=(1+\alpha)^{-k}$ to be able to apply Lemma~\ref{lemma:rob-risk-dl-2} in the first part of the inductive step.
Then, $A_{n,\epsilon}$ is a function of $\eta=(1+\alpha)^{-k}$.
When we consider the distribution on the valuations of the restriction $\varphi_a'$, we still operate with an $\alpha$-log-Lipschitz distribution on its valuations, by Lemma~\ref{lemma:log-lips-facts}.

\emph{Constants.}
We want to get explicit constants $C_1,C_2,C_3$ and $C_4$ as a function of $k$ and $\eta
$.
Note that $\eta=(1+\alpha)^{-k}$ is dependent on $k$.  
Let us recall the recurrence system from the inductive step:
\begin{align*}
&C_1^{(k)}= C_1^{(k-1)}2^{-k(C_2^{(k-1)}+C_3^{(k-1)})} \\
& C_2^{(k)}=C_2^{(k-1)}+C_3^{(k-1)} \\
&C_3^{(k)}= \frac{8}{\eta^2}\max\set{C_2^{(k-1)},C_3^{(k-1)}} \\
&C_4^{(k)}= \frac{2}{\eta}\max\set{C_2^{(k-1)},C_3^{(k-1)}}\enspace.
\end{align*}
It is easy to see that $C_3^{(k)}\geq C_2^{(k)}$ for all $k\in\N$.
If we fix $\eta=(1+\alpha)^{-k}$ at each level of the recurrence,  we can now consider the following recurrence system, which dominates the previous one:
\begin{align*}
&C_1^{(k)}= C_1^{(k-1)}2^{-2kC_3^{(k-1)}}\\
 & C_2^{(k)}=2C_3^{(k-1)} \\
&C_3^{(k)}= \frac{8}{\eta^2}C_3^{(k-1)}\\
&C_4^{(k)}= \frac{2}{\eta}C_3^{(k-1)}\enspace.
\end{align*}
We can now see that 
\begin{align*}
&C_2^{(k)}= 2\left(\frac{8}{\eta^2}\right)^{k-1} = 2(8(1+\alpha)^{2k})^{k-1}\\
&C_3^{(k)}=\left(\frac{8}{\eta^2}\right)^k=(8(1+\alpha)^{2k})^{k}  \\
&C_4^{(k)}= \frac{2}{\eta}\left(\frac{8}{\eta^2}\right)^{k-1}= 2(1+\alpha)^k(8(1+\alpha)^{2k})^{k-1}
\enspace.
\end{align*}
Finally, we can get a lower bound on the value of $C_1^{(k)}$ as follows:
\begin{align*}
C_1^{(k)}
&=\prod_{i=2}^k 2^{-2iC_3^{(i-1)}}\\
&=2^{-2\sum_{i=2}^k i\cdot  \left(\frac{8}{\eta^2}\right)^{(i-1)}}\\
&\geq 2^{-2k^2\left(\frac{8}{\eta^2}\right)^{(k-1)}} \\
&= 2^{-2k^2(8(1+\alpha)^{2k})^{k-1}}
\enspace,
\end{align*}
which concludes the proof.
\end{proof}

\paragraph*{Comparing the sample complexity of monotone and non-monotone $k$-{\dl}.}  Earlier in this chapter, we stated that directly using the non-monotone analysis could result in higher sample complexity in case we are working with monotone  $k$-{\dl}.
Indeed, observe that the maximum degree of the $\frac{1}{\epsilon}$ term in the polynomial for \emph{monotone} decision lists in Theorem~\ref{thm:dl-rob-2} is $O(k^2(1+\alpha)^2)$ and the maximum degree of the $n$ term is $O(k^3(1+\alpha)^2)$, while the maximum degrees of the $\frac{1}{\epsilon}$ and $n$ terms are $O(8^k(1+\alpha)^{2k^2})$ and  $O(k\cdot8^k(1+\alpha)^{2k^2})$ for \emph{non-monotone} decision lists in Corollary~\ref{cor:k-dl}, respectively. 
Thus, for both the $\frac{1}{\epsilon}$ and $n$ terms, using the general $k$-decision list bound comes at the cost of a polynomial degree that has an exponential dependence in $k$.

\section{Decision Trees}
\label{sec:dt}
In this section, we show that, under $\alpha$-log-Lipschitz
distributions, for any two decision trees and perturbation budget
$\rho(n)=O(\log n)$, the $\rho$-robust risk is bounded above by a
polynomial in the number $n$ of propositional variables, the 
combined size $m$ of the trees, and their standard
risk.  This result makes explicit the relationship between both
notions of risk.

Despite the fact that it is not known whether the class of decision
trees is PAC-learnable, relating the standard and robust risks for
this class is still of interest if we can show that a small enough
standard risk only incurs a polynomial blowup in the robust risk.
This could be particularly compelling in the local membership query
model of \cite{awasthi2013learning}, where an algorithm can request
labels for points that are $O(\log(n))$ bits away from a point in the
training sample.  The authors showed that, in this framework, the
class of polynomial-sized decision trees is learnable (in polynomial
time) under product distributions using $O(\log(n))$-local membership
queries.  Moreover, \cite{odonnell2007learning} show that monotone
decision trees are PAC learnable under the uniform distribution, so
our result holds in this setting as well.

\paragraph*{Terminology.}
A decision tree $c$ over $n$ propositional variables is a finite
binary tree whose internal nodes are labeled by elements of the set
$\{1,\ldots,n\}$ and whose leaves are labeled either $0$ or $1$.  The
depth of a leaf is the number of internal nodes of the tree in the
(unique) path from the root to the given leaf.  An input
$x\in \X=\{0,1\}^n$ determines a path through such a tree, starting at
the root, as follows: at an internal node with label $i$ descend to
the left child if $x_i=0$ and descend to the right child if $x_i=1$.
We say that $x\in \X$ \emph{activates a given leaf node} if the path
determined by $x$ leads to the given leaf.  In this way a decision
tree $c$ determines a function $c: \X\rightarrow \{0,1\}$, where
$c(x)$ is the label of the leaf activated by $x$.

Given two decision trees $c,h$, both over $n$ propositional variables,
and given $d\in \mathbb{N}$, we say that $c$ and $h$ are \emph{consistent up to depth $d$}, denoted $c=_d h$, if for all $x \in \X$ such that $x$ activates leaves of depth at most $d$ in both $c$ and $h$, we
have $c(x)=h(x)$.  In the same vein as
Lemma~\ref{lemma:consistent-dl}, given $d \in \mathbb{N}$ we have that
$c=_d h$ provided that $\Prob{x\sim D}{h(x)\neq c(x)}$ is sufficiently
small:

\begin{lemma}
\label{lemma:risk-depth}
Let $D$ be a $\alpha$-log-Lipschitz distribution. If
$\Prob{x\sim D}{h(x)\neq c(x)}<(1+\alpha)^{-2d}$ then $c =_d h$.
\end{lemma}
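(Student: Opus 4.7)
The plan is to mimic the proof of Lemma~\ref{lemma:consistent-dl} for 1-decision lists, replacing ``path in the list'' with ``path from the root to a leaf in the tree'' and contradicting the hypothesis on $\Prob{x\sim D}{h(x)\neq c(x)}$ by exhibiting a sufficiently likely set of disagreement.

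I would argue the contrapositive. Suppose $c \neq_d h$. Then by definition there exists some $x^\star\in\X$ that activates a leaf $\ell_c$ of depth $d_c\leq d$ in $c$ and a leaf $\ell_h$ of depth $d_h\leq d$ in $h$, with distinct labels. The set of inputs activating $\ell_c$ is exactly the set of valuations satisfying a conjunction $\varphi_c$ of $d_c$ literals, one literal per internal node on the root-to-$\ell_c$ path (the literal being $x_i$ or $\bar{x_i}$ according to which child is taken). Likewise the set of inputs activating $\ell_h$ is the set of satisfying assignments of a conjunction $\varphi_h$ of $d_h$ literals. Because $x^\star$ satisfies both, the conjunction $\varphi := \varphi_c \wedge \varphi_h$ is consistent; after removing repeated literals it mentions at most $d_c + d_h \leq 2d$ distinct variables, and every $x\models\varphi$ satisfies $c(x)\neq h(x)$ (since the leaf labels differ).

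By part~(4) of Lemma~\ref{lemma:log-lips-facts}, the probability under the $\alpha$-log-Lipschitz distribution $D$ of fixing at most $2d$ coordinates to prescribed values is at least $\left(\frac{1}{1+\alpha}\right)^{2d}$. Hence
\begin{equation*}
\Prob{x\sim D}{h(x)\neq c(x)} \;\geq\; \Prob{x\sim D}{x\models \varphi} \;\geq\; (1+\alpha)^{-2d},
\end{equation*}
which contradicts the assumption and completes the contrapositive.

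There is no real obstacle here: the argument is essentially the decision-tree rephrasing of Lemma~\ref{lemma:consistent-dl}, with the only subtlety being the bookkeeping of literals along the two activation paths so that the combined conjunction has at most $2d$ literals and is guaranteed to be satisfiable (witnessed by $x^\star$).
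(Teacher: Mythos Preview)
Your proof is correct and follows exactly the approach the paper intends: the paper explicitly omits the proof, stating that it follows Lemma~\ref{lemma:consistent-dl} \emph{mutatis mutandis}, and your contrapositive argument---extracting a satisfiable conjunction of at most $2d$ literals from the two root-to-leaf activation paths and applying Lemma~\ref{lemma:log-lips-facts}(4)---is precisely that adaptation.
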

We omit the proof of Lemma~\ref{lemma:risk-depth}, which
follows that of Lemma~\ref{lemma:consistent-dl} \emph{mutatis mutandis}.

We can now bound the robust risk between decision trees as a
polynomial in the of the number of propositional variables, the
log-Lipschitz constant, their combined size, and their standard risk.

\begin{theorem}
  Let $c$ and $h$ be two decision trees on $n$ propositional variables
  with at most $m$ nodes in total for both trees.  Let $D$ be an
  $\alpha$-log-Lipschitz distribution on $\X_n$ and $\rho=\log n$.
  There is a fixed polynomial $\mathrm{poly}(\cdot,\cdot,\cdot)$ such that
  for all $0<\varepsilon<\frac{1}{2}$, if
  $\Prob{x\sim D}{h(x)\neq c(x)} < \mathrm{poly}(\frac{1}{m},\frac{1}{n},\varepsilon)$,
then   $\roblosse(c,h)<\varepsilon$.
  \label{thm:tree-loss}
\end{theorem}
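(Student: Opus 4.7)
The strategy mirrors the argument for $1$-decision lists in Section~\ref{sec:dl}: choose a depth $d_0$ logarithmic in $n$, $m$, and $1/\varepsilon$, use Lemma~\ref{lemma:risk-depth} to force $c =_{d_0} h$ provided the standard risk is small, then bound the robust-expansion contribution of every leaf of depth $>d_0$ via Lemma~\ref{lemma:rob-risk-dl} and a union bound over at most $m$ such leaves.

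Concretely, let $\eta := \frac{1}{1+\alpha}$ and set
\[
d_0 \;:=\; \left\lceil\max\!\left\{\tfrac{4}{\eta^{2}}\log\tfrac{2m}{\varepsilon},\;\tfrac{2\rho}{\eta}\right\}\right\rceil,
\]
which is $O(\log(nm/\varepsilon))$ since $\rho=\log n$. I would then define
\[
\mathrm{poly}\!\left(\tfrac{1}{m},\tfrac{1}{n},\varepsilon\right) \;:=\; (1+\alpha)^{-2d_{0}},
\]
and observe this is polynomial in $1/m$, $1/n$, and $\varepsilon$ because $(1+\alpha)^{-d_0}$ is an $(nm/\varepsilon)^{-\Theta(\log(1+\alpha))}$ quantity. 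Assuming $\Prob{x\sim D}{h(x)\neq c(x)} < (1+\alpha)^{-2d_0}$, Lemma~\ref{lemma:risk-depth} delivers $c=_{d_0} h$.

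Next I would argue that every $x$ contributing to $\roblosse(c,h)$ must, at some witnessing perturbation $z\in B_\rho(x)$ with $c(z)\neq h(z)$, activate a leaf of depth $>d_0$ in at least one of $c$ or $h$. Indeed, if $z$ activated only leaves of depth $\leq d_0$ in both trees, the relation $c=_{d_0} h$ would give $c(z)=h(z)$, a contradiction. Crucially, the set of $z\in\X$ activating a fixed leaf $L$ of depth $\ell$ in $c$ (or $h$) is exactly the set of satisfying assignments of the conjunction $\varphi_L$ of length $\ell$ encoding the root-to-$L$ path. Hence
\[
\set{x\in\X \st \exists z\in B_\rho(x), c(z)\neq h(z)} \;\subseteq\; \bigcup_{L} \set{x\in\X \st \exists z\in B_\rho(x), z\models \varphi_L},
\]
where $L$ ranges over all leaves of depth $>d_0$ in $c$ or in $h$ (at most $m$ such leaves in total).

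For each such $L$, the conjunction $\varphi_L$ has length $>d_0 \geq \max\{\frac{4}{\eta^2}\log(2m/\varepsilon),\frac{2\rho}{\eta}\}$, so Lemma~\ref{lemma:rob-risk-dl} (applied with accuracy $\varepsilon/(2m)$) gives
\[
\Prob{x\sim D}{\exists z\in B_\rho(x)\st z\models\varphi_L} \;\leq\; \frac{\varepsilon}{2m}.
\]
A union bound over the at most $m$ deep leaves yields $\roblosse(c,h) \leq \varepsilon$, completing the proof.

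The only nontrivial step is the reduction to conjunction-expansion: one must verify both that $c=_{d_0} h$ really forces every classification-flipping perturbation to route through a deep leaf, and that the resulting union bound has at most $m$ terms (which follows from counting leaves rather than pairs of leaves, by attributing each disagreement to the deep side). The remainder is a routine choice of constants to align the threshold of Lemma~\ref{lemma:risk-depth} with the expansion bound of Lemma~\ref{lemma:rob-risk-dl}.
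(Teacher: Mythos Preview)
Your proposal is correct and is essentially identical to the paper's own proof: the paper also sets $d:=\max\{\frac{4}{\eta^2}\log(\frac{m}{\varepsilon}),\frac{2\rho}{\eta}\}$, takes $\mathrm{poly}(\frac{1}{m},\frac{1}{n},\varepsilon):=(1+\alpha)^{-2d}$, invokes Lemma~\ref{lemma:risk-depth} to obtain $c=_d h$, and then union-bounds over the at most $m$ deep leaves using Lemma~\ref{lemma:rob-risk-dl} with accuracy $\varepsilon/m$. Your extra factor of $2$ in $\log(2m/\varepsilon)$ is harmless and your explicit justification of why a disagreement must route through a deep leaf is a welcome clarification of a step the paper leaves implicit.
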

\begin{proof}
  Write
  $d:=\max\left\{
    \frac{4}{\eta^2}\log\left(\frac{m}{\varepsilon}\right) ,
    \frac{2\rho}{\eta} \right\}$ and define
  $\mathrm{poly}(\frac{1}{m},\frac{1}{n},\varepsilon) :=
  (1+\alpha)^{-2d}$.

  The assumption that
  $\Prob{x\sim D}{h(x)\neq c(x)} < (1+\alpha)^{-2d}$ implies that $c$
  and $h$ are consistent to depth $d$ by Lemma~\ref{lemma:risk-depth}.  This means that
  $c(x) \neq h(x)$ only on those inputs $x \in \X$ that activate some
  leaf node of depth strictly greater than $d$, either in $c$ or $h$.
  By Lemma~\ref{lemma:rob-risk-dl}, for each such node the probability
  that a $\rho$-bounded adversary can activate the node by perturbing
  the bits of a randomly generated input $x\sim D$ is at most
  $\frac{\varepsilon}{m}$.  Taking a union bound over the nodes
  of depth $>d$ (there are at most $m$ of them), we conclude that $\roblosse(h,c) \leq \varepsilon$.
\end{proof}

\section{Summary of Results and Open Problems}
\label{sec:rt-summary}
In this chapter, we showed the efficient robust learnability of various concept classes under distributional assumptions, as outlined in Table~\ref{tab:rob-thresholds-summary}.
We finish this chapter by commenting on the general techniques used throughout this text and discussing avenues for future work.

\begin{table}[]
\begin{tabular}{l|c|c}
\textbf{Concept Class}  & \textbf{Distributional Assumption} & \multicolumn{1}{l}{\textbf{Robustness Threshold}} \\ \hline
Non-trivial             & None (distribution-free)         & {0}             \\
Mon. Conjunctions   & log-Lipschitz              & { $\Theta(\log(n))$}          \\
Parities                & log-Lipschitz              & {$n$ (exact)}                \\
Majorities              & Uniform                    & {$n$ (exact)}                \\
Mon. Decision Lists & log-Lipschitz              & {$\Theta(\log(n))$}        \\
Non-Mon. DL         & log-Lipschitz              & {$\Theta(\log(n))$}          \\
\noindent Halfspaces              & log-Lipschitz              & {$\Theta(\log(n))$?}         \\
PAC classes             & Uniform                    & {?}                         
\end{tabular}
\caption{The robustness thresholds of concept classes from Chapters~\ref{chap:def-adv-rob} and~\ref{chap:rob-thresholds}, and open problems.}
\label{tab:rob-thresholds-summary}
\end{table}

The techniques from this chapter can be viewed as bounding the expansion of sets in the boolean hypercube.
These sets represent supersets of the error region between the target concept and hypothesis, and their expansions, the instances an adversary could perturb to cause a misclassification.
In general, we consider the \emph{measure} of these sets, but, when working under the uniform distribution, we can simply consider their \emph{size}.

As the target and hypothesis come from the same concept class in all the results presented in this chapter, the standard \emph{proper} PAC-learning algorithms can be used as black boxes for efficient robust learning.
Indeed, by controlling the measure of the error region (by some polynomial $p(\cdot,\cdot)$ in $\epsilon$ and $1/n$), we can control the measure of its $\rho$-expansion and bound it above by the desired robust accuracy $\epsilon$. 
In some cases (parities, majorities, conjunctions that are logarithmically-bounded in length, shallow decision lists), the standard error is always sufficiently large to allow exact learning (i.e., if the standard error is strictly smaller than $p(\epsilon,1/n)$, it must be zero).
However, in general, exact learning is not a prerequisite for robust learning.

In all the cases where we showed robust, but not exact, robust learning, we expressed the error region as a disjunction $\varphi$ of $k$-CNF formulas.
The set $\mathsf{SAT}(\varphi)$ of satisfiable assignments of $\varphi$ thus represents the indicator set of whether an instance $x\in\boolhc$ belongs to the error region.
Likewise, $\mathsf{SAT}_\rho(\varphi)$, the set of points at distance at most $\rho$ from $\mathsf{SAT}(\varphi)$, represents the set of points incurring a robust loss against a $\rho$-bounded adversary.
This argument is illustrated in Figure~\ref{fig:unifying-result}.

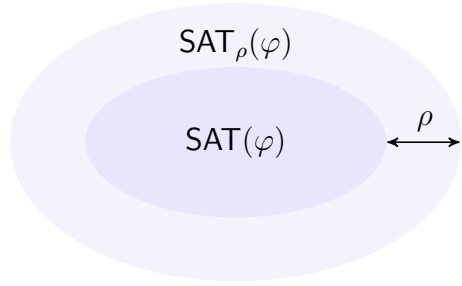
\begin{figure}
\begin{center}
 	\begin{tikzpicture}[<->,>=stealth',shorten >=1pt,auto,semithick]
 		\node[ellipse,fill=blue!80!red!5,minimum width=6cm,minimum height=3.7cm] (sat-log) {};
 		\node[ellipse,fill=blue!90!red!10,minimum width=4cm,minimum height=2cm] (sat) {$\mathsf{SAT}(\varphi)$};
 		\node[above of=sat, yshift=+0.3cm] (sat-log-label) {$\mathsf{SAT}_\rho(\varphi)$}; 
 		\path (sat.east) edge node {$\rho$} (sat-log.east);
  	\end{tikzpicture}
\end{center}
\caption{A unifying result. When $\rho=\log(n)$, $\mathsf{SAT}_\rho(\varphi)$, the $\rho$-expansion of the error region, is not too large compared to the set $\mathsf{SAT}(\varphi)$.}
\label{fig:unifying-result}
\end{figure}

A compelling avenue for future work would be to derive sample complexity lower bounds for $k$-{\dl} which have an explicit dependence on $k$ as well as the adversarial budget, as the lower bound $\Omega(2^\rho)$ on the sample complexity was derived for monotone conjunctions.

Another clear direction forward is to generalize the results obtained in this chapter to a wider variety of concept classes. 
An immediate candidate for this is the class of linear classifiers, which are the building blocks of more expressive concept classes such as neural networks.
Since linear classifiers subsume monotone conjunctions, the exponential dependence on the adversarial budget under the uniform distribution shown in Chapter~\ref{chap:def-adv-rob} extends to this concept class as well.
It must then be that the robustness threshold of linear classifiers under log-Lipschitz distributions is $O(\log n)$. 
Moreover, note that we showed in  Section~\ref{sec:maj} that majority functions can be exactly and thus robustly learned under the uniform distribution. Since majorities are subsumed by linear classifiers and that their robustness threshold is $n$, there is no evidence yet on linear classifiers having a robustness threshold that is $o(\log n)$ under the uniform distribution.

Were the robustness threshold of linear classifiers to also be $\log n$, an interesting open problem would be whether this extends to concept classes with polynomially-bounded VC dimension.

\begin{center}
\textbf{Open Problem:}\\
\emph{Let $\A$ be a sample-efficient PAC-learning algorithm for concept class $\C$ on $\boolhc$.
Is $\A$ also a
sample-efficient $\log(n)$-robust learning algorithm for $\C$ under the uniform distribution?}
\end{center}

A positive result could be based on properties of $\log(n)$ expansions of ``nice'' subsets of $\boolhc$, e.g., through the use of isoperimetric inequalities.
In any case, characterizing the efficient robust learnability of concept classes under the uniform distribution with a complexity measure akin to the VC dimension in PAC learning is a compelling avenue for future work.

To solve this open problem, one may be tempted to extend the following result to robust learning, which relates the error between two functions $f$ and $g$ and their respective Fourier spectra, $\widehat{f}(S)$, $\widehat{g}(S)$ for $S\subseteq[n]$.
To apply this result to learning theory, one of the two functions would be the target $f$, and the other $g$, an approximation of $f$ through Fourier coefficient estimates:

\begin{theorem}[\cite{linial1993constant}]
Let $g:\boolhc\rightarrow\R$ be a real-valued function and $D$ be the uniform distribution on $\boolhc$. 
For any $f:\boolhc\rightarrow\classesfa$, 
\begin{equation}
\Prob{x\sim D}{f(x)\neq\sgn(g(x))}
\leq \sum_{S\subseteq[n]}\left(\widehat{f}(S)-\widehat{g}(S)\right)^2\enspace.
\end{equation}
\end{theorem}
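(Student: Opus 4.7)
The plan is to combine Parseval's identity with a pointwise lower bound on the squared error $(f(x)-g(x))^2$ whenever $f$ and $\sgn(g)$ disagree. The right-hand side is already set up to invite Parseval: since the parity functions $\{\chi_S\}_{S\subseteq[n]}$ form an orthonormal basis for real-valued functions on $\boolhcfa$ under the uniform distribution's inner product $\langle\cdot,\cdot\rangle$, the function $f-g$ has Fourier coefficients $\widehat{f}(S)-\widehat{g}(S)$, and therefore
\begin{equation*}
\sum_{S\subseteq[n]}\left(\widehat{f}(S)-\widehat{g}(S)\right)^2 \;=\; \eval{x\sim D}{(f(x)-g(x))^2}.
\end{equation*}
So the claim reduces to showing $\Prob{x\sim D}{f(x)\neq\sgn(g(x))} \leq \eval{x\sim D}{(f(x)-g(x))^2}$.

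Next, I would establish the pointwise inequality $(f(x)-g(x))^2 \geq \mathbf{1}[f(x)\neq\sgn(g(x))]$ for every $x\in\boolhcfa$. This is the crux but it is essentially a case check. Fix $x$ and assume $f(x)\neq\sgn(g(x))$; since $f(x)\in\classesfa$, we have either $f(x)=1$ with $g(x)\leq 0$, or $f(x)=-1$ with $g(x)\geq 0$. In both cases $|f(x)-g(x)|\geq 1$, hence $(f(x)-g(x))^2\geq 1$. When $f(x)=\sgn(g(x))$ the indicator is $0$ and the inequality is trivial (the squared term is nonnegative). Taking expectations over $x\sim D$ gives
\begin{equation*}
\Prob{x\sim D}{f(x)\neq\sgn(g(x))} \;\leq\; \eval{x\sim D}{(f(x)-g(x))^2},
\end{equation*}
which combined with the Parseval identity above yields the stated bound.

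There is no substantive obstacle here; the only subtlety worth flagging is the convention for $\sgn(0)$, which is harmless because $g(x)=0$ forces $|f(x)-g(x)|=1$ regardless of how one defines $\sgn(0)$, so the pointwise inequality still holds. The proof is essentially a two-line calculation: (i) Parseval, and (ii) the elementary observation that a $\pm 1$-valued function disagreeing in sign with a real number is at least one unit away from it.
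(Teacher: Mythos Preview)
Your proof is correct and follows essentially the same approach as the paper: the pointwise inequality $\mathbf{1}[f(x)\neq\sgn(g(x))]\leq (f(x)-g(x))^2$ followed by Parseval's identity. The paper states the pointwise bound on $|f(x)-g(x)|$ and then squares, and writes the final Parseval step as an inequality rather than the equality you (correctly) use, but the substance is identical.
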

The proof is reproduced below.
\begin{proof}
First note that since the image of $f$ is $\classesfa$, 
\begin{equation}
\label{eqn:indicator-sgn-bound}
\mathbf{1}[f(x)\neq \sgn(g(x))]\leq \abs{f(x)-g(x)}\enspace.
\end{equation}
Squaring both sides, and taking the expectation, we get:
\begin{align}
\Prob{x\sim D}{f(x)\neq\sgn(g(x))}
&\leq \eval{x\sim D}{(f(x)-g(x))^2}\\
&= \sum_{S\subseteq[n]}\left(\widehat{f-g}(S)\right)^2 \label{eqn:parseval}\\
&\leq \sum_{S\subseteq[n]}(\widehat{f}(S)-\widehat{g}(S))^2\label{eqn:fc-plus-minus}\enspace,
\end{align}
where Equation~\ref{eqn:parseval} follows from Parseval's formula, and Equation~\ref{eqn:fc-plus-minus} from the identity $\widehat{f\pm g}\leq\widehat{f}(S)\pm \widehat{g}(S)$.
\end{proof}

Where does the reasoning break when considering the robust risk?

If we look at Equation~\ref{eqn:indicator-sgn-bound}, its robust counterpart would be:
\begin{equation}
\mathbf{1}[\exists z \in B_\rho(x) \st f(x)\neq \sgn(g(x))]
\leq \underset{z\in B_\rho(x)}{\max} \abs{f(z)-g(z)}\enspace.
\end{equation}
Now, squaring both sides and taking the expectation, we get
\begin{align}
\Prob{x\sim D}{\exists z \in B_\rho(x) \st f(x)\neq \sgn(g(x))}
&\leq \eval{x\sim D}{\underset{z\in B_\rho(x)}{\max} \abs{f(z)-g(z)}^2}
\enspace.
\end{align}
Note that we cannot take the $\max$ out of the expectation, as it  is defined with respect to $x$ (and, less importantly, this function is not convex, implying that Jensen's inequality cannot be applied).
It is then apparent that relating the robust risk and the Fourier spectrum, if it is possible, would require a more complex argument than in the standard classification case.

\chapter{Robust Learning with Local Queries}
\label{chap:local-queries}

The previous chapters of this thesis considered a learning model in which the learner only has access to random examples. 
This is rather restrictive for the learner, especially considering the adversary's power: the study of the \emph{existence} of adversarial examples in our setting assumes that the adversary has full knowledge of the target and no computational limitations.
In the face of the impossibility or hardness of robustly learning certain concept classes from the previous chapters, it is natural to study whether these issues can be circumvented and robust learning guarantees obtained by giving more power to the learner -- a line of thinking echoed in practice.
For example, adversarial training \citep{goodfellow2015explaining,madry2018towards} and data augmentation are common procedures in applied machine learning. 
In the latter, data is moderately altered\footnote{E.g., images are slightly rotated or translated, which does not change their label.} and added to the dataset, usually with the goal of improving accuracy. 
In the former, the goal is to improve robust accuracy; the training dataset is augmented with adversarial examples, which are usually found for a specific model after training.

This chapter investigates  the power of \emph{local queries} in robust learning.
Local queries allow the learner to obtain information in the vicinity of the training sample.
This setting sits between the  PAC-learning framework of \cite{valiant1984theory} and the  membership and equivalence query model of \cite{angluin1987learning}, in which there is no distribution, and where the learner can obtain information on the whole instance space (see Section~\ref{sec:active-learning} for more background on the topic).

We now outline our contributions.
Section~\ref{sec:lq-models} recalls the local membership query ($\LMQ$) model of \cite{awasthi2013learning}, and introduces local \emph{equivalence} queries.
In Section~\ref{sec:rob-learn-lmq}, we show that local membership queries do not improve the robustness threshold of conjunctions under the uniform distribution: giving the learner access to both the $\EX$ and $\LMQ$ oracles still results in a joint sample and query complexity that is \emph{exponential} in the adversarial budget.
This justifies studying the more powerful local equivalence query model in our setting.
In Section~\ref{sec:rob-learn-leq}, we first show that distribution-free robust learning remains impossible for a wide variety of concept classes in the case in which the region covered by local equivalence queries is a strict subset of the adversary's perturbation region.
However, when the two regions coincide,\footnote{This is the equivalent of querying the robust loss on a point and obtaining a counterexample, if it exists.} we do get distribution-free robust learning guarantees.
In particular, we give general sample and query complexity upper bounds, as well as bounds for specific concept classes.
However, the query complexity can be unbounded in case  the Littlestone dimension of a concept class is infinite. 
We address this potential issue in Section~\ref{sec:adv-bounded-precision}, where we limit the adversary's \emph{precision} and give upper bounds on the query complexity in this setting with techniques and tools adapted from the online learning of margin-based hypothesis classes \citep{ben2009agnostic}.
In Section~\ref{sec:qc-lb-leq}, we give general local equivalence query lower bounds and instantiate them to particular concept classes.
We finish the technical contributions of this chapter with a more nuanced comparison between the local membership and equivalence query oracles, and between the local and global oracles in Section~\ref{sec:comparing-lq}.
We conclude this chapter with Section~\ref{sec:lq-summary}, which outlines avenues for future work.

\section{Two Local Query Models}
\label{sec:lq-models}

In this section, we present two query models in which the learner can gather information local to the training sample, in the spirit of membership and equivalence queries \citep{angluin1987learning} (Section~\ref{sec:active-learning}).
The main distinction is that, given a sample $S$ drawn from the example oracle, a query for a point $x$ is $\lambda$-\emph{local} if there exists $x'\in S$ such that their distance is at most $\lambda$.
We first present the $\lambda$-local membership query ($\lambda$-$\LMQ$) set-up of \cite{awasthi2013learning}, which allows the learner to query the label of points that are at distance at most $\lambda$ from a sample $S$ drawn randomly from $D$.
In the formal definition of the LMQ model below, we have changed the standard risk to the robust risk for our purposes (the model was initially developed in the context of standard binary classification).

\begin{definition}[$\lambda$-$\LMQ$ Robust Learning]
\label{def:lmq}
Let $\X_n$ be the instance space together with a metric $d$, $\C_n$ a concept class over $\X_n$, and $\D_n$ a class of distributions over $\X_n$. We say that $\C_n$ is $\rho$-robustly learnable using $\lambda$-local membership queries with respect to $\D_n$ if there exists a learning algorithm $\A$ such that for every $\epsilon > 0$, $\delta > 0$, for every distribution $D\in\D_n$ and every target concept $c\in\C_n$, the following hold:
\begin{enumerate}
\item $\A$ draws a sample $S$ of size $m = \poly(n, 1/\delta, 1/\epsilon,\text{size}(c))$ using the example oracle $\EX (c, D)$;
\item Each query $x'$ made by $\A$ to the $\LMQ$ oracle is $\lambda$-local with respect to some example $x \in S$, i.e., $x'\in B_\lambda(x)$;
\item  $\A$ outputs a hypothesis $h$ that satisfies $\risk_\rho^D(h,c)\leq \epsilon$ with probability at least $1-\delta$;
\item The running time of $\A$ (hence also the number of oracle accesses) is polynomial in $n$, $1/\epsilon$, $1/\delta$ and the output hypothesis $h$ is polynomially evaluable.
\end{enumerate}
\end{definition}

Note that, similarly to $\rho$, we implicitly consider $\lambda$ to be a function of the input dimension $n$. 
Moreover, we implicitly assume that a concept $c\in\C_n$ can be represented in size polynomial in $n$, where $n$ is the input dimension; otherwise a parameter $size(c)$ can be introduced in the sample and query complexity requirements.
A similar assumption will apply to the local equivalence query model below.
Finally, note that, in both cases, it is also possible to extend this definition to an arbitrary neighbourhood function $\U:\X\rightarrow 2^\X$ (similarly to how the adversarial perturbation function can be generalized in the same fashion).

Inspired by the $\lambda$-$\LMQ$ learning model, we define  the $\lambda$-local equivalence query ($\lambda$-$\LEQ$) model where, for a point $x$  in a sample $S$ drawn from the underlying distribution $D$ and for a given $h\in\H$, the learner is allowed to query with $(h,x)$ an oracle that returns whether $h$ agrees with the ground truth $c$ in the ball $B_\lambda(x)$ of radius $\lambda$ around $x$.
If they disagree, a counterexample in $B_\lambda(x)$ is returned as well.
Clearly, by setting $\lambda=n$, we recover the equivalence query ($\EQ$) oracle.
Note, moreover, that when $\lambda=\rho$, this is equivalent to querying the (exact-in-the-ball) robust loss around a point.

\begin{definition}[$\lambda$-$\LEQ$ Robust Learning]
\label{def:leq}
Let $\X_n$ be the instance space together with a metric $d$, $\C$ a concept class over $\X_n$, and $\D$ a class of distributions over $\X_n$. We say that $\C$ is $\rho$-robustly learnable using $\lambda$-local equivalence queries with respect to distribution class, $\D$, if there exists a learning algorithm, $\A$, such that for every $\epsilon > 0$, $\delta > 0$, for every distribution $D\in\D$ and every target concept $c\in\C$, the following hold:
\begin{enumerate}
\item $\A$ draws a sample $S$ of size $m = \poly(n, 1/\delta, 1/\epsilon)$ using the example oracle $\EX (c, D)$;
\item Each query made by $\A$ at $x \in S$ and for a candidate hypothesis $h$ to $\lambda$-$\LEQ$ either confirms that $c$ and $h$ coincide on $B_\lambda(x)$ or returns $z\in B_\lambda(x)$ such that $c(z) \neq h(z)$. $\A$ is allowed to update $h$ after seeing a counterexample;
\item  $\A$ outputs a hypothesis $h$ that satisfies $\risk_\rho^D(h,c)\leq \epsilon$ with probability at least $1-\delta$;
\item The running time of $\A$ (hence also the number of oracle accesses) is polynomial in $n$, $1/\epsilon$, $1/\delta$ and the output hypothesis $h$ is polynomially evaluable.
\end{enumerate}
\end{definition}

\paragraph*{Partial queries.}
We remark that both the $\LMQ$ and $\LEQ$ oracles are specific instances of the partial equivalence queries of \cite{maass1992lower}.
In their set-up, the learner can give as input to the $\EQ$ oracle a partial function $h:\X\rightarrow\{0,1,*\}$.
The oracle only evaluates the correctness of $h$ on the restricted domain $\set{x\in\X \given h(x)\neq*}$.

A (local) membership query on $x^*\in\X$ is equivalent to the partial equivalence query for the function
\begin{equation*}
h(x)
=
\begin{cases}
0	&	x=x^* \\
* 		&	\text{otherwise}
\end{cases}
\enspace.
\end{equation*}

Indeed, if $\EQ$ returns ``correct'', we know that $h(x)=0$. 
Alternatively, the only possible counterexample is $x$ with $h(x)=1$.

Likewise, a $\lambda$-local equivalence query $(h,x^*)$ is equivalent to a partial equivalence query of the form
\begin{equation*}
h'(x)
=
\begin{cases}
h(x) 	&	x\in B_\lambda(x^*) \\
* 		&	\text{otherwise}
\end{cases}
\enspace.
\end{equation*}

However, in our set-up, contrary to \citep{maass1992lower}, the learner is restricted to a set of \emph{specific} partial queries rather than having access to any partial query, and is evaluated in the robust PAC-learning framework rather than in the online learning one.

\paragraph*{Comparison with online learning.}
We remark that the $\LEQ$ model evokes the online learning setting, where the learner receives counterexamples after making a prediction, but with a few key differences. 
Contrary to the online setting (and the exact learning framework with $\MQ$ and $\EQ$), there is an underlying distribution with which the performance of the hypothesis is evaluated in both the $\LMQ$ and $\LEQ$ models.
Moreover, in the mistake-bound model of online learning, when receiving a counterexample, the only requirement is that there be a concept that correctly classifies all the data given to the learner up until that point, and so the counterexamples can be given in an \emph{adversarial} fashion, in order to maximize the regret. 
However, both the $\LMQ$ and $\LEQ$ models require that a target concept be chosen a priori, so as to have a well-defined $\EX(c,D)$ oracle.
This is closer to the variant of the online learning setting in which an adversary must fix an instance's label before the learner makes a prediction \citep{littlestone1988learning}.

\section{Robust Learning with Local Membership Queries}
\label{sec:rob-learn-lmq}

In this section, we study the power of local membership queries in robust learning. 
We will focus on whether giving access to a $\lambda$-$\LMQ$ oracle can improve the robustness thresholds from Chapter~\ref{chap:rob-thresholds}.

We show a negative result: the amount of data needed to $\rho$-robustly learn conjunctions under the uniform distribution has an exponential dependence on the adversary's budget $\rho$ even when the learner has access to the $\LMQ$ oracle (in addition to the $\EX$ oracle).
Here, the lower bound on the sample drawn from the example oracle is $2^\rho$, which is the same as the lower bound for \emph{monotone} conjunctions derived in Theorem~\ref{thm:mon-conj}, and the local membership query lower bound is $2^{\rho-1}$. 
The result relies on showing that there there exists a family of conjunctions that remain indistinguishable from each other on any sample of size $2^\rho$ and any sequence of $2^{\rho-1}$ LMQs with constant probability.


\begin{theorem}
\label{thm:conj-lmq-lb}
Fix a monotone increasing robustness function $\rho:\N\rightarrow \N$ satisfying $2 \leq \rho(n) \leq n/4$ for all $n$.
Then, for any query radius $\lambda$, any $\rho(n)$-robust learning algorithm for the class $\Conj$ with access to the $\EX$ and $\lambda$-$\LMQ$ oracles has  joint sample and query complexity lower bounds of $2^\rho$ and $2^{\rho-1}$ under the uniform distribution.
\end{theorem}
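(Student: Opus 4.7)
I would fix the family $\F := \set{c_i}_{i=1}^{N}$ of $N := 2^{2\rho}$ non-monotone conjunctions of length $2\rho$ on the first $2\rho$ variables, indexed by the $2\rho$-bit strings $\mathrm{bin}(1),\ldots,\mathrm{bin}(N)$, with $c_i(x) := \bigwedge_{j=1}^{2\rho} \ell_{i,j}$, where $\ell_{i,j} = x_j$ if $\mathrm{bin}(i)_j = 1$ and $\ell_{i,j} = \bar{x_j}$ otherwise. Then $c_i(z) = 1$ if and only if $z^{[2\rho]} = \mathrm{bin}(i)$, so each $c_i$ is supported on a coset $S_i \subseteq \boolhc$ of mass $2^{-2\rho}$ under the uniform distribution and the $S_i$ are pairwise disjoint. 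The construction fits in $\boolhc$ since $\rho \leq n/4$.

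First I would establish a constant robust-risk lower bound: for $i \neq j$, $\roblosse(c_i, c_j) \geq 1/2$ under the uniform distribution. Since $c_i$ and $c_j$ disagree on $S_i \cup S_j$, it suffices to show $\Prob{x}{d(x, S_i) \leq \rho} \geq 1/2$; but $d(x, S_i) = d(x^{[2\rho]}, \mathrm{bin}(i))$ is distributed as $\mathrm{Bin}(2\rho, 1/2)$, which is at most $\rho$ with probability at least $1/2$ by symmetry. Combined with the triangle inequality for the robust risk (Lemma~\ref{lemma:robloss-triangle}), this implies that for any hypothesis $h$, at most one $c_i \in \F$ satisfies $\roblosse(h, c_i) < 1/4$.

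Next I would prove the key indistinguishability lemma: for any (possibly adaptive and randomized) algorithm $\A$ that makes $m \leq 2^\rho - 1$ random-example queries and $q \leq 2^{\rho-1} - 1$ $\lambda$-$\LMQ$s, with target $c$ drawn uniformly from $\F$, the probability (over $c$, the sample, and $\A$'s coins) that every oracle response equals $0$ is at least $5/8$. The crucial structural fact -- which holds \emph{regardless of} the query radius $\lambda$ -- is that for any fixed query point $z$, exactly one member of $\F$ has $c_i(z) = 1$, namely the one with $\mathrm{bin}(i) = z^{[2\rho]}$. Conditional on the first $t-1$ responses being $0$, the posterior over $c$ is uniform on at least $N - (t-1)$ survivors, so the $t$-th response is $1$ with conditional probability at most $1/(N - t + 1)$; telescoping $\prod_{t=1}^{T}(N-t)/(N-t+1) = (N-T)/N$ with $T := m+q \leq 3\cdot 2^{\rho-1}$ shows that the all-zero view has probability at least $1 - 3/2^{\rho+1} \geq 5/8$ for $\rho \geq 2$.

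Combining the two steps concludes the argument. Conditioned on the all-zero event, the algorithm's trace -- and hence its output $h$ -- is identical for every target consistent with this view, and the posterior on the surviving set $V \subseteq \F$ remains uniform with $|V| \geq N - T \geq N/2$. By the first step, $\roblosse(h, c) \geq 1/4$ for all but at most one $c \in V$, so the conditional probability of $\roblosse(h, c) \geq 1/4$ given the all-zero view is at least $1 - 2/|V|$. Multiplying by the probability of the all-zero view yields an unconditional bound bounded away from $0$ by a constant on $\Prob{c, S}{\roblosse(h, c) \geq 1/4}$; hence some target $c \in \F$ has $\Prob{S}{\roblosse(h, c) \geq 1/4}$ bounded below by a constant, ruling out $\rho$-robust learning for sufficiently small $\epsilon$ and $\delta$. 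The main obstacle will be handling the adaptivity of the $\LMQ$s, since query points may depend on the entire history of responses and one cannot fix the queries in advance for a union bound; the potential argument above sidesteps this by conditioning on the all-zero prefix and exploiting the structural property that each query point is satisfied by at most one member of $\F$, which also makes the bound $\lambda$-independent as required.
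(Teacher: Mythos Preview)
Your proof is correct and takes a genuinely different route from the paper's. The paper works with \emph{pairs} of conjunctions supported on two disjoint index sets $I_1,I_2$ (so $2^{4\rho}$ pairs in total), counts how many pairs each negatively-labelled point can eliminate (at most $2^{2\rho+1}$), and then invokes the earlier Lemma~\ref{lemma:bound-loss} on disjoint monotone conjunctions to get the $15/32$ robust-risk separation within each surviving pair. You instead use a \emph{single} family of $2^{2\rho}$ conjunctions on one index set, exploit the structural fact that every point in $\boolhc$ is a positive example for exactly one member of $\F$, and run a clean posterior-telescoping argument that handles adaptivity explicitly. Your robust-risk bound of $1/2$ via the symmetry of $\mathrm{Bin}(2\rho,1/2)$ is also self-contained, whereas the paper leans on a lemma proved earlier for a different construction.

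What each approach buys: the paper's pairs-on-disjoint-supports device is chosen so that Lemma~\ref{lemma:bound-loss} applies verbatim and the final chain of inequalities mirrors the random-example lower bound of Theorem~\ref{thm:mon-conj}; it reuses machinery but the counting over pairs is somewhat indirect and the handling of adaptive $\LMQ$s is left implicit (one must observe that the query set is fixed once the all-zero trace is assumed). Your approach is more elementary --- no disjoint index sets, no pair bookkeeping --- and the telescoping $\prod_{t=1}^T (N-t)/(N-t+1)=(N-T)/N$ makes the adaptivity argument transparent; it also yields sharper constants throughout. Both arguments ultimately pivot on the same $\lambda$-independent observation that a single labelled point can rule out only one conjunction of length $2\rho$ on a fixed index set.
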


\begin{proof}
Let $D$ be the uniform distribution and, without loss of generality, let $\rho \geq 2$.
Fix two disjoint sets $I_1$ and $I_2$ of $2\rho$ indices in $[n]$ (i.e., $\vert I_1\vert = \vert I_2\vert=2\rho$), which will be the set of variables appearing in potential target conjunctions $c_1$ and $c_2$, respectively (i.e., their support).
We have $2^{4\rho}$ possible pairs of such conjunctions, as each variable can appear as a positive or negative literal.

Let us consider a randomly drawn sample $S$ of size $2^\rho$.
We will first consider what happens when all the examples in $S$ and the queried inputs $S'$ are negatively labelled.
Each negative example $x\in S$ allows us to remove at most $2^{2\rho+1}$ pairs from the possible set of pairs of conjunctions, as each component $x_{I_1}$ and $x_{I_2}$ removes at most one conjunction from the possible targets. 
By the same reasoning, each LMQ that returns a negative example can remove at most $2^{2\rho+1}$ pairs of conjunctions.
Note that the parameter $\lambda$ is irrelevant in this setting as each LMQ can only test one concept pair.
Thus, after seeing any random sample of size $2^\rho$ and querying any $2^{\rho -1}$ points, there remains 
\begin{equation}
\label{eqn:consistent-lmq}
\frac{2^{4\rho} - 2^{3\rho+1}-2^{3\rho}}{2^{4\rho}}\geq 1/4
\end{equation}
of the initial conjunction pairs that label all points in $S$ and $S'$ negatively. 
Then, choosing a pair $(c_1,c_2)$ of possible target conjunctions uniformly at random and then choosing $c$ uniformly at random gives at least a $1/4$ chance that $S$ and $S'$ only contain negative examples (both conjunctions are consistent with this).

Moreover, note that any  two conjunctions in a pair will have a robust risk lower bounded by $15/32$ against each other under the uniform distribution (see Lemma~\ref{lemma:bound-loss}).
Thus, any learning algorithm $\A$ with LMQ query budget $m'=2^{\rho-1}$ and strategy $\sigma:(\boolhc \times \set{0,1})^m\rightarrow(\boolhc \times \set{0,1})^{m'}$ (note that the queries can be adaptive) can do no better than to guess which of $c_1$ or $c_2$ is the target if they are both consistent on the augmented sample $S\cup\sigma(S)$, giving an expected robust risk lower bounded by a constant.
Letting $\E$ be the event that all points in both $S$ and $\sigma(S)$ are labelled zero, we get
\begin{align*}
\eval{c,S}{\risk_\rho^D(\A(S\cup\sigma(S)),c)}
&\geq \Prob{c,S}{\E}\eval{c,S}{\risk_\rho^D(\A(S\cup\sigma(S)),c)\given \E} \tag{Total Expectation}
 \\
&\geq \frac{1}{4}\;\eval{c,S}{\risk_\rho^D(\A(S\cup\sigma(S)),c)\given \E}
\tag{Equation~\ref{eqn:consistent-lmq}} \\
&= \frac{1}{4}\cdot\frac{1}{2}\;\eval{S}{\risk_\rho^D(\A(S\cup\sigma(S)),c_1)+\risk_\rho^D(\A(S\cup\sigma(S)),c_2)\given \E} 
\tag{Random choice of $c$} \\
&\geq\frac{1}{8}\;\eval{S}{\risk_\rho^D(c_1,c_2)\given \E} \tag{Lemma~\ref{lemma:robloss-triangle}}\\
&>\frac{1}{8}\cdot\frac{15}{32} \tag{Lemma~\ref{lemma:bound-loss}}\\
&=\frac{15}{256}
\enspace,
\end{align*}
which completes the proof.
\end{proof}

Now, since the local membership query lower bound  above has an exponential dependence on $\rho$, any perturbation budget $\omega(\log n)$ will require a sample and query complexity that is superpolynomial in $n$, giving the following corollary.

\begin{corollary}
The robustness threshold of the class $\Conj$ under the uniform distribution with access to $\EX$ and an $\LMQ$ oracle is $\Theta(\log(n))$.
\end{corollary}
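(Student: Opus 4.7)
The plan is to verify the two sides of Definition~\ref{def:rob-threshold} for $\rho(n)=\log(n)$: namely, to exhibit an efficient $O(\log n)$-robust learner for $\Conj$ under the uniform distribution (with $\EX$ plus $\LMQ$ access), and to rule out efficient robust learnability against any adversary of budget $\omega(\log n)$.

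For the upper bound, I would simply invoke a result from Chapter~\ref{chap:rob-thresholds} together with the observation that adding an $\LMQ$ oracle cannot make the problem harder. Concretely, $\Conj$ is a subclass of $1$-$\dlm$, and the uniform distribution is $\alpha$-log-Lipschitz with $\alpha=1$; hence Theorem~\ref{thm:1dl-rob-learn} yields an efficient $\log(n)$-robust learning algorithm for $\Conj$ under the uniform distribution using only the random example oracle $\EX$. The same algorithm satisfies Definition~\ref{def:lmq} trivially (it makes zero $\LMQ$ calls), so $\log(n)$ is achievable in the $\EX+\LMQ$ model.

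For the lower bound, I would appeal directly to Theorem~\ref{thm:conj-lmq-lb}. Suppose for contradiction that some $\rho(n)=\omega(\log n)$ admits an efficient $\rho$-robust learning algorithm for $\Conj$ under the uniform distribution with access to $\EX$ and $\lambda$-$\LMQ$ for any radius $\lambda$. For $n$ large enough, we have $2\leq \rho(n)\leq n/4$, so Theorem~\ref{thm:conj-lmq-lb} applies and forces either $\Omega(2^{\rho(n)})$ random examples or $\Omega(2^{\rho(n)-1})$ local membership queries. Since $\rho(n)=\omega(\log n)$, the quantity $2^{\rho(n)}$ is superpolynomial in $n$, contradicting the polynomial sample and query complexity requirements of Definition~\ref{def:lmq}. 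Combining the two directions yields the threshold $\Theta(\log(n))$.

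There is no real obstacle here; the work was done in Theorems~\ref{thm:1dl-rob-learn} and~\ref{thm:conj-lmq-lb}. The only subtlety worth stating explicitly is that the upper bound carries over from the $\EX$-only setting to the richer $\EX+\LMQ$ setting (one can simulate the absence of $\LMQ$ by issuing no queries), so the two bounds match and the threshold is tight.
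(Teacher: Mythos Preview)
Your proposal is correct and follows essentially the same approach as the paper: the lower bound is read off directly from Theorem~\ref{thm:conj-lmq-lb} (superpolynomial resources once $\rho=\omega(\log n)$), and the upper bound is inherited from the $\EX$-only robust learnability of conjunctions under log-Lipschitz distributions, with the observation that an $\EX$-only algorithm is trivially valid in the $\EX+\LMQ$ model. The only cosmetic difference is that you cite Theorem~\ref{thm:1dl-rob-learn} (via $\Conj\subseteq 1\text{-}\dlm$) for the upper bound, whereas the paper simply points back to the $\EX$-only robustness threshold established earlier; both are valid.
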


Observe that the robustness threshold above is the same as when only using the $\EX$ oracle in Theorem~\ref{thm:mon-conj}, and that, as decision lists and halfspaces both subsume conjunctions, the lower bound of Theorem~\ref{thm:conj-lmq-lb} also holds for these classes.
Since we cannot improve the robustness threshold of conjunctions and superclasses under the uniform distribution with access to the $\LMQ$ oracle, we will turn our attention to a more powerful oracle in the next section.

\section{Robust Learning with Local Equivalence Queries}
\label{sec:rob-learn-leq}

In this section, we investigate the power of a local equivalence query oracle in the \emph{distribution-free} robust learning setting. 
We start with a negative result which shows that for a wide variety of concept classes, if $\lambda<\rho$, then \emph{distribution-free} robust learnability is impossible in the $\EX$+$\lambda$-$\LEQ$ model -- regardless of how many queries are allowed. 
This strengthens the impossibility result presented in Theorem~\ref{thm:no-df-rl}.
However, the regime $\lambda=\rho$, which implies giving similar power to the learner as the adversary, enables robust learnability guarantees.
Indeed, Section~\ref{sec:sc-ub-leq} exhibits upper bounds on sample sizes that will guarantee \emph{robust} generalization. 
These bounds are logarithmic in the size of the hypothesis class (finite case) and linear in the VC dimension of the  \emph{robust} loss of a concept class (infinite case). 
Section~\ref{sec:qc-ub-leq} draws a comparison between our framework and the online learning setting, and exhibits robustly consistent learners. 
It furthermore studies conjunctions and presents a robust learning algorithm that is \emph{both} statistically and computationally efficient.
It concludes by looking at linear classifiers in the discrete and continuous cases. 
We adapt the Winnow algorithm in the former setting. 
In the latter, we exhibit a sample complexity upper bound while outlining key obstacles to derive query complexity upper bounds, which will be addressed in Section~\ref{sec:adv-bounded-precision}. 

\subsection{Impossibility of Distribution-Free Robust Learning for $\lambda<\rho$}
\label{sec:imposs-df-leq}

We start with a negative result, saying that whenever the local query radius is strictly smaller than the adversary's budget, monotone conjunctions are not distribution-free robustly learnable. 
Note that this result goes beyond efficiency: no query can distinguish between two potential targets. 
Choosing the target uniformly at random lower bounds the expected robust risk, and hence renders robust learning impossible in this setting. 

\begin{theorem}
\label{thm:mon-conj-df-leq}
For locality and robustness parameters $\lambda,\rho\in\N$ with $\lambda < \rho $, monotone conjunctions (and any superclass) are not distribution-free $\rho$-robustly learnable with access to a $\lambda$-$\LEQ$ oracle.
\end{theorem}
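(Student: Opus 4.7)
The plan is to extend the random-target technique used in Theorem~\ref{thm:no-df-rl} and Theorem~\ref{thm:conj-lmq-lb} to the $\lambda$-$\LEQ$ setting by exploiting the gap $\lambda < \rho$. Concretely, I would construct two monotone conjunctions $c_1, c_2$ and a distribution $D$ such that (i)~both conjunctions agree on the whole $\lambda$-neighbourhood of the support of $D$, so that no interaction with either $\EX(c,D)$ or the $\lambda$-$\LEQ$ oracle can distinguish them, yet (ii)~their pairwise robust risk $\roblosse(c_1,c_2)$ is bounded below by a constant. Picking the target uniformly at random between $c_1$ and $c_2$ and invoking the triangle-type inequality of Lemma~\ref{lemma:robloss-triangle} then forces any learner's output to have expected $\roblosse$ at least a constant, ruling out robust learnability for every finite query budget.

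For the construction, assume $n \geq 2\rho$ (else embed in a larger dimension) and take the disjoint monotone conjunctions $c_1 = x_1 \wedge \cdots \wedge x_\rho$ and $c_2 = x_{\rho+1} \wedge \cdots \wedge x_{2\rho}$. Let $D$ be the point mass at $\mathbf{0}\in\boolhc$. Every $z \in B_\lambda(\mathbf{0})$ has at most $\lambda < \rho$ ones, so $c_1(z) = c_2(z) = 0$; that is, $c_1$ and $c_2$ are identically zero on $B_\lambda(\mathbf{0})$. Consequently, for any hypothesis $h$ queried by the learner with LEQ at the only sample point $\mathbf{0}$, the oracle's response depends only on $h\vert_{B_\lambda(\mathbf{0})}$: it answers ``correct'' iff $h \equiv 0$ on $B_\lambda(\mathbf{0})$, and otherwise any $z \in B_\lambda(\mathbf{0})$ with $h(z)=1$ is a valid counterexample for both potential targets. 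Hence the full transcript of oracle interactions can be made identical under $c_1$ and $c_2$, which means the hypothesis $h$ produced by a deterministic learner is the same in both cases (and the same distribution over hypotheses in the randomized case).

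For the lower bound on the robust risk, observe that flipping the first $\rho$ bits of $\mathbf{0}$ yields a point $z \in B_\rho(\mathbf{0})$ with $c_1(z)=1$ and $c_2(z)=0$, so $\roblosse(c_1,c_2)=1$. Picking $c \sim U(c_1,c_2)$ and applying Lemma~\ref{lemma:robloss-triangle},
\begin{equation*}
\underset{c \sim U(c_1,c_2)}{\mathbb{E}}\,\roblosse(h,c)
= \tfrac{1}{2}\bigl(\roblosse(h,c_1) + \roblosse(h,c_2)\bigr)
\geq \tfrac{1}{2}\roblosse(c_1,c_2) = \tfrac{1}{2},
\end{equation*}
so for at least one target in $\{c_1,c_2\}$ the learner's hypothesis has robust risk $\geq 1/2$, regardless of how many $\lambda$-$\LEQ$ queries are issued. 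The extension to any superclass is immediate, since $c_1, c_2$ live in every such class.

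The main subtlety, rather than a genuine obstacle, is formalising ``the transcript is identical under $c_1$ and $c_2$'' in the presence of an adaptive learner and a potentially non-deterministic oracle that chooses counterexamples: I would handle this by fixing an adversarial tie-breaking rule for the LEQ oracle (e.g.\ lexicographically smallest counterexample), which is legitimate since the definition only requires the oracle to return \emph{some} counterexample in $B_\lambda(\mathbf{0})$, and the set of valid counterexamples is identical for the two targets because $c_1\vert_{B_\lambda(\mathbf{0})} = c_2\vert_{B_\lambda(\mathbf{0})}$. With this convention, a straightforward induction on the number of rounds of interaction shows that the learner's view is a deterministic function of $h$-choices alone, making the random-target argument above rigorous.
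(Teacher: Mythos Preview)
Your proposal is correct and follows essentially the same approach as the paper: a point mass at $\mathbf{0}$, two monotone conjunctions of length $\geq \rho$ that both vanish on $B_\lambda(\mathbf{0})$, and Lemma~\ref{lemma:robloss-triangle} to force expected robust risk $\geq 1/2$. The only cosmetic difference is that the paper uses the nested pair $c_1=\bigwedge_{i\leq \rho}x_i$ and $c_2=\bigwedge_{i\leq \rho+1}x_i$ (needing only $n\geq \rho+1$) instead of your disjoint pair (needing $n\geq 2\rho$), and it does not spell out the oracle tie-breaking argument you include.
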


The proof is similar in spirit to the earlier distribution-free impossibility results from Chapter~\ref{chap:def-adv-rob}.

\begin{proof}
Fix $\lambda,\rho\in\N$ such that $\lambda < \rho $, and consider the following monotone conjunctions: $c_1(x)=\bigwedge_{1 \leq i \leq \rho} x_i$ and $c_2(x)=\bigwedge_{1 \leq i \leq \rho +1} x_i$.
Let $D$ be the distribution on $\boolhc$ which puts all the mass on $\mathbf{0}$.
Then, the target concept is drawn at random between $c_1$ and $c_2$. 
Now, $c_1$ and $c_2$ will both give all points in $B_\lambda(\mathbf{0})$ the label 0, so the learner has to choose a hypothesis that is consistent with both $c_1$ and $c_2$ (otherwise the robust risk is 1 and we are done). 
However, the learner has no way of distinguishing which of $c_1$ or $c_2$ is the target concept, while these two functions have a $\rho$-robust risk of 1 against each other under $D$.
Formally, 
\begin{align}
\risk_\rho^D(c_1,c_2) 
&= \Prob{x\sim D}{\exists z\in B_\rho(x)\st c_1(z)\neq c_2(z)}\notag\\
&= \mathbf{1}[\exists z\in B_\rho(\mathbf{0})\st c_1(z)\neq c_2(z)]\notag\\ 
&=1 \label{eqn:mon-conj-risk-1}
\enspace,
\end{align}
where such $z=\mathbf{1}_\rho\mathbf{0}_{n-\rho}$.
To lower bound the expected robust risk, letting $\A$ be any learning algorithm and $\E$ be the event that all points in a randomly drawn sample $S$ are all labeled 0, we have
\begin{align*}
\eval{c,S}{\risk_\rho^D(\A(S),c)}
&= \eval{c,S}{\risk_\rho^D(\A(S),c)\given \E} \tag{By construction of $D$}
 \\
&= \frac{1}{2}\;\eval{S}{\risk_\rho^D(\A(S),c_1)+\risk_\rho^D(\A(S),c_2)\given \E} 
\tag{Random choice of $c$} \\
&\geq\frac{1}{2}\;\eval{S}{\risk_\rho^D(c_1,c_2)\given \E} \tag{Lemma~\ref{lemma:robloss-triangle}}\\
&=\frac{1}{2} \tag{Equation~\ref{eqn:mon-conj-risk-1}}
\enspace.
\end{align*}
\end{proof}

The result holds for monotone conjunctions and all superclasses (e.g., decision lists and halfspaces), but, in fact, we can generalize this reasoning to any concept class that has a certain form of stability: 
if we can find concepts $c_1$ and $c_2$ in $\C$ and points $x,x'\in \X$ such that $c_1$ and $c_2$ agree on $B_\lambda(x)$ but disagree on $x'$, then if $\lambda < \rho$, the concept class $\C$ is not distribution-free $\rho$-robustly learnable with access to a $\lambda$-$\LEQ$ oracle.
It suffices to ``move'' the center of the ball $x$ until we find a point in the set $B_\rho(x)\setminus B_\lambda(x)$ where $c_1$ and $c_2$ disagree, which is guaranteed to happen by the existence of $x'$.
As hinted earlier, this is not possible for parities, as any two parity functions $f_I$ and $f_J$ with index sets $I$ and $J$, respectively, will disagree on $B_1(x)$ for any $x\in\boolhc$, as it suffices to flip a bit in the symmetric difference $I\Delta J$ to cause them to disagree.

\subsection{Sample Complexity Upper Bounds}
\label{sec:sc-ub-leq}

In this section, we show that we can derive sample complexity upper bounds for \emph{robustly} consistent learners, i.e., learning algorithms that return a hypothesis with a \emph{robust} loss of zero on a training sample.
Note that, crucially,  the exact-in-the-ball notion of robustness and its realizability imply that any robust ERM algorithm will achieve zero empirical robust loss on a given training sample.
As we will see in the next sections, the challenge is to find a \emph{robustly} consistent learning algorithm that uses queries to $\rho$-$\LEQ$.
The first bound is for finite classes, where the dependency is logarithmic in the size of the hypothesis class. 
The proof is a simple application of Occam's razor and is included in Appendix~\ref{app:occam} for completeness.
The argument is similar to \cite{bubeck2019adversarial}.

\begin{lemma}
\label{lemma:occam}
Let $\C$ be a concept class and $\mathcal{H}$ a hypothesis class.
Any $\rho$-robust ERM algorithm using $\mathcal{H}\supseteq\C$ on a sample of size $m\geq \frac{1}{\epsilon}\left(\log |\mathcal{H}_n|+\log\frac{1}{\delta}\right)$ is a $\rho$-robust learner for  $\C$.
\end{lemma}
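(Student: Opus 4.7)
The plan is to adapt the standard Occam's razor argument (as in the non-robust finite hypothesis class PAC bound) to the exact-in-the-ball robust loss. First I would define the empirical $\rho$-robust loss of $h\in\mathcal{H}$ on sample $S$ with respect to target $c$ as the fraction of $x\in S$ such that $\exists z\in B_\rho(x)$ with $h(z)\neq c(z)$. Because $\mathcal{H}\supseteq\C$ we have $c\in\mathcal{H}$, so $c$ itself witnesses zero empirical robust loss, and hence a robust ERM outputs some $h\in\mathcal{H}_n$ with empirical robust loss $0$.

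Next I would call $h\in\mathcal{H}_n$ \emph{bad} if $\risk_\rho^D(h,c)>\epsilon$. For any fixed bad $h$, a single draw $x\sim D$ yields a robust mistake with probability $>\epsilon$, so the probability that $h$ exhibits no robust mistake on $x$ is $<1-\epsilon$. Since the $m$ examples in $S$ are i.i.d., the probability that a fixed bad $h$ has empirical robust loss $0$ on $S$ is strictly less than $(1-\epsilon)^m\leq e^{-\epsilon m}$.

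Then I would union bound over the at most $|\mathcal{H}_n|$ bad hypotheses to conclude that
\[
\Prob{S\sim D^m}{\exists\,\text{bad } h\in\mathcal{H}_n \st \widehat{\risk}_\rho^S(h,c)=0} \leq |\mathcal{H}_n|\,e^{-\epsilon m}\,.
\]
Requiring this to be at most $\delta$ and rearranging yields the stated bound $m\geq \tfrac{1}{\epsilon}(\log|\mathcal{H}_n|+\log(1/\delta))$. Since the robust ERM returns some $h$ with zero empirical robust loss, with probability at least $1-\delta$ this $h$ cannot be bad, i.e.\ $\risk_\rho^D(h,c)\leq\epsilon$.

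There is no real obstacle here: the argument is an essentially verbatim transcription of the classical Occam's razor proof, with the event ``$h(x)\neq c(x)$'' replaced by ``$\exists z\in B_\rho(x):h(z)\neq c(z)$''. The only points to be careful about are (i) invoking realizability via $\mathcal{H}\supseteq\C$ to guarantee that the ERM is well defined (some robustly consistent hypothesis exists), and (ii) noting that the argument is oblivious to \emph{how} the ERM is implemented---in particular, it applies to any algorithm producing a robustly consistent output, which is why it can later be used in conjunction with $\rho$-$\LEQ$ queries that let us actually compute the empirical robust loss.
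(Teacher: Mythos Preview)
Your proposal is correct and follows essentially the same approach as the paper's proof: define bad hypotheses as those with robust risk at least $\epsilon$, bound the probability that a fixed bad hypothesis is robustly consistent on an i.i.d.\ sample by $(1-\epsilon)^m\leq e^{-\epsilon m}$, union bound over $|\mathcal{H}_n|$, and solve for $m$. The only cosmetic difference is that the paper phrases realizability slightly more tersely, but your explicit invocation of $c\in\mathcal{H}$ to witness zero empirical robust loss is exactly the point.
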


For the infinite case, we cannot immediately use the VC dimension as a tool for bounding the sample complexity of robust learning.
To this end, we use the VC dimension of the robust loss between two concepts, which is the VC dimension of the class of functions representing the $\rho$-expansion of the error region between any possible target and hypothesis.
This is analogous to the adversarial VC dimension defined by \cite{cullina2018pac} for the constant-in-the-ball definition of robustness. 

\begin{definition}[VC dimension of the exact-in-the-ball robust loss]
\label{def:rob-vc}
Given a target concept class $\C$, a hypothesis class $\mathcal{H}$ and a robustness parameter $\rho$, the VC dimension of the robust loss between $\C$ and $\H$ is defined as $\RVClong$, where $\RLossLong=\set{\Rloss: x\mapsto \mathbf{1}[\exists z\in B_\rho(x)\st c(z) \neq h(z)] \given c\in\C, h\in\mathcal{H}}$.
Whenever $\C=\mathcal{H}$, we simply write $\RVC$.
\end{definition}

We now show that we can use the VC dimension of the robust loss to upper bound the sample complexity of robustly-consistent learning algorithms. 
We will use this result in Section~\ref{sec:qc-halfspaces} when dealing with an infinite concept class: halfspaces on $\R^n$.

\begin{lemma}
\label{lemma:rob-vc}
Let $\C$ be a concept class and $\mathcal{H}$ a hypothesis class. Any $\rho$-robust ERM algorithm using $\mathcal{H}$ on a sample of size $m\geq \frac{\kappa}{\epsilon}\left(\RVClong\log(1/\epsilon)+\log\frac{1}{\delta}\right)$ for sufficiently large constant $\kappa$ is a $\rho$-robust learner for  $\C$.
\end{lemma}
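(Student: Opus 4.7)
The plan is to reduce the robust PAC-learning problem to a standard PAC-learning problem for the binary function class $\RLossLong$. The key observation is that, for each pair $(c,h) \in \C \times \H$, the map $\Rloss : x \mapsto \mathbf{1}[\exists z \in B_\rho(x) \st c(z)\neq h(z)]$ is a binary-valued function on $\X$, and the robust risk $\roblosse(h,c)$ is precisely $\Prob{x\sim D}{\Rloss(x)=1}$. Similarly, the empirical robust risk on a sample $S$ is the empirical frequency of the event $\{\Rloss(x)=1\}$ on $S$. Hence bounding the robust generalization error of a hypothesis $h$ against the target $c$ is equivalent to bounding the generalization error of the binary function $\Rloss \in \RLossLong$ in the standard sense.

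First, I would note that the learning problem is realizable for the class $\RLossLong$: the constant-zero function is always attainable, since taking $h=c$ yields $\ell_\rho(c,c)(x)=0$ for all $x\in\X$. Consequently, a $\rho$-robust ERM algorithm using $\H\supseteq\C$ can always find some $h\in\H$ (e.g., $h=c$ itself) whose corresponding loss function $\Rloss$ evaluates to $0$ on every training point, and we are squarely in the realizable regime for the class $\RLossLong$.

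Second, I would invoke the classical distribution-free sample complexity upper bound for PAC learning in the realizable setting (Theorem~\ref{thm:vc-pac-upper-bound}) applied to the class $\RLossLong$, whose VC dimension is $\RVClong$ by Definition~\ref{def:rob-vc}. This theorem guarantees that, for some universal constant $\kappa$, any consistent learner for the class $\RLossLong$ requires at most
\[
m \;\geq\; \frac{\kappa}{\epsilon}\left(\RVClong\log(1/\epsilon)+\log(1/\delta)\right)
\]
samples to output (with probability at least $1-\delta$) a function in $\RLossLong$ whose probability of outputting $1$ under $D$ is at most $\epsilon$. Translating back: with probability at least $1-\delta$ over $S\sim D^m$, the hypothesis $h$ returned by the $\rho$-robust ERM algorithm satisfies $\roblosse(h,c)\leq \epsilon$, as required.

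The only subtlety, and thus the main thing to verify carefully, is that a $\rho$-robust ERM learner using $\H$ corresponds \emph{exactly} to a consistent learner for $\RLossLong$ in the reduction above. This hinges on the correspondence $h\mapsto \ell_\rho(c,h)$: while this mapping is not injective in general (distinct hypotheses may induce the same loss function), the reduction only requires that \emph{some} consistent representative in $\RLossLong$ be selected, which is automatic whenever $h$ has zero empirical robust loss on $S$. Moreover, since the VC bound is uniform over the entire class $\RLossLong$ — and the supremum in the definition of $\RVClong$ already ranges over all admissible $(c,h)$ pairs — the argument is valid for every possible target $c\in\C$, giving the distribution-free guarantee claimed in the lemma.
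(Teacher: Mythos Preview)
Your proposal is correct and follows the same reduction as the paper: view the robust risk as the standard risk of the binary function $\ell_\rho(c,h)\in\RLossLong$ and appeal to the realizable VC bound. The only difference is presentational: you invoke Theorem~\ref{thm:vc-pac-upper-bound} as a black box for the class $\RLossLong$, whereas the paper fixes the target $c$, works with the subclass $\Delta_{c,\rho}(\H)=\{\ell_\rho(c,h):h\in\H\}$ (noting $\VC(\Delta_{c,\rho}(\H))\leq\RVClong$), and reproves the $\epsilon$-net/Sauer argument explicitly. Your route is slightly cleaner; the paper's is more self-contained.
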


\begin{proof}[Proof Sketch of Lemma~\ref{lemma:rob-vc}.]
The proof is very similar to the VC dimension upper bound in PAC learning.
The main distinction is that, instead of looking at the error region of the target and any function in $\mathcal{H}$, we look at its $\rho$-expansion.
Namely, let the target $c\in\C$ be fixed and, for $h\in\mathcal{H}$, consider the function $\Rloss: x\mapsto \mathbf{1}[\exists z\in B_\rho(x)\st c(z) \neq h(z)]$ and define a new concept class $\Delta_{c,\rho}(\mathcal{H})=\set{\Rloss \given h\in\mathcal{H}}$.
It is easy to show that $\VC(\Delta_{c,\rho}(\mathcal{H}))\leq \RVClong$, as any sign pattern achieved on the LHS can be achieved on the RHS.
The rest of the proof follows from the definition of an $\epsilon$-net and the bound on the growth function of $\Delta_{c,\rho}(\mathcal{H})$; see Appendix~\ref{app:lemma:rob-vc} for details.
\end{proof}

\begin{remark}
\label{rmk:rho-tradeoff}
Note that, for $\X=\boolhc$ and the Hamming distance, as $\rho(n)/n$ tends to $1$, we move towards the  exact and online learning settings, and the underlying distribution becomes less important. 
In this case, the VC dimension of the robust loss starts to decrease.
Indeed, say if $\rho=n$, then $\RLoss$ only contains the constant functions $0$ and $1$. 
We thus only need a single example to query the $\LEQ$ oracle (which has become the $\EQ$ oracle).
However, this comes at a cost: the \emph{query complexity} upper bounds presented in the next sections could be tight.
\end{remark}

\subsection{General Query Complexity Upper Bounds}
\label{sec:qc-ub-leq}

In the previous section, we derived sample complexity upper bounds for robustly consistent learners.
The challenge is thus to create algorithms that perform robust empirical risk minimization, as we are operating in the realizable setting.
We begin by showing that  online learning results can be used to guarantee robust learnability. 
We recall the online learning setting in Section~\ref{sec:online}.
We denote by $\Lit(\C)$ the Littlestone dimension of a concept class $\C$, which appears in the query complexity bound in the theorem below.

\begin{theorem}
\label{thm:soa}
A concept class $\C$ is $\rho$-robustly learnable with the Standard Optimal Algorithm (SOA)  \citep{littlestone1988learning} using the $\EX$ and $\rho$-$\LEQ$ oracles with sample complexity $ m(n,\epsilon,\delta) = \frac{1}{\epsilon}\left(\RVC\log(1/\epsilon)+\log\frac{1}{\delta}\right)$ and query complexity $r (n,\epsilon,\delta) = m(n,\epsilon,\delta)\cdot \Lit(\C)$.
Furthermore, if $\C$ is a finite concept class on $\boolhc$, then $\C$ is $\rho$-robustly learnable with sample complexity $ m (n,\epsilon,\delta) = \frac{1}{\epsilon}\left(\log(|\C|)+\log\frac{1}{\delta}\right)$ and query complexity $r (n,\epsilon,\delta) = m(n,\epsilon,\delta)\cdot \Lit(\C) $.
\end{theorem}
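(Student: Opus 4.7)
The plan is to combine the Standard Optimal Algorithm (SOA) of \cite{littlestone1988learning} with a test-and-update loop driven by the $\rho$-$\LEQ$ oracle to produce a robustly consistent learner, and then to invoke the sample complexity lemmas of Section~\ref{sec:sc-ub-leq}. Concretely, after drawing $S\sim D^m$, the learner initialises a version space $V:=\C$ and a hypothesis $h\in V$ selected according to the SOA rule. It then repeatedly passes through $S$: for each $x\in S$ it issues a $\rho$-$\LEQ(h,x)$ query; if the oracle confirms agreement on $B_\rho(x)$ it moves on, and if it returns a counterexample $z\in B_\rho(x)$ with $h(z)\neq c(z)$, the pair $(z,c(z))$ is fed to the SOA (shrinking $V$ to $\set{h'\in V : h'(z)=c(z)}$ and updating $h$), after which the pass restarts. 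The procedure halts and outputs $h$ as soon as a full pass over $S$ produces no counterexample.

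For the query complexity, each counterexample $(z,c(z))$ received during the execution constitutes an online-learning mistake on an adversarially chosen query sequence: the point $z$ is revealed together with its target label $c(z)$, and $c\in V$ is preserved throughout, so realisability holds. The mistake bound for the SOA recalled in Section~\ref{sec:online} therefore gives at most $\Lit(\C)$ counterexamples across the entire run. Since a new pass through $S$ is initiated only after a counterexample and each pass uses at most $m$ queries to $\rho$-$\LEQ$, the number of passes is at most $\Lit(\C)+1$, yielding a total query complexity of $O(m\cdot\Lit(\C))$, matching the bound in the theorem.

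For the sample complexity, termination of the loop certifies that $h(z)=c(z)$ for every $z\in B_\rho(x)$ and every $x\in S$, which is exactly the statement that the empirical $\rho$-robust loss of $h$ on $S$ vanishes. Because the SOA selection rule keeps $h\in V\subseteq\C$, the output lies in $\C$, so we have produced a $\rho$-robustly consistent learner with hypothesis class $\mathcal{H}=\C$. Invoking Lemma~\ref{lemma:rob-vc} with $\mathcal{H}=\C$ then gives the general bound $m=\frac{1}{\epsilon}\!\left(\RVC\log(1/\epsilon)+\log(1/\delta)\right)$, and in the finite-class case on $\boolhc$, Lemma~\ref{lemma:occam} with $\mathcal{H}=\C$ instead yields $m=\frac{1}{\epsilon}\!\left(\log\abs{\C}+\log(1/\delta)\right)$.

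The main subtlety I anticipate is justifying that the SOA mistake bound transfers cleanly from the online model to our query model. In the standard online setting the SOA prescribes a label \emph{at the query point}, whereas here the learner must commit to an entire $h\in V$ before the oracle \emph{adversarially picks} which counterexample $z\in B_\rho(x)$ to reveal. To push the Littlestone argument through one needs, at each step, a concrete $h\in V$ whose value at $z$ coincides with the SOA prediction $\arg\max_b\Lit(\set{h'\in V : h'(z)=b})$ for \emph{every} possible $z\in B_\rho(x)$, so that $\Lit(V)$ strictly decreases after every counterexample regardless of the oracle's choice. Adapting the SOA selection rule so that such an $h$ always exists in $V$ (or otherwise arguing that any appropriately chosen hypothesis suffices to preserve the $\Lit(\C)$ bound) is the step I expect to require the most care in the full proof.
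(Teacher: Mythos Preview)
Your proposal takes essentially the same approach as the paper: drive the SOA with counterexamples from the $\rho$-$\LEQ$ oracle to obtain a robustly consistent hypothesis, bound the total number of counterexamples by $\Lit(\C)$ via the online mistake bound, multiply by $m$ for the total query count, and invoke Lemmas~\ref{lemma:occam} and~\ref{lemma:rob-vc} for the sample complexity. The paper's proof is in fact considerably terser than yours---it simply asserts that the SOA is a (robustly) consistent learner, notes that each $\LEQ$ query either returns a counterexample or certifies zero robust loss, and cites the $\Lit(\C)$ mistake bound, without spelling out the pass-and-restart loop that you describe.

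The subtlety you flag at the end---that the learner must commit to a full hypothesis $h$ \emph{before} the oracle adversarially selects the counterexample $z$, so that $h(z)$ need not coincide with the pointwise SOA prediction $\arg\max_b\Lit(\{h'\in V:h'(z)=b\})$---is not addressed by the paper's proof either. The natural reading is that the learner passes the SOA prediction function itself as $h$, in which case every counterexample is a genuine SOA mistake and the $\Lit(\C)$ bound applies directly; but then $h$ need not lie in $\C$, which sits somewhat uneasily with the use of $\RVC$ (i.e., $\mathcal{H}=\C$) in the sample-complexity claim. So your caution here is well placed: you have identified a wrinkle that the paper's brief argument simply elides rather than resolves.
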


\begin{proof}
The sample complexity bounds come from Lemmas~\ref{lemma:occam} and~\ref{lemma:rob-vc} and the fact that the Standard Optimal Algorithm (SOA) is a consistent learner, as it will be given counterexamples in the perturbation region until a robust loss of zero is achieved. 

For each query to $\LEQ$, a counterexample is returned, or the robust loss is zero. 
Then, using the mistake upper bound of SOA, which is $\Lit(\C)$, we get the query upper bound.
\end{proof}

Of course, some concept classes, e.g., thresholds, have infinite Littlestone dimension, so Theorem~\ref{thm:soa} is not useful in these settings. 
In Section~\ref{sec:adv-bounded-precision}, we will study assumptions on the adversary's precision that give finite query upper bounds for linear classifiers.
But even if the Littlestone dimension is finite, the SOA can be computationally inefficient, or even untractable.
However, if we have access to an online learning algorithm with a mistake bound, it is possible to obtain robust learning guarantees.
Indeed, the theorem below exhibits a query upper bound for robustly learning with an online algorithm $\A$ with a given mistake upper bound $M$.
This is moreover particularly useful in case $\A$ is \emph{computationally} efficient (which is not the case for the Standard Optimal Algorithm in Theorem~\ref{thm:soa}) and  $M$ is polynomial in the input dimension.

\begin{lemma}
\label{lemma:rob-mistake-bound}
Let $\C$ be a concept class learnable in the online setting with mistake  bound $M(n)$.
Then $\C$ is $\rho$-robustly learnable using the $\EX$ and $\rho$-$\LEQ$ oracles with sample complexity $ m (n,\epsilon,\delta) = \frac{1}{\epsilon}\left(\RVClong+\log\frac{1}{\delta}\right)$ and query complexity $r (n,\epsilon,\delta) = m (n,\epsilon,\delta) \cdot M(n)$.
\end{lemma}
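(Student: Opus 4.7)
The plan is to simulate the online learning algorithm $\A$ with the help of the $\rho$-$\LEQ$ oracle. First I would draw a sample $S$ of size $m = m(n,\epsilon,\delta)$ from $\EX(c,D)$, initialise $h$ as $\A$'s starting hypothesis, and then repeatedly sweep through $S$ as follows: for each $x \in S$ in turn, query $\rho$-$\LEQ(h,x)$; if the oracle confirms that $h \equiv c$ on $B_\rho(x)$, move to the next point, and if it returns a counterexample $z \in B_\rho(x)$ with $h(z) \neq c(z)$, feed the labelled pair $(z, c(z))$ to $\A$ to obtain an updated hypothesis and restart the sweep. Terminate and output $h$ as soon as a full sweep produces no counterexample.

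For correctness, observe that upon termination the hypothesis $h$ has zero empirical $\rho$-robust loss on $S$: for every $x \in S$, $h$ agrees with $c$ on $B_\rho(x)$. Since $m$ is set to the robust-ERM sample complexity of Lemma~\ref{lemma:occam} (finite case) or Lemma~\ref{lemma:rob-vc} (infinite case, via $\RVClong$), this empirical robustness lifts to $\roblosse(h,c) \leq \epsilon$ with probability at least $1-\delta$ over $S \sim D^m$.

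For the query bound, the key observation is that every counterexample $z$ produced by $\rho$-$\LEQ$ is, by construction, a point on which $\A$'s current hypothesis errs relative to the true label $c(z)$; feeding $(z,c(z))$ to $\A$ therefore corresponds to a \emph{mistake} in an online run that is consistent with the fixed target $c \in \C$. The realizable mistake bound $M(n)$ for $\A$ thus caps the total number of counterexamples across the entire execution by $M(n)$, so the number of sweeps is at most $M(n)+1$. Since each sweep makes at most $m$ queries, the total query complexity is at most $m(M(n)+1) = O(m \cdot M(n))$, as claimed.

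The main point that needs care is justifying that $\A$'s mistake bound still applies even though the sequence of counterexamples is chosen adaptively by the oracle (rather than by an adversary in the classical online protocol). This is fine because in the realizable online setting $\A$'s mistake bound holds against \emph{any} sequence consistent with some concept in $\C$, and here the entire counterexample sequence is labelled by the fixed target $c$, hence is realizable by $c$ itself. No other step is more than bookkeeping, so this is where I would be most careful in the write-up.
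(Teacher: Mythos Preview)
Your proposal is correct and follows essentially the same approach as the paper: draw the sample using Lemma~\ref{lemma:rob-vc}, drive the online learner with $\rho$-$\LEQ$ counterexamples (each of which counts as a mistake against the fixed realizable target $c$), and bound the total queries by $m\cdot M(n)$. Your write-up is in fact more careful than the paper's terse argument, since you make the sweep-and-restart mechanism explicit and thereby justify why robust consistency holds simultaneously on \emph{all} of $S$ at termination; the paper's proof leaves this implicit.
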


We remark that, implicit in the statement of the above lemma is the assumption that all potential mistakes must be contained in the potential perturbation region ($B_\rho(\supp(D))$, the $\rho$-expansion of the support of the distribution).
We now proceed with the proof of the above lemma.

\begin{proof}
The sample complexity bound is obtained from Lemma~\ref{lemma:rob-vc} and, for each point in the sample, a query to $\LEQ$ can either return a robust loss of 0 or 1 and give a counterexample. 
Since the mistake bound is $M$, we have a query upper bound of $r = m \cdot M$, as required.
\end{proof}

%
%

\begin{remark}
In this section, we have assumed that $\lambda=\rho$. 
In Section~\ref{sec:sep-eq-leq}, where we compare the power of $\LEQ$ and $\EQ$, we will see a robust learning scenario where $\lambda>\rho$ dramatically increases the query complexity.
\end{remark}

\subsection{Improved Query Complexity Bounds for Conjunctions}
\label{sec:qc-conj}

We now show  how to improve the query upper bound from the previous section in the special case of conjunctions. 
Moreover, the algorithm used to robustly learn conjunctions is both statistically and \emph{computationally} efficient, which is not the case for the Standard Optimal Algorithm.

\begin{theorem}
\label{thm:conj-df-leq}
The class $\Conj$ is efficiently $\rho$-robustly learnable in the distribution-free setting using the $\EX$ and $\rho$-$\LEQ$ oracles with at most $O\left( \frac{1}{\epsilon}\left(n+\log\frac{1}{\delta}\right)\right)$ random examples and $O\left( \frac{1}{\epsilon}\left(n+\log\frac{1}{\delta}\right)\right)$ queries to $\rho$-$\LEQ$.
\end{theorem}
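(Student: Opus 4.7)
The plan is to combine Algorithm~\ref{alg:conj-online}, adapted to use the $\rho$-$\LEQ$ oracle rather than sequential online examples, with the sample complexity bound for finite hypothesis classes (Lemma~\ref{lemma:occam}). Concretely, I would draw $m$ random examples $S\sim D^m$ with $m=O\left(\frac{1}{\epsilon}\left(n+\log\frac{1}{\delta}\right)\right)$, initialize $h=\bigwedge_{i\in[n]}(x_i\wedge \bar{x_i})$, and then for each sample point $x_i\in S$ repeatedly query $\rho$-$\LEQ$ with $(h,x_i)$; on each returned counterexample $z\in B_\rho(x_i)$ (necessarily a positive point where $h(z)=0$) I remove from $h$ every literal falsified by $z$, exactly as in Algorithm~\ref{alg:conj-online}. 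The output hypothesis is the resulting conjunction $h$.

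The sample complexity follows from Lemma~\ref{lemma:occam}: the class $\Conj_n$ has at most $3^n+1$ elements (each variable is either absent, positive, or negated), so $\log|\Conj_n|=O(n)$, and any $\rho$-robust ERM on the sampled points with $m$ as above yields a $\rho$-robust hypothesis with probability at least $1-\delta$. To invoke Occam's razor in the robust regime, it suffices to verify that the algorithm is robustly consistent, i.e., $h$ has zero empirical robust loss on $S$ at termination.

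The key technical step, which I expect to be the main subtlety, is to show that updating $h$ in response to a counterexample from $B_\rho(x_j)$ never re-introduces a robust error on a previously processed ball $B_\rho(x_i)$. I would establish the invariant $L_c\subseteq L_h$ throughout the execution (where $L_f$ denotes the literal set of conjunction $f$): on any counterexample $z$ the target satisfies $c(z)=1$, so $z$ satisfies every literal in $L_c$, hence only literals outside $L_c$ are falsified by $z$ and removed. Under this invariant the error region between $h$ and $c$ is $\{z : z\models L_c \text{ and } z\not\models L_h\}$, and is monotonically non-increasing as literals are removed from $L_h$. Therefore any ball on which $h$ was already certified correct by $\rho$-$\LEQ$ remains free of counterexamples after subsequent updates, and the final $h$ achieves empirical robust loss zero on $S$.

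Finally, I would bound the query complexity by amortizing across the run. Each sample point contributes exactly one confirmation query (the terminating "correct" response) plus some number of counterexample queries; each counterexample query strictly shrinks $|L_h|$, with the very first counterexample removing exactly $n$ literals and subsequent ones removing at least one. Since $|L_h|$ starts at $2n$, the total number of counterexample queries across all of $S$ is at most $n+1$. Hence the total number of queries is at most $m+n+1=O\left(\frac{1}{\epsilon}\left(n+\log\frac{1}{\delta}\right)\right)$, and, since all operations are polynomial-time in $n$ and $1/\epsilon$, the algorithm is computationally efficient as well.
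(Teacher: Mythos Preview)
Your proposal is correct and follows essentially the same approach as the paper: both use the elimination-style conjunction learner with the $\rho$-$\LEQ$ oracle, invoke Lemma~\ref{lemma:occam} for the sample bound via $\log|\Conj_n|=O(n)$, maintain the invariant $L_c\subseteq L_h$ so that every counterexample is a positive point removing at least one literal, and use monotonicity of the error region to avoid revisiting certified balls. Your query count $m+n+1$ is in fact slightly sharper than the paper's $m+2n$ (you exploit that the first counterexample removes exactly $n$ literals), but the asymptotics and the argument are the same.
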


The algorithm achieving the above bounds is a straightforward adaptation of the online learning algorithm for conjunctions in Section~\ref{sec:online}.

\begin{proof}
Let $c$ be the target conjunction and let $D$ be an arbitrary distribution. 
We describe an algorithm $\A$ with polynomial sample and query complexity with access to a $\rho$-$\LEQ$ oracle.
By Lemma~\ref{lemma:occam}, if we can guarantee that $\A$ returns a hypothesis with zero robust loss on a i.i.d. sample of size $m= O\left( \frac{1}{\epsilon}\left(n+\log\frac{1}{\delta}\right)\right)$ with a polynomial number of queries to the $\rho$-$\LEQ$ oracle, we are done.

The algorithm  is similar to the standard PAC learning algorithm, in that it only learns from positive examples.
Indeed, the original hypothesis $h$ is a conjunction of all $2n$ literals. 
After seeing a positive example $x$, $\A$ removes from $h$ the literals $\bar{x_i}$ for $i=1,\dots,n$, as they cannot be in $c$.
Note that, by construction, any hypothesis $h$ returned by $\A$ always satisfies $c \subseteq h$.\footnote{We overload $c,h$ to mean both the functions and the set of literals in the conjunction, as it will be unambiguous to distinguish them from context.}
Thus, any counter example returned by the $\LEQ$ oracle will have that $c(z)=1$ and $h(z)=0$. 
This allows us to remove at least one literal from the hypothesis set for every counterexample.
Now, it is easy to see that, for $c\subseteq h' \subseteq h$, if the robust loss $\mathbf{1}[\exists z \in B_\lambda(x) \st c(z) \neq h(z)]$  on $x$ w.r.t. $h$ is zero, so will be the robust loss on $x$ w.r.t. the updated hypothesis $h'$. 
Hence, $\A$ makes at most $m+2n$ queries to the $\LEQ$ oracle.
\end{proof}

Note that the query upper bound that we get is of the form $m+M$, as opposed to $m\cdot M$ from Lemma~\ref{lemma:rob-vc} (where $m$ is the sample complexity and $M$ the mistake bound).
Indeed, any update to the hypothesis will not affect the consistency of previously queried points with robust loss of zero.
Thus, once zero robust loss is achieved on a point, it does not need to be queried again.  

\subsection{Bounds for Linear Classifiers}
\label{sec:qc-halfspaces}

In this section, we first derive sample and query complexity upper bounds for linear classifiers on $\boolhc$ with bounded weights. 
We then derive sample complexity bounds for linear classifiers on $\R^n$ and outline obstacles for query complexity upper bounds. 
Note that the robustness threshold of linear classifiers on $\boolhc$ \emph{without} access to the $\LEQ$ oracle remains an open problem, as pointed out in Chapter~\ref{chap:rob-thresholds}.


Let $\Halfspaces_{\boolhc}^W$  be the class of linear threshold functions on $\boolhc$ with integer weights  such that the sum of the absolute values of the weights and the bias is bounded above by $W$. We have the following theorem, whose proof relies on bounding the size of $\Halfspaces_{\boolhc}^W$  and using the mistake bound for Winnow \citep{littlestone1988learning}. 

\begin{theorem}
\label{thm:ltf-bool-df}
The class $\Halfspaces_{\boolhc}^W$ is $\rho$-robustly learnable with access to the $\EX$ and $\rho$-$\LEQ$ oracles using the Winnow algorithm with sample complexity $m(n,\epsilon,\delta)=O\left(\frac{1}{\epsilon}\left(n+\min\set{n,W}\log (W+n)+\log\frac{1}{\delta}\right)\right)$ and local equivalence query complexity $r(n,\epsilon,\delta)=$ $O(m(n,\epsilon,\delta) \cdot W^2\log(n))$.
\end{theorem}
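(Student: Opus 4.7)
The plan is to combine the Winnow mistake bound (Theorem~\ref{thm:winnow-mistake}) with Lemma~\ref{lemma:occam} and Lemma~\ref{lemma:rob-mistake-bound}, which respectively control the sample and query complexity, to convert an online learner into a robust learner in the $\rho$-$\LEQ$ setting.

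First, since Theorem~\ref{thm:winnow-mistake} applies to $\ltfbool^{W+}$ (positive weights only), I would apply the standard doubling trick: associate to each variable $x_i$ a companion literal $\bar{x_i} = 1-x_i$, so that any target in $\Halfspaces_{\boolhc}^W$ with potentially negative weights can be re-expressed as an element of $\ltfbool^{W+}$ on the doubled input space $\{0,1\}^{2n}$, with the same weight budget $W$. Winnow therefore achieves a mistake bound $M = O(W^2 \log(2n)) = O(W^2 \log n)$ on this reformulation.

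Next, I would bound $|\Halfspaces_{\boolhc}^W|$. Since the weights $w_i$ and the bias $b$ are integers with $\sum_i |w_i| + |b| \leq W$, at most $k := \min\{n,W\}$ of the weights can be nonzero. Summing over the number of nonzero weights by choosing the support (at most $\binom{n}{k}$ ways, crudely bounded by $2^n$), the signs ($2^k$), the magnitudes (at most $\binom{W}{k}$ compositions of a value $\leq W$ into $k$ positive parts), and the bias ($2W+1$ options) and doing a case split on whether $W \leq n$ (use $\binom{n}{k}\leq (en/k)^k$) or $W>n$ (the $2^n$ factor dominates and $\min\{n,W\}=n$) yields $\log|\Halfspaces_{\boolhc}^W| = O(n + \min\{n,W\}\log(n+W))$. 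Plugging this into Lemma~\ref{lemma:occam} gives the stated $m(n,\epsilon,\delta)$.

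To run Winnow in the $\rho$-$\LEQ$ framework as in Lemma~\ref{lemma:rob-mistake-bound}, I would draw an i.i.d. sample $S$ of size $m(n,\epsilon,\delta)$ and, for each $x \in S$, query $(h,x)$ to $\rho$-$\LEQ$; on a returned counterexample $z \in B_\rho(x)$, feed $(z, c(z))$ to Winnow as its next online example and update $h$. Each such update corresponds to a Winnow mistake on the doubled problem, so the total number of updates across the whole sample is bounded by $M = O(W^2 \log n)$. When every query on $S$ returns ``consistent,'' the hypothesis has empirical robust loss zero on $S$, at which point Lemma~\ref{lemma:occam} certifies $\risk_\rho^D(h,c) \leq \epsilon$ with probability at least $1-\delta$.

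The main obstacle will be a bookkeeping subtlety: an update to $h$ after a counterexample for some $x \in S$ may invalidate robust consistency on previously cleared $x' \in S$, forcing re-queries. The key observation, standard for mistake-bound-to-ERM conversions, is that each re-query which returns a new counterexample is charged to a fresh Winnow mistake, so the cumulative number of $\rho$-$\LEQ$ calls across $S$ is bounded by $m(n,\epsilon,\delta) \cdot M$, rather than only the naive $m + M$; this is precisely the multiplicative form appearing in Lemma~\ref{lemma:rob-mistake-bound} and delivers the desired $O(m(n,\epsilon,\delta) \cdot W^2 \log n)$ query bound.
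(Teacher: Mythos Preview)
Your proposal is correct and follows essentially the same approach as the paper: invoke Lemma~\ref{lemma:occam} with a size bound on $\Halfspaces_{\boolhc}^W$ for the sample complexity, and combine the Winnow mistake bound (via the doubling trick for signed weights) with Lemma~\ref{lemma:rob-mistake-bound} for the query complexity. Your counting argument for $\log|\Halfspaces_{\boolhc}^W|$ differs cosmetically from the paper's stars-and-bars computation but yields the same $O(n+\min\{n,W\}\log(n+W))$, and your explicit treatment of the re-query bookkeeping is a welcome addition that the paper leaves implicit in the proof of Lemma~\ref{lemma:rob-mistake-bound}.
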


\begin{proof}
The sample complexity bound uses Lemma~\ref{lemma:occam}.
Note the class $\Halfspaces_{\boolhc}^W$  has size $O(2^n (n+W)^{\min\set{n,W}})$.
This  is a simple application of the \emph{stars and bars} identity, where $W$ is the number of stars and $n+1$ the number of bars (as we are considering the bias term as well): ${n+W\choose W}=O( (n+W)^{min\set{n,W}})$. 
The $2^n$ term comes from the fact that each weight can be positive or negative.
The query complexity uses the fact that the mistake bound for Winnow for $\Halfspaces_{\boolhc}^W$ is $O(W^2\log(n))$ in the case of positive weights (the full statement can be found in Section~\ref{sec:online}).
\citet{littlestone1988learning} outlines how to use the Winnow algorithm when the linear classifier's weights can vary in sign, at the cost of doubling the input dimension and weight bound (see Theorem 10 and Example 6 therein).
\end{proof}

We now turn our attention to linear classifiers $\ltfreal$ on $\R^n$. 
We first show that, when considering an adversary with bounded $\ell_2$-norm perturbations, we can bound the sample complexity of robust learning for this class through a bound on the VC dimension of the robust loss. 
However, the query complexity is infinite in the general case (we will later prove an infinite lower bound in Corollary~\ref{cor:ltf-real-leq-lb}).
This is because the Littlestone dimension of thresholds, and thus halfspaces, is infinite (see Section~\ref{sec:online} for details). 
We will address this issue in Section~\ref{sec:adv-bounded-precision}.

\begin{theorem}
\label{thm:sc-ub-ltf-real}
Let the adversary's budget be measured by the $\ell_2$ norm.
Then any $\rho$-robust ERM learning algorithm for $\ltfreal$ on $\R^n$ has sample complexity $m=O(\frac{1}{\epsilon}( n^3 + \log (1/\delta)))$.
\end{theorem}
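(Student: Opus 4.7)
The plan is to apply Lemma~\ref{lemma:rob-vc} to $\C = \H = \ltfreal$, which reduces the theorem to establishing that $\VC(\mathcal{L}_\rho(\ltfreal)) = O(n^3)$; the stated sample complexity then follows by substitution (absorbing the $\log(1/\epsilon)$ factor into the $n^3$ term in the relevant regime, or invoking Hanneke's majority-vote improvement to drop it outright).

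For the VC bound, I would first give a clean geometric characterisation of $\ell_\rho(c,h)$. Writing $c(x) = \sgn(w_c \cdot x + b_c)$ and $h(x) = \sgn(w_h \cdot x + b_h)$, and denoting the boundary hyperplanes by $H_c$ and $H_h$, I claim that $\ell_\rho(c,h)(x) = 1$ iff at least one of the following holds: (i) $c(x) \neq h(x)$, (ii) $\mathrm{dist}_2(x, H_c) \leq \rho$, or (iii) $\mathrm{dist}_2(x, H_h) \leq \rho$. The nontrivial direction uses that if $B_\rho(x)$ crosses exactly one of the two hyperplanes, then one of $c,h$ changes sign on the ball while the other is constant, forcing a disagreement point inside $B_\rho(x)$; if it crosses both, disagreement is immediate; and if it crosses neither, both are constant on $B_\rho(x)$ and a disagreement inside the ball coincides with one at $x$ itself.

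Second, I would translate this into a polynomial sign-pattern bound in the \emph{parameter} space $\theta := (w_c, b_c, w_h, b_h) \in \R^{2n+2}$. Condition (i) is the sign of $(w_c\cdot x + b_c)(w_h\cdot x + b_h)$, while (ii) and (iii) are the signs of $(w_c\cdot x + b_c)^2 - \rho^2\|w_c\|^2$ and its analogue for $h$; each is a polynomial of degree $\leq 2$ in $\theta$. Fixing $m$ points $x_1,\ldots,x_m$, the induced labelings are determined by the signs of $3m$ degree-$2$ polynomials in $2n+2$ variables. A Warren/Milnor--Thom bound on the number of sign patterns, combined with a standard inversion of the resulting inequality $2^m \leq (Cm/n)^{O(n)}$, yields a growth-function bound giving $\VC(\mathcal{L}_\rho(\ltfreal)) = O(n^3)$, which upon insertion into Lemma~\ref{lemma:rob-vc} completes the proof.

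The main obstacle will be the bookkeeping at the boundary between the geometric and polynomial arguments: the $\rho$-slab conditions depend nonlinearly on $\theta$ through $\|w_c\|, \|w_h\|$, so a naive ``lift to quadratic features in $x$'' argument (which would treat $\ell_\rho$ as a boolean combination of halfspaces in $\R^{O(n^2)}$) must be handled carefully, and one has to verify that the Warren-style bound still applies after introducing the auxiliary quadratic terms. A secondary issue is ensuring the $O(n^3)$ constant survives the boolean-combination step without a spurious extra $\log$ factor that would not be absorbable into the stated bound; addressing this may require the slightly coarser route of treating $\ell_\rho$ directly as a polynomial threshold function in $x$ of bounded degree, at the cost of a looser but still $O(n^3)$ estimate.
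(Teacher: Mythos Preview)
Your reduction to bounding $\VC(\mathcal{L}_\rho(\ltfreal))$ via Lemma~\ref{lemma:rob-vc} is the same first step as in the paper. The gap is in your geometric characterisation of $\ell_\rho(c,h)$: the implication ``if $B_\rho(x)$ crosses both hyperplanes then a disagreement point exists'' is false, and with it the claim that $\ell_\rho(c,h)(x)$ is a boolean function of the signs of your three degree-$2$ polynomials in $\theta=(w_c,b_c,w_h,b_h)$.

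A clean counterexample: take $c(z)=\sgn(w\cdot z+b)$ and the one-parameter family $h_\epsilon(z)=\sgn(w\cdot z+b+\epsilon)$ for $\epsilon\ge 0$, and pick $x$ on the positive side of $H_c$ with $\mathrm{dist}_2(x,H_c)=\rho/2$. For every small $\epsilon\ge 0$ the three polynomials
\[
(w\!\cdot\! x+b)(w\!\cdot\! x+b+\epsilon),\quad (w\!\cdot\! x+b)^2-\rho^2\|w\|^2,\quad (w\!\cdot\! x+b+\epsilon)^2-\rho^2\|w\|^2
\]
have sign pattern $(+,-,-)$. Yet $\ell_\rho(c,h_0)(x)=0$ (the error region is empty when $c=h$) while $\ell_\rho(c,h_\epsilon)(x)=1$ for $\epsilon>0$ (the slab $\{-\epsilon\le w\!\cdot\! z+b<0\}$ meets $B_\rho(x)$). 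So two parameter values with identical sign vectors yield different robust-loss labels, and the Warren/Goldberg sign-pattern count over these three polynomials cannot bound the number of dichotomies. A second, non-degenerate witness that ``crosses both $\Rightarrow$ disagreement'' fails: with $c(z)=\sgn(z_1)$, $h(z)=\sgn(z_2)$ in $\R^2$ and $x=(\rho,\rho)$, the closed ball $B_\rho(x)$ touches both axes yet lies in $\{z_1\ge 0,\ z_2\ge 0\}$, where $c=h\equiv 1$.

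The paper sidesteps this by treating the condition $\exists z\in B_\rho(x):\ c(z)\neq h(z)$ as a first-order formula over the reals and applying Renegar's quantifier elimination (Theorem~\ref{thm:renegar}) to obtain a quantifier-free formula in $\theta$ with $10^{O(n^2)}$ atomic predicates of degree $10^{O(n)}$; plugging this into Goldberg's bound (Theorem~\ref{thm:goldberg}) with $k=2n+2$ gives $\VC(\mathcal{L}_\rho(\ltfreal))=O(n\cdot n^2)=O(n^3)$ and hence Theorem~\ref{thm:sc-ub-ltf-real}. Note that if your three-polynomial characterisation were valid it would already yield $\VC=O(n)$, not $O(n^3)$; the looser exponent in the theorem is a hint that no such cheap description exists. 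A direct route without QE is plausible, but it requires expressing $\mathrm{dist}_2(x,\{c\neq h\})\le\rho$ --- the distance to a union of two wedges --- via a case analysis over which face (one hyperplane, the other, or their intersection) realises the minimum, producing several higher-degree polynomials in $\theta$ rather than the three you propose. The obstacle you flag about $\|w_c\|,\|w_h\|$ appearing nonlinearly is not the problem; the problem is that the boolean formula itself is wrong.
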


The proof of this theorem relies on deriving an upper bound on the VC dimension of the robust loss of halfspaces.
This enables us to bound the sample complexity needed to guarantee robust accuracy. 
We will need the following theorem from \cite{goldberg1995bounding}:

\begin{theorem}[Theorem 2.2 in \citep{goldberg1995bounding}]
\label{thm:goldberg}
Let $\set{\C_{k,n}}_{k,n\in\N}$ be a family of concept classes where concepts in $\C_{k,n}$ and instances are represented by $k$ and $n$ real values, respectively.
Suppose that the membership test for any instance $\alpha$ in any concept $C$ of $\C_{k,n}$ can be expressed as a boolean formula $\Phi_{k,n}$ containing $s = s(k,n)$ distinct atomic predicates, each predicate being a polynomial inequality or equality over $k+n$ variables (representing $C$ and $\alpha$) of degree at most $d=d(k,n)$.
Then $\VC({\C_{k,n}}) \leq 2k\log (8eds)$. 
\end{theorem}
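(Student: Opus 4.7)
The plan is to combine a shattering hypothesis with a Warren–Milnor type bound on the number of sign vectors of a system of real polynomials. Suppose a set $X = \{\alpha_1,\ldots,\alpha_m\} \subseteq \R^n$ is shattered by $\C_{k,n}$. For each fixed $\alpha_i$, substituting $\alpha_i$ into the membership formula $\Phi_{k,n}$ yields a Boolean combination of $s$ polynomial predicates $p_{i,1}(\beta),\ldots,p_{i,s}(\beta)$ in the concept parameters $\beta \in \R^k$, each of degree at most $d$. Collecting these across all $i$, we obtain $N := ms$ polynomials of degree $\leq d$ in $k$ variables. Crucially, the labelling of $X$ induced by $C_\beta$ is a function only of the sign vector $(\mathrm{sgn}\,p_{i,j}(\beta))_{i,j} \in \{-,0,+\}^{N}$, because $\Phi_{k,n}$ depends on $\beta$ only through these $N$ polynomial values.

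The next step is to upper bound the number of distinct sign vectors realizable. By a Warren–Milnor–Thom type argument, $N$ polynomials of degree at most $d$ in $k$ real variables induce at most $(4edN/k)^k$ sign vectors (for $N \geq k$; the bound is even smaller otherwise). Since shattering $X$ requires $2^m$ distinct labelings, and each labeling corresponds to at least one sign vector, we obtain
\begin{equation*}
2^m \;\leq\; \left(\tfrac{4edms}{k}\right)^k \, .
\end{equation*}
Taking logarithms base 2 gives $m \leq k \log_2(4edms/k)$, an implicit inequality in $m$. A standard transcendental-inequality lemma (if $m \leq A\log_2(Bm)$ with $A,B>0$, then $m \leq 2A\log_2(AB)$ up to the right constants) applied with $A = k$ and $B = 4eds/k$ unwinds this to $m \leq 2k\log_2(8eds)$, matching the claimed bound.

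The main obstacle is making the constants in the Warren-style sign-pattern bound match the factor of $8$ appearing inside the logarithm of the target statement; different formulations of the Milnor–Thom/Warren bound yield constants that differ by small multiplicative factors, so one must select the version whose constants are tight enough. A minor additional subtlety is that the predicates are allowed to be equalities as well as strict inequalities, so the sign alphabet is $\{-,0,+\}$ rather than $\{-,+\}$; this is harmless because the sign-pattern bound applies verbatim in the trichotomy setting, and one can alternatively perturb equalities into pairs of strict inequalities at the cost of at most a factor of $2$ in $s$, which is already comfortably absorbed by the constants.
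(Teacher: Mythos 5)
The thesis does not prove this statement itself---it is quoted verbatim as Theorem~2.2 of \citet{goldberg1995bounding} and used as a black box---so the only meaningful comparison is with the original source, and your reconstruction is essentially that original argument: substitute each shattered instance into $\Phi_{k,n}$ to get $ms$ degree-$d$ polynomials in the $k$ concept parameters, bound the number of realizable sign vectors by a Warren-type estimate, set $2^m$ below that count, and unwind the resulting inequality $m\leq k\log_2(Bm)$. The one place to be careful is your remark that the sign-pattern bound ``applies verbatim in the trichotomy setting'': Warren's theorem as usually stated counts $\{-,+\}$ patterns (cells of the complement of the zero sets), so the $\{-,0,+\}$ case genuinely requires the perturbation step replacing each $p$ by $p\pm\epsilon$---and that doubling of the polynomial count is exactly where the constant $8e$ (rather than $4e$) in the statement comes from; since you supply that fix yourself, the argument is sound.
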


We will now translate the $\rho$-expansion of the error region (i.e., the robust loss function) between two halfspaces as a boolean formula. 
The following result from \cite{renegar1992computational}, will be instrumental to obtain our result:

\begin{theorem}[Theorem 1.2 in \cite{renegar1992computational}]
\label{thm:renegar}
Let $\Psi$ be a formula in the first-order theory of the reals of the form 
$$(Q_1 x^{[1]}\in\R^{n_1})\dots (Q_\omega x^{[\omega]}\in\R^{n_\omega})P(x^{[1]},\dots,x^{[n_\omega]},y)\enspace,$$
with free variables $y=(y_1,\dots,y_l)$, quantifiers $Q_i$ ($\exists$ or $\forall$) and quantifier-free Boolean formula $P(x^{[1]},\dots,x^{[n_\omega]},y)$ with $m$ atomic predicates consisting of polynomial inequalities of degree at most $d$.  
There exists a procedure that constructs an equivalent quantifier-free formula $\Phi$ of the form
$$\bigvee_{i=1}^I \bigwedge_{j=1}^{J_i} (h_{ij}(y)\Delta_{ij} 0)
\enspace,$$
where 
\begin{align*}
I&\leq (md)^{2^{O(\omega)}l\prod_k n_k}\\
J_i&\leq (md)^{2^{O(\omega)}\prod_k n_k}\\
\deg(h_{ij})&\leq (md)^{2^{O(\omega)}\prod_k n_k}\\
\Delta_{ij}&\in\set{\leq,\geq,=,\neq,>,<} 
\enspace.
\end{align*}

\end{theorem}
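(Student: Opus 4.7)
The plan is to establish this quantifier-elimination bound by induction on the number $\omega$ of alternating quantifier blocks, with the heart of the argument being a single-block elimination lemma obtained via the critical points method from real algebraic geometry. The base case $\omega=0$ is vacuous, as the formula is already quantifier-free. For the inductive step, I would peel off the innermost block $Q_\omega x^{[\omega]} \in \R^{n_\omega}$ and show that eliminating this single block from a quantifier-free formula with $m$ atomic predicates of degree at most $d$ produces an equivalent quantifier-free formula with at most $(md)^{O(n_\omega)}$ atomic predicates of degree at most $(md)^{O(n_\omega)}$. Since universal quantifiers can be handled by the duality $\forall x \cdot P \equiv \neg \exists x \cdot \neg P$, which preserves parameters up to constant factors after pushing negations through a DNF/CNF switch, it suffices to treat existential blocks.

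For the single-block elimination, I would use the critical points method. The truth value of $\exists x \in \R^{n_\omega} : P(x,y)$ at a parameter $y$ depends only on the sign vector of the polynomials appearing in $P$ evaluated at $(x,y)$ as $x$ varies. To decide existence of $x$ realizing a given sign condition, it suffices to produce a finite family of sample points hitting every connected component of every sign-invariant stratum of the arrangement defined by these polynomials. Such sample points arise as critical points of a generic distance function restricted to the vanishing loci of suitable subsets of the polynomials, and are therefore common zeros of polynomial systems obtained by differentiation and elimination (resultants/subresultants). Collecting the conditions under which each sample point realizes the desired sign pattern yields a Boolean combination of polynomial inequalities in the free variables $y$, and this is precisely the quantifier-free equivalent of the existential formula.

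The quantitative bounds would then be tracked through the construction. Each sample-point system consists of at most $\binom{m}{n_\omega} \leq m^{n_\omega}$ subsystems, each obtained by eliminating $n_\omega$ variables from polynomials of degree $d$ using parametric resultants, yielding polynomials in $y$ of degree $d^{O(n_\omega)}$. This gives the $(md)^{O(n_\omega)}$ per-block blow-up; iterating across the $\omega$ blocks produces the recursion $T_{k+1} = T_k^{O(n_{k+1})}$ whose solution is precisely $(md)^{2^{O(\omega)} \prod_k n_k}$, with the extra factor $l$ in the bound for $I$ coming from the final explicit enumeration of sign patterns over the $l$ free variables $y$.

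The main obstacle is the rigorous single-block elimination lemma: constructing the sample-point systems so that they are guaranteed to meet every connected component of every sign-invariant region (not just of the zero set of one polynomial), and simultaneously controlling the degrees of the parametric resultants when $y$ is specialized. Handling general position assumptions — perturbing the polynomials with infinitesimals to ensure smoothness and finiteness of critical loci, then taking limits — requires care with non-standard real closed field extensions (Puiseux series in $\varepsilon$), and the degree bounds must survive this perturbation. Once this technical core is in place, the induction and the $2^{O(\omega)}$ bound follow by routine bookkeeping.
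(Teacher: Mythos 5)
The statement you are proving is not proved in the paper at all: it is imported verbatim as Theorem~1.2 of Renegar's 1992 work and used as a black box to bound the VC dimension of the robust loss of halfspaces. So there is no in-paper argument to compare yours against; the only meaningful comparison is with the literature the paper cites.

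Measured against that, your outline follows the standard route (and essentially Renegar's own): induct on the quantifier blocks, reduce to eliminating a single existential block by $\forall/\exists$ duality, and perform the single-block elimination by producing sample points that meet every connected component of every realizable sign condition, with parametric resultants controlling the degrees in the free variables. The recursion $T_{k+1}=T_k^{O(n_{k+1})}$ correctly yields the $(md)^{2^{O(\omega)}\prod_k n_k}$ shape, and your account of where the extra factor of $l$ enters the bound on $I$ is plausible. However, as a proof this has a genuine gap that you yourself flag: the single-block elimination lemma --- constructing sample points guaranteed to hit every component of every sign-invariant cell (not merely of a single hypersurface), doing so uniformly in the parameters $y$, and carrying the degree bounds through the infinitesimal perturbations needed for general position --- is the entire technical content of Renegar's paper and is asserted here rather than established. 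Until that lemma is proved with explicit constants, the bookkeeping in your third paragraph rests on nothing. If your intent is to use this theorem the way the thesis does, the honest move is to cite it; if your intent is to reprove it, the work remaining is essentially all of it.
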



We are now ready to state the key technical lemma need for the proof of Theorem~\ref{thm:sc-ub-ltf-real}.

\begin{lemma}
\label{lemma:rob-risk-bool-formula}
Let $a,b\in\R^{n}, a_0,b_0\in\R$, and define the map $\varphi: x \mapsto \mathbf{1}[\exists z\in B_\rho(x) \st \sgn(a^\top z + a_0)\neq \sgn(b^\top z + b_0)]$.
Then $\varphi$ can be represented as a boolean formula $\Phi$ with $s=10^{Cn^2}$ distinct atomic predicates, with each predicate being a polynomial inequality over $2n+2$ variables of degree at most $10^{C'n}$ for some constants $C,C'>0$.
\end{lemma}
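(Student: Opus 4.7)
The plan is to apply Renegar's quantifier elimination result (Theorem~\ref{thm:renegar}) to the predicate $\varphi(x)$ viewed as a first-order formula over the reals, with the $2n+2$ halfspace parameters (and $x$) treated as free variables. First, I would rewrite $\varphi$ as
\begin{equation*}
\varphi(x) \;=\; \exists z \in \R^{n}:\;\bigl(\|z-x\|_2^2 \leq \rho^2\bigr) \wedge \bigl(\sgn(a^\top z+a_0)\neq \sgn(b^\top z+b_0)\bigr),
\end{equation*}
and note that the sign-disagreement clause is itself a Boolean combination of four polynomial inequalities, namely $a^\top z+a_0 > 0$, $a^\top z+a_0 < 0$, $b^\top z+b_0 > 0$, $b^\top z+b_0 < 0$. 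Together with the ball constraint, the inner quantifier-free formula $P$ has $m=O(1)$ atomic predicates, each of degree at most $d=2$ in the combined variables $(z,x,a,a_0,b,b_0)$.

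Next I would instantiate Theorem~\ref{thm:renegar} with a single existential quantifier ($\omega=1$) over $n_1=n$ variables, with at most $l=O(n)$ free variables (the parameters $a,a_0,b,b_0$ together with the coordinates of $x$). Plugging in $m=O(1)$ and $d=2$ gives the bounds
\begin{equation*}
I \;\leq\; (md)^{2^{O(1)}\, l\, n_1} \;=\; 2^{O(n^2)}, \qquad J_i \;\leq\; (md)^{2^{O(1)}\, n_1} \;=\; 2^{O(n)}, \qquad \deg(h_{ij}) \;\leq\; 2^{O(n)}.
\end{equation*}
The equivalent quantifier-free formula $\Phi$ thus has at most $I\cdot \max_i J_i \leq 2^{O(n^2)}$ distinct atomic predicates, each of degree at most $2^{O(n)}$. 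Choosing constants $C, C'>0$ large enough yields the claimed bounds $s \leq 10^{Cn^2}$ and $\deg \leq 10^{C'n}$.

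The main conceptual step — and the only nontrivial one — is recognizing that $\varphi$ falls within the scope of Renegar's theorem after the one-quantifier reformulation; everything else is bookkeeping. The only subtle points I would double-check are (i) that the polynomial $\|z-x\|_2^2 - \rho^2$ and the linear forms $a^\top z+a_0$, $b^\top z+b_0$ are genuinely of total degree at most $2$ when $a$ and $b$ (and $x$) are treated as variables rather than constants, which they are; and (ii) that the constant factors hidden inside the $2^{O(\cdot)}$ notation in Renegar's bound can be absorbed into the constants $C,C'$ without difficulty, which follows since $2^{An^2} \leq 10^{Cn^2}$ once $C \geq A/\log 10$.
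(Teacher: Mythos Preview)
Your proposal is correct and essentially identical to the paper's proof: both rewrite $\varphi$ as a single existentially quantified formula over $\R^n$ with $O(1)$ atomic predicates of degree at most $2$, then invoke Renegar's quantifier elimination (Theorem~\ref{thm:renegar}) with $\omega=1$, $n_1=n$, and $l=O(n)$ to obtain the stated bounds on the number and degree of predicates. The only cosmetic difference is that the paper quantifies over the perturbation $\zeta$ (writing $z=x+\zeta$) rather than over $z$ directly, and records $l=n$ rather than $l=O(n)$; your accounting of the free variables and your check that the atomic predicates have total degree at most $2$ when $a,b,x$ are all treated as variables are, if anything, slightly more careful.
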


\begin{proof}
First note that the predicate $\sgn(a^\top z+a_0)\neq\sgn(b^\top z+b_0)$ 
can be represented as the following formula:
\begin{equation*}
\left(a^\top z+a_0 \geq 0 \wedge b^\top z+b_0 < 0\right) \vee \left( a^\top z+a_0 <0 \wedge b^\top z+b_0\geq 0\right)\enspace,
\end{equation*}
which contains $n+(2n+2)$ variables and 4 predicates.
Moreover, given a perturbation $\zeta\in\R^n$, the constraint $\norm{\zeta}_2\leq \rho$ on its magnitude is a polynomial inequality of degree 2:
$$\sum_i \zeta_i^2 \leq \rho^2 \enspace.$$
Now, consider the following formula:
\begin{equation*}
\Psi(x) = \exists \zeta \in\R^n \st \left( \sgn(a^\top (x+\zeta)+a_0)\neq\sgn(b^\top (x+\zeta)+b_0) \wedge \norm{\zeta}_2\leq \rho\right)
\enspace.
\end{equation*}
This is a formula of first-order logic over the reals.
Using the notation of Theorem~\ref{thm:renegar}, we have $\omega=1$ quantifier, and thus $\prod_k n_k = n$, one Boolean formula with $m=5$ polynomial inequalities of degree $d$ at most $2$, and $l=n$.
Thus, $\Psi(x)$ can be expressed as a quantifier-free formula $\Phi(x)=\bigvee_{i=1}^I \bigwedge_{j=1}^{J_i} (h_{ij}(y)\Delta_{ij} 0)$ of size $$I\max_i J_i \leq (md)^{2^{O(\omega)}l\prod_k n_k+2^{O(\omega)}\prod_k n_k}\leq 10^{Cn^2} $$ for some constant $C$, where the polynomial inequalities are of degree at most $(md)^{2^{O(\omega)}\prod_k n_k}\leq 10^{C'n}$ for some constant $C'$.
\end{proof}

We thus get the following corollary.

\begin{corollary}
\label{cor:rvc-ltf}
The VC dimension of the robust loss of $\ltfreal$ 
is $O( n^3)$.
\end{corollary}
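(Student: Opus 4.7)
The plan is to apply Theorem~\ref{thm:goldberg} (Goldberg–Jerrum) to the class $\RLoss$ of robust loss functions between pairs of halfspaces, using the representation supplied by Lemma~\ref{lemma:rob-risk-bool-formula} as the required boolean formula of polynomial inequalities. The concept class to analyze is $\mathcal{L}_\rho(\ltfreal) = \{\ell_\rho(c,h) : c,h \in \ltfreal\}$, where each $\ell_\rho(c,h)$ is the indicator of the event $\exists z \in B_\rho(x) : c(z) \neq h(z)$.

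First I would identify the parameter counts required for Theorem~\ref{thm:goldberg}. An element of $\mathcal{L}_\rho(\ltfreal)$ is specified by a pair of halfspaces $(a,a_0),(b,b_0) \in \R^{n+1}$, so the number of real parameters describing a concept is $k = 2(n+1) = 2n+2$. Instances lie in $\R^n$, so the instance dimension is $n$. Lemma~\ref{lemma:rob-risk-bool-formula} then gives precisely the boolean-formula representation required by Goldberg–Jerrum: the membership test ``$\ell_\rho(c,h)(x) = 1$'' is expressed as a boolean combination of at most $s = 10^{Cn^2}$ atomic polynomial inequalities in the $k+n = 3n+2$ variables, each of degree at most $d = 10^{C'n}$.

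Next I would simply substitute these values into the bound $\VC(\C_{k,n}) \leq 2k\log(8eds)$ from Theorem~\ref{thm:goldberg}:
\begin{equation*}
\VC(\RLoss) \;\leq\; 2(2n+2)\,\log\!\bigl(8e\cdot 10^{C'n}\cdot 10^{Cn^2}\bigr) \;=\; O(n)\cdot O(n^2) \;=\; O(n^3),
\end{equation*}
which yields the claimed bound.

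There is no real obstacle here: all the technical work has been packaged into Lemma~\ref{lemma:rob-risk-bool-formula} (the quantifier elimination via Renegar's theorem, which converts the existential perturbation constraint into a quantifier-free boolean combination of polynomial inequalities of controlled degree and count) and Theorem~\ref{thm:goldberg} (which converts such a representation into a VC bound). The only thing to verify carefully in the write-up is that the counting is correct: one must note that $\RLoss$ is parametrized by \emph{two} halfspaces rather than one, which doubles $k$ but leaves the asymptotic $O(n^3)$ unchanged, and that the leading $n^2$ factor in the $\log$ comes from the predicate-count term $\log s = \Theta(n^2)$ dominating the degree term $\log d = \Theta(n)$.
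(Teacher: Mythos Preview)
Your proposal is correct and takes essentially the same approach as the paper: set $k=2n+2$, $s=10^{Cn^2}$, $d=10^{C'n}$ from Lemma~\ref{lemma:rob-risk-bool-formula} and plug into the Goldberg--Jerrum bound $2k\log(8eds)$ to obtain $O(n^3)$.
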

\begin{proof}
We let $s=10^{Cn^2}$, $k=2n+2$ and $d=10^{C'n}$ from the proof above and use Definition~\ref{def:rob-vc} and Theorem~\ref{thm:goldberg}  to get a VC dimension of the robust loss upper bound of $O(k\log(sd))=O(n^3)$.\footnote{Note that Corollary~2.4 in \cite{goldberg1995bounding} uses this reasoning.}
\end{proof}

Proving  Theorem~\ref{thm:sc-ub-ltf-real} is now a straightforward application of the results above.

\section[Precision-Bounded Adversaries]{Robust Learning Against Precision-Bounded Adversaries}
\label{sec:adv-bounded-precision}

It is possible to obtain some relatively straightforward robustness guarantees for classes with infinite Littlestone dimension if there exists a sufficiently large margin between classes (in which case the exact-in-the-ball and  constant-in-the-ball notions of robustness coincide). 
However, some of these results have already been derived in the literature.
See, e.g., \citep{cullina2018pac} for the sample complexity of halfspaces in the constant-in-the-ball realizable setting w.r.t. $\ell_p$-norm adversaries, which improves on the sample complexity bound of Theorem~\ref{thm:sc-ub-ltf-real} by being linear -- vs cubic -- in the input dimension; together with a mistake bound for Perceptron, we get $\LEQ$ bounds.\footnote{In this case, we would need  a margin between the sets $B_\rho(\supp(D_0))$ and $B_\rho(\supp(D_1))$, as these are the sets of potential counterexamples -- the condition $B_\rho(\supp(D_0))\cap B_\rho(\supp(D_1))=\emptyset$ is not sufficient in itself to get guarantees for hypotheses with infinite Littlestone dimension. See \citep{montasser2021adversarially} for both upper and lower bounds in this setting.}

Instead, in this section, we look at robust learning  problems in which the decision boundary can cross the perturbation region, but where the adversary's precision is limited.
We use ideas from \cite{ben2009agnostic} concerning hypotheses with margins in the online learning framework. 
Note, however, that here the margin does not represent sufficient distance between classes, but rather a region of the instance space that is too costly for the adversary to access (e.g., the number of bits needed to express an adversarial example is too large).

Examining the proof that the Littlestone dimension of thresholds is infinite (see Section~\ref{sec:online}), the key assumption is that the adversary has \emph{infinite} precision, which is perhaps not a reasonable assumption to make in practice. 
More precisely, in the construction of the Littlestone tree, each counterexample given requires an additional bit to be described, as the remainder of the interval $[0,1]$ is split in two at each prediction. 
Our work in this section formally and more generally addresses this potential issue.

We now define the meaning of bounding an adversary's precision in the context of robust learning, which is depicted in Figure~\ref{fig:bounded-precision}.

\begin{figure}
\begin{center}
\includegraphics[scale=0.3]{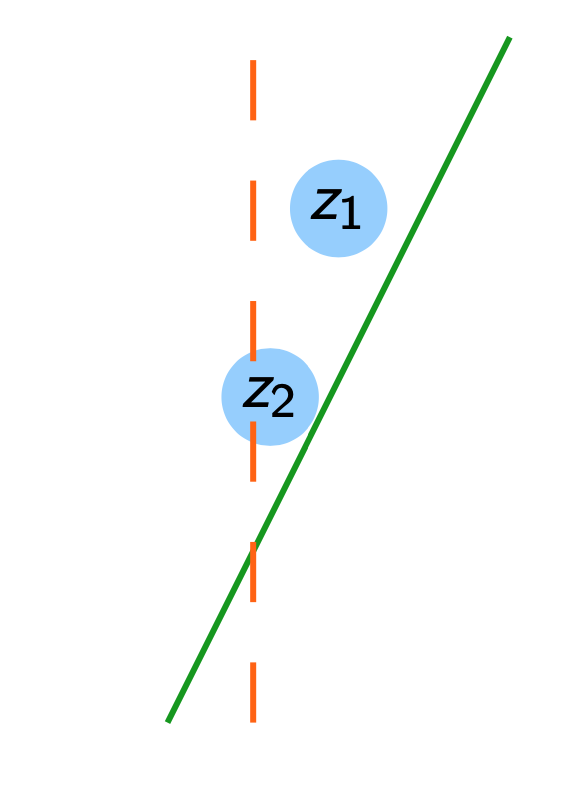}
\end{center}
\caption{The dotted line is the hypothesis $h$, and the solid line, the target $c$. The adversary has precision $\tau$. The shaded regions represent the set $B_\tau(z_i)$. The counterexample $z_1$ is valid as $c$ and $h$ disagree on all of $B_\tau(z_1)$ and both functions are constant in this region, but $z_2$ is not as $c$ and $h$ agree on part of $B_\tau(z_2)$.}
\label{fig:bounded-precision}
\end{figure}

\begin{definition}[Precision-Bounded Adversary]
Let $(\X,d)$ be a metric space, and let an adversary $\adv$ have budget $\rho$. 
We say that $\adv$ is \emph{precision-bounded} by $\tau$, if for target $c$, hypothesis $h$, and input $x$, $\adv$ can only return counterexamples $z\in B_\rho(x)$ such that $h$ and  $c$ are both constant and disagree on the whole region $B_{\tau}(z)$ and $B_{\tau}(z)\subseteq B_\rho(x)$.
\end{definition}

\paragraph*{Comparison with online learning.}
Note that if we set $\rho$ to be large enough so that the perturbation region is the whole instance space $\X$ for any point $x$ (more generally, $\U(x)=\X$), we (almost) recover the adversary model in the online learning setting.
The only distinction is that, in online learning, at a given time $t$ the learner is a point $x_t$ to classify (implicitly classifying the whole region $B_\tau(x_t)$ in our precision-bounded setting), rather than committing to a hypothesis $h$ on the whole instance space. 
The adversary (or ``nature'' if a target must be chosen a priori) reveals the true label after a prediction is made. 

In the mistake-bound model, the only constraint is that there exists a concept in $\H$ that is consistent with the labelled sequence $(x_1,y_1,),\dots,(x_t,y_t)$ seen so far.
When working with a precision-bounded adversary, we  are implicitly asking the adversary to not give counterexamples too close to the boundary.
Then, in the mistake-bound model, this translates into the adversary giving a point $x_t$ to predict such that there does not exist time steps $t',t''$ where $y_{t'}\neq y_{t''}$ and $B_\tau(x_{t'})$ and $B_\tau(x_{t''})$ both intersect with $B_\tau(x_t)$, hence a \emph{margin}. 
Margin-based complexity measures for online learning adapted from \cite{ben2009agnostic} will be used in this section. 

A subtle distinction between the definitions we give below and that of \cite{ben2009agnostic} is that the latter defined margin-based Littlestone trees and Littlestone dimension for margin-based \emph{hypothesis classes}. 
They require that the hypothesis class $\H$ satisfies the following: for all $h\in\H$, $h$ is of the form $\X\rightarrow\R$, and the prediction rule is 
\begin{equation}
\phi(h(x))=\frac{\sgn(h(x))+1}{2}
\enspace,
\end{equation}
where the magnitude $\abs{h(x)}$ is the \emph{confidence} in the prediction. 
The $\mu$-\emph{margin-mistake} on an example $(x,y)$ is defined as
\begin{equation}
\label{eqn:margin-mistake}
\abs{h(x)-y}_\mu=
\begin{cases}
0 & \text{if }\phi(h(x))=y \wedge \abs{h(x)}\geq \mu \\
1 & \text{otherwise}
\end{cases}
\enspace.
\end{equation}

For us, since it is the \emph{adversary} that is bounded in its precision, we instead consider any hypothesis class where the concepts are boolean functions whose domain is a metric space $(\X,d)$. 
Rather than having the condition $\abs{h(x)}\geq \mu$ from Equation~\ref{eqn:margin-mistake}, we encode a margin representing the precision $\tau$ by the requirement that hypotheses must be constant in the $\tau$-expansion around any point in the Littlestone trees.
This difference is not only stylistic, but also concerns the semantics of the margin. 
Our definition moreover implies a uniform margin on the instance space, while the one from \cite{ben2009agnostic} can fluctuate in the instance space based on the classifier's confidence. 
However, the tools and techniques used here don't differ much in essence from the ones in \cite{ben2009agnostic}. 
The main novelty is the meaning of the notion of margin and its study in the context of robust learning.

\begin{definition}[Littlestone Trees of Precision $\tau$]
\label{def:lit-tree-tau}
A Littlestone tree of precision $\tau$ for a hypothesis class $\mathcal{H}$ on metric space $(\X,d)$ is a complete binary tree $T$ of depth $d$ whose internal nodes are instances $x\in\X$.
Each edge is labelled with $-$ or $+$ and corresponds to the potential labels of the parent node $x$ and the region $B_\tau(x)$.
Each path from the root to a leaf must be consistent with some $h\in\mathcal{H}$, i.e. if $x_1,\dots,x_d$ with labellings $y_1,\dots,y_d$ is a path in $T$, there must exist $h\in\mathcal{H}$ such that $h\vert_{B_\tau(x_i)}=y_i$ for all $i$. 
\end{definition}

While it is possible to have a hypothesis giving different labels to points in the region $B_\tau(x)$ in the standard setting, in the above construction, one must commit to labelling the whole region $B_\tau(x)$ either positively or negatively. 

For the remainder of the text, we will identify each leaf in a Littlestone tree $T$ with a hypothesis $h\in\H$ that is consistent with the labellings along the path from the root to this leaf. 
Note that the choice of labelling $y$ of $B_\tau(x)$ of some $x\in T$ implies that, in contrast to the standard Littlestone trees, any $h\in\H$ with $h(x)=y$ that is \emph{not} constant on  $B_\tau(x)$ cannot be consistent with any path in $T$. 
The set of consistent hypotheses on $T$ thus does \emph{not} form a partition of $\H$ in our precision-bounded setting.

We now remark that, by definition, no node in the tree has a $\tau$-expansion that overlaps with the $\tau$-expansion of any of its ancestors.

\begin{proposition}
\label{prop:ancestor-intersection-empty}
Let $T$ be a Littlestone tree of precision $\tau$.
Then for any node $x\in T$ and ancestor $x'\in T$ of $x$, $B_\tau(x)\cap B_\tau(x')=\emptyset$.
\end{proposition}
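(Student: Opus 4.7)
My plan is to argue by contradiction, exploiting the fact that the Littlestone tree of precision $\tau$ is complete (each internal node has both a $+$-child and a $-$-child, and each root-to-leaf path is realized by some $h \in \mathcal{H}$). Assume toward contradiction that there is a node $x \in T$ and an ancestor $x' \in T$ of $x$ with $B_\tau(x) \cap B_\tau(x') \neq \emptyset$. Let $y' \in \{+,-\}$ denote the label on the edge from $x'$ along the path to $x$, so that every hypothesis consistent with any root-to-leaf path passing through $x$ must be identically $y'$ on $B_\tau(x')$.

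Next, I would look at the two children of $x$ in $T$. One corresponds to the label $y'$ on $B_\tau(x)$ and the other to the opposite label $\bar{y'}$. Completeness of $T$ guarantees that the subtree rooted at the child labelled $\bar{y'}$ contains at least one leaf, and the definition of a Littlestone tree of precision $\tau$ (Definition~\ref{def:lit-tree-tau}) requires the root-to-leaf path through that leaf to be realized by some $h \in \mathcal{H}$. Such an $h$ must simultaneously satisfy $h\vert_{B_\tau(x')} = y'$ (because of $x'$) and $h\vert_{B_\tau(x)} = \bar{y'}$ (because of $x$). Evaluating $h$ at any point of the non-empty intersection $B_\tau(x) \cap B_\tau(x')$ yields the contradiction $y' = \bar{y'}$.

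The main subtlety, and the only place where care is required, is the interplay between the semantics of the edge labels and the uniform constraint on $B_\tau$-balls: unlike in the classical Littlestone tree, here a hypothesis is required to be \emph{constant} on every ball $B_\tau(\cdot)$ it visits along its path, which is what forbids resolving the above conflict by letting $h$ vary within the intersection. Once this is made explicit, the argument above goes through immediately without any further case analysis.
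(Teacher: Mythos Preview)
Your proof is correct and is essentially the same argument as the paper's. The only cosmetic difference is that the paper picks \emph{two} root-to-leaf paths branching at $x$, so the corresponding hypotheses $h_0,h_1$ agree on $B_\tau(x')$ (both equal some $b$) but disagree on all of $B_\tau(x)$, forcing the balls to be disjoint; you instead pick the \emph{single} path through the $\bar{y'}$-child of $x$ and derive an internal contradiction at a point of the intersection. Both arguments use exactly the same ingredients (completeness of the tree and the constant-on-$B_\tau$ constraint), so there is no substantive difference.
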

\begin{proof}
Take two paths from the root to two distinct leaves, $h_0$ and $h_1$, respectively.
Let the paths branch off at $x\in T$, with $h_y$ giving label $y$ to the whole region $B_\tau(x)$.
Let $x'$ be an ancestor of $x$ in $T$, and note that $h_0=h_1=b$ on $B_\tau(x')$ for some $b\in\{0,1\}$.
Then, since $h_0$ and $h_1$ must disagree on all of $B_\tau(x)$, it follows that $B_\tau(x)\cap B_\tau(x')=\emptyset$.
\end{proof}

We can now define the following variant of the Littlestone dimension, which is analogous to the margin-based Littlestone dimension of \cite{ben2009agnostic}.

\begin{definition}[Precision-Bounded Littlestone Dimension]
The Littlestone dimension of precision $\tau$ of a hypothesis class $\mathcal{H}$ on metric space $(\X,d)$, denoted $\Lit_\tau(\mathcal{H})$, is the depth $k$ of the largest Littlestone tree with bounded precision $\tau$ for $\mathcal{H}$. If no such $k$ exists then $\Lit(\mathcal{H})=\infty$.
\end{definition}

Note that setting $\tau=0$, i.e., there are no constraints on the nodes, we recover the Littlestone tree and Littlestone dimension definitions.
As an example, let us consider the class of threshold functions, which, when $\tau=0$, have infinite Littlestone dimension.

\begin{proposition}
Let $\tau>0$. 
The class $\thresholds_B$ of threshold functions on $[0,B]$ induce Littlestone trees of precision $\tau$ of depth bounded by $log\frac{B}{\tau}-1$. Thus $\Lit_\tau(\thresholds_B)=\lfloor\log\frac{B}{\tau}-1\rfloor$.
\end{proposition}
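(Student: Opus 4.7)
\medskip

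The plan is to establish matching upper and lower bounds on the depth of Littlestone trees of precision $\tau$ for $\thresholds_B$ via a one-dimensional interval-shrinking argument, exploiting the total ordering of threshold classifiers.

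For the upper bound, I would track the feasible interval for the threshold parameter $a$ along a root-to-leaf path of an arbitrary Littlestone tree of precision $\tau$ for $\thresholds_B$. At the root the feasible set is $(\alpha_0,\beta_0]=(0,B]$. At an internal node $x_k$ at depth $k$ with current feasible interval $(\alpha_k,\beta_k]$, committing to label $0$ on $B_\tau(x_k)=[x_k-\tau,x_k+\tau]$ is equivalent to the constraint $a>x_k+\tau$, which shrinks the feasible set to $(\max(\alpha_k,x_k+\tau),\beta_k]$; committing to label $1$ imposes $a\leq x_k-\tau$, shrinking it to $(\alpha_k,\min(\beta_k,x_k-\tau)]$. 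Consequently, for $x_k$ to be a valid internal node (having two feasible children in the sense of Definition~\ref{def:lit-tree-tau}), we need $x_k\in(\alpha_k+\tau,\beta_k-\tau)$, which in particular forces $L_k:=\beta_k-\alpha_k>2\tau$. Since the two child lengths $L_k^L=\beta_k-(x_k+\tau)$ and $L_k^R=(x_k-\tau)-\alpha_k$ satisfy $L_k^L+L_k^R=L_k-2\tau$, the longer child has length at least $L_k/2-\tau$, while at most $L_k/2-\tau$ when $x_k$ is the midpoint. Following the child with the larger interval, we obtain the recurrence $L_{k+1}\leq L_k/2-\tau$ along a worst-case path, whose closed form is $L_k\leq (B+2\tau)/2^k-2\tau$. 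A tree of depth $d$ requires $L_{d-1}>2\tau$ at the deepest internal node, which unwinds to $2^{d+1}<B/\tau+2$, yielding $d\leq \lfloor \log(B/\tau)-1\rfloor$ for $B/\tau\geq 2$.

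For the matching lower bound, I would explicitly construct a complete binary tree of precision $\tau$ by the midpoint-splitting rule: place each internal node at the midpoint of its current feasible interval. Starting from $(0,B]$, this makes both child intervals have equal length $L_k/2-\tau$, so the same closed-form expression $L_k=(B+2\tau)/2^k-2\tau$ now gives the exact (not merely upper-bounding) length at every depth along every path. The construction continues as long as $L_k>2\tau$, which for $B/\tau\geq 2$ permits exactly $\lfloor \log(B/\tau)-1\rfloor$ levels of internal nodes, producing a Littlestone tree of precision $\tau$ of that depth. Every path through this tree is consistent with some threshold in the nonempty feasible interval at its leaf, so the tree is valid in the sense of Definition~\ref{def:lit-tree-tau}.

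The main obstacle is essentially careful bookkeeping rather than a conceptual difficulty: I must handle the strict-versus-non-strict inequalities at interval endpoints cleanly (arising because $f_a(x)=\mathbf{1}[x\geq a]$ is right-continuous but not left-continuous, so the feasible set is naturally half-open), and check that the floor and the boundary case $B/\tau<2$ in the final expression $\lfloor \log(B/\tau)-1\rfloor$ are handled consistently between the upper and lower bounds. The structural argument itself is clean: because $\thresholds_B$ is totally ordered and one-dimensional, the entire consistency condition reduces to a single interval of feasible parameters, and Proposition~\ref{prop:ancestor-intersection-empty} is implicitly used through the fact that successive label commitments can only cut away an extra $\tau$-margin on one side of the current interval.
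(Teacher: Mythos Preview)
Your plan is essentially the paper's argument, just executed more carefully: you separate the upper and lower bounds explicitly, whereas the paper simply asserts that midpoint splitting is optimal and reads off the depth. That extra rigor is welcome, since the paper's proof does not actually justify optimality.

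There is, however, a slip in your upper-bound step. You write that ``following the child with the larger interval, we obtain the recurrence $L_{k+1}\leq L_k/2-\tau$'', but following the larger child gives $L_{k+1}\geq L_k/2-\tau$, which is the wrong direction. For the upper bound you must follow the child with the \emph{smaller} interval: since the two child lengths sum to $L_k-2\tau$, the smaller one is at most $L_k/2-\tau$, and because the tree is complete, that shorter branch must still support the remaining depth. Tracking the path of minimal intervals is what forces the depth bound. Once you swap ``larger'' for ``smaller'' the recurrence, the closed form $L_k\leq (B+2\tau)/2^k-2\tau$, and the midpoint lower-bound construction all go through as you describe.
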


\begin{proof}
Let $\tau>0$ be arbitrary.
Here, the optimal strategy to construct a Littlestone tree is to divide the interval $[0,B]$ in two equal parts at each round.
Given $x\in[0,B]$ and $\alpha<\alpha'\in\R$, in order to have two threshold functions $h_\alpha(x)=\mathbf{1}[x\geq \alpha]$ and $h_{\alpha'}(x)=\mathbf{1}[x\geq \alpha']$ that disagree on the whole range $[x-\tau,x+\tau]$, we need both $\alpha<x-\tau$ and $\alpha'\geq x+\tau$.
Thus, at depth $d$, we have divided $[0,B]$ into $2^d$ parts we must have $2\tau\geq B2^{-d}$, implying $\Lit_\tau(\thresholds_B)=\lfloor\log\frac{B}{\tau}-1\rfloor$.
\end{proof}
 
We now show a lower bound on the number of mistakes of any learner against an adversary with bounded precision $\tau$.
The proof is identical to the regime $\tau=0$.

\begin{theorem}
Any online learning algorithm for $\C$ has mistake bound $M\geq \Lit_\tau(\C)$ against a $\tau$-precision-bounded adversary.
\end{theorem}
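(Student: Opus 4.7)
The plan is to mimic the classical adversarial lower bound argument for the Littlestone dimension, adapting it to the precision-bounded setting. Fix an arbitrary online learner $\A$ for $\C$ and let $T$ be a Littlestone tree of precision $\tau$ for $\C$ of depth $d = \Lit_\tau(\C)$, which exists by definition. I will exhibit an adversarial strategy that walks down a root-to-leaf path of $T$ while forcing $\A$ to err on every node along the way, yielding $d$ mistakes.

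Concretely, the adversary presents instances round by round as follows. At round $t$, let $x_t$ denote the current node in $T$ (starting with $x_1$ at the root). The learner receives $x_t$ and predicts some label $\hat{y}_t \in \{0,1\}$; the adversary then reveals $y_t = 1 - \hat{y}_t$ and descends to the child of $x_t$ along the edge labelled $y_t$, which becomes $x_{t+1}$. This process continues for $d$ rounds, after which the learner has made exactly $d$ mistakes on the sequence $(x_1,y_1),\dots,(x_d,y_d)$.

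It remains to verify that this adversarial sequence is legal for a $\tau$-precision-bounded adversary, i.e., that there exists a target concept $c \in \C$ consistent with the revealed labels and constant on each ball $B_\tau(x_t)$ in the sense required by the precision bound. This is immediate from Definition~\ref{def:lit-tree-tau}: since $x_1,\dots,x_d$ together with labels $y_1,\dots,y_d$ form a root-to-leaf path in $T$, there exists $h \in \C$ such that $h|_{B_\tau(x_t)} \equiv y_t$ for every $t$. Proposition~\ref{prop:ancestor-intersection-empty} ensures the balls $B_\tau(x_t)$ along the path are pairwise disjoint, so these constraints are mutually consistent and $h$ is a valid target. We conclude that $\A$ can be forced into at least $\Lit_\tau(\C)$ mistakes, and since $\A$ was arbitrary, the claim follows. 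No real obstacle arises here — the entire content of the proof sits in having set up the precision-bounded Littlestone tree correctly; the adversarial construction then transcribes verbatim from the classical argument.
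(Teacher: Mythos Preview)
Your proof is correct and follows exactly the same approach as the paper's, which simply states that an adversary can force $\Lit_\tau(\C)$ mistakes by adaptively descending a maximal precision-$\tau$ Littlestone tree in response to the learner's predictions. Your version is more explicit (you spell out the descent and verify the adversary is $\tau$-precision-bounded via the existence of a consistent $h$), and the invocation of Proposition~\ref{prop:ancestor-intersection-empty} is harmless but not needed, since Definition~\ref{def:lit-tree-tau} already guarantees the required $h$ directly.
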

\begin{proof}
Let $\A$ be any online learning algorithm for $\C$.
Let $T$ be a Littlestone tree of bounded precision $\tau$ and depth $\Lit_\tau(\C)$ for $\C$. 
Clearly, an adversary can force $\A$ to make $\Lit_\tau(\C)$ mistakes by sequentially and adaptively choosing a path in $T$ in function of $\A$'s predictions.
\end{proof}

Now, let us consider a version of the SOA where the adversary has  precision $\tau$. 
The algorithm is identical to the SOA (see Algorithm~\ref{alg:soa} in Chapter~\ref{chap:background}), except for the definition of $V^{(b)}_t$, which requires that the hypotheses are constant in the region around the prediction.

\begin{algorithm}
\caption{Precision-Bounded Standard Optimal Algorithm}
\begin{algorithmic}
\Require A hypothesis class $\mathcal{\H}$
\For {$t=1,2,\dots$}
\State $V_1\gets\H$
\State Receive example $x_t$
\State $V^{(b)}_t \gets \set{h\in V_t\given h\vert_{B_\tau(x_t)}=b}$
\State $\hat{y_t}=\underset{b}{\arg\max} \;\Lit_\tau\left( V^{(b)}_t \right)$ 
\State Receive true label $y_t$
\State $V_{t+1} \gets V^{(y_t)}_t$
\EndFor
\end{algorithmic}
\label{alg:soa-precision}
\end{algorithm}

Below, we show that this slight modification of the SOA is also optimal for cases in which the adversary is constrained by $\tau$.
This is analogous to Theorem~21 in \citep{ben2009agnostic}, who did not include the proof of optimality for brevity. 
We have included it in this thesis for completeness.

\begin{theorem}
The precision-bounded Standard Optimal Algorithm makes at most $\Lit_\tau(\C)$ mistakes in the mistake-bound model of online learning when the adversary has precision $\tau$.
\end{theorem}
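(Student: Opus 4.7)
The plan is to mirror the proof that the standard SOA achieves the $\Lit(\C)$ mistake bound (shown earlier in the background section) while adapting the combinatorial step to the precision-bounded definition of Littlestone tree. The main invariant I would establish is: whenever the algorithm makes a mistake at time $t$, $\Lit_\tau(V_{t+1}) \leq \Lit_\tau(V_t) - 1$. Since $\Lit_\tau(V_1) = \Lit_\tau(\C)$ by definition, and since $\Lit_\tau(V_t) \geq 0$ as long as $V_t$ is non-empty (which is guaranteed by realizability), this immediately yields the claimed bound of at most $\Lit_\tau(\C)$ mistakes.

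For the inductive step, suppose a mistake occurs at step $t$, so $y_t \neq \hat{y}_t$. By the algorithm's choice, $\hat{y}_t = \arg\max_b \Lit_\tau(V_t^{(b)})$, and since $V_{t+1} = V_t^{(y_t)}$, we have $\Lit_\tau(V_{t+1}) = \min_b \Lit_\tau(V_t^{(b)})$. I would then show $\Lit_\tau(V_t) \geq \min_b \Lit_\tau(V_t^{(b)}) + 1$ by the standard tree-gluing construction: take optimal Littlestone trees $T_0$ and $T_1$ of precision $\tau$ for $V_t^{(0)}$ and $V_t^{(1)}$ respectively, put $x_t$ at the root with edges labelled $0$ and $1$, and attach $T_0$ and $T_1$ as the two subtrees. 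The resulting complete binary tree has depth $\min_b \Lit_\tau(V_t^{(b)}) + 1$. To verify it is a valid Littlestone tree of precision $\tau$ for $V_t$, I would check that every root-to-leaf path is realized by some $h \in V_t$: a path that goes through edge $b$ at the root descends into $T_b$ and is realized by some $h \in V_t^{(b)} \subseteq V_t$ satisfying $h|_{B_\tau(x_t)} \equiv b$, which by definition of $V_t^{(b)}$ is precisely the condition required at the new root. Combining the two inequalities gives $\Lit_\tau(V_t) \geq \Lit_\tau(V_{t+1}) + 1$.

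To close the argument, I would observe that once $\Lit_\tau(V_t) = 0$, the precision-bounded adversary cannot force another mistake. Indeed, $\Lit_\tau(V_t) = 0$ means that for every $x \in \X$, at least one of $V_t^{(0)}, V_t^{(1)}$ is empty, i.e.\ all hypotheses in $V_t$ that are constant on $B_\tau(x_t)$ agree on the common label. Because the adversary is $\tau$-precision-bounded, any counterexample it issues lies in a region where the target (and hence some $h \in V_t$) is constant, and so the algorithm's prediction on that region matches the forced common label; no mistake is produced. Thus the total number of mistakes cannot exceed $\Lit_\tau(\C)$.

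The only subtle point, and the one I would verify carefully, is that the tree-gluing construction produces a valid Littlestone tree of precision $\tau$ rather than an ordinary Littlestone tree. This reduces to checking that the hypothesis realizing a path in $T_b$ lies in $V_t^{(b)}$, i.e.\ it is constant with value $b$ on $B_\tau(x_t)$, which is exactly how $V_t^{(b)}$ is defined in Algorithm~\ref{alg:soa-precision}. Everything else is a direct transcription of the proof for $\tau = 0$, using Proposition~\ref{prop:ancestor-intersection-empty} only implicitly through the definition of a precision-$\tau$ Littlestone tree.
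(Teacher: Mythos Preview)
Your proposal is correct and follows essentially the same approach as the paper: show that the precision-bounded Littlestone dimension of the version space drops by at least one on each mistake via the tree-gluing construction, then conclude the mistake bound. Your verification that the glued tree is a valid precision-$\tau$ Littlestone tree is in fact slightly more direct than the paper's---you check path realizability straight from the definition of $V_t^{(b)}$, whereas the paper detours through Proposition~\ref{prop:ancestor-intersection-empty} to first argue the subtrees contain no nodes whose $\tau$-balls meet $B_\tau(x_t)$; your observation that this proposition is only needed implicitly is accurate. You are also more explicit than the paper about the termination step (why no further mistake is possible once $\Lit_\tau(V_t)=0$), which the paper leaves to the reader.
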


\begin{proof}
We will show that, at every mistake, the precision-bounded Littlestone dimension of the subclass $V_t$ decreases by at least 1 after receiving the true label $y_t$.

WLOG, assume that there does not exist $t'<t$ such that $x_{t'}\in B_\tau(x_t)$, as otherwise this implies that $V_t^{(y_{t'})}=V_t$ and $V_t^{(\neg y_{t'})}=\emptyset$, and we cannot make a mistake (note in particular that we cannot have two differently labelled points in $B_\tau(x_t)$ as otherwise this would not be a valid example for the adversary to give).

Suppose that, at time $t$, $y_t= \arg\min_b \Lit_\tau(V^{(b)}_t)$. 
Note that $V_{t+1} = V^{(y_t)}_t$.
Now, consider any two Littlestone trees  $T_{y_t}$ and  $T_{\hat{y}_t}$ of precision $\tau$ and maximal depths for $V^{(y_t)}_t$ and $V^{(\hat{y})}_t$, respectively. 
By Proposition~\ref{prop:ancestor-intersection-empty} and definition of $V^{(b)}_t$, neither tree can contain nodes whose $\tau$-expansions intersect with $B_\tau(x_t)$. 
Moreover, all hypotheses in $V^{(y_t)}_t$ and $V^{(\hat{y})}_t$ are constant on $B_\tau(x_t)$.
Hence it is possible to construct a $\tau$-constrained Littlestone tree $T$ for $V_t$ of depth $\min_b \Lit_\tau(V^{(b)}_t)+1$ (recall that $T$ must be complete).
Then $\Lit_\tau(V_t)\geq\Lit_\tau(V^{({y_t})}_t)+1=\Lit_\tau(V_{t+1})+1 $, as required.\footnote{Note that the Littlestone dimension does not necessarily decrease when $y_t=\hat{y}_t$, as we could have $V_t=V_t^{(y_t)}$.}
\end{proof}

\begin{remark}
When considering threshold functions on $[0,1]$, and given example $x_t$ to predict, the SOA's strategy is effectively to look at the labelled points in the history and consider the largest $x^{(0)}\in[0,1]$ with negative label and the smallest $x^{(1)}\in[0,1]$ with positive label, and predict $y_t=\underset{b}{\arg\min} \abs{x_t - x^{(b)}}$.
\end{remark}

We now turn our attention to the robust learning of halfspaces in $(\R^n,d_2)$ against adversaries of precision $\tau$, where $d_2$ is the metric induced by the $\ell_2$ norm.
 As pointed out by \cite{ben2009agnostic}, we essentially have the same argument as the Perceptron algorithm, because, once the hypothesis is sufficiently close to the target, the adversary cannot return counterexamples near the boundary.
Note that this result can be generalized to $\ell_p$ norms.
Figure~\ref{fig:ltf-bounded-precision} depicts the argument of the proof of Theorem~\ref{thm:ltf-real-precision}.

\begin{theorem}
\label{thm:ltf-real-precision}
Fix constants $B,\tau>0$.
Let the adversary's budget $\rho$ be measured by the $\ell_2$ norm.
Let $\ltfreal$ be the class of halfspaces on $\R^n$ where the instance space is restricted to points $x\in\R^n$ with $\norm{x}_2\leq B-\rho$. 
Then, $\ltfreal$  is distribution-free $\rho$-robustly learnable against an adversary of precision $\tau$ using the $\EX$ and $\rho$-$\LEQ$ oracles with sample complexity $m(n,\epsilon,\delta)=O(\frac{1}{\epsilon}( n^3 + \log (1/\delta)))$ and query complexity $r(n,\epsilon,\delta)=m(n,\epsilon,\delta)\cdot\frac{B^2}{\tau^2}$.
Note that this is query-efficient if $\frac{B^2}{\tau^2}=\poly(n)$.
\end{theorem}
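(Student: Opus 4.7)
The plan is to combine the sample complexity argument of Theorem~\ref{thm:sc-ub-ltf-real} (via the $O(n^3)$ bound on the VC dimension of the robust loss, Corollary~\ref{cor:rvc-ltf}) with a Perceptron-style online learning argument on counterexamples produced by the $\rho$-$\LEQ$ oracle. First I would draw a sample $S$ of size $m = O(\frac{1}{\epsilon}(n^3+\log(1/\delta)))$; by Lemma~\ref{lemma:rob-vc}, if our algorithm returns $h\in\ltfreal$ with zero empirical robust loss on $S$, then $\roblosse(h,c)\leq\epsilon$ with probability at least $1-\delta$. So it suffices to design a \emph{robustly consistent} learner using $\rho$-$\LEQ$ and bound its query complexity.

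The algorithm would be: initialize $h_0$ to some halfspace, then repeatedly scan $S$, querying $(h_t,x)$ to $\rho$-$\LEQ$ for each $x\in S$. If every query returns ``correct'', output $h_t$. Otherwise, on receiving a counterexample $z\in B_\rho(x)$, perform a Perceptron update using the labelled point $(z,c(z))$ to obtain $h_{t+1}$, and restart the scan. Each full scan costs $m$ queries, and we perform at most one update per scan, so the total number of queries is at most $m(M+1)$, where $M$ is the total number of Perceptron updates throughout the process.

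The heart of the argument is bounding $M$ by $B^2/\tau^2$. Two properties of the counterexamples produced by a $\tau$-precision-bounded adversary are crucial. First, any counterexample $z$ satisfies $\norm{z}_2\leq B$: indeed, $z\in B_\rho(x)$ for some $x\in\X$ with $\norm{x}_2\leq B-\rho$, so $\norm{z}_2\leq \norm{x}_2+\rho\leq B$. Second, since the adversary's precision is $\tau$, the target $c$ must be constant on $B_\tau(z)$ (with value $c(z)$). Writing $c(x)=\sgn(w\cdot x + b)$, this constancy on an $\ell_2$-ball of radius $\tau$ around $z$ forces $\abs{w\cdot z + b}/\norm{w}_2 \geq \tau$, i.e., every counterexample has $\ell_2$-margin at least $\tau$ with respect to the target hyperplane. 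Each Perceptron update occurs on a point that is misclassified by the current hypothesis (since $\LEQ$ returns $z$ with $c(z)\neq h_t(z)$), so the sequence of counterexamples is precisely the sequence of Perceptron mistakes. Applying Theorem~\ref{thm:mistake-bound-perceptron} with $r=B$ and $\gamma=\tau$ gives $M\leq B^2/\tau^2$, and hence the query complexity is $O(m\cdot B^2/\tau^2)$, as claimed.

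The main subtlety, and what I expect to be the only non-routine step, is the margin translation: arguing rigorously that ``$c$ is constant on the Euclidean ball $B_\tau(z)$'' implies the geometric margin condition required by the Perceptron analysis. This is immediate by considering, for the normal vector $w$ of the target halfspace (normalised so $\norm{w}_2=1$), the point $z - \sgn(w\cdot z+b)\cdot\tau\cdot w$, which lies in $B_\tau(z)$ and on the opposite side of the boundary unless $\abs{w\cdot z+b}\geq\tau$. Everything else (sample bound via Corollary~\ref{cor:rvc-ltf} and Lemma~\ref{lemma:rob-vc}, and the $(M+1)$-scans accounting) is mechanical, and a finite polynomial query bound follows whenever $B/\tau=\poly(n)$.
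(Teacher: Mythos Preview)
Your proposal is correct and follows essentially the same approach as the paper: sample complexity via the $O(n^3)$ bound on the VC dimension of the robust loss (Theorem~\ref{thm:sc-ub-ltf-real}/Corollary~\ref{cor:rvc-ltf} and Lemma~\ref{lemma:rob-vc}), and query complexity via the Perceptron mistake bound $B^2/\tau^2$ (Theorem~\ref{thm:mistake-bound-perceptron}) applied to the stream of counterexamples, using that precision $\tau$ forces every counterexample to have $\ell_2$-margin at least $\tau$ with respect to the target hyperplane and norm at most $B$. Your margin-translation argument and scan-and-restart query accounting are more explicit than the paper's (which simply cites Lemma~\ref{lemma:rob-mistake-bound}), but the substance is the same.
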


 Note that the dependence on $\tau$ in the mistake bound, and thus the $\LEQ$ upper bound, is $1/\tau^2$, in contrast to the dependence of $\log 1/\tau$ for thresholds.

\begin{figure}
\begin{center}
\includegraphics[scale=0.27]{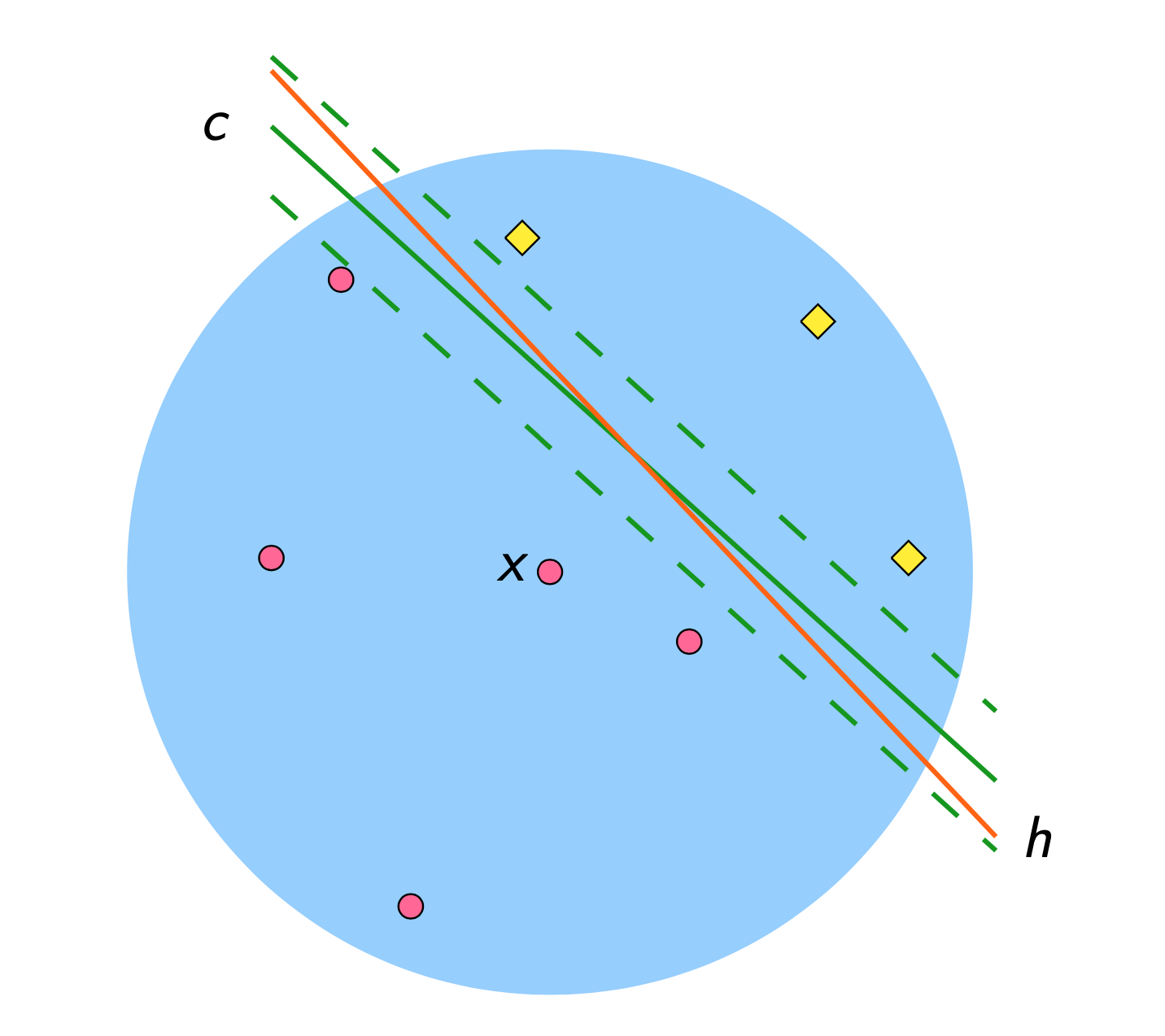}
\end{center}
\caption{A visual representation of the proof of Theorem~\ref{thm:ltf-real-precision}. The dotted lines on either side of the target $c$ represent a margin of $\tau/2$. Any hypothesis within the dotted lines in the (shaded) perturbation region ensures that an adversary of bounded precision $\tau$ cannot return any counterexamples. Finally, counterexamples must be labelled according to the target $c$, and both $h$ and $c$ are not constant on $B_\rho(x)$.}
\label{fig:ltf-bounded-precision}
\end{figure}

 \begin{proof}
The sample complexity follows from Theorem~\ref{thm:sc-ub-ltf-real}.
The query upper bound  follows from Lemma~\ref{lemma:rob-mistake-bound}  and the mistake bound for the Perceptron algorithm (see Theorem~\ref{thm:mistake-bound-perceptron}).
To see that the bound for Perceptron can be used, note that the adversary having precision $\tau$ implies that any consistent target function $c(x)=a^\top x +a_0$ and any counterexample $z$ will satisfy the conditions 
 (i) $\norm{z}_2\leq B$ and (ii) $\tau \leq \frac{c(z)(a^\top z)}{\norm{z}_2}$ from Theorem~\ref{thm:mistake-bound-perceptron}.
\end{proof}
	
\section{Lower Bounds on Robust Learning with $\LEQ$}
\label{sec:qc-lb-leq}

In this section, we derive lower bounds on the expected number of queries of robust learning algorithms for various concept classes. 
We start with general lower bounds and conclude by looking at specific concept classes.

\subsection{General Query Complexity Lower Bounds}

We start by giving a general lower bound on the query complexity of the $\LEQ$ oracle that is linear in the \emph{restricted} Littlestone dimension of a concept class.
This notion, which restricts the region of the instance space where the nodes in the Littlestone tree can come from, will be defined below, along with the restricted VC dimension. 

We first define the notion of the \emph{restricted} VC dimension of a concept class.
We note that we can straightforwardly extend the definitions below to an arbitrary perturbation region $\U:\X\rightarrow 2^\X$, and obtain analogous results.

\begin{definition}[$\rho$-restricted VC Dimension]
The $\rho$-restricted VC dimension of a concept class $\C$, denoted $\VC\vert_\rho(\C)$, is the size $d$ of the largest set $X\subseteq\X$ shattered by $\C$ such that there exists $x^*\in X$ where $x\in B_\rho(x^*)$ for all $x\in X$.
\end{definition}

We now introduce the restricted Littlestone dimension.

\begin{definition}[$\rho$-restricted Littlestone Dimension]
The $\rho$-restricted Littlestone dimension of a hypothesis class $\mathcal{H}$, denoted $\Lit\vert_\rho(\mathcal{H})$, is the depth $d$ of the largest Littlestone tree $T$ for $\mathcal{H}$ with root node $x^*$ such that  $x\in B_\rho(x^*)$ for all the nodes $x\in T$. 
\end{definition}

\begin{remark}
\label{rmk:restricted-vc-lit}
It follows from the upper bound on the (standard) VC dimension by the Littlestone dimension that $\VC\vert_\rho(\C)\leq\Lit\vert_\rho(\mathcal{H})$, as we can construct a restricted Littlestone tree from a witness set of the restricted VC dimension.
\end{remark}

We are now ready to state the main theorem of this section.

\begin{theorem}
\label{thm:lit-queries}
Let $\C$ be a concept class of $\rho$-restricted Littlestone dimension $\Lit\vert_\rho(\C)=d$.
Then there exists a distribution on $\X$ such that any $\rho$-robust learning algorithm for $\C$ has an expected number of queries $\Omega(d)$ to the $\rho$-$\LEQ$ oracle.
\end{theorem}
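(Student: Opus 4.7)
The plan is to exploit the $\rho$-restricted Littlestone tree to construct a pathological learning problem in which the only useful oracle is $\rho$-$\LEQ$, and then reduce the analysis to an online-learning style mistake argument on $B_\rho(x^*)$.

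First I would set up the hard instance. Let $T$ be a $\rho$-restricted Littlestone tree of depth $d=\Lit\vert_\rho(\C)$, whose root $x^*$ satisfies $x \in B_\rho(x^*)$ for every node $x$ of $T$. Let $D$ be the point mass on $x^*$. Then every draw from $\EX(c,D)$ is the pair $(x^*,c(x^*))$, so a sample of any size carries only a single bit of information about $c$. Moreover, because $D$ is concentrated at $x^*$, the robust loss $\roblosse(h,c)$ is the indicator $\mathbf{1}[\exists z\in B_\rho(x^*):h(z)\neq c(z)]$, which is $0$ or $1$. Hence to achieve $\roblosse(h,c)<\epsilon$ with probability at least $1-\delta$ (for any $\epsilon<1$), the learner must output $h$ that coincides with $c$ on the whole ball $B_\rho(x^*)$ with probability at least $1-\delta$. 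All the learning effort must therefore be performed by $\rho$-$\LEQ$ queries at $x^*$, which, because $B_\lambda(x^*)\cap B_\rho(x^*)=B_\rho(x^*)$ when $\lambda=\rho$, are exactly equivalence queries restricted to $B_\rho(x^*)$.

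Next I would invoke Yao's minimax principle to reduce to deterministic learners facing a random target. Let $\mu$ be the distribution on $\C$ obtained by selecting a root-to-leaf path of $T$ uniformly at random and taking any concept consistent with that path; note that any two distinct leaves differ on at least one node of $T$ and therefore correspond to distinct restrictions of concepts to $B_\rho(x^*)$. Fix any deterministic robust learner $\A$. At round $i$, $\A$ submits some hypothesis $h_i$ to $\rho$-$\LEQ(\cdot,x^*)$ and receives either a confirmation or a counterexample $(z_i,c(z_i))$ with $z_i\in B_\rho(x^*)$. Let $V_q$ denote the \emph{version space}, i.e. the set of leaves of $T$ consistent with all $q$ answers received so far. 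The central combinatorial step is to run the standard-optimal-algorithm adversary argument \emph{in reverse}: at each round the tree $T$ can be pruned into at most two subtrees of depths $d_i^{(0)},d_i^{(1)}$ determined by the labels of the root node of the current subtree, and the learner's query forces the adversary only to eliminate one of them; hence after $q$ queries the version space restricted to a suitable subtree of $T$ has depth at least $d-q$, so $|V_q|\geq 2^{d-q}$.

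The concluding step is a counting argument. Any single $h$ issued by $\A$ can match at most one leaf of $T$ on $B_\rho(x^*)$ (by the distinct-restrictions observation above), so the posterior probability under $\mu$ that $h$ is correct on $B_\rho(x^*)$ after $q$ queries is at most $1/|V_q|\leq 2^{-(d-q)}$. Requiring this success probability to be at least $1-\delta$ forces $q\geq d-\log_2\!\bigl(\tfrac{1}{1-\delta}\bigr)^{-1}$, which for $\delta\leq 1/2$ gives $q=\Omega(d)$. Taking expectations and applying Yao's principle back to randomised learners yields the desired $\Omega(d)$ expected lower bound on the number of $\rho$-$\LEQ$ calls.

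The main obstacle I anticipate is the careful translation of the adaptive SOA-style adversary argument into a bound on the posterior size $|V_q|$ under the \emph{fixed} prior $\mu$. When the adversary is adaptive, the ``descend into the subtree of larger Littlestone dimension'' rule is clean, but with $\mu$ chosen \emph{before} the interaction, some of the learner's queries might coincidentally be confirmed, collapsing the analysis prematurely. I expect to handle this by choosing $\mu$ so that, along every deterministic transcript of $\A$, the fraction of leaves consistent with the observed counterexamples stays within a constant factor of $2^{d-q}/2^d$; a uniform distribution over leaves, combined with the binary branching structure of $T$, should suffice, but the bookkeeping over the coupled choices of $h_i$ and the posterior is the delicate step.
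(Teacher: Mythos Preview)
Your setup is exactly right and matches the paper: place all mass on the root $x^*$ of a depth-$d$ restricted Littlestone tree $T$, observe that robust learning then forces exact agreement with $c$ on $B_\rho(x^*)$, reduce via Yao to deterministic learners against a uniform prior over the leaves of $T$, and let the $\rho$-$\LEQ$ oracle return the shallowest node on the root-to-$c$ path where $h$ and $c$ disagree. Up to here you and the paper coincide.

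The gap is precisely the one you flag at the end, and it is fatal to the argument as written. The claim $|V_q|\geq 2^{d-q}$ does \emph{not} hold once the target is fixed in advance. With the oracle strategy above, a single query with hypothesis $h$ reveals not one bit but the entire prefix of the path up to and including the first disagreement: if $h$ happens to agree with $c$ on the first $j$ nodes along the path to $c$, the returned counterexample sits at depth $j{+}1$ and collapses the version space by a factor $2^{j+1}$, not $2$. (In the extreme case $h=c$ on $B_\rho(x^*)$, one query confirms correctness and $|V_1|=1$.) Your ``reversed SOA adversary'' justification establishes $|V_q|\geq 2^{d-q}$ only when the adversary may choose counterexamples \emph{adaptively} to keep the version space large; under a fixed prior the oracle is tied to the pre-drawn target and has no such freedom. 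Consequently the posterior bound $\Pr[\text{correct}]\leq 1/|V_q|\leq 2^{-(d-q)}$ is unsupported, and the chain from there to $q=\Omega(d)$ breaks.

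The paper sidesteps the version-space route entirely. Having fixed the same oracle strategy, it counts expected \emph{counterexamples} directly: a uniformly random leaf corresponds to a uniformly random root-to-leaf binary string $b_1\cdots b_{d-1}$ (after conditioning on the label of $x^*$), and for any deterministic learner the prediction at the depth-$j$ node is a function only of $b_1,\ldots,b_{j-1}$, hence is wrong with probability exactly $1/2$ independently of the other levels. This gives $\mathbb{E}[\#\text{counterexamples}]=(d-1)/2$, and since each counterexample costs one $\LEQ$ call, the $\Omega(d)$ expected-query lower bound follows via Yao. In other words, the paper replaces your attempted bound on the \emph{size} of the version space by a direct bound on the \emph{expected depth} traversed, which is robust to the phenomenon that a single lucky query can reveal many bits. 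If you want to repair your route, you would need to show something like $\mathbb{E}[\log_2|V_q|]\geq (d-1)-2q$ (reflecting that each query reveals a geometric-$1/2$ number of path bits in expectation), and then translate that into an expected-query bound; but this detour ends up reproving the same $(d-1)/2$ count in a more roundabout way.
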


As a consequence of Remark~\ref{rmk:restricted-vc-lit}, we have the following corollary.

\begin{corollary}
\label{cor:vc-queries}
Let $\C$ be a concept class of $\rho$-restricted VC dimension $\VC\vert_\rho(\C)=d$.
Then there exists a distribution on $\X$ such that any $\rho$-robust learning algorithm for $\C$ has an expected number of queries $\Omega(d)$ to the $\rho$-$\LEQ$ oracle.
\end{corollary}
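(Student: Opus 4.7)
The plan is to derive Corollary~\ref{cor:vc-queries} as a near-immediate consequence of Theorem~\ref{thm:lit-queries} together with Remark~\ref{rmk:restricted-vc-lit}. The only thing to justify carefully is that the standard ``shattered set yields a Littlestone tree'' construction preserves the locality condition on the nodes, so that the inequality $\VC\vert_\rho(\C)\leq \Lit\vert_\rho(\C)$ indeed holds in the restricted setting.

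First, I would suppose $\VC\vert_\rho(\C)=d$ and let $X=\{x_1,\dots,x_d\}\subseteq\X$ be a witnessing shattered set, with some $x^*\in X$ such that $x_i\in B_\rho(x^*)$ for all $i$. I would then construct a Littlestone tree $T$ of depth $d$ by placing $x_i$ at every internal node of depth $i$, and labelling the two outgoing edges with $0$ and $1$. Because $X$ is shattered by $\C$, every root-to-leaf path in $T$ corresponds to some dichotomy on $X$ realised by a concept in $\C$, so $T$ is a valid Littlestone tree for $\C$. Crucially, every node appearing in $T$ is one of the $x_i$'s, and hence lies in $B_\rho(x^*)$. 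The root is $x_1\in B_\rho(x^*)$, so $T$ satisfies the restriction condition from the definition of $\Lit\vert_\rho(\C)$. Thus $\Lit\vert_\rho(\C)\geq d=\VC\vert_\rho(\C)$, which is exactly the content of Remark~\ref{rmk:restricted-vc-lit}.

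Second, having established $\Lit\vert_\rho(\C)\geq d$, I would simply invoke Theorem~\ref{thm:lit-queries} on $\C$. That theorem guarantees the existence of a distribution on $\X$ under which every $\rho$-robust learning algorithm for $\C$ must make an expected number of queries to the $\rho$-$\LEQ$ oracle that is $\Omega(\Lit\vert_\rho(\C))$, and hence $\Omega(d)$, as required.

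There is no real technical obstacle here beyond checking that one may pick the root of the constructed Littlestone tree so that the local ball centre $x^*$ witnessing the restriction for $X$ also witnesses it for $T$; the choice of root is flexible since the same locality condition is stated in terms of a single reference point. Everything else is bookkeeping: the definitions align perfectly, and the lower bound transfers directly through the chain $\VC\vert_\rho(\C)\leq\Lit\vert_\rho(\C)$ into the query-complexity statement of Theorem~\ref{thm:lit-queries}.
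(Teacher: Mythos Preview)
Your proposal is correct and follows exactly the route the paper takes: the corollary is stated there as an immediate consequence of Remark~\ref{rmk:restricted-vc-lit} (the inequality $\VC\vert_\rho(\C)\leq\Lit\vert_\rho(\C)$) together with Theorem~\ref{thm:lit-queries}. The one cosmetic fix is to place $x^*$ itself at the root of $T$ (which you may since $x^*\in X$), so that the restriction condition ``all nodes lie in $B_\rho(\text{root})$'' is satisfied directly rather than via $B_\rho(x^*)$ for a non-root point.
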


The proof of Theorem~\ref{thm:lit-queries} is similar to showing that a mistake lower bound in online learning can be transformed in an expected mistake lower bound when we instead require the adversary to choose a label \emph{before} a (potentially randomized) prediction is made. 

In order to prove Theorem~\ref{thm:lit-queries}, we will use Yao's minimax principle, which will allow us to give lower bounds for randomized algorithms while only considering deterministic algorithms in our analysis. 

We will start with some notation.
Let $\text{cost}(A,z)$ represent the real-valued cost of an algorithm $A$ on an input $z$ for problem $P$ (e.g., running time, number of queries, etc.). 
For a distribution $\D$ on the set $Z$ of potential inputs to $P$, the cost of $A$ on $\D$ is defined as $\text{cost}(A,\D):=\eval{z\sim \D}{\text{ cost}(A,z)}$. 
The \emph{distributional complexity} of $P$ is $\underset{\mathcal{D}}{\max}\;\underset{A\in\A}{\min}\text{ cost}(A,\D)$ (the cost of the worst distribution on inputs for the best deterministic algorithm).
Now, as we can see a randomized algorithm $R$ as a distribution $\mathcal{R}$ over all the possible deterministic algorithms, we can define the cost of a randomized algorithm as cost$(R,x)=\text{ cost}(\mathcal{R},x)=\eval{A\sim\mathcal{R}}{\text{ cost}(A,x)}$.
The \emph{randomized complexity} of $P$ is defined as $\underset{\mathcal{R}}{\min}\;\underset{x}{\max}\text{ cost}(R,x)$.

Yao's minimax principle states that the randomized complexity and distributional complexity of a problem $P$ are equal, i.e.,
$$\underset{\mathcal{R}}{\min}\;\underset{x}{\max}\text{ cost}(\mathcal{R},x)=\underset{\mathcal{D}}{\max}\;\underset{A\in\A}{\min}\text{ cost}(A,\D)\enspace.$$
For us, the input to the learning problem will be the target concept, and as such the distribution $\D$ will be over the concept class $\C$. 
The cost of an algorithm $A$ on $c$ is the number of queries to the $\rho$-$\LEQ$ oracle, and we are interested in the expected number of counterexamples returned.

\begin{proof}[Proof of Theorem~\ref{thm:lit-queries}]
The idea behind the proof is to choose a distribution in which all the elements of the Littlestone tree appear in the perturbation region of its root. 
We then derive query lower bounds for deterministic algorithms and a deterministic $\LEQ$.
We finally use Yao's minimax principle to lower bound the number of queries of \emph{any} robust learning algorithm  to the $\LEQ$ oracle.

\emph{Distribution on $\X$.} Let $T$ be a Littlestone tree of depth $d$ with root $x^*\in X$ such that all its nodes are contained in $B_\rho(x^*)$.
Let $D$ be a distribution on $\X$ be such that $D(x^*)=1$.
Hence, any query to $\EX(c,D)$ will return $(x^*,c(x^*))$.
Moreover, any learning algorithm must be exact on $B_\rho(x^*)$, as otherwise the existence of a point $z$ in $B_\rho(x^*)$ such that the target and hypothesis disagree results in the robust risk being 1.

Let $\widetilde{\C}=\set{c_1,\dots,c_{2^{d-1}}}$ be the set of concepts appearing as leaves of the subtree $T'$ of $T$ that label ${x^*}$ positively. 
We will pick the target $c$ at random from  $\widetilde{\C}$.

\emph{$\LEQ$ strategy.}
Let the $\rho$-$\LEQ$ oracle have access to $T$ as its internal ordering.
Upon being queried with $(x^*,h)$, $\LEQ$ returns a counterexample $x'$ of least depth.
Namely, a target $c\in\widetilde{\C}$ determines a path from $x^*$ to a leaf, and the $\LEQ$ oracle returns the highest node where $h$ and $c$ disagree.

\emph{Deterministic algorithms.} 
In order to use Yao's  minimax principle, we first consider a set of deterministic algorithms. 
For any fixed distribution $\D$ on the target concepts, a learning algorithm $A$ achieving $\underset{A\in\A}{\min}\text{ cost}(A,\D)$ must be consistent with the data seen so far. 
Otherwise, $\LEQ$ can simply return a counterexample that has already been returned, increasing the number of queries to $\LEQ$.

Without loss of generality, we consider the setting where the Littlestone tree $T$ the $\LEQ$ uses to return counterexamples is known to $A$. 
This implies that if $A$ receives $(x_i,c(x_i))$ as a counterexample, then there exists a unique path from $x^*$ to a node containing $x_i$ where parents nodes of $x_i$ must have been labelled correctly by the hypothesis. 
Any algorithm that does not know this information (or doesn't use it) is dominated by an algorithm knowing this information.
Then any $A$ achieving $\underset{A\in\A}{\min}\text{ cost}(A,\D)$ in this setting must be consistent on the counterexamples and (implicitly revealed) correctly labelled points.

\emph{Executions paths.}
Consider a given \emph{deterministic} algorithm $A$ that is consistent with the data seen so far.
After seeing $(x^*,+1)$, $A$ returns a hypothesis $h_1$, thus the label $h_1(x_1)$ is fixed (where $x_1$ is the child of $x^*$ with a positively labelled edge). 
Then, one of the edges coming out of the node $x_1$ will be correct, while the other will be incorrect. 
Since each concept in $\widetilde{\C}$ defines a path in $T'$ and $A$ is deterministic, for each node $x$ in $T'$, one of its edges is correct and the other, incorrect.
Then, for each leaf $c$ in $T'$, the edges from $x_1$ to $c$ that are marked as incorrect represent the counterexamples given to $A$ by $\LEQ$ if $c$ were the target. 
It is then easy to see that choosing the target u.a.r. from the leaves $\widetilde{\C}$ of $T’$, we have an expected number of counterexamples that is exactly $(d-1)/2$.
Thus 
$$\underset{\mathcal{D}}{\max}\;\underset{A\in\A}{\min}\text{ cost}(A,\D)
\geq \underset{A\in\A}{\min}\text{ cost}(A,U(\widetilde{\C}))\geq \frac{d-1}{2}\enspace,$$
where $U(\widetilde{\C})$ is the uniform distribution on $\widetilde{\C}$.
In fact, we can see that the distribution achieving the maximum on the LHS is the uniform distribution on $\widetilde{\C}$. 

\emph{Putting it all together.}
Now, any learning algorithm in this setting is either deterministic or randomized. 
If the algorithm is randomized, it can be expressed as a distribution on the set of deterministic algorithms. 
We can thus apply Yao's principle and get that there exists a distribution on $\X$ such that any robust learning algorithm for $\C$ will have an expected number of queries to $\LEQ$ that is linear in the restricted Littlestone dimension of~$\C$.
\end{proof}

\subsection{Bounds on the Restricted VC and Littlestone Dimensions}

In this section, we study bounds on the restricted VC and Littlestone dimensions of monotone conjunctions, decision lists and linear classifiers.
This enables us to use Theorem~\ref{thm:lit-queries} to get lower bounds on the expected number of queries to $\LEQ$ for the robust learning of these classes.
The VC dimension bounds are (asymptotically) the same as the standard VC dimension for these classes.
We finish by showing this is not always the case, and exhibiting an example where the VC dimension is \emph{not} a lower bound for the expected number of queries to $\LEQ$, hence justifying the use of alternative complexity measure in our setting.

We summarize our results on the restricted VC dimension in Table~\ref{tab:restricted-vc-bounds}. 
The proofs of the bounds appear in Appendix~\ref{app:restricted-vc-bounds}.
As corollaries of Theorem~\ref{thm:lit-queries}, we get that the restricted VC dimension lower bounds presented in Table~\ref{tab:restricted-vc-bounds} are lower bounds on the expected number of queries to the $\LEQ$ oracle. 
\begin{table}[]
\begin{tabular}{l|l|l}
\textbf{Concept Class}  & \textbf{$\VC$ dimension} & \multicolumn{1}{c}{\textbf{$\rho$-restricted $\VC$ dimension}} \\ \hline
Conjunctions   & $n$  & {\parbox[t]{5cm}{$2$  (if $\rho=1$)\\ $n$  (if $\rho\geq 2$)}}\\
Linear Threshold Functions & $n+1$ & $n+1$\\
$k$-Decision Lists  & $\widetilde{\Theta}(n^k)$  & $\widetilde{\Theta}(n^k)$ (given $\rho\geq k$) 
\end{tabular}
\caption{Comparing the VC dimension and the $\rho$-restricted VC dimension for given concept classes. The $\widetilde{\Theta}$ notation hides the logarithmic factors. Unless otherwise stated, we assume $\rho\geq 1$.}
\label{tab:restricted-vc-bounds}
\end{table}

Now, while linear classifiers in $\R^n$ have a (restricted) VC dimension of $n+1$, their $\rho$-restricted Littlestone dimension is infinite. 
Indeed, it suffices to consider the subclass of thresholds (for which the proof of its Littlestone dimension being infinite can easily be adapted to the restricted setting) giving the lemma below. 

\begin{corollary}
\label{cor:ltf-real-leq-lb}
Given $\rho>0$, there exists a distribution on $\R^n$ such that any $\rho$-robust learning algorithm for linear classifiers has an infinite expected number of queries to $\rho$-$\LEQ$.
\end{corollary}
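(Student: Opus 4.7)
The plan is to show that the $\rho$-restricted Littlestone dimension of $\ltfreal$ is infinite and then to invoke Theorem~\ref{thm:lit-queries}. Since Theorem~\ref{thm:lit-queries} gives an $\Omega(d)$ expected query lower bound for every $d \leq \Lit\vert_\rho(\mathcal{H})$, an infinite restricted Littlestone dimension immediately yields an unbounded (infinite) expected query complexity.

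First I would reduce to the one-dimensional case. The class of threshold functions on $\R$ embeds into $\ltfreal$ via $x \mapsto \sgn(w \cdot x - a)$ with $w = e_1$, so any Littlestone tree of precision $0$ for $\mathsf{THRESHOLDS}_{[x^*-\rho,x^*+\rho]}$ whose internal nodes lie in the interval $[x^*-\rho,x^*+\rho]$ lifts to a tree for $\ltfreal$ whose internal nodes lie in $B_\rho(x^* e_1) \subseteq \R^n$ under the Euclidean metric. It therefore suffices to construct, for every depth $d \in \N$, such a tree of depth $d$ rooted at some fixed $x^*$.

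Next I would imitate the standard argument that thresholds have infinite Littlestone dimension, but confine the construction to $[-\rho,\rho]$. Fix $x^* = 0$ as the root, and recursively build a complete binary tree of depth $d$ by bisecting $[-\rho,\rho]$: the root is $0$, its two children are $-\rho/2$ and $+\rho/2$, and more generally, at depth $i$ the nodes (from left to right) are $\{(2j+1)\rho/2^{i} - \rho : 0 \leq j < 2^{i-1}\}$, with the convention that an edge labelled $+$ goes to the subinterval lying to the right of the node and an edge labelled $-$ to the left. By construction every node lies in $[-\rho,\rho]$, so it is at Euclidean distance at most $\rho$ from $x^*$. Any root-to-leaf path is consistent with the threshold function whose threshold is the deepest positively labelled ancestor on the path, exactly as in the unrestricted case. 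Since this works for arbitrary $d$, we get $\Lit\vert_\rho(\ltfreal) = \infty$.

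Finally, applying Theorem~\ref{thm:lit-queries} with this family of trees of unbounded depth, for each $d$ there exists a distribution $D_d$ on $\R^n$ (the point mass at $x^* e_1$, as in the proof of that theorem) such that any $\rho$-robust learning algorithm for $\ltfreal$ makes $\Omega(d)$ expected queries to $\rho$-$\LEQ$ when the target is drawn uniformly from the positively labelled leaves of the tree. Taking $d \to \infty$ produces a distribution (indeed, a point mass) witnessing an unbounded expected query complexity, which is the statement of the corollary. The only subtlety to double-check is that Theorem~\ref{thm:lit-queries}'s construction carries over verbatim with $\R^n$ in place of $\boolhc$ and with Euclidean balls in place of Hamming balls; this is immediate as its proof only uses that all nodes of the tree lie in $B_\rho(x^*)$ and that the learner must be exact on $B_\rho(x^*)$ under a point-mass distribution, both of which are metric-agnostic.
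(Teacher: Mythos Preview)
Your proposal is correct and follows essentially the same approach as the paper: exhibit infinitely deep $\rho$-restricted Littlestone trees for thresholds (hence for $\ltfreal$) by binary bisection inside $[-\rho,\rho]$, then invoke Theorem~\ref{thm:lit-queries}. The paper's own treatment is in fact a one-line remark (``it suffices to consider the subclass of thresholds, for which the proof of its Littlestone dimension being infinite can easily be adapted to the restricted setting''), so your construction is more explicit than what the paper provides. A small cosmetic point: your formula for the nodes at depth $i$ has an indexing slip (it yields $2^{i-1}$ nodes rather than the $2^i$ needed at that level), but the intended bisection construction is clear and the argument goes through.
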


We now turn our attention to the relationship between VC dimension and its restricted counterpart, and their use for $\LEQ$ lower bounds.

\paragraph*{In general, is $\VCrho(\H)=\widetilde{\Theta}(\VC(\H))$?}
No: we exhibit a concept class $\H$ where $\VC(\H)=d$ but $\VCrho(\H)=1$. 
This shows that there can be an arbitrary gap between the restricted VC dimension and its standard counterpart.
Let $X=\set{x_1,\dots,x_d}$ be a set of $d$ points on $\boolhc$ whose balls of radius $\rho$ don't coincide (choose $\rho$ and $d$ as functions of $n$ such that this is possible). 
Define the following concept class $\C=\bigcup_{S\subseteq X}\set{c(x)=\mathbf{1}[x\in S]}$.
Clearly, $\VC(\C)=d$, but $\VCrho(\C)=1$.

\paragraph*{Are there better $\LEQ$ lower bounds than $\VCrho$?}
Yes: we can still show an expected query lower  bound of $\Omega(d)$ in the example above by constructing a uniform distribution on some set $X^*=\set{x_1^*,\dots,x_d^*}$ such that $X\cap X^*=\emptyset$ and $x_i^*\in B_\rho(x_i)$ for all $i$, implying that $c(x_i)=0$ for all $1\leq i\leq d$ and $c\in\C$. 
Thus, to get a hypothesis with robust risk strictly smaller than $1/d$, exact learning is required.
We can show that, by choosing the target at random from $\C$, the expected number of counterexamples for any algorithm is lower bounded by a function $\Omega(d)$ with the same reasoning as the proof of Theorem~\ref{thm:lit-queries}.

\paragraph*{Is the VC dimension a general lower bound for $\LEQ$?}
No: consider the problem defined above, but with the perturbation region being the identity function for each $x\in X$. Clearly, it is not possible to construct the distribution in the previous example.
In fact, a random sample of size $\Theta(d)$ is sufficient to guarantee generalization, without the use of queries. 

\section{Further Comparing the Local Query Models}
\label{sec:comparing-lq}

We finish this chapter by drawing a more nuanced picture of the local membership and equivalence query frameworks, in how they compare with each other (Section~\ref{sec:lmq-vs-leq}) and to other active learning set-ups (Section~\ref{sec:sep-eq-leq}).

\subsection{Local Membership and Equivalence Queries}
\label{sec:lmq-vs-leq}

We start by showing two results on the efficient robust learnability of singletons. 
The first result is a negative one: singletons are not efficiently robustly learnable in the distribution-free setting in the $\EX$+$\LMQ$ model.
However, the second result shows that it is possible to do so in the $\EX$+$\LEQ$ model when the perturbation budget $\rho$ and the locality radius $\lambda$ are equal.
While simple, together these results highlight the relevance of the $\LEQ$ oracle in robust learning.
Indeed, they show that, unlike in the standard PAC model, membership queries cannot, in general, simulate equivalence queries in the robust learning setting.\footnote{See Theorem~\ref{thm:exact-pac-mq} for details.} 
In robust learning, because of the existence of the existential quantifier in the robust loss, $\mathbf{1}[\exists z\in B_\rho(x) \st c(z) \neq h(z)]$, polynomially-many (local) membership queries cannot in general suffice to estimate the robust loss, as illustrated below.
We finish by looking at parities, a concept class for which local membership and equivalence queries are equally powerful.

We first start by showing that having access to local membership queries does not ensure the robust learnability of singletons in the distribution-free setting \emph{regardless of the query radius}.

\begin{proposition}
If $\rho$ is $\omega(1)$, the class of singletons is not efficiently $\rho$-robustly learnable in the distribution-free setting when the learner has access to a $\lambda$-$\LMQ$ oracle for any $\lambda$.
\end{proposition}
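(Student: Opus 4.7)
The plan is to build a hard instance concentrated at a single point $y \in \boolhc$ and exploit the fact that the ball $B_\rho(y)$ is superpolynomially large whenever $\rho = \omega(1)$. Concretely, I would take $y = \mathbf{0}$ and let $D$ be the Dirac distribution at $y$; the candidate target family is $\mathcal{T} = \{c_{x^*} : x^* \in B_\rho(y)\setminus\{y\}\}$, where $c_{x^*}(x) = \mathbf{1}[x=x^*]$. Under any $c_{x^*} \in \mathcal{T}$, each call to $\EX(c_{x^*},D)$ returns $(y,0)$ deterministically, and each $\lambda$-$\LMQ$ call at $z \in B_\lambda(y)$ returns the bit $\mathbf{1}[z=x^*]$, which is zero unless the learner happens to query $x^*$ itself.

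The first key step is a counting bound on $|\mathcal{T}|$. Assuming WLOG $\rho(n)\leq n/2$ (otherwise $|B_\rho(y)|$ is already exponential), I would use $\binom{n}{\rho}\geq (n/\rho)^\rho$ and observe that, as soon as $\rho = \omega(1)$, the exponent $\rho\log(n/\rho)$ is $\omega(\log n)$, so $|\mathcal{T}|$ is superpolynomial in $n$. The next step is a probabilistic argument: draw the target $c_{x^*}$ uniformly from $\mathcal{T}$ and run any algorithm $\A$ that uses $m=\poly(n)$ examples and $q=\poly(n)$ queries. The probability that any of the $q$ queries lands on $x^*$ (the only way the algorithm ever sees a nonconstant response) is at most $q/|\mathcal{T}| = o(1)$. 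Conditioned on no such hit, the entire transcript of observations is independent of $x^*$, so the output hypothesis $h$ is a random variable independent of $x^*$ (only depending on $\A$'s internal randomness).

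Next I would translate this into a robust-risk lower bound. Because $D$ is concentrated at $y$, the robust risk $\roblosse(h,c_{x^*})$ is either $0$ or $1$, and it equals $0$ iff $h$ and $c_{x^*}$ coincide on all of $B_\rho(y)$, i.e.\ iff $h^{-1}(1)\cap B_\rho(y) = \{x^*\}$. For any fixed $h$ (independent of $x^*$) there is at most one $x^* \in \mathcal{T}$ satisfying this condition, so conditioned on no query hit, $\Pr_{x^*}[\roblosse(h,c_{x^*})=0] \leq 1/|\mathcal{T}| = o(1)$. Combining with the $o(1)$ hit probability gives $\mathbb{E}_{x^*}\Pr_{\A}[\roblosse(h,c_{x^*})>0] = 1 - o(1)$, so by averaging there exists a specific target $c_{x^*}\in\mathcal{T}$ for which $\A$ outputs a hypothesis with $\roblosse \geq 1$ with probability arbitrarily close to $1$; this violates the PAC robust-learning guarantee for any constant $\epsilon, \delta < 1/2$ at sufficiently large $n$, regardless of $\lambda$.

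The main obstacle is the combinatorial estimate on $|B_\rho(y)|$, which must be verified uniformly in the regime $\rho = \omega(1)$: one has to handle both slowly growing $\rho$ (where $\log(n/\rho) \approx \log n$ and the $\omega(1)$ factor on $\rho$ supplies the superpolynomial growth) and fast-growing $\rho$ up to $n$ (where the bound is immediate). A minor but important book-keeping point is that the argument must hold for \emph{any} $\lambda$, which it does automatically because the adversarial target space $B_\rho(y)$ does not depend on $\lambda$: for $\lambda \geq \rho$ the learner can in principle query every candidate but still only polynomially many, and for $\lambda < \rho$ the learner is strictly weaker, so the same counting bound applies.
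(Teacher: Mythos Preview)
Your proposal is correct and follows essentially the same approach as the paper: concentrate the distribution on a single point, draw the target singleton uniformly from the (superpolynomially large) ball $B_\rho(y)$, and argue that polynomially many queries cannot locate it. The only minor difference is that the paper splits into two cases---using a simple two-singleton argument when $\lambda<\rho$ (since the learner cannot even reach the targets) and the counting argument when $\lambda\geq\rho$---whereas you handle both cases uniformly with the counting bound; and the paper invokes the triangle inequality for the robust risk (Lemma~\ref{lemma:robloss-triangle}) to lower-bound the expected risk, whereas you argue directly that at most one $x^*$ can make $\roblosse(h,c_{x^*})=0$.
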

\begin{proof}
Fix $x\in\X$ and consider the distribution $D$ on $\X$ such that $D(x)=1$. 
We distinguish two cases. 
If $\lambda<\rho$, it suffices to choose two singletons in $B_\rho(x)\setminus B_\lambda(x)$ and draw the target concept uniformly at random between them (the learner cannot query a positive label, and cannot do better than choosing the right target at random).
The second case is $\lambda \geq \rho$.
Note that $\abs{B_\rho(x)}\geq (n/\rho)^\rho$, which is superpolynomial in $n$ for any budget $\rho=\omega(1)$ (as $\rho\leq n$). 
Now, for any LMQ strategy with a polynomial query upper bound $r(n)$, there exists a sufficiently large input dimension $N$ such that, after $r(N)$ queries, at least half the points in  $ B_\rho(x)$  have yet to be queried.
Choosing a target uniformly at random in $ B_\rho(x)\setminus\set{x}$, using Lemma~\ref{lemma:robloss-triangle}, and noting that any two singletons in $ B_\rho(x)$ have robust risk of 1 against each other, suffices to lower bound the expected risk of any hypothesis over the choice of the target concept by a constant.
\end{proof}

We now show that, in contrast, having access to a local equivalence query oracle enables the robust learnability of singletons in the distribution-free setting.

\begin{proposition}
Singletons are efficiently distribution-free $\rho$-robustly learnable given a $\rho$-$\LEQ$ oracle.
\end{proposition}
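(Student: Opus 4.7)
The plan is to exploit a key structural property of singletons: under the all-zeros hypothesis $\mathbf{0}$, any counterexample returned by the $\rho$-$\LEQ$ oracle must be the unique positive example $x^*$ of the target $c_{x^*}$, so a single counterexample exactly identifies the target. Concretely, I would analyse the following algorithm. Initialize $h \gets \mathbf{0}$; draw a sample $S \sim D^m$ of size $m = O\!\left(\frac{1}{\epsilon}\log\frac{1}{\delta}\right)$; for each $x\in S$, query $\rho$-$\LEQ(h,x)$, and whenever the oracle returns a counterexample $z$, update $h \gets c_z$. Output the final $h$.

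The correctness argument splits into two cases according to the quantity $p := \Prob{x\sim D}{x^* \in B_\rho(x)}$. Observe that $p = \risk_\rho^D(\mathbf{0}, c_{x^*})$, since for the target $c_{x^*}$ and the all-zeros hypothesis, the event ``$\exists z\in B_\rho(x)\st c_{x^*}(z)\neq \mathbf{0}(z)$'' is precisely ``$x^* \in B_\rho(x)$''. If $p\le\epsilon$, then the hypothesis $\mathbf{0}$ already has robust risk at most $\epsilon$, so any output of the algorithm (which can only change $h$ to $c_{x^*}$, whose robust risk is $0$) is fine. If $p>\epsilon$, then a standard Chernoff/coupon argument shows that with probability at least $1-\delta$ the sample $S$ contains at least one $x$ with $x^*\in B_\rho(x)$; for such an $x$, $\mathbf{0}$ and $c_{x^*}$ differ on $B_\rho(x)$, so the $\rho$-$\LEQ$ oracle must return some counterexample $z\in B_\rho(x)$. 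Since $c_{x^*}(z)\neq h(z)=0$ forces $z=x^*$, we recover $x^*$ exactly and update $h\gets c_{x^*}$, achieving robust risk $0$ on all remaining queries.

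Finally, I would count resources. The sample size is $m=O\!\left(\frac{1}{\epsilon}\log\frac{1}{\delta}\right)$, polynomial in $n,1/\epsilon,1/\delta$; the number of $\rho$-$\LEQ$ queries is at most $m$ (one per sample point, and in fact we may terminate immediately after the first counterexample); and the output hypothesis is either $\mathbf{0}$ or a singleton $c_{x^*}$, both polynomially evaluable. Assembling these bounds against Definition~\ref{def:leq} establishes efficient distribution-free $\rho$-robust learnability.

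There is no real obstacle here: the crux is simply the observation that, for singletons, one counterexample against the all-zeros hypothesis pins down the target, so the only non-trivial ingredient is the elementary sample-size bound ensuring that the low-probability event ``$x^*\in B_\rho(x)$'' is either witnessed by $S$ or can be safely ignored. I would be careful to remark that the argument is distribution-free because no assumption on $D$ is made beyond i.i.d.\ sampling, and that it crucially uses $\lambda=\rho$ so that the counterexample $z$ is guaranteed to lie in $B_\rho(x)$ and hence to be the single positive point of $c_{x^*}$.
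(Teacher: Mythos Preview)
Your proposal is correct and follows essentially the same approach as the paper: initialize with the all-zeros hypothesis, observe that any counterexample from the $\rho$-$\LEQ$ oracle must be the target point $x^*$, and conclude that either the target is found exactly or the zero hypothesis is already robustly good enough. The only difference is that the paper appeals to the generic Occam bound (Lemma~\ref{lemma:occam}) to justify that a robustly consistent hypothesis on $S$ generalizes, yielding sample size $O\!\left(\frac{1}{\epsilon}(n+\log\frac{1}{\delta})\right)$, whereas your direct two-case analysis on $p=\Pr_{x\sim D}[x^*\in B_\rho(x)]$ is more elementary and gives the tighter, dimension-free bound $O\!\left(\frac{1}{\epsilon}\log\frac{1}{\delta}\right)$.
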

\begin{proof}
Draw a sufficiently large sample $S\sim D^m$ to ensure robust generalization, as in Lemma~\ref{lemma:occam}
($\abs{S}$ is polynomial in the input dimension $n$ and learning parameters).
If there exists a positively labelled point $x\in S$, we have learned the target singleton.
Otherwise, query the $\LEQ$ oracle with the constant function 0 on the points in $S$ until we receive a counterexample (the target singleton) or until it is confirmed that all points have robust loss of 0.
In either case, we have queried at most $m$ points and the hypothesis is robustly consistent with the training sample, and we are done.
\end{proof}

Note that this is in contrast with the fact that, if a concept class is exactly learnable with access to the $\MQ$ and $\EQ$ oracles, then it is PAC learnable with random examples and access to $\MQ$ (see Section~\ref{sec:active-learning} for details).
Fundamentally, the existential quantifier in the robust risk definition renders simulating the $\LEQ$ oracle with the $\LMQ$ oracle impossible.


\paragraph*{Learning parities with $\LMQ$ and $\LEQ$.} There are cases where the $\EX$+$\LMQ$ and $\EX$+$\LEQ$ models are equally powerful.
Indeed, it is easy to see that access to the 1-$\LMQ$ or 1-$\LEQ$ oracle is sufficient to exactly learn parities with one query to $\EX$. 
For the former case, it suffices to flip each bit $i$ of an instance $x$ drawn from $\EX$ and give $x\oplus e_i$ to the $\LMQ$ oracle to observe whether $i$ is in the target parity.
For the latter, note that each counterexample $(x,y)$ is linearly independent from the set of data points already collected, so there must be at most $n$ counterexamples in $B_1(x)$, thus exactly identifying the target parity.

\subsection{A Two-Way Separation between $\LEQ$ and $\EQ$}
\label{sec:sep-eq-leq}

In this section, we compare local and ``global''  query oracles. 
We show that, when considering robust learning, the $\EX$+$\LEQ$ and $\EX$+$\EQ$ models are in general incomparable. 
This is in contrast with $\LMQ$ and $\MQ$, where a learning algorithm with access to $\LMQ$ can straightforwardly be simulated by an algorithm with access to $\MQ$.

We first show the existence of a robust learning problem on $\boolhc$ such that the $\EX$+$\LEQ$ model requires one sample point from $\EX$ and one query to $\LEQ$, while it requires $\log n$ calls to $\EQ$ in the $\EX$+$\EQ$ model.
We then show the existence of a robust learning problem on $\boolhc$ such that the $\EX$+$\LEQ$ model requires $O(1/\epsilon)$ sample points from $\EX$ and a total of $1/\epsilon$ queries to $\LEQ$  in order to have a robust error bounded by $\epsilon$, while it only requires a single call to $\EQ$ in the $\EX$+$\EQ$ model.\footnote{We note that the query complexity of learning with $\MQ$, $\EQ$ and partial queries has been vastly studied, notably by \cite{angluin1988queries} and \cite{maass1992lower}.
In these works however, the learning algorithm is required to be \emph{proper} and the learning \emph{exact}.
In contrast, we look at the robust learning framework and allow improper learning. }

We now formally state the result showing that (perhaps counter-intuitively) 1-$\LEQ$ can sometimes be more powerful than $\EQ$. 
The idea is to create a learning problem such that any counterexample in a ball of radius one around a point reveals full information about the target, but when the oracle is free to choose any point in the input space, it can (adversarially) reveal partial information.
To simplify our analysis, we will assume that the oracle does not have to commit to any target, as long as the target is defined on the support of the distribution (in order to have a well-defined example oracle). 
The target is not necessarily defined  on the rest of the input space, only restricting the oracle to output a sequence of counterexamples for which there always exists a consistent concept. 
As mentioned earlier in this chapter, choosing a target a priori (i.e., similarly to the online stochastic setting) simply results in expected bounds of the same order as if the oracle does not have to commit to a target (i.e., the mistake-bound online setting).

\begin{theorem}
\label{thm:leq-better-eq}
Let $\C=\set{x \mapsto x_i \given i\in[n]}$ be the class of \emph{monotone} dictators.
There exists a distribution $D$ on $\boolhc$  and target concept $c\in\C$  such that 1-robustly learning $(c,D)$ requires at most one query to 1-$\LEQ$, but, for any learning algorithm, at least $\log n$ queries to $\EQ$. 
\end{theorem}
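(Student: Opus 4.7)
The plan is to let $D$ be the distribution concentrated on the all-ones input $\mathbf{1}\in\boolhc$. Any sample drawn from $\EX(c,D)$ returns $(\mathbf{1},1)$ for every monotone dictator $c(x)=x_i$, since $c(\mathbf{1})=1$. Moreover, under this $D$, having $\roblosse(h,c)=0$ is equivalent to matching $c$ \emph{exactly} on $B_1(\mathbf{1})=\{\mathbf{1}\}\cup\{\mathbf{1}^{\oplus j}:j\in[n]\}$; since distinct dictators $c_i$ and $c_j$ differ on the points $\mathbf{1}^{\oplus i}$ and $\mathbf{1}^{\oplus j}$, matching the target on $B_1(\mathbf{1})$ uniquely pins down the target index.

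For the $1$-$\LEQ$ upper bound, after drawing the (only possible) sample $(\mathbf{1},1)$, I would query $1$-$\LEQ$ at the point $\mathbf{1}$ with the constant hypothesis $h\equiv 1$. Since $c(\mathbf{1}^{\oplus i})=0$, the oracle must return a counterexample $z\in B_1(\mathbf{1})$ with $c(z)=0$; and the only point in $B_1(\mathbf{1})$ on which $c_i$ takes value $0$ is $z=\mathbf{1}^{\oplus i}$. The counterexample therefore reveals $i$, and outputting $h=c_i$ yields $\roblosse(h,c)=0$ after a single $\LEQ$ query.

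For the $\EQ$ lower bound, I would use an adversary-oracle (halving) argument. Maintain a version space $V\subseteq[n]$ of candidate target indices, initialised to $V=[n]$. On any query hypothesis $h$, the oracle selects a counterexample $z$ that maximises $|V'|$, where $V'=\{i\in V:z_i\neq h(z)\}$ becomes the updated version space. The key halving lemma is that for every $h$ and every $V$ with $|V|\geq 2$, the oracle can ensure $|V'|\geq\lfloor|V|/2\rfloor$: pick $z_V\in\{0,1\}^V$ with $\lceil|V|/2\rceil$ ones and $\lfloor|V|/2\rfloor$ zeros; if some extension of $z_V$ to $\boolhc$ has $h(z)=0$, then $|V'|=\lceil|V|/2\rceil$, and otherwise $h\equiv 1$ on this fiber, so $|V'|=\lfloor|V|/2\rfloor$. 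Iterating this bound, after $t$ queries one has $|V|\geq\lfloor n/2^t\rfloor$, so $|V|\geq 2$ throughout $t\leq\log_2 n-1$ queries. Committing the adversary to any index remaining in the final $V$ produces a target $c$ for which no algorithm can terminate correctly before issuing at least $\log n$ $\EQ$ queries.

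The main technical obstacle will be a careful verification of the halving lemma, in particular when $h$ restricts (on the fibers in question) to a worst-case majority-like function on the coordinates of $V$, which is the setting in which only the weaker floor $\lfloor|V|/2\rfloor$ can be guaranteed; this bound is nevertheless sufficient to deliver the $\log n$ lower bound. A secondary subtlety is to ensure that the chosen $z$ is a \emph{valid} counterexample (i.e., witnesses a genuine disagreement for at least one concept remaining in $V'$), which is immediate whenever $|V'|\geq 1$ as guaranteed by the halving step.
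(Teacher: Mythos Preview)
Your proposal is correct and follows essentially the same approach as the paper: the paper anchors the distribution at $\mathbf{0}$ and queries $1$-$\LEQ$ with the constant-$0$ hypothesis (forcing the counterexample $e_i$), whereas you use the symmetric choice $\mathbf{1}$ with the constant-$1$ hypothesis (forcing $\mathbf{1}^{\oplus i}$); both then run the same halving adversary for the $\EQ$ lower bound. Your explicit construction of $z_V$ with a balanced number of ones is equivalent to the paper's $x_*/x^*$ argmin/argmax formulation and yields the identical inequality $|V_{t+1}|\geq\lfloor|V_t|/2\rfloor$.
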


\begin{proof}
Let $D$ be such that $D(\mathbf{0})=1$, and note that robustly learning $\C$ against an adversary with budget 1 requires exact learning. 
Moreover, the labelled instance $(\mathbf{0},0)$ gives no information about the target concept.

For the $\LEQ$ model, the learner samples the point $\mathbf{0}$ from $\EX$, and gives the constant hypothesis $0$ to $\LEQ$. Since the oracle must return $x\in B_1(\mathbf{0})$ such that $c(x)=1$, it must return $e_i$ such that $c(x)=x_i$.\footnote{Note that this is an improper learner, but we can simply consider the case $\C'=\C\cup\set{c(x)=0}$ to get an example with proper learning.}

For the $\EQ$ model, the idea is that, for any hypothesis $h$ the learner gives to $\EQ$, the oracle can always find a counterexample that removes at most half of the potential targets. 
Let $I_t=\set{i\in[n]\given x_i \text{ is consistent with the history}}$ be the set of indices (and thus concepts) that are consistent with the sequence of counterexamples given up until query $t$, and note that $I_1=[n]$.
Let $h\in 2^{\boolhc}$ be an arbitrary hypothesis and define the following function 
$$\#1:(x,I)\mapsto \sum_{i\in I} x_i$$ 
that returns the number of 1's at the indices of $I\subseteq[n]$ in an instance $x\in\boolhc$.
Define the following instances:
\begin{align*}
x_*:=\underset{x:h(x)=1}{\arg\min}\;\#1(x,I_t)\enspace,\\
x^*:=\underset{x:h(x)=0}{\arg\max}\;\#1(x,I_t)\enspace.
\end{align*}
We now argue that one of $x_*$ or $x^*$ will decrease $I_t$ by at most half.
Recall that the oracle's goal is to reveal as little information as possible to the learner at every query.
Note that, given $x\in\boolhc$, if $h(x)\neq c(x)$, then all the bits with value $c(x)$ are still viable target functions for that counterexample.

Now, if $x^*$ is a counterexample, then $c(x^*)=1$, and there are  $\#1(x,I_t)$ concepts that are still consistent with the counterexample history. 
Likewise, if if $x_*$ is a counterexample, then $c(x^*)=0$, and there are $\abs{I_t}-\#1(x,I_t)$ concepts that are still consistent with the counterexample history. 
Thus, if the oracle chooses the counterexample maximizing the number of consistent concepts with the history, we have that
$$\abs{I_{t+1}}=\max\set{\abs{I_t}-\#1(x_*,I_t),\#1(x^*,I_t)}\geq \lfloor\abs{I_t}/2\rfloor\enspace,$$
which concludes the proof.
\end{proof}

\begin{remark}
As a corollary of Theorem~\ref{thm:leq-better-eq}, we get that there exists a robust learning problem for which distribution-free efficient robust learning is still possible when $\lambda>\rho$, but where the query complexity is much larger than if $\lambda=\rho$ (set $\rho=1$ in the problem above).
\end{remark}

Now, we formally show that, for some learning problems, an $\EQ$ oracle is more powerful than an $\LEQ$ oracle.

\begin{theorem}
Let $\C=\set{x \mapsto \mathbf{1}[x=x'] \given x'\in\boolhc}$ be the class of singletons.
Then there exists a distribution $D$ on $\boolhc$ and target concept $c\in\C$ such that robustly learning $(c,D)$ requires at most one query to $\EQ$, but, for any learning algorithm, at least $1/\epsilon$ queries to $\lambda$-$\LEQ$ for robust accuracy $\epsilon$. 
\end{theorem}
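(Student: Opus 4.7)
The plan is to exhibit a single distribution $D$ on $\boolhc$ (together with a family of potential targets inside $\C$) that makes the $\EQ$ oracle trivially sufficient but forces any algorithm with $\lambda$-$\LEQ$ access to perform a kind of linear search. The $\EQ$ upper bound will be immediate: for the singleton target $c_{x^*}$ (which is nonzero at exactly one point $x^*$), the learner queries $\EQ$ with the constant-$0$ hypothesis $h_0$; since $h_0 \neq c_{x^*}$ somewhere, $\EQ$ must return $x^*$, after which the learner outputs $c_{x^*}$ with zero robust risk on any distribution. This works independently of $D$.

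For the lower bound, I would set $k=\lceil 1/\epsilon\rceil$ and pick points $x_1,\ldots,x_k\in\boolhc$ that are pairwise at Hamming distance strictly greater than $2(\rho+\lambda)$ (possible for $n$ large enough), and for each $i$ fix a point $z_i\in B_\rho(x_i)\setminus\{x_i\}$ canonically. Let $D$ be uniform on $\{x_1,\ldots,x_k\}$, and consider the $k$-element subfamily $\C'=\{c_1,\ldots,c_k\}\subseteq\C$ where $c_j$ is the singleton at $z_j$. Because every sample from $\EX(c_j,D)$ is labelled $0$, the random-example oracle reveals the support but nothing about $i^*$; and a straightforward triangle-inequality check using the spacing assumption shows that $B_\lambda(x_i)\cap B_\rho(x_j)=\emptyset$ whenever $i\neq j$, so in particular $z_j\notin B_\lambda(x_i)$. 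Moreover, since each $x_j$ carries mass $1/k$, any hypothesis $h$ with $\roblosse(h,c_{i^*})<\epsilon=1/k$ must agree with $c_{i^*}$ on $B_\rho(x_{i^*})$, which is only possible if the learner has identified $i^*$.

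To prove the expected query lower bound I would apply Yao's minimax principle with the uniform distribution on $\C'$. For a deterministic algorithm, track the set $S_t\subseteq\{1,\ldots,k\}$ of indices still consistent with the history. The key claim is that each $\lambda$-$\LEQ$ query $(h,x_i)$ decreases $|S_t|$ by at most one. Indeed, $c_{i^*}$ restricted to $B_\lambda(x_i)$ is the zero function whenever $i^*\neq i$ (by the disjointness above) and is the singleton at $z_i$ when $i^*=i$; hence the oracle's response can distinguish only between the cases ``$i^*=i$'' and ``$i^*\neq i$'', splitting $S_t$ into the singleton $\{i\}\cap S_t$ and the rest. Thus $|S_{t+1}|\geq |S_t|-1$, so identifying $i^*$ requires at least $k-1$ queries in the worst case, and the expected number of queries to pin down a uniformly random $i^*$ is $(k+1)/2=\Omega(1/\epsilon)$. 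Yao's principle then transfers this expected bound to randomized learners, yielding the desired $\Omega(1/\epsilon)$ lower bound.

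The main obstacle is formalising the information-theoretic step that each $\lambda$-$\LEQ$ query eliminates at most one candidate, since in principle a clever hypothesis $h$ could encode structure about several $z_j$'s. The locality of the oracle is what saves us: the response depends only on $h\!\upharpoonright_{B_\lambda(x_i)}$ and on $c_{i^*}\!\upharpoonright_{B_\lambda(x_i)}$, and the latter is a function of whether $i^*=i$ alone. I would carefully write this as a claim that the joint distribution of $(h_t,x_{i_t},\text{response}_t)_{t\leq T}$, conditioned on ``$i^*\neq i_1,\ldots,i_T$'', is independent of $i^*$, and then deduce the one-bit-per-query reduction from that observation. The rest is standard: Yao's principle plus a counting bound on the expected stopping time of a linear search over $k=\Theta(1/\epsilon)$ candidates.
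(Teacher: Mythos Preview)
Your construction and argument are correct and essentially coincide with the paper's: uniform distribution on $k=\Theta(1/\epsilon)$ well-separated points, singleton targets hidden in the perturbation balls, the constant-$0$ hypothesis to handle $\EQ$ in one shot, and the observation that a $\lambda$-$\LEQ$ query at $x_i$ can only reveal whether $i^\ast=i$. The one stylistic difference is that you spell out the lower bound via Yao's minimax and an explicit ``one candidate eliminated per query'' claim, whereas the paper takes the adaptive-oracle shortcut (letting the oracle answer $c=0$ on the first $k-1$ queried centres and only then commit to the last one as the target), which it had licensed just before this theorem by remarking that the adaptive and a-priori-target settings yield bounds of the same order. Your version is more careful but proves the same thing.
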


\begin{proof}
Let $k=\lfloor 1/\epsilon\rfloor$.
Let $X=\set{x_1,\dots,x_k}$ be instances in $\boolhc$ whose $\lambda$-expansions don't intersect (let $n$ be sufficiently large and choose $\lambda$ as a function of $n$ and $\epsilon$ so that this is possible). 
Let $c(x_i)=0$ for all instances $x_i$, and let $D$ be the uniform distribution on $X$.
Note that, if the target singleton is in any of the perturbation regions $B_\rho(x_i)$, then a $\rho$-robust learning algorithm given robust accuracy parameter $\epsilon$ must identify the target exactly.
Without loss of generality, we let $\rho=\lambda$.

For the $\EQ$ bound, the learner can clearly query $\EQ$ with the constant function $0$, and get the singleton target as a counterexample, without any call to $\EX$.

For the $\lambda$-$\LEQ$ bound, the oracle's strategy is simply to (adaptively) return that $c=0$ on all queries $(x_{i_1},h_1),\dots,(x_{i_{k-1}},h_{k-1})$ until the last query $x_{i_k}$, which reveals the target singleton. This yields a lower bound of $k$ queries to $\lambda$-$\LEQ$ (the optimal strategy is to not repeat an instance in the queries and always choose $h_i=0$) 
\end{proof}

\begin{remark}
As the mistake-bound of singletons in online learning is 1, the theorem above also shows that $\rho$-robust learning with a $\rho$-$\LEQ$ can result in query complexity lower bounds that are strictly greater than mistake bounds in online learning.
Note though that the optimal algorithm still only makes one  mistake, it simply has to query the $\LEQ$ a certain number of times before making it.
\end{remark}

\section{Summary and Open Problems}
\label{sec:lq-summary}

In this chapter, we have thoroughly studied the powers and limitations of both local membership and equivalence queries in the context of robust learning.
In particular, we have outlined when access to either oracle is necessary to enable robustness guarantees, as well as obtained lower bounds on the local query complexity of various robust learning problems.

\subsection{Final Remarks on Local Query Oracles}

We discuss the implementation of local query oracles, as well as how the $\LMQ$ and $\LEQ$ oracles differ when considering the constant-in-the-ball notion of robustness.

\paragraph*{Implementing $\LEQ$ oracle.}
In practice, one always has to find a way to approximately implement oracles studied in theory.
A possible way to generate counterexamples with respect to the exact-in-the-ball notion of robustness is as follows.
Suppose that there is an adversary that can generate points $z\in B_\rho(x)$ such that $h(z)\neq c(z)$.
Provided such an adversary can be simulated, there is a way to (imperfectly) implement the $\LEQ$ oracle in practice. 
Thus, the use of these oracles can be viewed as a form of adversarial training.

\paragraph*{Local query analogues for the constant-in-the-ball risk.}
Both the $\LMQ$ and $\LEQ$ models are particularly well-suited for the standard and exact-in-the-ball risks, as they address \emph{information-theoretic} limitations of learning with random examples only. 
 On the other hand, while information-theoretic limitations of robust learning with respect to the \emph{constant-in-the-ball} notion of robustness arise when the perturbation function $\U$ is unknown to the learner, \emph{computational} obstacles can also occur even when the definition of $\U$ is available. 
Indeed, determining whether the hypothesis changes label in the perturbation region could  be intractable.
In these cases, the Perfect Attack Oracle (PAO) of \cite{montasser2021adversarially} can be used to remedy these limitations for robust learning with respect to the constant-in-the-ball robust risk.
Crucially, in their setting, counterexamples could have a different label to the ground truth: a counterexample $z\in\U(x)$ for $x$ is such that $h(z)\neq c(x)$, not necessarily $h(z)\neq c(z)$. 
A striking example of this is when $\U(x)=\X$.
In this case, we only want to know if the hypothesis is constant on the whole input space.
This could compromise the standard accuracy of the hypothesis (see e.g., \cite{tsipras2019robustness} for a learning problem where robustness and accuracy are at odds).
Finally, an $\LMQ$ analogue for the constant-in-the-ball risk is not needed: the only information we need for a perturbed point $z\in B_\rho(x)$  is the label of $x$ (given by the example oracle) and $h(z)$. 
Given that one of the requirements of PAC learning is that the hypothesis is efficiently evaluatable, we can easily compute $h(z)$.

\paragraph*{Comparison with \citep{montasser2021adversarially}.} Closest to our work in this chapter is that of \cite{montasser2021adversarially}, who derive sample and PAO query bounds for the realizable constant-in-the-ball setting. 
They use the algorithm from \citep{montasser2019vc} to get a sample complexity of $\tilde{O}\left(\frac{\VC(\mathcal{H}){\VC^*}^2(\mathcal{H})+\log(1/\delta)}{\epsilon}\right)$ and derive a query complexity of $\tilde{O}(2^{\VCH^2 \DVCH^2\log^2(\DVCH)}\Lit(\mathcal{H}))$, where $\DVCH$ is the dual VC dimension of a hypothesis class.
They also derive query lower bounds: their general PAO query complexity lower bound is $\Omega(\log(\text{Tdim}(\H)))$, where $\text{Tdim}(\H)$ is the \emph{threshold dimension} of a hypothesis class.
The threshold dimension is bounded below by the logarithm of the Littlestone dimension, hence giving a general query lower bound of $\Omega(\log\log(\Lit(\H)))$.
Since threshold functions have a threshold dimension exponential in the Littlestone dimension, \cite{montasser2021adversarially} get a PAO query lower bound of $\Omega(\Lit(\H))$ in that special cases.
In contrast, we get an $\LEQ$ query lower bound linear in the restricted Littlestone dimension (which coincides with the Littlestone dimension for a wide variety of common concept classes) for any concept class.

\subsection{Future Work}

 We finally outline various avenues for future research.
 
\paragraph*{Local membership query lower bounds.} The $\LMQ$ lower bound from Section~\ref{sec:rob-learn-lmq} was derived for conjunctions. The technique does not work for monotone conjunctions, as, for a given set of indices $I$, there exists only one monotone conjunction using all indices in $I$. Can we get a similar $\LMQ$ lower bound where the dependence on $\rho$ is exponential for monotone conjunctions, or it is possible to robustly learn them with $o(2^\rho)$ local membership queries? 

\paragraph*{Limiting the power of the adversary.}
In Section~\ref{sec:adv-bounded-precision}, we studied robust learning against a bounded-precision adversary, requiring that it return a point around which the hypothesis and target disagree \emph{everywhere}.
We could relax this requirement and instead let the adversary choose a distribution $D_x$ on the perturbation region $\U(x)$, with constraints on $D_x$ that prevent a Dirac delta distribution on a single adversarial example $z\in \U(x)$. 
A promising avenue is to consider the smoothed adversaries of the work of \cite{haghtalab2022oracle,haghtalab2022smoothed} in online learning, which have density functions bounded by $1/\sigma$ that of the uniform density.
Note that a probabilistic approach of robustness has been considered in \citep{viallard2021pac,robey2022probabilistically} with respect to the constant-in-the-ball notion of robustness.

\paragraph*{Sample and query complexity bounds with $\LEQ$.} In Section~\ref{sec:sc-ub-leq}, we derived sample complexity upper bounds as a function of the VC dimension of the robust loss. As noted in Remark~\ref{rmk:rho-tradeoff}, this quantity is 1 when the adversarial budget $\rho=n$ (in $\boolhc$) due to the fact that we are essentially working in the online setting and the underlying distribution has become irrelevant. 
However, our upper bound for linear classifiers is $O(n^3)$, implying that it is quite loose, especially as $\rho$ increases.
Understanding the behaviour of the VC dimension of the robust loss as a function of $\rho$ to get concrete sample complexity bounds is a natural avenue for future work. 
In Appendix~\ref{app:rvc-closer-look}, we take a closer look at this question and show that, in the particular case of $\rho=n-1$, the VC dimension of the robust loss between linear classifiers is exactly 2.

Another natural direction for future work is to obtain sample complexity \emph{lower} bounds. 
We first note in Appendix~\ref{app:rvc-lb} that it is unlikely that the VC dimension of the robust loss is a good candidate for this complexity measure. 
Indeed, we explain why the proof that the VC dimension is a lower bound in the standard setting does not carry through when considering the robust loss.
We are currently investigating whether the complexity measure based on the one-inclusion graph developed by \cite{montasser2022adversarially} for the constant-in-the-ball notion of robustness can be adapted to the exact-in-the-ball setting and thus get a \emph{characterization} of robust learnability.

Finally, it would be interesting to give a more fine-grained picture of the sample and query complexity tradeoff outlined in Remark~\ref{rmk:rho-tradeoff}, perhaps through joint sample and query complexity lower bounds.

\chapter{Conclusion}
\label{chap:conclusion}
This thesis studied the robustness of learning algorithms to evasion attacks from a learning theory perspective.
Our focus was on the \emph{existence} of misclassified perturbed instances, with respect to the exact-in-the-ball notion of robust risk.
Our main consideration was the sample and query complexity of learning problems, with a particular focus on \emph{efficiency}, in an information-theoretic sense. 
We identified assumptions on learning problems that either enable or prevent robustness guarantees.
In particular, we looked at how the distribution that generates the data as well as the way in which the data is acquired influence the amount of data needed to ensure robustness to evasion attacks.

We started with a more passive setting in which the learner was restrained to a randomly drawn sample labelled according to the target concept, which required distributional assumptions to get reasonable sample complexity bounds. 
We outlined a series of combinatorial arguments to show that the $\log (n)$-expansion of error regions for certain concept classes on the boolean hypercube is not too large compared to the original set representing the error region.

In order to obtain distribution-free guarantees, we progressively considered more active and powerful learners which have access to \emph{local} queries -- showing in the process that local membership queries were, in general, not going to improve our previously obtained robustness thresholds. We have furthermore delimited the frontier of distribution-free robust learning for a wide variety of concept classes. 
This happens to be when the learner's query region and the adversary's perturbation region exactly coincide. 
We provided a nuanced discussion of these results and complemented them with lower bounds to the local equivalence query oracle.

To conclude, one of the overarching themes of this thesis is the identification of fundamental \emph{trade-offs} between the robustness of a learning algorithm and its training sample size.
As outlined below, the notion of tradeoff also informs future research directions and presents itself as a compelling framework to study guarantees or lack thereof in learning problems with non-standard objectives.

\section{Future Work}

As hinted throughout this thesis, we are far from having a full picture of robust learnability with respect to the exact-in-the-ball notion of robustness.
Indeed, concrete open problems abound, including the following questions posed in previous chapters.
What is the robustness threshold of linear classifiers (and, more generally, concept classes of polynomially-bounded VC dimension) under log-Lipschitz distributions? Can we derive tighter sample complexity bounds with access to random examples only? Is there a complexity measure characterizing the robust learnability of robust ERM algorithms under the exact-in-the-ball notion of robustness? 

Broader research questions have also arisen following the work presented in this thesis.
Below we outline more general and perhaps more speculative avenues for future work.

\paragraph*{Agnostic setting.}
In standard PAC learning, the agnostic setting allows for a joint distribution on the instance and label spaces.
The aim is to output a hypothesis whose error is as close as possible to the optimal hypothesis in the  class.
Observe that the constant-in-the-ball notion of robustness naturally extends to the agnostic setting: the label of a perturbed instance is compared to the label of its unperturbed counterpart. 
In fact, \cite{montasser2019vc} exhibit an elegant reduction from the agnostic to the realizable setting for the constant-in-the-ball notion of robustness. 
\cite{hopkins2022realizable} even show a quite general reduction for a family of general loss functions, which generalizes the one from \citep{montasser2019vc}, at the cost of a $1/\epsilon$ factor in the sample complexity.
However, given the presence of a target concept in the exact-in-the-ball case, it is not obvious how to extend this definition to the agnostic setting.
The robustness definition of \cite{pang2022robustness}, mentioned in the literature review, could be a candidate for this. In any case,  developing a theory of agnostic robust learnability in our setting, and determining whether the methods of \cite{hopkins2022realizable} apply, is an exciting future research direction. 

\paragraph*{Probabilistic Lipschitzness.}
In this thesis, when looking at robust learning with random examples only, we have considered learning problems as arbitrary concept and distribution pairs $(c,D)$ that come from a fixed concept class and distribution family. 
However, it would be natural to consider learning problems in which  there is a relationship between the target and the distribution on the data. 
The probabilistic Lipschitzness property, proposed by \cite{urner2013probabilistic}, offers an interesting possible research direction: while a Lipschitzness condition on a deterministic target function imposes a margin between classes, its probabilistic counterpart allows the margins to ``smoothen out'' near the boundary. Allowing for target functions that satisfy Probabilistic Lipschitz (perhaps in addition to log-Lipschitzness) has the potential to result in better sample complexity bounds while still ensuring sufficient probability mass near the boundary in order to justify the use of the exact-in-the-ball notion of robustness.

%
%
\paragraph*{Poisoning and evasion attacks.} We have so far focused on the study of evasion attacks. 
As pointed out in the literature review, there has also been a considerable body of work focusing on various poisoning attack models. 
Whether it is possible to draw connexions between the two settings (e.g., is a learning algorithm that is robust to evasion attacks also robust to poisoning attacks, and vice-versa, and, if so, under which conditions?) is an interesting research direction that could bridge different views of robustness, especially considering the \emph{clean-label attack model}, where new training data modified by the adversary must still be consistent with the target concept.

\paragraph*{Multi-objective trustworthy machine learning.}
One can expand the requirements of a learning algorithm for classification beyond its predictive accuracy, and in ways other than robustness, in the general goal of \emph{trustworthiness}.
For example, in interpretability and explainable machine learning, we have an additional need for a model to be able to explain \emph{why} a certain label has been chosen for a new unseen example, or more generally how a model uses a specific subset of features in its predictions, usually by attributing importance to certain features of the data. 
Another important consideration is the fairness of learning algorithms.
While there exist many different notions of fairness \citep{kleinberg2017inherent}, the overarching goal is usually to avoid discrimination against a particular subgroup of the data.
Finally, there are a variety of ways in which privacy can be specified. 
For example, one may wish to be resilient against membership inference attacks, where the aim is to infer whether 
an individual was part of the training set.
It is apparent that such formal guarantees are warranted for any safe learning algorithm that is deployed in practice.
Drawing connections between how these requirements relate to robustness is one of many possible research avenues in trustworthy machine learning. 
Indeed, it is possible that these requirements be at odds with each other, naturally resulting in multi-objective formulations, or, conversely, that they can in fact align with each other.
While there exists work on this topic in the literature, see, e.g., \citep{lecuyer2019certified,pawelczyk2022exploring,konstantinov2022robustness},  knowledge gaps remain, especially considering the myriad of ways in which robustness, fairness, interpretability and privacy have been defined.

To conclude, while we have focused on the trade-off between robustness and sample complexity in this work, the nature of trade-offs in learning problems can vary: between sample complexity and other learning objectives, between a learning objective and computational complexity, between learning objectives themselves, etc.
Exploring trade-offs through the lens of learning theory could refine our understanding of fundamental limitations as well as possibilities of learning with safer and more realistic objectives.

\bibliographystyle{apalike}
\bibliography{references}

\appendix

\chapter{Proofs from Chapter~5}
\label{app:rob-thresholds}

\section{Proof of Lemma~\ref{lemma:rob-risk-dl}}
\label{app:rob-risk-dl}

\textbf{Lemma~\ref{lemma:rob-risk-dl}.}
\emph{Let $D$ be an $\alpha$-$\log$-Lipschitz distribution on the
$n$-dimensional boolean hypercube and let $\varphi$ be a 
conjunction of $d$ literals.
Set $\eta=\frac{1}{1+\alpha}$.
Then for all $0<\varepsilon<1/2$,
if $d\geq \max\left\{
  \frac{4}{\eta^2}\log\left(\frac{1}{\varepsilon}\right) ,
  \frac{2\rho}{\eta} \right\}$, then 
$\Prob{x\sim D}{\left(\exists y \in B_\rho(x) \cdot y \models
    \varphi\right)} \leq \varepsilon$.
}

\begin{proof}
  Write $\varphi = \ell_1 \wedge \cdots \wedge \ell_d$.  Draw a point
  $x\sim D$ from distribution $D$.  Let $X_1,\dots,X_d\in\{0,1\}$ be
  indicator random variables, respectively denoting whether $x$
  satisfies literals $\ell_1,\dots,\ell_d$.  Note that we do not assume the
  $X_i$'s to be independent from each other.  Writing
  $Y:=\sum_{i=1}^d X_i$, our goal is to show that
  $\Prob{x\sim D}{Y+\rho\geq d} \leq~\varepsilon$.

Let $D_i$ be the marginal distribution of $X_i$ conditioned on $X_1,\dots,X_{i-1}$. 
This distribution is also $\alpha$-$\log$-Lipschitz by Lemma~\ref{lemma:log-lips-facts}, and hence,
\begin{equation}
\label{eqn:marg-bound}
\Prob{X_i\sim D_i}{X_i=1}\leq 1-\eta
\enspace.
\end{equation}

Since we are interested in the random variable $Y$ representing the number of 1's in $X_1,\dots,X_d$, 
we define the random variables $Z_1,\dots,Z_d$ as follows:
\begin{equation*}
Z_k = \left(\sum_{i=1}^k X_i\right)-k(1-\eta)\enspace,
\end{equation*} 
with the convention that $Z_0=0$.
The sequence $Z_1, \dots, Z_d$ is a supermartingale with respect to $X_1,\dots,X_d$:
\begin{align*}
\eval{}{Z_{k+1}\given X_1,\dots,X_k}
&=\eval{}{Z_{k}+X_{k+1}-(1-\eta)\given X_1,\dots,X_k}\\
&=Z_k+\Prob{}{X_{k+1}'=1\given X_1,\dots,X_k}-(1-\eta)\\
&\leq Z_k
\enspace. \tag{by (\ref{eqn:marg-bound})}
\end{align*}
Now, note that all $Z_k$'s satisfy $|Z_{k+1}-Z_k|\leq 1$, and that $Z_d=Y-d(1-\eta)$. 
We can thus apply the Azuma-Hoeffding (A.H.) Inequality to get 
\begin{align*}
\Prob{}{Y\geq d-\rho}
&\leq \Prob{}{Y\geq d(1-\eta)+\sqrt{2\log(2/\varepsilon)d}}	\\
&=\Prob{}{Z_d-Z_0\geq \sqrt{2\log(2/\varepsilon)d}}	\\
&\leq \exp\left(-\frac{\sqrt{2\log(1/\varepsilon)d}^2}{2d}\right)				\tag{A.H.}\\
&=\varepsilon
\enspace,
\end{align*}
where the first inequality holds from the given bounds on $d$ and $\rho$:
\begin{align*}
d-\rho &=(1-\eta)d + \frac{\eta d}{2} 
               + \frac{\eta d}{2} - \rho \\
          & \geq (1-\eta) d + \frac{\eta d}{2} 
           \tag{since $\rho \leq \frac{\eta d}{2}$}\\
          & \geq  (1-\eta) d + \sqrt{2\log(1/\varepsilon) d} \enspace.
            \tag{since $d \geq \frac{8}{\eta^2}\log(\frac{1}{\varepsilon})$}
\end{align*}
\end{proof}

\section{Proof of Corollary~\ref{cor:k-dl}}
\label{app:cor-k-dl}
\textbf{Corollary~\ref{cor:k-dl}.}
\emph{The class of $k$-decision lists is efficiently $\log(n)$-robustly learnable under log-Lipschitz distributions.}

\begin{proof}[Proof of Corollary~\ref{cor:k-dl}]
  Let $\A$ be the (proper) PAC-learning algorithm for k-DL as
  in~\cite{rivest1987learning}, with sample complexity $\poly(\cdot)$.
  Fix the input dimension $n$, target concept $c$ and distribution
  $D\in \D_n$, and let $\rho=\log n$.  Fix the accuracy parameter
  $0<\varepsilon<1/2$ and confidence parameter $0<\delta<1/2$ and let
  $\eta=1/(1+\alpha)^k$. Set 
  $$\varepsilon_0=C_1\left(\frac{16\varepsilon}{e^4n^{2k+2}}\right)^{C_2}
  \min\set{\left(\frac{16\varepsilon}{e^4n^{2k+2}}\right)^{C_3},n^{-C_4}}\enspace,$$ 
  where the constants are the ones derived in Theorem~\ref{thm:k-cnf}. 
  
  Let $m=\lceil\poly(n,1/\delta,1/\varepsilon_0)\rceil$, and note that
  $m$ is polynomial in $n$, $1/\delta$ and $1/\varepsilon$.

  Let $S\sim D^m$ and $h=\A(S)$.  Let the target and hypothesis be 
  defined as the following decision lists: $c=((K_1,v_1),\ldots,(K_r,v_r))$ and
	$h=((K'_1,v'_1),\ldots,(K'_s,v'_s))$, where the clauses $K_i$ are 
	conjunctions of $k$ literals. Given $i\in\{1,\ldots,r\}$ and 
	$j \in \{1,\ldots,s\}$, define a $k$-CNF formula $\varphi^{(c,h)}_{i,j}$ 
	by writing
\[ \varphi^{(c,h)}_{i,j} = \neg K_1 \wedge \cdots \wedge \neg
  K_{i-1} \wedge K_i \wedge  \neg K'_1 \wedge \cdots \wedge \neg
  K'_{j-1}\wedge K'_j \, . \]
   Notice that the formula $\varphi^{(c,h)}_{i,j}$ represents the set of
   inputs $x\in \X$ that respectively activate vertex $i$ in $c$ and
   vertex $j$ in $h$.
   
Since $ \Prob{x\sim D}{h(x)\neq c(x)}<\varepsilon_0$ with probability at least
$1-\delta$, any $\varphi^{(c,h)}_{i,j}$ that leads to a misclassification must 
have $\satnot(\varphi^{(c,h)}_{i,j})<\varepsilon_0$.
But by Theorem~\ref{thm:k-cnf}, $\satlog(\varphi^{(c,h)}_{i,j})<\frac{16\varepsilon}{e^4n^{2k+2}}$ for all $\varphi^{(c,h)}_{i,j}$ with probability at least
$1-\delta$. 

Hence the probability that a $\rho$-bounded adversary can
make $\varphi^{(c,d)}_{i,j}$ true  is at most $\frac{16\varepsilon}{e^4n^{2k+2}}$.  
Taking a union bound over all possible choices of $i$ and $j$ (there are 
$\sum_{i=1}^k{n\choose k}\leq k\left(\frac{en}{k}\right)^k$ possible clauses
in $k$-decision lists, which gives us a crude estimate of 
$k^2\left(\frac{en}{k}\right)^{2k}\leq \frac{e^4n^{2k+2}}{16} $ choices of 
$i$ and $j$) we conclude that
$\risk^E_{\log} (h,c) < \varepsilon$.

\end{proof}

\chapter{Proofs from Chapter~6}

\section{Proof of Lemma~\ref{lemma:occam}}
\label{app:occam}

\textbf{Lemma~\ref{lemma:occam}.}
\emph{Let $\C$ be a concept class and $\mathcal{H}$ a hypothesis class.
Any $\rho$-robust ERM algorithm using $\mathcal{H}$ on a sample of size $m\geq \frac{1}{\epsilon}\left(\log |\mathcal{H}_n|+\log\frac{1}{\delta}\right)$ is a $\rho$-robust learner for  $\C$.}

\begin{proof}
Fix a target concept $c\in\C$ and the target distribution $D$ over $\X$. 
Define a hypothesis $h$ to be ``bad'' if $R_\rho^D(c,h) \geq \epsilon$. 
Note that any robust ERM algorithm will be robustly consistent on the training sample by the realizability assumption.
Let $\E_h$ be the event that $m$ independent examples drawn from $\EX(c, D)$ are all robustly consistent with $h$. 
Then, if $h$ is bad, we have that $\Prob{}{\E_h}\leq (1-\epsilon)^m\leq e^{-\epsilon m}$.
Now consider the event $\E = \bigcup_{h\in\mathcal{H}}\E_h$.
By the union bound, we have $$\Prob{}{\E} \leq \sum_{h\in\mathcal{H}}\Prob{}{\E_h}\leq \abs{\mathcal{H}}e^{-\epsilon m}\enspace.$$
Then, bounding the RHS by $\delta$, we have that whenever $m\geq \frac{1}{\epsilon}\left(\log |\mathcal{H}_n|+\log\frac{1}{\delta}\right)$, no bad hypothesis is \emph{robustly} consistent with $m$ random examples drawn from $\EX(c, D)$. 
If a hypothesis is not bad, it has robust risk bounded above by $\epsilon$, as required.
\end{proof}

\section{Proof of Lemma~\ref{lemma:rob-vc}}
\label{app:lemma:rob-vc}

\textbf{Lemma~\ref{lemma:rob-vc}.}
\emph{Let $\C$ be a concept class and $\mathcal{H}$ a hypothesis class. Any $\rho$-robust ERM algorithm using $\mathcal{H}$ on a sample of size $m\geq \frac{1}{\epsilon}\left(\RVClong\log(1/\epsilon)+\log\frac{1}{\delta}\right)$ is a $\rho$-robust learner for  $\C$.
}

\begin{proof}
The proof is very similar to the VC dimension upper bound in PAC learning.
The main distinction is that instead of looking at the error region of the target and any function in $\mathcal{H}$, we must look at its $\rho$-expansion.
Namely, we let the target $c\in\C$ be fixed and, for $h\in\mathcal{H}$, we consider the function $(c\oplus h)_\rho: x\mapsto \mathbf{1}[\exists z\in B_\rho(x)\st c(z) \neq h(z)]$ and define a new concept class $\Delta_{c,\rho}(\mathcal{H})=\set{(c\oplus h)_\rho \given h\in\mathcal{H}}$.
It is easy to show that $\VC(\Delta_{c,\rho}(\mathcal{H}))\leq \RVC_\rho(\C,\mathcal{H})$, as any sign pattern achieved on the LHS can be achieved on the RHS.

The rest of the proof follows from the definition of an $\epsilon$-net and the bound on the growth function of $\Delta_{c,\rho}(\mathcal{H})$.

First, define the class $\Delta_{c,\rho,\epsilon}(\mathcal{H})$ as $\set{\tilde{c}\in\Delta_{c,\rho}(\mathcal{H})\given \Prob{x\sim D}{\tilde{c}(x)=1}\geq \epsilon}$, i.e., the set of functions in $\Delta_{c,\rho}(\mathcal{H})$ which have a robust risk greater than $\epsilon$. 
Recall that a set $S$ is an $\epsilon$-net for $\Delta_{c,\rho}(\mathcal{H})$ if for every $\tilde{c}\in \Delta_{c,\rho,\epsilon}(\mathcal{H})$, there exists $x\in S$ such that $\tilde{c}(x)=1$. 
We want to bound the probability that a sample $S\sim D^m$ fails to be an $\epsilon$-net for the class $\Delta_{c,\rho}(\mathcal{H})$, as if $S$ is an $\epsilon$-net, then any robustly consistent $h\in\mathcal{H}$ on $S$ will have robust risk bounded above by $\epsilon$.
As with the standard VC dimension, a sample $S$ will be drawn in two phases.
First draw a sample $S_1\sim D^m$ and let $\E_1$ be the event that $S_1$ is not an $\epsilon$-net for $\Delta_{c,\rho}(\mathcal{H})$. 
Now, suppose $\E_1$ occurs.
This means there exists $\tilde{c} \in \Delta_{c,\rho,\epsilon}(\mathcal{H})$ such that $\tilde{c}(x)=0$ for all the points $x\in S_1$.
Fix such a $\tilde{c}$ and draw a second sample $S_2\sim D^m$.
Then, letting $X$ be the random variable representing the number of points in $S_2$ that are such that $\tilde{c}(x)=1$, we can use Chernoff bound to show that
\begin{equation}
\label{eqn:chernoff-lb}
\Prob{}{X<\epsilon m /2}
\leq 2\exp \left(-\frac{\epsilon m}{12}\right)
\enspace,
\end{equation}
ensuring that whenever $\epsilon m \geq 24$, the probability that at least $\epsilon m/2$ points in $S_2$ satisfy $\tilde{c}(x)=1$ is bounded below by $1/2$.

Now, consider the event $\E_2$ where a sample $S=S_1 \cup S_2$ of size $2m$ such that $|S_1|=|S_2|=m$ is drawn from $\EX(c,D)$ and there exists a concept $\tilde{c}\in \Pi_{\Delta_{c,\rho,\epsilon}(\mathcal{H})}(S)$ such that $|\set{x\in S\given \tilde{c}(x)=1|\geq \epsilon m /2}$ and $\tilde{c}(x)=0$ for all $x\in S_1$, where $\Pi_{\Delta_{c,\rho,\epsilon}(\mathcal{H})}(S)$ is the set all possible dichotomies on $S$ induced by $\Delta_{c,\rho,\epsilon}(\mathcal{H})$.
Then $\Prob{}{\E_2}\geq \frac{1}{2}\Prob{}{\E_1}$ from Equation~\ref{eqn:chernoff-lb}.
Now, the probability that $\E_2$ happens for a fixed $\tilde{c}\in\Delta_{c,\rho,\epsilon}(\mathcal{H})$ is 
\begin{equation*}
\frac{{m\choose \epsilon m/2}}{{2m\choose \epsilon m/2}}
\leq 2^{-\epsilon m/2}
\enspace.
\end{equation*}
Finally, letting $d=\RVC_\rho(\C,\mathcal{H})$ we can bound the probability of $\E_1$ using the union bound:
\begin{align*}
\Prob{}{\E_1}&\leq 2\Prob{}{\E_2} \\
&\leq 2 \abs{\Pi_{\Delta_{c,\rho,\epsilon}(\mathcal{H})}(S)} 2^{-\epsilon m/2}\\
&\leq 2 \abs{\Pi_{\Delta_{c,\rho}(\mathcal{H})}(S)} 2^{-\epsilon m/2}\\
&\leq 2 \left(\frac{2em}{d}\right)^d 2^{-\epsilon m/2} \tag{Sauer's Lemma}
\enspace. 
\end{align*}
Thus, there exists a universal constant such that provided $m$ is larger than the bound given in the statement of the theorem, $\Prob{}{\E_1}<\delta$, as required.
\end{proof}

\section{Bounds on the Restricted VC dimension}
\label{app:restricted-vc-bounds}

We start with conjunctions.

\begin{lemma}
For $\rho\geq 2$, the class of  conjunctions {\Conj} has $\rho$-restricted VC dimension $\VCrho({\Conj_n})=\VC({\Conj_n})=n$.
Otherwise, if $\rho=1$, then  $\VCrho({\Conj_n})=2$.
\end{lemma}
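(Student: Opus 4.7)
The plan is to establish both the trivial upper bound $\VCrho(\Conj_n) \leq \VC(\Conj_n) = n$ (any $\rho$-restricted shattered set is in particular a shattered set) and a matching lower bound by exhibiting, for each $\rho \geq 2$, an explicit shattered set of size $n$ that lies in a ball of radius $\rho$ around one of its elements; the case $\rho = 1$ requires an additional dedicated argument for the upper bound.

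For $\rho \geq 2$, I will take the standard witness set $X = \{e_1, \ldots, e_n\}$ consisting of the coordinate vectors. It is shattered by $\Conj_n$: for each $S \subseteq [n]$ the conjunction $c_S = \bigwedge_{j \notin S} \bar{x_j}$ satisfies $c_S(e_i) = 1$ iff $i \in S$, because $(e_i)_j = 0$ for all $j \neq i$. Since every pair $e_i, e_j$ has Hamming distance $2$, the whole set lies in $B_2(e_1) \subseteq B_\rho(e_1)$ with $e_1 \in X$. Combined with $\VCrho(\Conj_n) \leq n$ this yields equality.

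For $\rho = 1$, the lower bound $\VCrho(\Conj_n) \geq 2$ is witnessed by $\{\mathbf{0}, e_1\} \subseteq B_1(\mathbf{0})$, shattered using the four conjunctions $x_1 \wedge \bar{x_1}$, the empty conjunction, $x_1$, and $\bar{x_1}$. The main content is the matching upper bound. I will show that no set of three distinct points $X = \{x^*, y, z\}$ with $y, z \in B_1(x^*)$ and $x^* \in X$ can be shattered by conjunctions. Since $y$ and $z$ each differ from $x^*$ by a single coordinate flip at distinct positions $i \neq j$ (if $i = j$ then $y = z$), I will rule out the dichotomy $(x^* \mapsto 0,\ y \mapsto 1,\ z \mapsto 1)$. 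The key syntactic observation is that a literal on a variable outside $\{i, j\}$ takes the same value on all three points, a literal on variable $i$ agrees on $x^*$ and $z$, and a literal on variable $j$ agrees on $x^*$ and $y$. The constraint $c(y) = 1$ forces every literal of $c$ on variable $i$ to be true on $y$ and hence false on $x^*$; but such a literal is then also false on $z$, contradicting $c(z) = 1$. Symmetrically, $c$ contains no literal on variable $j$. The remaining literals of $c$ evaluate identically on $x^*$, $y$, $z$, so $c(x^*) = c(y) = c(z)$, incompatible with the target dichotomy.

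The argument is essentially syntactic, and the only subtlety is the geometric bookkeeping about which coordinates each candidate point flips relative to the centre $x^*$; I do not anticipate any significant obstacle beyond being careful with the degenerate cases where $y$ or $z$ coincides with $x^*$, which immediately rule out shattering since the three points fail to be distinct.
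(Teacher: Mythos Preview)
Your proof is correct and follows essentially the same approach as the paper: the same witness set $\{e_i\}_{i=1}^n$ for $\rho\ge 2$, and for $\rho=1$ the same impossible dichotomy $(x^*\mapsto 0,\ y\mapsto 1,\ z\mapsto 1)$ ruled out via the observation that a literal on the flipped coordinate of $y$ (resp.\ $z$) cannot be in the conjunction. Your version is slightly more complete in that you also exhibit the lower bound $\VCrho(\Conj_n)\ge 2$ for $\rho=1$, which the paper leaves implicit.
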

\begin{proof}
Let $\rho\geq 2$, and consider the set $\set{e_i}_{i=1}^n$, which is shattered by {\Conj} (if $e_i$ has labelling $0$, let literal $\overline{x_i}$ be in the conjunction, otherwise do nothing). 
Note that all points are at most two bits away from $e_1$. 
Moreover, we  have that $\VC({\Conj})=n$ \citep{natschlager1996exact}, which upperbounds its restricted counterpart.

Now, for $\rho=1$, let $x^*\in\boolhc$ and consider any subset $X\subseteq B_1(x^*)$ of size at least 3 such that $x^*\in X$ (without loss of generality, let $n\geq 3$; in cases where $n=1$ or $2$, we have $\VCrho({\Conj_n})=n$).
Consider the labelling $c:X\to\{0,1\}$ such that $c(x^*)=0$ and $c(x)=1$ for all $x\in X\setminus\set{x^*}$.  
We claim that $c$ cannot be achieved by a conjunction.
Indeed, there must be a literal $l$ in $c$ such that $l(x^*)=0$.
Let $j$ be the index of the variable in $l$, i.e., $l=x_j$ or $\overline{x_j}$.
Since any $x\in X$ is of the form $x^*\oplus e_i$ for some $i\in[n]$ there exists at most one $x\in X$ such that $l(x)=1$, namely $x=x^*\oplus e_j$, as required. 
\end{proof}

We thus get the following corollary.

\begin{corollary}
Given $\rho\geq2$, there exists a distribution on $\boolhc$ such that any $\rho$-robust learning algorithm for {\Conj} has an expected number of queries $\Omega(n)$.
\end{corollary}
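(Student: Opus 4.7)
The plan is to invoke Corollary~\ref{cor:vc-queries} directly, with the preceding lemma supplying the required value of the $\rho$-restricted VC dimension. More precisely, the preceding lemma establishes that $\VCrho(\Conj_n) = n$ for every $\rho \geq 2$, witnessed by the set $\set{e_i}_{i=1}^n \subseteq B_2(e_1)$ together with the standard shattering argument (for each target labelling, include $\overline{x_i}$ in the conjunction exactly when $e_i$ is to be labelled $0$). Thus for any $\rho \geq 2$, the hypothesis class $\Conj_n$ has a shattered set of size $n$ lying entirely within a single $\rho$-ball.

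Given this, the corollary is essentially immediate: applying Corollary~\ref{cor:vc-queries} with $\C = \Conj_n$ and $d = \VCrho(\Conj_n) = n$ yields the existence of a distribution on $\boolhc$ such that any $\rho$-robust learning algorithm for $\Conj$ must make $\Omega(n)$ queries to the $\rho$-$\LEQ$ oracle in expectation. Unpacking the construction inside the proof of Theorem~\ref{thm:lit-queries}, the distribution that witnesses this lower bound is the Dirac mass concentrated at the centre $e_1$ of the shattered set, so that every query to $\EX$ is uninformative and the learner must rely entirely on $\LEQ$ to distinguish among the $2^n$ possible labellings of the shattered set.

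I do not foresee any serious obstacle, since all the heavy lifting has been done upstream: the combinatorial content lives in the lemma bounding $\VCrho(\Conj_n)$, and the Yao-style randomised lower bound lives in Theorem~\ref{thm:lit-queries} (via Corollary~\ref{cor:vc-queries}). The only point worth being explicit about is the quantifier order, namely that a single distribution works against \emph{every} learning algorithm; this is exactly what the corollary delivers, so no additional argument is needed.
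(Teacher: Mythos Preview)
Your proposal is correct and matches the paper's approach exactly: the paper simply states ``We thus get the following corollary'' after the lemma computing $\VCrho(\Conj_n)=n$, so the intended proof is precisely to feed that value into Corollary~\ref{cor:vc-queries}. Your additional unpacking of the witnessing distribution (the Dirac mass at the centre of the shattered set) is consistent with the construction in Theorem~\ref{thm:lit-queries} and adds helpful detail the paper omits.
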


We now bound the restricted VC dimension of decision lists.

\begin{lemma}
For $\rho\geq k$, the class of $k$-decision lists {$k$-\dl} has $\rho$-restricted VC dimension $\VCrho(k\text{-}\dlm)=\widetilde{\Theta}(\VC(k\text{-}\dlm))=\widetilde{\Theta}(n^k)$.
\end{lemma}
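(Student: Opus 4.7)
The plan is to prove both inequalities separately, with the upper bound being immediate and the lower bound requiring an explicit shattered set.

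For the upper bound, I will simply invoke the definition: any $\rho$-restricted shattered set is in particular a shattered set, so $\VCrho(k\text{-}\dlm) \leq \VC(k\text{-}\dlm) = \widetilde{O}(n^k)$. The standard bound $\widetilde{O}(n^k)$ on the VC dimension was already observed earlier in the thesis via the size estimate $|k\text{-}\dlm| = O(3^{|C_{n,k}|}|C_{n,k}|!)$ together with $|C_{n,k}| = O(n^k)$.

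For the lower bound, I will take $x^* = \mathbf{0}$ and consider the set
\[
X = \{x_S : S \subseteq [n],\ |S| = k\},
\]
where $x_S \in \boolhc$ has a $1$ in position $i$ iff $i \in S$. Every point in $X$ has Hamming distance exactly $k$ from $\mathbf{0}$, so $X \subseteq B_k(x^*) \subseteq B_\rho(x^*)$ since $\rho \geq k$. Moreover $|X| = \binom{n}{k} = \Omega(n^k)$ for fixed $k$.

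To verify that $X$ is shattered by $k$-{\dl}, fix an arbitrary labelling $f:X\to\{0,1\}$ and enumerate $X = \{x_{S_1}, \dots, x_{S_m}\}$. Build the $k$-decision list
\[
\big(K_{S_1}, f(x_{S_1})\big),\ \big(K_{S_2}, f(x_{S_2})\big),\ \dots,\ \big(K_{S_m}, f(x_{S_m})\big),\ (\mathtt{true},0),
\]
where $K_S := \bigwedge_{i \in S} x_i$ is a monotone conjunction of length $k$. The key observation is that $x_T \models K_S$ iff $S \subseteq T$; since $|S|=|T|=k$, this is equivalent to $S = T$. Hence evaluating the list on $x_{T} \in X$ hits exactly the node associated with $T$, producing the prescribed label $f(x_T)$. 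Thus $X$ is shattered, giving $\VCrho(k\text{-}\dlm) = \Omega(n^k)$ and combining with the upper bound yields the claim.

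I do not foresee a substantial obstacle here: the construction exploits the antichain structure of length-$k$ subsets (no proper containments among equal-sized sets), which is exactly what is needed to make each conjunctive clause in the list fire on a unique element of $X$. The hypothesis $\rho \geq k$ is used only to guarantee that the Hamming-distance-$k$ witnesses $x_S$ all lie within the prescribed ball around $x^*=\mathbf{0}$; if $\rho < k$, this argument breaks down and in fact one would need a different (smaller) construction.
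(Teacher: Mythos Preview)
Your approach is essentially the same as the paper's: both construct the shattered set from indicator vectors of $k$-subsets of $[n]$, and both use the corresponding monotone conjunctions $K_S=\bigwedge_{i\in S}x_i$ as the clauses of the decision list, relying on the antichain property that $x_T\models K_S$ iff $S=T$ when $|S|=|T|=k$.

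There is one small technical slip to flag. The definition of the $\rho$-restricted VC dimension in the paper requires the centre $x^*$ to belong to the shattered set $X$. You take $x^*=\mathbf{0}$, but $\mathbf{0}\notin X$ in your construction (every $x_S$ has exactly $k$ ones). If instead you pick $x^*=x_S\in X$, the pairwise distances can be as large as $2k$, so $\rho\geq k$ would no longer suffice. The paper handles this by adjoining $\mathbf{0}$ to $X$: since $\mathbf{0}$ satisfies none of the $K_S$, it always exits at the default node $(\mathtt{true},b_0)$, so $X\cup\{\mathbf{0}\}$ is still shattered and now contains its centre. With that one-line fix your argument is correct and matches the paper's.
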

\begin{proof}
Consider the $n \choose k$ possible conjunctions of size exactly $k$ with only positive literals, which will represent the possible clauses in a given decision list. 
Let $K_1,K_2,\dots, K_d$ be an ordering of these conjunctions, and note that $d=\Theta(n^k)$ from the inequality $(n/k)^k \leq {n \choose k} \leq (en/k)^k$, where $k$ is considered to be a constant.
Let $x^{K_j}\in\boolhc$ be such that $x^{K_j}_i=1$ if and only if $x_i\in K_j$, i.e., a bit $i$ in $x^{K_j}$ is the indicator function of whether the variable $x_i$ appears in clause $K_j$. 
Note that, by construction, $x^{K_j}$ satisfies $K_i$ if and only if $i=j$. 

We now let $X=\set{\mathbf{0}}\cup \set{x^{K_j}}_{j=1}^d$ and let $b_0,b_1,\dots,b_d$ be a labelling of points in $X$.
The decision list $$(K_1,b_1),\dots,(K_{d},b_{d}),(\mathsf{true},b_0)$$ is clearly consistent with this labelling, as an input $x^{K_j}$ will exit at depth $j$ in the decision list on the conjunctive clause $K_j$, and $\mathbf{0}$ will exit at depth $d+1$ on default value $b_0$.
Finally, note that all points in $X$ are at most $k$ bits away from $\mathbf{0}$.
\end{proof}

We then get the following corollary.

\begin{corollary}
Given $\rho\geq k$, any $\rho$-robust learning algorithm for the class of $k$-decision lists has $\Omega(n^k)$ expected number of queries to the $\rho$-$\LEQ$ oracle.
\end{corollary}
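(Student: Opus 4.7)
The plan is to obtain this corollary as an immediate consequence of the preceding lemma together with Corollary~\ref{cor:vc-queries}. The preceding lemma shows that for $\rho \geq k$ we have $\VCrho(k\text{-}\dlm) = \widetilde{\Theta}(n^k)$, and in particular $\VCrho(k\text{-}\dlm) = \Omega(n^k)$. Corollary~\ref{cor:vc-queries} states that if $\C$ is a concept class with $\VCrho(\C) = d$, then there exists a distribution on $\X$ such that any $\rho$-robust learning algorithm for $\C$ must make $\Omega(d)$ queries in expectation to the $\rho$-$\LEQ$ oracle. Chaining these two facts, with $\C = k\text{-}\dlm$ and $d = \Omega(n^k)$, yields the claimed lower bound.

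Concretely, I would present the proof as a one-line invocation: by the preceding lemma, $\VCrho(k\text{-}\dlm) = \Omega(n^k)$, and applying Corollary~\ref{cor:vc-queries} gives the existence of a distribution on $\boolhc$ witnessing an $\Omega(n^k)$ expected query lower bound. Since $k$ is treated as a fixed constant throughout this section, the asymptotics in $n$ are unaffected by hidden $k$-dependent factors.

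There is no real obstacle here, as the technical work has already been done: the lemma supplies the combinatorial hard instance (the shattered set $X = \set{\mathbf{0}} \cup \set{x^{K_j}}_{j=1}^d$ lying inside $B_k(\mathbf{0})$), and Corollary~\ref{cor:vc-queries} supplies the translation from shattering to expected query complexity via Yao's minimax principle (as established in the proof of Theorem~\ref{thm:lit-queries}). The only thing to double-check is that the hypothesis $\rho \geq k$ in the lemma is what allows us to apply Corollary~\ref{cor:vc-queries} with the same $\rho$, which is immediate since the shattered set witnessing $\VCrho$ is contained in a radius-$k$ ball and hence \emph{a fortiori} in a radius-$\rho$ ball.
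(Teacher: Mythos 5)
Your proof is correct and matches the paper's intended argument exactly: the paper states this corollary immediately after the lemma on $\VCrho(k\text{-}\dlm)$ with no further proof, relying on precisely the chain you describe (the explicit shattered set of size $\Theta(n^k)$ inside $B_k(\mathbf{0})\subseteq B_\rho(\mathbf{0})$, fed into Corollary~\ref{cor:vc-queries}). Your added check that the radius-$k$ ball containment suffices for any $\rho\geq k$ is the right thing to verify and is handled correctly.
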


We now turn our attention to linear classifiers. 

\begin{lemma}
\label{lemma:restricted-ltf}
For $\rho\geq 1$, the class of linear threshold functions {$\Halfspaces$} on $\boolhc$ has $\rho$-restricted VC dimension $\VCrho(\Halfspaces)=\VC(\Halfspaces)=n+1$.
\end{lemma}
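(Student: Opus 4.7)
The plan is to prove both bounds separately. The upper bound $\VCrho(\Halfspaces) \leq n+1$ follows immediately from the definition, since any $\rho$-restricted shattered set is in particular a shattered set, combined with the well-known fact (recalled in Section~\ref{sec:pac-algos}) that $\VC(\Halfspaces) = n+1$. So the real content is the matching lower bound: I need to exhibit a set of $n+1$ points that is shattered by halfspaces and that is contained in some ball $B_\rho(x^*)$.

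For the lower bound, I will re-use the standard witness set for the unrestricted VC-dimension lower bound, namely $X = \{\mathbf{0}, e_1, \ldots, e_n\}$ where $e_i$ denotes the $i$-th standard basis vector. The key observation is that every point in $X$ lies within Hamming distance $1$ of $\mathbf{0}$, and hence $X \subseteq B_1(\mathbf{0}) \subseteq B_\rho(\mathbf{0})$ for any $\rho \geq 1$, so the restricted-VC locality constraint is automatically satisfied with centre $x^* = \mathbf{0}$.

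It then remains to verify that $X$ is genuinely shattered by $\Halfspaces$. Given an arbitrary labelling $(b_0, b_1, \ldots, b_n) \in \{0,1\}^{n+1}$ of $(\mathbf{0}, e_1, \ldots, e_n)$, I will exhibit an explicit halfspace $h(x) = \sgn(w\cdot x + b)$ that realises it. Choose the bias $b \in \{+\tfrac{1}{2}, -\tfrac{1}{2}\}$ according to the desired value $b_0$ of $h(\mathbf{0})$, and then set each weight $w_i$ so that $w_i + b$ has the sign prescribed by $b_i$ (concretely, $w_i = 0$ if $b_i = b_0$, $w_i = +1$ if $b_i = 1 > b_0$, and $w_i = -1$ if $b_i = 0 < b_0$). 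This gives $n+1$ degrees of freedom matching the $n+1$ constraints, and a direct check shows the labelling is realised on every point of $X$.

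There is no real obstacle here since the classical witness for $\VC(\Halfspaces) = n+1$ already happens to be concentrated in a Hamming ball of radius $1$; the restriction to a $\rho$-ball with $\rho \geq 1$ is therefore vacuous. Combining the two bounds yields $\VCrho(\Halfspaces) = n+1$, and Theorem~\ref{thm:lit-queries} together with Remark~\ref{rmk:restricted-vc-lit} then immediately implies an $\Omega(n)$ expected query lower bound to $\rho$-$\LEQ$ for any $\rho$-robust learner for linear classifiers on $\boolhc$.
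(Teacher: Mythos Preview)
Your proposal is correct and essentially identical to the paper's proof: both use the witness set $X=\{\mathbf{0},e_1,\ldots,e_n\}$, note that it lies in $B_1(\mathbf{0})$, and exhibit an explicit halfspace for each labelling. The only difference is cosmetic---you use bias $\pm\tfrac{1}{2}$ and weights in $\{-1,0,1\}$, while the paper sets $w_0=b_0$ and $w_i=b_i-b_0$; your choice actually sidesteps a potential sign-convention issue at $\sgn(0)$.
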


\begin{proof}
It suffices to use the same set of inputs and functions as the standard VC dimension argument (where the VC dimension is $n+1$).
Indeed, consider the set $X=\set{\mathbf{0},e_1,\dots,e_n}$ and a labelling $b_0,b_1,\dots,b_n$.
Then the linear threshold function $\sgn(w_0+\sum_{i=1}^n w_i x_i)$ with $w_0=b_0$ and $w_i=b_i-b_0$ is consistent with the labelling of $X$.
Finally, note that all points in $X$ are at most one bit away from $\mathbf{0}$.
\end{proof}

\begin{corollary}
The class of linear threshold functions {$\Halfspaces^W$} on $\boolhc$ with integer weights $w_0,w_1,\dots,w_n$ such that $\sum_i \abs{w_i}\leq W$, where $W\geq 2n+1$ has $\rho$-restricted VC dimension $\VCrho(\Halfspaces^{2n+1})=\Theta(\VC(\Halfspaces^{2n+1}))=\Theta(n)$.
\end{corollary}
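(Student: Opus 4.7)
The plan is to argue the two inequalities $\VCrho(\Halfspaces^{2n+1}) = O(n)$ and $\VCrho(\Halfspaces^{2n+1}) = \Omega(n)$ separately, using the same shattered set $X = \{\mathbf{0}, e_1, \dots, e_n\}$ as in Lemma~\ref{lemma:restricted-ltf} for the lower bound and the obvious monotonicity for the upper bound.

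For the upper bound, the plan is simply to chain three trivial inequalities. Restricting the witness set to lie inside a ball can only shrink the class of achievable dichotomies, so $\VCrho(\Halfspaces^{2n+1}) \leq \VC(\Halfspaces^{2n+1})$. Passing to the larger class of all halfspaces gives $\VC(\Halfspaces^{2n+1}) \leq \VC(\Halfspaces) = n+1$, which yields the desired $O(n)$ bound.

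For the lower bound, the plan is to re-use the set $X = \{\mathbf{0}, e_1, \dots, e_n\}$ of size $n+1$, which lies entirely inside $B_1(\mathbf{0})$ and hence is admissible for the $\rho$-restricted VC dimension whenever $\rho \geq 1$. Given an arbitrary target labelling $(b_0, b_1, \dots, b_n) \in \{0,1\}^{n+1}$, I would exhibit integer weights realizing it via $h(x) = \mathbf{1}[w_0 + \sum_{i=1}^n w_i x_i \geq 0]$ as follows: set $w_0 = 1$ if $b_0 = 1$ and $w_0 = -1$ if $b_0 = 0$; and for each $i \geq 1$, set $w_i = 0$ whenever $b_i = b_0$, $w_i = 2$ when $(b_0, b_i) = (0,1)$, and $w_i = -2$ when $(b_0, b_i) = (1,0)$. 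A direct check shows $h(\mathbf{0}) = b_0$ and $h(e_i) = b_i$, so $X$ is shattered. The key budget calculation is
\[
\sum_{i=0}^n |w_i| \;=\; |w_0| + \sum_{i=1}^n |w_i| \;\leq\; 1 + 2n \;=\; 2n+1,
\]
so the classifier lies in $\Halfspaces^{2n+1}$. This yields $\VCrho(\Halfspaces^{2n+1}) \geq n+1$.

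The only non-routine step is the weight-budget calculation, and even that is essentially bookkeeping: the factor of $2$ on each $w_i$ and the unit bias $w_0$ conspire to give exactly the $2n+1$ cap in the hypothesis. Combining the two bounds gives $\VCrho(\Halfspaces^{2n+1}) = \Theta(n)$ and, since $\VC(\Halfspaces^{2n+1}) \in [n+1, n+1]$ by the same argument, also $\VCrho(\Halfspaces^{2n+1}) = \Theta(\VC(\Halfspaces^{2n+1}))$.
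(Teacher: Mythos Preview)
Your proposal is correct and follows essentially the same approach as the paper: both use the shattered set $X=\{\mathbf{0},e_1,\dots,e_n\}\subseteq B_1(\mathbf{0})$ from Lemma~\ref{lemma:restricted-ltf} and check that the shattering halfspaces obey the weight budget $\sum_i|w_i|\le 2n+1$, together with the trivial upper bound via $\VC(\Halfspaces)=n+1$. The paper's proof is a one-line pointer to that lemma (there the labels are implicitly $\pm 1$, so $w_0=b_0\in\{\pm1\}$ and $w_i=b_i-b_0\in\{-2,0,2\}$, giving exactly the same $2n+1$ total), whereas you re-derive equivalent weights for $\{0,1\}$ labels; the content is the same.
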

\begin{proof}
This is a consequence of the proof of Lemma~\ref{lemma:restricted-ltf}, where the functions shattering the set of size $n+1$ satisfy $\sum_i \abs{w_i}\leq 2n+1\leq W$.
\end{proof}

In Theorem~\ref{thm:ltf-bool-df}, we had a query upper bound of the form $O(W^2\log n)$. 
Now we show that if $W\geq 2n+1$, we can get the following lower bound.

\begin{corollary}
Given $\rho\geq 1$, any $\rho$-robust learning algorithm for the class of linear threshold functions with integer weights $w_0,w_1,\dots,w_n$ satisfying $\sum_i \abs{w_i}\leq W$, where $W\geq 2n+1$ has $\Omega(n)$ expected number of queries to the $\rho$-$\LEQ$ oracle.
\end{corollary}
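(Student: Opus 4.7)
The plan is to derive this corollary as a direct application of the general lower bound machinery developed earlier in the chapter, using the restricted VC dimension bound just established for $\Halfspaces^W$ with $W\geq 2n+1$.

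First, I would invoke the preceding corollary, which established $\VCrho(\Halfspaces^{2n+1}) = \Theta(n)$. Since the concept class $\Halfspaces^W$ with $W\geq 2n+1$ contains $\Halfspaces^{2n+1}$ as a subclass, and since the restricted VC dimension is monotone under inclusion of concept classes, we immediately get $\VCrho(\Halfspaces^W) \geq \VCrho(\Halfspaces^{2n+1}) = \Omega(n)$. Concretely, the shattering witness is the same set $X=\set{\mathbf{0},e_1,\dots,e_n}$ used in the proof of Lemma~\ref{lemma:restricted-ltf}: all points in $X$ lie within Hamming distance $1$ of $\mathbf{0}$, and the shattering functions constructed there have weight sum at most $2n+1 \leq W$, so they remain in $\Halfspaces^W$.

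Second, I would apply Corollary~\ref{cor:vc-queries}, which states that a concept class of $\rho$-restricted VC dimension $d$ admits a distribution on $\X$ forcing any $\rho$-robust learning algorithm to make $\Omega(d)$ queries to the $\rho$-$\LEQ$ oracle in expectation. Plugging in $d = \Omega(n)$ yields the stated $\Omega(n)$ expected query lower bound.

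There is no real obstacle here: the work has been done in establishing the $\rho$-restricted VC dimension bound for the weight-constrained class and in the proof of Theorem~\ref{thm:lit-queries}/Corollary~\ref{cor:vc-queries} via Yao's minimax principle. The only mild subtlety worth flagging in the write-up is that the restricted VC lower bound transfers from $\Halfspaces^{2n+1}$ to $\Halfspaces^W$ for any $W\geq 2n+1$ by the inclusion of classes, which is why the hypothesis $W\geq 2n+1$ is exactly what is needed to keep the shattering halfspaces in the class.
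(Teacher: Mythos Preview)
Your proposal is correct and matches the paper's approach: the paper states this corollary immediately after establishing $\VCrho(\Halfspaces^{2n+1})=\Theta(n)$ without writing out a separate proof, treating it as a direct consequence of that bound together with Corollary~\ref{cor:vc-queries}. Your explicit mention of the class-inclusion monotonicity (why $W\geq 2n+1$ suffices) is a helpful clarification that the paper leaves implicit.
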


\chapter{Discussions from Chapter~6}
\label{app:sc-lb-obstacles}

The discussions below complement the summary and open problems of Section~\ref{sec:lq-summary}. 
In Section~\ref{sec:sc-ub-leq}, we derived sample complexity upper bounds for \emph{robustly consistent learners}, i.e., learning algorithms that return a hypothesis with zero empirical robust loss (which is what any robust ERM algorithm would do as our notion of robustness implies realizability). 
The upper bounds are of the form $O(\log |\C|)$ and $O(\RVClong$, where $\RVClong$ is the  VC dimension of the robust loss between functions from $\C$ and $\H$.

\section{A Closer Look at $\RVClong$}
\label{app:rvc-closer-look}

We know that the VC dimension of the robust loss for $\C$ on $\boolhc$ is 1 whenever $\rho=n$ (or more generally, for any input space when the perturbation region is the whole instance space, i.e., $\U(x)=\X$).
When $\rho=0$, we recover the (standard) VC dimension.
In an attempt to understand the behaviour of the complexity measure $\RVClong$ better, we study the case $\rho=n-1$ below.

\begin{lemma}
The VC dimension of the robust loss of any concept class on $\X=\boolhc$ for $\rho = n-1$ is at most 2. 
\end{lemma}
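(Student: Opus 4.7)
My plan is to give a direct structural characterization of the robust-loss functions $\ell_{n-1}(c,h)$ on $\X = \boolhc$, and then read off the VC bound from this very restricted set of possibilities.

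The key observation is that for $\rho = n-1$, the Hamming ball $B_{n-1}(x)$ is exactly $\boolhc \setminus \{\bar{x}\}$, where $\bar{x}$ denotes the antipode of $x$ (the point obtained by flipping all $n$ bits). Consequently, $\ell_{n-1}(c,h)(x) = 0$ if and only if every point $z \neq \bar{x}$ satisfies $c(z)=h(z)$, i.e., if and only if the disagreement set $D(c,h) := \{z \in \boolhc : c(z) \neq h(z)\}$ is contained in $\{\bar{x}\}$. This reduces the analysis to three mutually exclusive cases governed entirely by $D(c,h)$.

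Concretely, I would split into cases: (i) if $D(c,h) = \emptyset$, then $\ell_{n-1}(c,h) \equiv 0$; (ii) if $D(c,h) = \{z_0\}$ is a singleton, then $\ell_{n-1}(c,h)(x) = 0$ iff $x = \bar{z_0}$, so $\ell_{n-1}(c,h) = \mathbf{1}[x \neq \bar{z_0}]$; (iii) if $|D(c,h)| \geq 2$, then no $x$ can have $D(c,h) \subseteq \{\bar{x}\}$, so $\ell_{n-1}(c,h) \equiv 1$. Hence the class $\RLossLong$ is contained in $\{0,1\} \cup \{\mathbf{1}[x \neq y] : y \in \boolhc\}$, regardless of $\C$ and $\H$.

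Finally, I show that the enlarged class above has VC dimension at most $2$, which immediately implies the desired bound. For any three distinct points $x_1,x_2,x_3 \in \boolhc$, the dichotomy $(0,0,1)$ (and, by symmetry, any labeling with exactly one $1$) is not realizable: the constant functions give $(0,0,0)$ or $(1,1,1)$, and a function of the form $\mathbf{1}[x \neq y]$ assigns the label $0$ to at most one of the three points, so it cannot produce two $0$s and one $1$. Thus no triple is shattered, and $\VC(\RLossLong) \leq 2$. The argument is entirely combinatorial; the only ``obstacle'' is spotting the right characterization in terms of the antipode and the disagreement set, after which the case analysis and VC-bound verification are routine.
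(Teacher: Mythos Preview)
Your proof is correct and follows essentially the same route as the paper: both hinge on the observation that $B_{n-1}(x)=\boolhc\setminus\{\bar x\}$ and then rule out a labelling with two $0$'s and one $1$ on any three distinct points. The only cosmetic difference is that you first give a complete structural description of $\RLossLong$ (constants together with the functions $\mathbf{1}[x\neq y]$), whereas the paper argues directly by contradiction on the pattern $(1,0,0)$; the underlying reasoning is identical.
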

\begin{proof}
To show that the VC dimension of the robust los sof any concept class $\C$ is at most 2, let an arbitrary set $X=\set{x_1,x_2,x_3}$ be shattered by $\C$, and consider functions $c_1,c_2$ such that $(c_1\oplus c_2)_{n-1}$ achieves the labelling $(1,0,0)$.
Then there must be a point $x^*$ in $B_{n-1}(x_1)\setminus B_{n-1}(x_2)$ such that $c_1(x^*)\neq c_2(x^*)$, while $c_1$ and $c_2$ agree on $B_{n-1}(x_2)$. 
Since $\X\setminus B_{n-1}(x)=\bar{x}$, where $\bar{x}$ is $x$ with all its bits flipped, it follows that $x^*=\bar{x_2}$.
Thus, $\bar{x_2}$ is the unique point in $\X$ where $c_1$ and $c_2$ disagree.
But $\bar{x_2}$ is both in $B_{n-1}(x_1)$ and $B_{n-1}(x_3)$, giving $(c_1\oplus c_2)_{n-1}(x_3)=1$, a contradiction.

\end{proof}

We now show that is it exactly two in the case of linear classifiers.

\begin{lemma}
The VC dimension of the robust loss of linear threshold functions for $\rho = n-1$ is 2. 
\end{lemma}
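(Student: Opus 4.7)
The upper bound $\VC(\RLossLong) \leq 2$ is immediate from the preceding lemma applied to $\C = \H = \ltfbool$. So the plan is to exhibit a shattered set of size exactly $2$ for the robust loss class $\Delta_{n-1}(\ltfbool) = \{(c \oplus h)_{n-1} : c, h \in \ltfbool\}$, which would give the matching lower bound. I will assume $n \geq 2$ (for $n = 1$ the claim can be checked by hand, since all four boolean functions on $\{0,1\}$ are LTFs).

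The central observation I plan to use is that on $\boolhc$, the ball $B_{n-1}(x)$ is exactly $\boolhc \setminus \{\bar{x}\}$, where $\bar{x}$ denotes the bitwise complement of $x$. Consequently, $(c\oplus h)_{n-1}(x)=1$ precisely when $c$ and $h$ disagree at \emph{some} point other than $\bar{x}$. This means that to realize a prescribed pattern on two points $\{x_1,x_2\}$, I only need control over the location(s) at which a chosen pair of LTFs disagrees. The second key ingredient is that for any $y \in \boolhc$, the singleton indicator $x \mapsto \mathbf{1}[x=y]$ is itself an LTF, since it coincides with $\mathbf{1}\!\left[\sum_i (2y_i-1)x_i \geq \sum_i y_i\right]$; this gives me LTF pairs whose disagreement is exactly a single prescribed point (by pairing this indicator with the constant $0$ function, which is also an LTF).

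The construction then proceeds by picking any two distinct points $x_1, x_2 \in \boolhc$ (so $\bar{x_1} \neq \bar{x_2}$) and realizing each of the four labelings on $\{x_1,x_2\}$:
\begin{itemize}
\item $(0,0)$: take $c=h=0$, so there is no disagreement anywhere.
\item $(1,1)$: take $c=0$ and $h=1$; the disagreement is all of $\boolhc$, which meets both $B_{n-1}(x_1)$ and $B_{n-1}(x_2)$ (both nonempty for $n\geq 2$).
\item $(1,0)$: take $c(x) = \mathbf{1}[x=\bar{x_2}]$ and $h=0$; these LTFs disagree only at $\bar{x_2}$, which lies in $B_{n-1}(x_1) = \boolhc\setminus\{\bar{x_1}\}$ since $x_1 \neq x_2$, but lies outside $B_{n-1}(x_2)$.
\item $(0,1)$: symmetric, using $c(x) = \mathbf{1}[x=\bar{x_1}]$ and $h=0$.
\end{itemize}
Putting these together shows $\{x_1,x_2\}$ is shattered by $\Delta_{n-1}(\ltfbool)$, so $\VC(\RLossLong) \geq 2$, matching the upper bound.

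There is no serious obstacle here: the previous lemma already pins down the ceiling, and the only thing to verify carefully is that singleton indicators on $\boolhc$ are LTFs, which is a standard exercise. The one thing I would double-check while writing is the boundary case $n=1$ (trivial) and that $x_1 \neq x_2$ really is enough to guarantee $\bar{x_2} \in B_{n-1}(x_1)$ (it is, since $B_{n-1}(x_1) = \boolhc \setminus \{\bar{x_1}\}$ and $\bar{x_2} \neq \bar{x_1}$).
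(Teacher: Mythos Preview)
Your proof is correct and takes essentially the same approach as the paper: the paper chooses the specific pair $\{\mathbf{0},\mathbf{1}\}$ and, for the asymmetric labelings, uses the LTFs $\sgn\left(\sum_i x_i - n\right)$ and $\sgn\left(-\sum_i x_i\right)$, which are precisely the singleton indicators $\mathbf{1}[x=\mathbf{1}]=\mathbf{1}[x=\bar{\mathbf{0}}]$ and $\mathbf{1}[x=\mathbf{0}]=\mathbf{1}[x=\bar{\mathbf{1}}]$. Your version is a mild generalization to arbitrary distinct $x_1,x_2$, but the construction and the key observation $B_{n-1}(x)=\boolhc\setminus\{\bar{x}\}$ are identical.
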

\begin{proof}
By the previous lemma, we only need to show that the VC dimension of the robust loss for linear threshold functions is at least $2$ when $\rho=n-1$. 
Consider the set $X=\set{\mathbf{0},\mathbf{1}}\subseteq\boolhc$.
We will look at functions of the form  $(c_1\oplus c_2)_{n-1}$ for $c_1,c_2\in\Halfspaces$, and show that all labellings of $X$ can be achieved. 
Note that $\sgn(0)=1$ by convention.
\begin{itemize}
\item The labelling $(0,0)$ can be achieved by any $(c\oplus c)_{n-1}$, which is constant on the whole input space. 
\item The labelling $(1,1)$ is achieved with $c_1=0$ and $c_2=1$.
\item The labelling $(0,1)$ is achieved with $c_1(x)=\sgn(\sum_{i=1}^n x_i - n)$ and $c_2(x)=0$, as the two functions only differ on $\mathbf{1}$.
\item The labelling $(1,0)$ is achieved with $c_1(x)=\sgn(-\sum_{i=1}^n x_i)$ and $c_2(x)=0$, as the two functions only differ on $\mathbf{0}$.
\end{itemize}
\end{proof}

\section{A Lower Bound Based on $\RVClong$}
\label{app:rvc-lb}

Recall that the proof of the sample complexity upper bound of Lemma~\ref{lemma:rob-vc}, which is linear in $\RVClong$,  is identical in essence to the VC dimension upper bound argument. 
A first attempt at obtaining a sample complexity lower bound for robustly consistent learners would be to use a  similar technique as the lower bound argument for the VC dimension.
Recall that, when showing the lower bound of $\Omega(d/\epsilon)$ in the standard setting, the strategy is to consider a shattered set $X=\set{x_1,\dots,x_d}$ and put most of the mass on $x_1$ and distribute the rest of the mass uniformly among the remaining points. 
The probability of drawing at most half of the points in $X\setminus\set{x_1}$ for a sample $S$ of size $\Omega(d/\epsilon)$ is lower bounded by a constant, while leaving roughly $2^{d/2}$ concepts consistent with $S$. 
Choosing the target uniformly at random, it is possible to lower bound the expected risk linearly in $\epsilon$, thus giving the lower bound.

The issue with considering the robust loss is that we are looking at robustly consistent algorithms, and thus must consider giving all the label information for each of the sets $B_\rho(x_i)$'s. 
It is thus possible that giving all the information in $B_\rho(x_i)$ removes too many potential targets from the set of consistent concepts to get meaningful lower bounds. 
At the core of the issue thus seems that we want sufficiently many concepts that are consistent with any sample drawn from $D$, while maintaining a high expected \emph{robust} risk.

\end{document}